\newtheorem{theorem}{Theorem}
\newtheorem{corollary}{Corollary}
\newtheorem{lemma}{Lemma}
\newenvironment{customlem}[1]{\lemma}{\endlemma}
\newtheorem{definition}{Definition}
\newtheorem{remark}{Remark}
\newcommand{\V}[0]{\mathbf{V}}
\newcommand{\W}[0]{\mathbf{W}}
\newcommand{\U}[0]{\mathbf{U}}
\newcommand{\E}[0]{\mathbf{E}}
\newcommand{\X}[0]{\mathbf{X}}
\newcommand{\R}[0]{\mathbf{R}}
\newcommand{\Y}[0]{\mathbf{Y}}
\newcommand{\Z}[0]{\mathbf{Z}}
\newcommand{\A}[0]{\mathbf{A}}
\newcommand{\B}[0]{\mathbf{B}}
\newcommand{\T}[0]{\mathbf{T}}
\newcommand{\D}[0]{\mathbf{D}}
\newcommand{\Pa}[2]{\textit{Pa}_{#2}(#1)}
\newcommand{\Ch}[2]{\textit{Ch}_{#2}(#1)}
\newcommand{\Anc}[2]{\textit{Anc}_{#2}(#1)}
\newcommand{\De}[2]{\textit{De}_{#2}(#1)}
\newcommand{\x}[0]{\mathbf{x}}
\newcommand{\y}[0]{\mathbf{y}}
\newcommand{\z}[0]{\mathbf{z}}
\newcommand{\w}[0]{\mathbf{w}}
\newcommand{\dd}[0]{\mathbf{d}}
\newcommand{\G}[0]{\mathcal{G}}
\newcommand{\M}[0]{\mathcal{M}}
\newcommand{\F}[0]{\mathcal{F}}
\newcommand{\dom}[2]{\mathfrak{X}_{#2}(#1)}
\newcommand{\independent}{\perp\!\!\!\perp}
\newcommand{\notindependent}{\centernot{\independent}}
\title{On Identifiability of Conditional Causal Effects}
\author[1]{\href{mailto:<yaroslav.kivva@epfl.ch>?Subject=Your UAI 2023 paper}{Yaroslav Kivva}{}}
\author[1, 3]{Jalal Etesami}
\author[1,2]{Negar Kiyavash}
\affil[1]{%
    School of Computer and Communication Sciences\\
    EPFL\\
    Lausanne, Switzerland
}
\affil[2]{%
     College of Management of Technology\\
    EPFL\\
    Lausanne, Switzerland
}
\affil[3]{%
   TUM School of Computation, Information and Technology\\
   Technical University of Munich
}
\begin{document}
\maketitle
\begin{abstract}
We address the problem of identifiability of an arbitrary \textit{conditional causal effect} given both the causal graph and a set of any observational and/or interventional distributions of the form $Q[S]:=P(S|do(V\setminus S))$, where $V$ denotes the set of all observed variables and $S\subseteq V$. 
    We call this problem conditional generalized identifiability (\mbox{\textbf{c-gID}} in short) and prove the completeness of Pearl's $do$-calculus for the \mbox{c-gID} problem by providing sound and complete algorithm for the c-gID problem.  
    This work revisited the c-gID problem in \cite{lee2020general, correa2021nested} by adding explicitly the positivity assumption which is crucial for identifiability. It extends the results of \citep{lee2019general, kivva2022revisiting} on general identifiability (gID) which studied the problem for  \mbox{\textit{unconditional}} causal effects and \cite{shpitser2012identification} on identifiability of conditional causal effects given \textit{merely} the observational distribution $P(\mathbf{V})$ as our algorithm generalizes the algorithms proposed in \citep{kivva2022revisiting} and \citep{shpitser2012identification}.
\end{abstract}

\section{Introduction}

This paper addresses the problem of identification of a conditional post-interventional distribution from the combination of observational and/or interventional distributions. Formally, the relationships between the variables of interest are established by a directed acyclic graph (DAG) \cite{pearl1995causal}. Each node in the causal graph represents some random variable that may simulate real-life measurements, and each directed edge encodes a possible causal relationship between the variables. 
In general, a subset of the nodes in DAG are observed and others may be hidden. 
The hidden nodes could result in spurious correlations between observed variables and complicate the question of identifiability. On the other hand, when all the variables in the system are observable and the distribution over them is known then any conditional causal effect is identifiable.


The question of identification of the causal effect has been one of the central focus of research in causal inference literature. The classical setting of the problem asks whether the causal effect $P_{\x}(\y)$\footnote{This notation indicates causal effect on $\textbf{y}$ after intervention $do(\X=\x)$, That is, $P(\textbf{y}| do(\X=\x))$ shortened to $P_{\x}(\textbf{y})$.} is identifiable in a given graph $\G$ from observational distribution $P(\V)$ ($\V$ is a set of all observed nodes in the graph $\G$). The problem was solved in \cite{shpitser2006identification, huang2006identifiability} and later \cite{shpitser2012identification} extended the result by answering the question when a conditional causal effect $P_{\x}(\y|\z)$ is identifiable in a given graph $\G$.
The work of \cite{bareinboim2012causal, lee2019general, kivva2022revisiting} solved a generalization of the classical identifiability problem, namely identifiability of unconditional causal effect $P_\x(\y)$ from a specific mix of observational and interventional distributions.  It is noteworthy that all aforementioned works proved that the rules of do-calculus are sound and complete for the identification of the causal effect in their settings. 
Furthermore, the work of \cite{tikka2019causal, mokhtarian2022causal, bareinboim2014transportability, bareinboim2015recovering} considers the problem of identifiability in a presence of additional information to observational/interventional distributions and the causal graph $\G$. More specifically, \cite{mokhtarian2022causal} considers the identifiability problem in the presence of additional knowledge in the form of context-specific independence for some variables.
\cite{tikka2019causal} assumes that they have access to multiple incomplete data sources and \cite{bareinboim2015recovering} studies the identifiability problem under a selection bias.


{

In this paper, we extend both the general identifiability (gID) result of \cite{kivva2022revisiting} and the conditional identifiability result of \cite{shpitser2012identification}. 
More specifically, our work answers the question of identifiability of an \textit{arbitrary} conditional causal effect $P_{\x}(\y|\z)$ under the same set of assumptions as in gID problem. 
We call this problem \textit{conditional general identifiability}, for short \textbf{c-gID}.
This problem has been studied in \cite{lee2020general, correa2021nested}.
 The authors of \cite{lee2020general} generalizes the problem of c-gID by assuming that observable data is available from multiple domains and \cite{correa2021nested} considers the c-gID problem as an identifiability problem of counterfactual quantities. 
 However, both of the aforementioned works are based on causal models that violate the positivity assumption (See Appendix B) which is crucial for identification as it is discussed in \cite{kivva2022revisiting}. 
Since they did not discuss whether their proposed models can be modified such that the positivity assumption holds and it is not straightforward whether such modifications exist, herein we present an alternative proof for the c-gID problem including its soundness and completeness.
The causal models developed here for proving the completeness of our algorithm are novel and satisfy the positivity assumption. 
}

\section{Preliminaries}
\subsection{Notation and definitions}
 We denote random variables by capital letters and their realization by their lower-case version. 
 Similarly, a set of random variables and their realizations are denoted by bold capital and bold lower-case letters, respectively. 
 For two integers $a\leq b$, we define $[a:b]:=\{a, a+1, \dots, b\}$.
 For any random variable $X$, we denote its domain set by $\dom{X}{}$ and for any set of random variables $\X$, we denote by $\dom{\X}{}$, the Cartesian product of the domains of the variables in $\X$. 
 Suppose that $\X$ and $\Y$ are arbitrary sets of random variables, then we say that realizations $\x$ and $\y$ are \textit{consistent}, if the values of $\X\cap\Y$ in $\x$ and $\y$ are the same. 
 Also, we use $\dom{\X}{\y}$ to denote a set of realizations of $\X$ that are consistent with $\y$.
 Suppose that $\X'\subseteq \X$ and $\x$ to be a realization of $\X$. Then, we use $\x[\X']$ to denote a realization of $\X'$ that is consistent with $\x$.
 When it is clear from the context, we write $\x'$ instead of $\x[\X']$. 

\textbf{Causal Graph:}
Consider a directed graph $\G:=(\V\cup \U, \E)$ over node $\V\cup\U$ in which $\V$ and $\U$ denote the set of observed and hidden variables, respectively and $\E\subseteq (\V\cup \U)\times(\V\cup \U)$ denotes the set of directed edges. 
A causal graph $\G$ is a directed acyclic\footnote{It contains no directed cycle. } graph (DAG).
We say that node $X$ is a parent of another node $Y$ (subsequently, $Y$ is a child of $X$) if and only if there exists a direct edge from $X$  to $Y$ in $\G$, e.g. $(X, Y) \in \E$. 
Similarly, $X$ is said to be an ancestor of $Y$ (subsequently, $Y$ is a descendant of $X$) if and only if there is a directed path from $X$ to $Y$ in $\G$. 
We denote the set of parents, children, ancestors, and descendants of $X$ by $\Pa{X}{\G}$, $\Ch{X}{\G}$, $\Anc{X}{\G}, \De{X}{\G}$ respectively. We assume that $X$ belongs to all the aforementioned sets. 
Additionally, for a subset of nodes $\X$, we define $\Pa{\X}{\G}:=\bigcup_{X\in \X}\Pa{X}{\G}$ and analogously, define $\Ch{\X}{\G}$, $\Anc{\X}{\G}$ and $\De{\X}{\G}$. 

A causal graph $\G$ is called a semi-Markovian, if any node from $\U$ has exactly two children without any parents. 
Suppose that $\G$ is a semi-Morkovain graph and $\X\subseteq\V$. 
In this case, we use $\G[\X]$ to denote the induced subgraph of $\G$ over variables in $\X$ including all unobserved variables that have both children in $\X$. 
We also use $\widehat{\G}[\X]$ to denote the dual graph of $\G[\X]$ that is a mixed\footnote{It contains both directed and bidirected edges.} graph and it is constructed from $\G[\X]$ by replacing unobserved variables and their outgoing arrows with bidirected edges.
By the abuse of notation, we use $\G[\X]$ and $\widehat{\G}[\X]$ interchangeably.

\begin{definition}[c-component]
    C-components of a subset $\X$ in $\G$ are the connected components in $\widehat{\G}[\X]$ after removing all directed edges, i.e., nodes in each c-component are connected via bidirected edges. 
    $\X$ is called a single c-component if $\X$ has only one c-component, i.e., $\widehat{\G}[\X]$ is a connected graph after removing all directed edges.
\end{definition}

\begin{figure}
\hspace{1cm}
        \begin{tikzpicture}[
                roundnode/.style={circle, draw=black!60,, fill=white, thick, inner sep=1pt},
                dashednode/.style = {circle, draw=black!60, dashed, fill=white, thick, inner sep=1pt},
                ]
                    \node[roundnode]        (X1)        at (-1.5, 0)                   {$X_1$};
                    \node[roundnode]        (X2)        at (1.5, 0)                   {$X_2$};
                    \node[roundnode]        (Y1)        at (-1.5, -1.6)                  {$Y_1$};
                    \node[roundnode]        (Y2)        at (1.5, -1.6)                  {$Y_2$};
                    \node[dashednode]       (U1)        at (0, 0)                   {$U_1$};
                    \node[dashednode]       (U2)        at (-2.75, -.8)                   {$U_2$};
                    
                    \draw[-latex] (X1.south) -- (Y1.north) ;
                    \draw[-latex] (X2) -- (Y2);
                    \draw[latex-, dashed] (X1) -- (U1);
                    \draw[-latex, dashed] (U1) -- (X2);
                    \draw[latex-, dashed] (Y1) -- (U2);
                    \draw[-latex, dashed] (U2) -- (X1);
                \end{tikzpicture}
\caption{A semi-Markovian DAG  over the set of observed variable $\V=\{X_1,X_2,Y_1,Y_2\}$ and the set of hidden variables $\U=\{U_1,U_2\}$. } 
    \label{fig: example}
\end{figure}
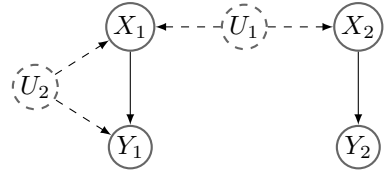

For instance in Figure \ref{fig: example}, c-components of $\{X_1,X_2,Y_2\}$ are $\{X_1,X_2\}$ and $\{Y_2\}$. In this DAG, $\{X_1,X_2,Y_1\}$ and $\{Y_2\}$ are each single c-components.

\begin{definition}[c-forest]
    Let $\F$ be a subgraph of $\G$ over a set of nodes $\X$. The maximal subset of $\X$ with no children in $\F$ is called the root set and denoted by $\R$. 
    $\mathcal{F}$ is a $\R$-rooted c-forest if $\X$ is a single c-component with root set $\R$, and all observable nodes in $\X$ have at most one child in $\F$.
\end{definition}

In Figure \ref{fig: example}, $\G[\{X_1,X_2,Y_1\}]$ is a $\{Y_1,X_2\}$-rooted c-forest.

\textbf{Causal Model:}
A causal model $\M$ is defined over a set of random variables $\V\cup \U$ via Structural Equation Model (SEM) \citep{pearl2009causality} with a causal graph $\G$. 
In a SEM with a causal graph $\G$, each variable $X \in \V\cup \U$ is determined by its parents and an exogenous variable $\epsilon_X$, i.e. $X=f_{X}(\Pa{X}{\G}, \epsilon_X)$. 
It is assumed that the set of exogenous variables, $\{ \epsilon_X | X \in \V \cup \U\}$, are mutually independent.
If graph $\G$ is a semi-Markovian, then $\M$ is said to be a semi-Markovian causal model. 
Because, the problem of identifiability in a DAG is equivalent to a relative identifiability problem in a semi-Markovian DAG \citep{huang2006identifiability}, in this work, we only consider the problem of identifiability in semi-Markovian models.

In a semi-Markovian causal model, by Markov property \citep{pearl2009causality}, the induced joint distribution can be factorized as follows
\begin{equation*}
    P^{\M}(\mathbf{v}) = \sum_{\textbf{u}}\prod_{X\in \V}P^{\M}(x|\Pa{X}{\G})\prod_{U \in \U}P^{\M}(u),
\end{equation*}
where the summation is over latent variables in $\U$. 
We use $\mathbb{M}(\G)$ to denote the set of causal models with graph $\G$ such that for any $\M \in \mathbb{M}(\G)$ and any realization $\mathbf{v} \in \dom{\mathbf{V}}{}$,  $P^{\M}(\mathbf{v})>0$.
In the remainder of this work, we assume that all causal models belong to $\mathbb{M}$.
This is known as the positivity assumption in the causality literature and as it is discussed in \cite{kivva2022revisiting}, it is crucial for developing sound identification algorithm. 

 In a causal model $\M$, post-interventional distribution is defined using $do$-operation. 
 An intervention $do(X=x)$ modifies the corresponding SEM by replacing the equation of $X=f_{X}(\Pa{X}{\G}, \epsilon_X)$ by $X=x$.
The conditional post-interventional distribution of $\textbf{y}$ given $\textbf{s}$ after intervening on $do(X=x)$ is denoted by $P(\textbf{y}| do(\X=\x), \textbf{s}):= P_{\x}(\textbf{y}|\textbf{s})$.

 Suppose that $\mathbf{S} = \mathbf{S}' \cup \mathbf{S}''$, where $\mathbf{S}'$ and $\mathbf{S}''$ are two disjoint subsets of observed variables $\V$. 
 We define $Q$-notations, $Q[\mathbf{S}](\cdot)$ and $Q[\mathbf{S}'|\mathbf{S}''](\cdot)$ as follows:
 \begin{align}
     & Q[\mathbf{S}](\mathbf{v}) := P_{\mathbf{v}\setminus \mathbf{s}}(\mathbf{s}), \\
     & Q[\mathbf{S}'|\mathbf{S}''](\mathbf{v}) := P_{\mathbf{v}\setminus \mathbf{s}}(\mathbf{s}'|\mathbf{s}''),
 \end{align}
 where $\mathbf{s} = \mathbf{v}[\mathbf{S}], \mathbf{v}\setminus \mathbf{s} = \mathbf{v}[\V \setminus \mathbf{S}], \mathbf{s}' = \mathbf{v}[\mathbf{S}']$, and $\quad \mathbf{s}'' = \mathbf{v}[\mathbf{S}'']$.
 Note that $Q[\V](\textbf{v})=P(\V=\textbf{v})$.
 By Markov property and basic probabilistic manipulation, we have
 \begin{align}
    \label{eq: Q}
     & Q[\mathbf{S}](\mathbf{v}) = \sum_{\U}\prod_{S \in \mathbf{S}}P(s|\Pa{S}{\G})\prod_{U \in \U}P(u)\\
     \label{eq: conditional Q}
     & Q[\mathbf{S}'|\mathbf{S}''](\mathbf{v}) = \frac{Q[\mathbf{S}](\mathbf{v})}{\sum_{\mathbf{v}' \in \dom{\V}{\mathbf{v}\setminus\mathbf{s}'}} Q[\mathbf{S}](\mathbf{v}')}
 \end{align}

\begin{definition}[Blocked path]
A path in $\G$ is a non-repeated sequence of connected nodes. 
A path $p$ in $\G$  is said to be blocked by a set of nodes in $\Z$ if and only if
    \begin{itemize}[leftmargin=*]
        \item $p$ contains a chain $X\rightarrow W \rightarrow Y$ or fork $X\leftarrow W \rightarrow Y$, such that $W \in \Z$, or
        \item $p$ contains a collider $X \rightarrow W \leftarrow Y$ (node $W$ is called a collider), such that $\Z \cap \De{W}{\G}=\emptyset$.
    \end{itemize}
\end{definition}
Two disjoint sets of nodes $\X$ and $\Y$ are d-separated by $\Z$ in $\G$ if any path between $\X$ and $\Y$ are blocked by $\Z$ and denote it by $(\X \independent \Y|\Z)_{\G}$. 
Using d-separation, we introduce rules of do-calculus \citep{pearl2009causality} as the main tools for causal effect identification.
\begin{itemize}[leftmargin=*]
    \item Rule 1: $P_{\x}(\y|\z,\w)=P_{\x}(\y|\w)$ if $(\Z \independent \Y|\X,\W)_{\G_{\overline{\X}}}$.
    
    \item Rule 2: $P_{\x,\z}(\y|\w)=P_{\x}(\y|\z,\w)$ if $(\Z \independent \Y|\X,\W)_{\G_{\overline{\X},\underline{\Z}}}$.
    
    \item Rule 3: $P_{\x,\z}(\y|\w)=P_{\x}(\y|\w)$ if $(\Z \independent \Y|\X,\W)_{\G_{\overline{\X},\overline{\Z_W}}}$,
\end{itemize}
where $\G_{\overline{\X},\underline{\Y}}$ denotes an edge subgraph of $\G$ where all incoming arrows into $\X$ and all outgoing arrows from $\Y$ are deleted and $\Z_W:=\Z\setminus Anc_{\G_{\overline{\X}}}(\W)$.

\subsection{Classical identifiability (ID) }

\begin{table*}[th]
\caption{Different types of identifiability problems.} \label{id problems}
\begin{center}
\begin{tabular}{|c|c|c|c|}
\hline
\textbf{Problem}  &\textbf{Target}  & \textbf{Input} & \textbf{Solved}\\
\hline
\hline
Causal effect identifiability (ID)   &   &   &\\
\color{blue}\cite{shpitser2006identification}   & $P_{\x}(\y)$   & $\G, P(\V)$   &\checkmark\\
\color{blue}\cite{huang2006identifiability}   &   &   &\\
\hline
Conditional causal effect identifiability (c-ID) & $P_{\x}(\y|\z)$ & $\G, P(\mathbf{V})$ & \checkmark\\
\color{blue}\cite{shpitser2012identification}   &   &   &\\
\hline
z-identifiability (zID) & $P_{\x}(\y)$ & $\G, P(\V), \{P_{\V \setminus \A'}(\A')| \forall \A' \subset \A\}$ & \checkmark\\
\color{blue}\cite{bareinboim2012causal}   &   &   &\\
\hline
g-identifiability (gID) &    &   &\\
\color{blue}\cite{lee2019general,kivva2022revisiting}   & $P_{\x}(\y)$   & $\G, \{P(\mathbf{A}_i|do(\mathbf{V}\setminus\mathbf{A}))\}_{i=0}^m$   & \checkmark\\
\hline
\textit{Conditional general identifiability (c-gID)}  & $P_{\x}(\y|\z)$ & $\G, \{P(\mathbf{A}_i|do(\mathbf{V}\setminus\mathbf{A}))\}_{i=0}^m$ & \checkmark our work \\
 \color{blue}\cite{lee2020general,correa2021nested} &   &   &\\
\hline
Generalized identifiability  & $P_{\x}(\y|\z)$ & $\G, \{P(\mathbf{A}_i|do(\mathbf{B}_i), \mathbf{C}_i)\}_{i=0}^m$ & ?\\
\hline
\end{tabular}
\end{center}
\end{table*}
Classical identifiability problem refers to computing a causal effect $P_{\x}(\y)$ from a given joint distribution $P(\V)$ in a causal graph $\G$. 
This problem was solved independently by \cite{shpitser2006identification} and \cite{huang2006identifiability}.  
\cite{shpitser2012identification} extended this result to identifiability of a conditional causal effect, i.e., $P_{\x}(\y|\z)$.

\begin{definition}[conditional ID]
    Suppose $\X$, $\Y$, and $\Z$ are three disjoint subsets of $\V$.
    The causal effect $P_{\x}(\y|\z)$ is said to be conditional ID in $\G$ if for any $\x \in \dom{\X}{}$, $\y \in \dom{\Y}{}$, and $\z \in \dom{\Z}{}$, $P^{\M}_{\x}(\y|\z)$ is uniquely computable from $P^{\M}(\mathbf{V})$ in any causal model $\mathcal{M} \in \mathbb{M}(\G)$.
\end{definition}
Knowing $P_{\x}(\y, \z)$, it is straightforward to uniquely compute $P_{\x}(\y|\z)$ from
$
    P_{\x}(\y|\z) = {P_{\x}(\y, \z)}/{\sum_{\Y'}P_{\x}(\y', \z)}.
$
On the other hand, \cite{tian2004identifying} showed that $P_{\x}(\y|\z)$ might be identifiable in $\G$ even if $P_{\x}(\y, \z)$ is not identifiable. 
This happens when the ``non-identifiable parts'' of $P_{\x}(\y, \z)$ in the nominator cancel out  with the non-identifiable parts of $P_{\x}(\z)$ in the denominator. Next example demonstrates such a scenario.

\begin{figure}
\centering
\begin{tikzpicture}[
        roundnode/.style={circle, draw=black!60,, fill=white, thick, inner sep=1pt, minimum size=0.65cm},
        dashednode/.style = {circle, draw=black!60, dashed, fill=white, thick, inner sep=1pt, minimum size=0.65cm},
        ]
            \node[roundnode]        (Z1)        at (-1.5, 0)                 {$Z_1$};
            \node[roundnode]        (W1)        at (1.5, 0)                  {$W_1$};
            \node[roundnode]        (X1)        at (-1.5, 1.3)               {$X_1$};
            \node[roundnode]        (Y1)        at (1.5, 1.3)                {$Y_1$};
            \node[roundnode]        (Z2)        at (0, 0)                    {$Z_2$};
            
            \draw[-latex] (X1.south) -- (Z1.north);
            \draw[latex-] (Z1.east) -- (Z2.west);
            \draw[latex-] (Z2.east) -- (W1.west);
            \draw[latex-] (W1.north) -- (Y1.south);
            \draw[latex-latex, dashed] (X1) .. controls +(left:10mm ) and +(left:10mm) .. (Z1);
            \draw[latex-latex, dashed] (W1) .. controls +(right:10mm ) and +(right:10mm) .. (Y1);
        \end{tikzpicture}
\caption{A semi-Markovian DAG over \mbox{$\V=\{X_1,Y_1,Z_1,Z_2, W_1\}$}} 
    \label{fig: example c-ID}
\end{figure}
\textbf{Example:}
Consider the causal graph $\G$ as on Figure \ref{fig: example c-ID}. Assume that one wants to compute the causal effect $P_{\x}(\y|\z)$, where $\X = \{X_1\}$, $\Y = \{Y_1\}$ and $\Z = \{Z_1, Z_2\}$. Then,
\begin{equation*}
     P_{x_1}(y_1| z_1, z_2) = \frac{P_{x_1}(y_1, z_1, z_2)}{P_{x_1}(z_1, z_2)}.
\end{equation*}
where 
$$P_{x_1}(y_1, z_1, z_2) = \sum_{w_1 \in \dom{W_1}{}}P_{x_1}(y_1, w_1, z_1, z_2),$$
 and 
$$P_{x_1}(z_1, z_2) = \sum_{\substack{w_1 \in \dom{W_1}{}\\ y_1 \in \dom{Y_1}{}}}P_{x_1}(y_1, w_1, z_1, z_2).$$

In terms of $Q$-notation, we have
\begin{align*}
     P_{x_1}(y_1, z_1, z_2) =& \sum_{W_1}Q[Y_1, W_1, Z_1, Z_2],\\
     P_{x_1}(z_1, z_2) =& \sum_{W_1, Y_1}Q[Y_1, W_1, Z_1, Z_2].
\end{align*}
Using results of \cite{huang2006identifiability}, the above equations can be simplified as follows,
\begin{align*}
     P_{x_1}(y_1, z_1, z_2) =& Q[Z_1] \sum_{W_1} Q[Y_1, W_1, Z_2],\\
     P_{x_1}(z_1, z_2) = & Q[Z_1] \sum_{W_1, Y_1}Q[Y_1, W_1, Z_2].
\end{align*}
Results of \cite{huang2006identifiability, shpitser2006identification} imply that $Q[Z_1]$ is not ID from $\G$, however $Q[Y_1, W_1, Z_2]$ is ID in $\G$. Therefore, both causal effects $P_{\x}(\y, \z)$ and $P_{\x}(\z)$ are not ID in $\G$, but clearly 
$$
P_{\x}(\y|\z) = \frac{\sum_{W_1} Q[Y_1, W_1, Z_2]}{\sum_{W_1, Y_1}Q[Y_1, W_1, Z_2]}
$$
is identifiable in $\G$.


\subsection{Generalized identifiability (gID)}

In this problem, the goal is to identify a causal effect in a given graph $\G$ from a set of observational and/or interventional distributions instead of only observational distribution $P(\V)$. 
This problem, to the best of our knowledge, remains open when the set of given distributions are arbitrary. In the special case, when the set of given distributions are in the form of $Q$-notations, the problem is called generalized identifiability (gID) (See below for a formal definition) and was solved by \citep{lee2019general,kivva2022revisiting}.
See Table \ref{id problems} for a summary of solved and unsolved problems in the causal identifiability context.

 

\begin{definition}[gID]
Suppose $\X$ and $\Y$ are two disjoint subsets of $\V$ and $\mathbb{A} := \{\A_i\}_{i=0}^m$ is a collection of subsets of $\V$, i.e., $\A_i \subseteq \V$ for all \mbox{$i \in [0:m]$}. 
The causal effect $P_{\x}(\y)$ is said to be gID from $(\mathbb{A}, \G)$ if for any $\x\in \dom{\X}{}$ and $\y\in \dom{\Y}{}$ if $P_{\x}^{\M}(\y)$ is uniquely computable from $\{Q^{\M}[\A_i]\}_{i=0}^{m}$ in any causal model $\M \in \mathbb{M}(\G)$.
\end{definition}

Note that the classical ID problem is a special case of the gID problem when $\mathbb{A}=\{\V\}$. 
More than a decade after \cite{shpitser2012identification} proposed a sound and complete algorithm for ID, \cite{kivva2022revisiting} solved the gID problem by showing that gID problem can be reduced to a series of separated ID problems. Formally, they showed the following result. 
\begin{theorem}[\cite{kivva2022revisiting}] \label{th: gID main}
    Suppose that $\mathbf{S}\subseteq \V$ is a single c-component in $\G$. Then, $Q[\mathbf{S}]$ is gID from $(\mathbb{A}, \G)$ if and only if there exists $\A \in \mathbb{A}$, such that $\mathbf{S} \subseteq \A$ and $Q[\mathbf{S}]$ is identifiable from $\G[\A]$.
\end{theorem}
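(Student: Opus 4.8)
The plan is to establish the two implications separately. The forward (``if'') implication is a soundness statement and follows by reduction to classical identifiability; the converse is the substantial part and requires constructing two causal models in $\mathbb{M}(\G)$ that are indistinguishable through the given $Q$-distributions but disagree on $Q[\mathbf{S}]$.

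\textbf{The ``if'' direction.} Suppose there is $\A\in\mathbb{A}$ with $\mathbf{S}\subseteq\A$ such that $Q[\mathbf{S}]$ is identifiable from $\G[\A]$. The key point is that the available input $Q[\A]$ behaves exactly like the observational distribution of a semi-Markovian model supported on $\G[\A]$: by the factorization in \eqref{eq: Q}, $Q[\A](\mathbf{v})$ is a sum over the latents retained in $\G[\A]$ of a product of conditionals of the nodes of $\A$ given their parents in $\G[\A]$, and it is strictly positive because every $\M\in\mathbb{M}(\G)$ has $P^{\M}(\mathbf{v})>0$. Hence one may run the sound classical identification procedure (the ID/IDENTIFY algorithm of \cite{shpitser2006identification,huang2006identifiability,tian2004identifying}) on the instance $(\G[\A],\,Q[\A])$; since $Q[\mathbf{S}]$ is identifiable from $\G[\A]$, the procedure returns a closed-form expression for $Q[\mathbf{S}]$ in terms of $Q[\A]$, where one uses the standard fact that the $c$-factor computed for $\mathbf{S}$ inside $\G[\A]$ from $Q[\A]$ coincides with $Q[\mathbf{S}]$ of the original model. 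Therefore $Q[\mathbf{S}]$ is a functional of $\{Q[\A_i]\}_{i=0}^m$, i.e., it is gID from $(\mathbb{A},\G)$.

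\textbf{The ``only if'' direction.} We prove the contrapositive: assuming that for every $\A\in\mathbb{A}$ we have either $\mathbf{S}\not\subseteq\A$, or $\mathbf{S}\subseteq\A$ but $Q[\mathbf{S}]$ is not identifiable from $\G[\A]$, we construct $\M_1,\M_2\in\mathbb{M}(\G)$ with $Q^{\M_1}[\A_i]=Q^{\M_2}[\A_i]$ for all $i$ yet $Q^{\M_1}[\mathbf{S}]\neq Q^{\M_2}[\mathbf{S}]$. For each $i$ we first extract a structural obstruction: when $\mathbf{S}\not\subseteq\A_i$ we pick $S_i^{\star}\in\mathbf{S}\setminus\A_i$ (so forming $Q[\A_i]$ intervenes on $S_i^{\star}$, severing part of the single $c$-component $\mathbf{S}$); when $\mathbf{S}\subseteq\A_i$ but $Q[\mathbf{S}]$ is not identifiable from $\G[\A_i]$, the hedge criterion of \cite{shpitser2006identification,huang2006identifiability} supplies a hedge -- a pair of nested $\R$-rooted $c$-forests -- living inside $\G[\A_i]$. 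The construction then confines every difference between $\M_1$ and $\M_2$ to the union of these structures: along the bidirected paths that make $\mathbf{S}$ a single $c$-component one installs shared latent variables admitting two distinct but graph-consistent ``routings'' of their influence, as in the classical hedge counterexamples, while the structural functions of all other nodes are taken identical in the two models, and strict positivity is restored by mixing the whole construction with a fixed fully-supported model on $\G$ as in \cite{kivva2022revisiting}. One then checks that for every $i$, forming $Q[\A_i]$ either intervenes on $S_i^{\star}$ or collapses the relevant hedge within $\G[\A_i]$, in either case erasing the extra freedom, so the two models induce the same $Q[\A_i]$; but when no node of $\mathbf{S}$ is intervened on, the freedom survives, hence $Q^{\M_1}[\mathbf{S}]\neq Q^{\M_2}[\mathbf{S}]$.

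\textbf{Main obstacle.} The forward direction is essentially bookkeeping once the identification of $Q[\A]$ with a sub-model's observational law is in place. The difficulty is concentrated in the completeness direction: one needs a \emph{single} pair of SCMs that simultaneously defeats \emph{all} of $Q[\A_0],\dots,Q[\A_m]$, and the witnesses for different indices (a missing node here, a hedge there) may overlap in awkward ways. Showing that a common latent re-routing exists that is at once invisible to every $Q[\A_i]$, remains a valid SCM on the semi-Markovian DAG $\G$ (respecting acyclicity), and can be made strictly positive, is the crux of the argument.
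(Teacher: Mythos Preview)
Your ``if'' direction is fine and matches the standard argument: treat $Q[\A]$ as the observational law of the sub-model on $\G[\A]$ and invoke classical ID.

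The ``only if'' direction, however, is a plan rather than a proof, and the plan glosses over precisely the step you yourself flag as the obstacle. You write that one ``installs shared latent variables admitting two distinct routings'' and then ``checks'' that each $Q[\A_i]$ collapses the freedom; but the entire difficulty is making a \emph{single} pair of models work for all $i$ at once, and your sketch gives no mechanism for this. The construction the paper imports from \cite{kivva2022revisiting} (summarized in Appendix~A.1.1) is quite different from what you describe: the two models $\M_1,\M_2$ share \emph{all} structural equations and differ only in the marginal distribution of a single hidden variable $U_0$. One writes $Q^{\M_2}[\A_i]-Q^{\M_1}[\A_i]=\sum_j(p_j-1/d)\theta_{i,j}$ and $Q^{\M_2}[\mathbf S]-Q^{\M_1}[\mathbf S]=\sum_j(p_j-1/d)\eta_j$, so the existence of a separating pair becomes a linear-algebra question: is the vector $(\eta_j)_j$ outside the span of the vectors $(\theta_{i,j})_j$ and the all-ones vector? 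The modular-arithmetic domain design (the $\kappa$, the set $\mathbf\Gamma$, the $\epsilon$-perturbed conditionals of \eqref{eq: def P(S|Pa(S)) gid}) is engineered exactly so that Lemmas~\ref{lemma: gid equal indices}--\ref{lemma: lin indep formal} force this linear independence. None of this is visible in your sketch of ``two routings,'' which reads more like the original single-dataset hedge counterexample and does not explain how the different $c$-forests $\F_i$ for different $\A_i$ are glued into one model.

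Separately, your positivity fix is not what \cite{kivva2022revisiting} does and is not obviously sound. You propose mixing with a fully-supported model on $\G$; but mixing conditionals node-by-node does \emph{not} mix the interventional functionals $Q[\A_i]$ linearly (the product over $\A_i$ makes it polynomial in the mixing weight), so equality $Q^{\M_1}[\A_i]=Q^{\M_2}[\A_i]$ is not preserved, while introducing a global mixing latent changes the semi-Markovian graph. In the actual construction positivity is baked into the conditionals themselves via the parameter $0<\epsilon<1/\kappa$, and the argument that the models still disagree on $Q[\mathbf S]$ proceeds by showing the relevant polynomials in $\epsilon$ differ (Lemma~\ref{lemma: gid not equal indices}). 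This is the technical content you would need to supply.
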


\subsection{Conditional generalized identifiability (c-gID) }

In this work, we address an extension of both conditional ID and g-ID problem in which the goal is to identify a conditional causal effect from a set of observational and/or interventional distributions.
\begin{definition}(c-gID)
    Suppose $\X$, $\Y$ and $\Z$ are three disjoint subsets of $\V$ and $\mathbb{A} := \{\A_i\}_{i=0}^m$ is a collection of subsets of $\V$, i.e., $\A_i \subseteq \V$ for all \mbox{$i \in [0:m]$}. 
    The causal effect $P_{\x}(\y|\z)$ is said to be c-gID from $(\mathbb{A}, \G)$ if for any $\x\in \dom{\X}{}$, $\y\in \dom{\Y}{}$, and $\z\in \dom{\Z}{}$, $P_{\x}^{\M}(\y|\z)$ is uniquely computable from $\{Q^{\M}[\A_i]\}_{i=0}^{m}$ in any causal model $\M \in \mathbb{M}(\G)$.
\end{definition}

From this definition, it is clear that c-gID covers both conditional ID and gID. 
Namely, when $\Z=\emptyset$, then c-gID reduces to the gID problem, studied by \cite{lee2019general,kivva2022revisiting}. 
When $\mathbb{A}=\{\V\}$, c-gID becomes the conditional ID problem studied by \cite{shpitser2012identification}. 
{
Both \cite{lee2020general} and \cite{correa2021nested} proposed algorithms for identification problems that can also be used for solving c-gID problem. 
However, the completeness of their algorithms rely on causal models that violate the positivity assumption. For more details see Appendix B. 
Additionally, they miss discussions on whether this issue in their proofs can be resolved. 

Next we propose an alternative solution for the c-gID problem under the positivity assumption. 
The soundness and completeness of our solution are based on novel techniques that we believe they are important for further generalizations of identifiability problems. 
}

\section{Main result}


The main idea presented in this work for solving the c-gID problem is to construct an equivalent gID problem and then use the results of  \citep{lee2019general,kivva2022revisiting} to solve the equivalent gID problem. 

Suppose $\X, \Y$ and $\Z$ are three disjoint subsets of $\V$ and $\mathbb{A}$ is a collection of subsets of $\V$. 
We are interested in identifying $P_{\x}(\y|\z)$ from $(\G,\mathbb{A})$.
To this end, we define $\W$ to be the maximal subset of $\Z$, such that $P_{\x}(\y|\z)=P_{\x, \w}(\y|\z\setminus\mathbf{w})$. 
 \cite{shpitser2012identification} proved that such a maximal set is unique and it is given by
 \begin{align}\label{eq: wset}
     \W=\bigcup_{W'\in\Z}\big\{W'|\ P_{\x}(\y|\z)=P_{\x, w'}(\y|\z\setminus \{w'\}) \big\}.
 \end{align}
 More precisely, they showed the following result. 
\begin{theorem}[\cite{shpitser2012identification}]\label{th: ships1}
    For a given graph $\G$ and any conditional effect $P_{\x}(\y|\z)$, there exists a unique maximal set $\W=\{ W \in \Z | P_{\x}(\y|\z)=P_{\x, w}(\y|\z\setminus \{w\}) \}$ such that rule 2 of do-calculus applies to $\W$ in $\G$ for $P_{\x}(\y|\z)$.
\end{theorem}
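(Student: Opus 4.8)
The plan is to translate the statement into a graphical condition about Rule~2 of the do-calculus --- using the standard correspondence between ``an equality of interventional distributions holds throughout $\mathbb{M}(\G)$'' and the associated d-separation --- and then to build the maximal set out of its single-variable constituents.

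First, I would establish a single-variable characterization: for a fixed $W\in\Z$, the equality $P_{\x}(\y|\z)=P_{\x,w}(\y|\z\setminus\{w\})$ holds in every $\M\in\mathbb{M}(\G)$ if and only if $(W\independent\Y\mid\X,\Z\setminus\{W\})_{\G_{\overline{\X},\underline{W}}}$. The ``if'' direction is precisely Rule~2. For ``only if'', I would argue that whenever this d-separation fails, one can choose strictly positive structural equations transmitting dependence along a path left active by $\X\cup(\Z\setminus\{W\})$ in $\G_{\overline{\X},\underline{W}}$, yielding a model in $\mathbb{M}(\G)$ that violates the equality; the delicate point is that this witnessing model must itself be strictly positive, which is exactly the positivity issue the paper stresses (if instead one reads set membership as ``provably equal by Rule~2'', this direction is immediate). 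Either way the set $\W$ in the statement coincides with $\W^{\ast}:=\{W\in\Z:(W\independent\Y\mid\X,\Z\setminus\{W\})_{\G_{\overline{\X},\underline{W}}}\}$, so $\W$ is a single well-defined set and existence follows.

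Next, I would show that Rule~2 applies to $\W$ itself, i.e.\ $(\W\independent\Y\mid\X,\Z\setminus\W)_{\G_{\overline{\X},\underline{\W}}}$, by contradiction. Assume there is an active path $p$ from some $W_i\in\W$ to some $Y\in\Y$ in $\G_{\overline{\X},\underline{\W}}$ given $\X\cup(\Z\setminus\W)$. In $\G_{\overline{\X},\underline{\W}}$ every node of $\W$ has lost all its outgoing edges, so any $W_j\in\W$ that is an interior node of $p$ must be a collider on $p$; but its only descendant in this graph is $W_j$ itself, which does not lie in $\X\cup(\Z\setminus\W)$, so the collider is closed and $p$ is blocked, a contradiction. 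Hence $p$ meets $\W$ only at its endpoint $W_i$. Now pass to the supergraph $\G_{\overline{\X},\underline{W_i}}$, which only restores edges leaving $\W\setminus\{W_i\}$ --- none of which lie on $p$ --- and enlarge the conditioning set to $\X\cup(\Z\setminus\{W_i\})$: the newly conditioned nodes $\W\setminus\{W_i\}$ do not lie on $p$, hence cannot block it, while every collider on $p$ retains its descendants in the conditioning set. Thus $p$ remains active in $\G_{\overline{\X},\underline{W_i}}$, contradicting $W_i\in\W^{\ast}$.

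Finally, for maximality and uniqueness, let $\W'\subseteq\Z$ be any set to which Rule~2 applies, so $(\W'\independent\Y\mid\X,\Z\setminus\W')_{\G_{\overline{\X},\underline{\W'}}}$. For each $W\in\W'$, weak union and decomposition of d-separation give $(W\independent\Y\mid\X,\Z\setminus\{W\})_{\G_{\overline{\X},\underline{\W'}}}$, using $(\Z\setminus\W')\cup(\W'\setminus\{W\})=\Z\setminus\{W\}$. Transferring this to the supergraph $\G_{\overline{\X},\underline{W}}$ is harmless: every extra edge leaves a node of $\W'\setminus\{W\}\subseteq\Z\setminus\{W\}$, which is in the conditioning set, so such an edge can neither extend an active path (its source would be a conditioned non-collider) nor open a collider of an existing path (a directed path witnessing that a collider has a descendant in the conditioning set would, if it used such an edge, already have met $\W'\setminus\{W\}$ earlier). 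Hence $(W\independent\Y\mid\X,\Z\setminus\{W\})_{\G_{\overline{\X},\underline{W}}}$, i.e.\ $W\in\W^{\ast}=\W$, so $\W'\subseteq\W$; together with the previous paragraph this identifies $\W$ as the unique maximal set to which Rule~2 applies. The step I expect to be the genuine obstacle is the ``only if'' direction of the first step --- producing, when the d-separation fails, a model that breaks the equality while remaining in $\mathbb{M}(\G)$ --- precisely the positivity-sensitive point the paper emphasizes; the graph manipulations above are short once one exploits that the conditioned-away nodes of $\W$ (respectively $\W'$) carry no outgoing edges in the mutilated graphs.
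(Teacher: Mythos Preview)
The paper does not prove this theorem; it is quoted as a result of \cite{shpitser2012identification} and used as a black box --- the subroutine \textbf{MaxBI} in Algorithm~1 simply tests the single-variable d-separation from \eqref{eq: wset}. So there is no in-paper proof to compare your proposal against.

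That said, your argument is essentially the standard Shpitser--Pearl proof and is correct under the reading that membership in $\W$ means ``Rule~2 licenses the equality for this single $W$'' (i.e., the individual d-separation holds). Your Step~2 (Rule~2 applies to the union) and Step~3 (any set to which Rule~2 applies is contained in $\W$) are clean d-separation arguments and match what one finds in the source: the key observation in both directions is that in $\G_{\overline{\X},\underline{\W}}$ (respectively $\G_{\overline{\X},\underline{\W'}}$) the nodes whose outgoing edges have been cut can only appear on an active path as endpoints, so moving between the single-variable and set-level mutilated graphs changes nothing on such a path. One small point worth making explicit in Step~3: when you argue that a restored edge out of some $W_j\in\W'\setminus\{W\}$ cannot lie on the putative active path $p$, you should note that the first edge on $p$ at its endpoint $W$ is necessarily into $W$ (since $W$'s outgoing edges are cut in $\G_{\overline{\X},\underline{W}}$), so that $p$ itself survives in $\G_{\overline{\X},\underline{\W'}}$.

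You correctly flag that, under the stronger reading ``the equality holds in every model of $\mathbb{M}(\G)$'', the ``only if'' direction of your Step~1 is completeness of Rule~2 for a single variable and requires a strictly positive witnessing construction. The present paper does not supply this (its positive constructions target a different, more elaborate completeness statement), and neither does the cited theorem's original proof really engage with positivity; so your caveat is well placed, but for the purposes of this paper the d-separation reading is the intended one and suffices.
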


In a special case when $\W=\Z$, it is trivial that the equivalent gID problem boils down to identifying $P_{\x, \z}(\y)$ from $(\G,\mathbb{A})$.
In the next result, we present the form of an equivalent gID problem for a general c-gID problem.

\begin{theorem} \label{th: main}
    Let $\W$ be the maximal subset of $\Z$, such that $P_{\x}(\y|\z)=P_{\x, \w}(\y|\z\setminus\mathbf{w})$. Then, $P_{\x}(\y|\z)$ is c-gID from $(\mathbb{A}, \G)$ if and only if $P_{\x, \mathbf{w}}(\y, \z\setminus\mathbf{w})$ is gID from $(\mathbb{A},\G)$.
\end{theorem}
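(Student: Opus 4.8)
The plan is to reduce the c-gID problem to a gID problem by exploiting the maximality of $\W$, in the spirit of Shpitser's completeness proof for conditional ID but carried out with the $Q$-/hedge machinery of \citep{kivva2022revisiting} and, crucially, with counterexample causal models that respect positivity. Throughout write $\Z':=\Z\setminus\W$. By \cref{th: ships1}, rule~2 of do-calculus applies to $\W$, so the identity $P_{\x}(\y\mid\z)=P_{\x,\w}(\y\mid\z')$ holds; being purely graphical it holds in \emph{every} $\M\in\mathbb{M}(\G)$, so I may replace $P_{\x}(\y\mid\z)$ by $P_{\x,\w}(\y\mid\z')$ everywhere. If $\W=\Z$ then $\Z'=\emptyset$ and the claim degenerates to the tautology ``$P_{\x,\z}(\y)$ is c-gID iff it is gID'', so I assume $\Z'\neq\emptyset$.

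For the ($\Leftarrow$) direction, suppose $P_{\x,\w}(\y,\z')$ is gID from $(\mathbb{A},\G)$, i.e.\ there is a fixed functional of $\{Q[\A_i]\}_{i=0}^{m}$ equal to $P^{\M}_{\x,\w}(\y,\z')$ in every $\M\in\mathbb{M}(\G)$. Marginalizing over $\Y$ gives the same for $P^{\M}_{\x,\w}(\z')=\sum_{\y'}P^{\M}_{\x,\w}(\y',\z')$. Positivity makes this denominator strictly positive: in the semi-Markovian factorization of $P^{\M}(\mathbf{v})>0$ some assignment of $\U$ contributes a product of node-factors that is strictly positive, hence so is the sub-product obtained by dropping the factors of $\X\cup\W$, which is precisely a positive summand of $P^{\M}_{\x,\w}(\z')$. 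Thus $P^{\M}_{\x,\w}(\y\mid\z')=P^{\M}_{\x,\w}(\y,\z')/P^{\M}_{\x,\w}(\z')$ is again a fixed functional of $\{Q[\A_i]\}$, equal to $P^{\M}_{\x}(\y\mid\z)$ in every model, so $P_{\x}(\y\mid\z)$ is c-gID.

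For the ($\Rightarrow$) direction I argue the contrapositive: assuming $P_{\x,\w}(\y,\z')$ is \emph{not} gID, I build $\M_1,\M_2\in\mathbb{M}(\G)$ agreeing on every $Q[\A_i]$ but disagreeing on $P_{\x,\w}(\y\mid\z')=P_{\x}(\y\mid\z)$. First put $P_{\x,\w}(\y,\z')$ into the canonical form $\sum_{\D\setminus(\Y\cup\Z')}\prod_{j}Q[\mathbf{S}_j]$, where $\D$ is the ancestral set of $\Y\cup\Z'$ in $\G$ after deleting $\X\cup\W$ and the $\mathbf{S}_j$ are its c-components; by the completeness of the gID algorithm \citep{kivva2022revisiting} and \cref{th: gID main}, non-gID of the product forces some single-c-component factor $Q[\mathbf{S}]$ to be non-gID, i.e.\ for every $\A\in\mathbb{A}$ with $\mathbf{S}\subseteq\A$ there is a hedge for $Q[\mathbf{S}]$ in $\G[\A]$. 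The structural crux is to invoke the maximality of $\W$: were the obstruction confined to $\Z'$ (its root set disjoint from $\Y$), then $\Y$ would be d-separated from it in the relevant mutilated graph and some $W'\in\Z'$ would satisfy the rule~2 condition for $\W\cup\{W'\}$, contradicting \cref{th: ships1}; hence the hedge may be chosen so that its root set $\R$ meets $\Y$.

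The remaining step, which I expect to be the main obstacle, is to convert this hedge into the two models, meeting two requirements at once. \textbf{Positivity:} the textbook hedge counterexamples (all variables binary, parity mechanisms versus constant mechanisms) assign zero probability to many configurations, so one must perturb the mechanisms of the nodes \emph{outside} the hedge together with the exogenous laws so that $P^{\M}(\mathbf{v})>0$ everywhere, while keeping all $Q[\A_i]$ equal across the two models and keeping $Q[\R]$ different --- this is the construction advertised as novel in the introduction. \textbf{Survival under normalization:} because $\R$ meets $\Y$, the $Q[\R]$-discrepancy is carried by a $\Y$-coordinate, and by letting the $\Z'$-variables copy a common strictly positive exogenous signal independent of the hedge's discrepancy one forces $P^{\M_1}_{\x,\w}(\z')=P^{\M_2}_{\x,\w}(\z')$, so dividing by this common positive denominator preserves the discrepancy and $P^{\M_1}_{\x,\w}(\y\mid\z')\neq P^{\M_2}_{\x,\w}(\y\mid\z')$ for some realization. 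This last point --- controlling the denominator while keeping the numerator's non-identifiable part alive --- is exactly the cancellation phenomenon illustrated by the example around \cref{fig: example c-ID} and is where maximality of $\W$ is indispensable. Combining the two directions yields the equivalence.
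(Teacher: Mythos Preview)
Your sufficiency direction matches the paper and is fine. The necessity direction has a genuine gap at the ``structural crux'' paragraph.

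You claim that if the non-gID c-component $\mathbf{S}$ has root set disjoint from $\Y$, then maximality of $\W$ is violated, so ``the hedge may be chosen so that its root set $\R$ meets $\Y$.'' This inference is incorrect. In the paper's decomposition, the non-gID c-component $\mathbf{S}_1$ can perfectly well satisfy $\mathbf{S}_1\cap\Y'=\emptyset$ (indeed $\mathbf{S}_1\subseteq\Z'$ is possible); maximality of $\W$ only tells you that for each $Z'\in\Z'$ there is an unblocked backdoor path to $\Y'$ in $\G_{\overline{\X'},\underline{\{Z'\}}}$, but that path may run entirely \emph{outside} $\mathbf{S}_1$ and through other c-components. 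It does not let you relocate the hedge so that its root set touches $\Y$. Consequently your denominator-matching plan (``let the $\Z'$-variables copy a common strictly positive exogenous signal independent of the hedge's discrepancy'') breaks down exactly here: when $\mathbf{S}_1$ coincides with a c-component of $\Anc{\Z'}{\G[\V\setminus\X']}$, the discrepancy between the two models lives \emph{in} the mechanism generating $\Z'$, so you cannot make $P^{\M_1}_{\x,\w}(\z')=P^{\M_2}_{\x,\w}(\z')$ while keeping $Q[\mathbf{S}_1]$ different.

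The paper handles precisely this situation in its ``Second case,'' split into three sub-cases. Your argument covers at most the First case (add the denominator's c-components to $\mathbb{A}$) and Sub-case~1 ($\mathbf{S}_1\cap\Y'\neq\emptyset$). For Sub-cases~2 and~3 ($\mathbf{S}_1\subseteq\Z'$, or $\mathbf{S}_1\setminus(\Z'\cup\Y')\neq\emptyset$) the paper does something quite different: it uses the backdoor path guaranteed by maximality of $\W$ to build an auxiliary set $\D$ of nodes along that path (and along descent paths from its colliders), proves via a bespoke positive-model construction that $Q[\D\mid\mathbf{S}\setminus\D]$ is not c-gID (Lemmas~\ref{lemma: construct models subcase 2} and~\ref{lemma: construct models subcase 3}), and then applies pruning lemmas (Lemmas~\ref{lemma: eliminate var in cond}, \ref{lemma: move cond}, \ref{lemma: for_the_main}) to strip the conditioning/outcome sets down to $P_{\x'}(\y'\mid\z')$. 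None of this is captured by ``choose a hedge whose root set meets $\Y$''; that reduction is simply unavailable in these sub-cases, and the model construction must encode the backdoor path explicitly so that the discrepancy in $\mathbf{S}_1$ propagates to a $\Y'$-coordinate \emph{through} variables outside $\mathbf{S}_1$ while the marginal over $\mathbf{S}\setminus\D$ stays matched.
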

A sketch of the proof of this Theorem appears in Section \ref{sec:proof}.
This result extends the result of \cite{shpitser2012identification} for conditional ID to c-gID.
Furthermore, Theorem \ref{th: main} allows us to develop an algorithm for solving the c-gID problem.  
Algorithm \ref{algo: c-gID} summarizes the steps of the proposed algorithm.
The algorithm consists of  two main steps:
\\
    \textbf{1.} Find the maximal set $\W\subseteq \Z$ in $\G$, such that $P_{\x}(\y|\z)=P_{\x, \w}(\y|\z\setminus\mathbf{w})$. 
    For this part, we propose function \textbf{MaxBI} presented in Algorithm \ref{algo: c-gID} that is based on Equation \eqref{eq: wset}.
    \\
    \textbf{2.} Run any sound and complete gID algorithm (e.g., the proposed algorithm by \cite{kivva2022revisiting}) for checking the gID of $P_{\x, \mathbf{w}}(\y, \z\setminus\mathbf{w})$ from $(\mathbb{A}, \G)$.





\begin{theorem}
    Algorithm \ref{algo: c-gID} is sound and complete. 
\end{theorem}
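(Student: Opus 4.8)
The plan is to establish soundness and completeness of Algorithm~\ref{algo: c-gID} by reducing both properties to two facts that are already available: Theorem~\ref{th: main}, which converts the c-gID query $P_{\x}(\y|\z)$ into the unconditional query $P_{\x,\mathbf{w}}(\y,\z\setminus\mathbf{w})$, and the fact that the gID subroutine invoked in Step~2 (e.g.\ the algorithm of \cite{kivva2022revisiting}) is itself sound and complete, which follows from Theorem~\ref{th: gID main}. Concretely, the argument factors into two independent verifications: (i) the function \textbf{MaxBI} in Step~1 correctly returns the maximal set $\W\subseteq\Z$ satisfying $P_{\x}(\y|\z)=P_{\x,\mathbf{w}}(\y|\z\setminus\mathbf{w})$; and (ii) given that $\W$ is correct, the overall algorithm outputs an expression for $P_{\x}(\y|\z)$ in terms of $\{Q[\A_i]\}$ exactly when one exists.

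For part (i), I would argue that \textbf{MaxBI} implements the characterization in Equation~\eqref{eq: wset} together with Theorem~\ref{th: ships1}. By Theorem~\ref{th: ships1} the maximal set $\W$ is well-defined and is obtained by collecting every $W'\in\Z$ for which rule~2 of do-calculus licenses moving $W'$ from the conditioning side to the intervention side; since rule~2 applicability is a purely graphical (d-separation) check, each membership test $W'\in\W$ is decidable from $\G$ alone. The delicate point here — and the step I expect to be the main obstacle — is that adding one variable to the intervention set can in principle change whether rule~2 applies to another variable, so one cannot naively test each $W'$ in isolation against the original graph and take the union. I would show that \textbf{MaxBI} handles this correctly, either by appealing to the monotonicity/closure property established in \cite{shpitser2012identification} (the union in \eqref{eq: wset} really is the fixed point, so a single pass suffices) or by arguing the function iterates to a fixed point; in the latter case I would also note termination, since $\W$ only grows and is bounded by $\Z$.

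For part (ii), soundness follows directly: by Theorem~\ref{th: main}, if the Step~2 gID check on $P_{\x,\mathbf{w}}(\y,\z\setminus\mathbf{w})$ succeeds and returns an expression in $\{Q[\A_i]\}$, then dividing by $\sum_{\Y'}P_{\x,\mathbf{w}}(\y',\z\setminus\mathbf{w})$ — which is the same expression marginalized over $\Y$ — yields a valid expression for $P_{\x,\mathbf{w}}(\y|\z\setminus\mathbf{w})=P_{\x}(\y|\z)$, well-defined by the positivity assumption since every $\M\in\mathbb{M}(\G)$ has $P^{\M}(\mathbf{v})>0$ and hence the denominator is strictly positive. Completeness is the contrapositive: if $P_{\x}(\y|\z)$ is c-gID from $(\mathbb{A},\G)$, then by the ``only if'' direction of Theorem~\ref{th: main}, $P_{\x,\mathbf{w}}(\y,\z\setminus\mathbf{w})$ is gID from $(\mathbb{A},\G)$, so the complete gID subroutine in Step~2 necessarily succeeds and the algorithm does not fail. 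Chaining (i) and (ii), the algorithm succeeds if and only if $P_{\x}(\y|\z)$ is c-gID, which is exactly soundness and completeness.
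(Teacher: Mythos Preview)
Your proposal is correct and follows essentially the same approach as the paper: reduce to Theorem~\ref{th: main} plus soundness and completeness of the gID subroutine. The paper's own proof is in fact a single sentence invoking exactly these two ingredients.

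One clarification on your ``main obstacle'': \textbf{MaxBI} does a single pass over $\Z$, testing each $Z$ individually against the \emph{original} graph $\G_{\overline{\X},\underline{\{Z\}}}$ (not a graph updated as $\W$ grows), so your iteration-to-fixed-point alternative does not match the pseudocode. The reason a single pass suffices is precisely Theorem~\ref{th: ships1} together with Equation~\eqref{eq: wset}: the cited result of \cite{shpitser2012identification} guarantees that the union of individually rule-2-movable variables is already the maximal set to which rule~2 applies collectively. So the ``delicate point'' you flag is resolved by a result the paper already quotes, and no fixed-point argument is needed. With that noted, your part (i) collapses to a direct citation, and parts (i)--(ii) together recover exactly the paper's one-line proof.
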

\begin{proof}
    The result immediately follows from Theorem \ref{th: main} since the gID algorithm is sound and complete.
\end{proof}
\begin{corollary}
    Rules of do-calculus are sound and complete for the c-gID problems.
\end{corollary}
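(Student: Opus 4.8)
The plan is to deduce this corollary from the soundness and completeness of Algorithm~\ref{algo: c-gID}, after observing that every operation the algorithm performs is either an application of one of Rules 1--3 or an elementary probability manipulation (marginalization, conditioning, chain rule). Soundness of do-calculus for c-gID is the easy direction: each of Rules 1--3 is a valid equality in every model of $\mathbb{M}(\G)$, so any expression produced for $P_{\x}(\y|\z)$ by a finite sequence of such rules interleaved with probability axioms equals $P_{\x}(\y|\z)$ in every model; hence whenever such a derivation from $\{Q[\A_i]\}_{i=0}^m$ exists, $P_{\x}(\y|\z)$ is c-gID.

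For completeness, suppose $P_{\x}(\y|\z)$ is c-gID from $(\mathbb{A},\G)$. First, by Theorem~\ref{th: ships1} the maximal set $\W$ returned by \textbf{MaxBI} is exactly the set to which Rule~2 applies for $P_{\x}(\y|\z)$, so the equality $P_{\x}(\y|\z)=P_{\x,\w}(\y|\z\setminus\w)$ is itself a do-calculus derivation. Next, Theorem~\ref{th: main} gives that c-gID of $P_{\x}(\y|\z)$ is equivalent to gID of the unconditional effect $P_{\x,\w}(\y,\z\setminus\w)$ from $(\mathbb{A},\G)$, so that effect is gID. Now invoke the known completeness of do-calculus for the gID problem \citep{lee2019general,kivva2022revisiting}: there is a finite sequence of do-calculus rules and probability manipulations expressing $P_{\x,\w}(\y,\z\setminus\w)$ in terms of $\{Q[\A_i]\}_{i=0}^m$. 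Finally combine this with the probability-axiom identity $P_{\x,\w}(\y|\z\setminus\w)=P_{\x,\w}(\y,\z\setminus\w)\big/\sum_{\y'}P_{\x,\w}(\y',\z\setminus\w)$ and with the Rule~2 step of the first stage. Concatenating the three pieces yields a complete do-calculus derivation of $P_{\x}(\y|\z)$, which together with soundness shows $P_{\x}(\y|\z)$ is c-gID if and only if it is derivable by do-calculus.

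The point requiring care is that the reduction underlying Theorem~\ref{th: main} must not introduce any inference step that is not itself expressible through do-calculus and probability axioms; concretely, both the construction of $\W$ via Equation~\eqref{eq: wset} and the final conditioning must be ``do-calculus admissible.'' The former is exactly Rule~2 by Theorem~\ref{th: ships1}, and the latter is a pure probability manipulation, so no obstruction arises. One should also verify that the gID completeness result is stated in terms of do-calculus derivability, not merely as an algorithm, so that the concatenation produces a genuine rule sequence; this is the case in \citep{lee2019general,kivva2022revisiting}. Given this, the corollary follows, and it is natural to present it as an immediate consequence of Theorem~\ref{th: main} and the completeness of do-calculus for gID.
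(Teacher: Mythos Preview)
Your proposal is correct and mirrors the paper's (implicit) argument: the paper states the corollary without proof immediately after showing Algorithm~\ref{algo: c-gID} is sound and complete, relying on the fact that each step of the algorithm---the Rule~2 application producing $\W$, the gID subroutine (whose do-calculus completeness is established in \citep{lee2019general,kivva2022revisiting}), and the final Bayes-rule conditioning---is a do-calculus or probability-axiom step. You have simply made these three ingredients explicit, which is exactly the intended reasoning.
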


\begin{algorithm}[t]
    \caption{c-gID}
    \label{algo: c-gID}
    \begin{algorithmic}[1]
        \State \textbf{Function C-GID}($\X,\Y, \Z, \mathbb{A}=\{\A_i\}_{i=0}^m ,\G$)
        \State \textbf{Output:} True, if $P_{\x}(\y|\z)$ is c-gID from $(\mathbb{A},\G)$.
        \State $\W \leftarrow \textbf{MaxBI}(\X, \Y, \Z, \G)$
        \State \textbf{Return} $\textbf{GID}(\X\cup\W, \Y \cup (\Z\setminus\W), \mathbb{A}, \G)$
    \end{algorithmic}
    \hrulefill
    \begin{algorithmic}[1]
        \State \textbf{Function MaxBI}($\X, \Y, \Z, \G$)
        \State \textbf{Output:} set $\W$
        \State $\W \leftarrow \emptyset$
        \For{$Z$ in $\Z$}
            \If{$(\Y \independent \Z|\X, \Z\setminus\{Z\})_{\G_{\overline{\X}, \underline{\Z\setminus\{Z\}}}}$}
                \State $\W \leftarrow \W \cup \{Z\}$
            \EndIf
        \EndFor
        \State \textbf{Return} $\W$
    \end{algorithmic}
\end{algorithm}

\begin{remark}
    Algorithm \ref{algo: c-gID} is polynomial time in the input size.
\end{remark}
In subroutine \textbf{MaxBI}, a conditional independence test is performed for each variable in $\mathbf{Z}$. Subsequently, the problem is reduced to the gID problem, which can be solved in polynomial number of steps by using any of the algorithms proposed in \cite{lee2019general, kivva2022revisiting}.

\section{Proof of the Theorem \ref{th: main}}\label{sec:proof}
In this section, we present the main steps of proof of Theorem \ref{th: main}. Further details can be found in Appendix A.
Before going into the details and purely for simpler representation, we define the following notations, $\X' := \X \cup \W$, $\Y' := \Y$, and $\Z' := \Z\setminus\W$.
Note that by the definition of $\W$ and Theorem \ref{th: ships1}, we have
$P_{\x}(\y|\z)=P_{\x'}(\y'|\z') $.

The proof consists of two main parts: sufficiency and necessity. 
In the sufficiency part, which is more straightforward, we show that if $P_{\x'}(\y', \z')$ is gID from $(\mathbb{A}, \G)$, then $P_{\x}(\y|\z)$ is c-gID.
For the reverse, which is much more involved, we use a proof by contradiction. 
That is we show if $P_{\x'}(\y', \z')$ is not gID from $(\mathbb{A}, \G)$, then $P_{x}(\y|\z)$ is also not c-gID.

\textbf{Sufficiency:} Suppose $P_{\x'}(\y', \z')$ is gID from $(\mathbb{A}, \G)$, then  the result follows immediately from the Bayes rule and the fact that $P_{\x'}(\y'| \z') = P_{\x}(\y|\z)$, i.e., 
\begin{equation}\label{eq: bayes}
    P_{x}(\y|\z) = \frac{P_{\x'}(\y', \z')}{\sum_{\y'' \in \dom{\Y}{}}P_{\x'}(\y'', \z')}.
\end{equation}

\textbf{Necessity:} Suppose that $P_{\x'}(\y', \z')$ is not gID from $(\mathbb{A}, \G)$.
To show the non-identifiability of $P_{\x}(\y|\z)=P_{\x'}(\y'|\z')$ from $(\mathbb{A}, \G)$, we construct two causal models $\M_1$ and $\M_2$ from $\mathbb{M}(\G)$, such that for each $i\in [0:m]$ and any $\mathbf{v} \in \dom{\V}{}$,
    \begin{equation*}
        Q^{\M_1}[\mathbf{A}_i](\mathbf{v}) = Q^{\M_2}[\mathbf{A}_i](\mathbf{v}), 
    \end{equation*}
but there exists a triple $(\mathbf{x}',\mathbf{y}',\mathbf{z}') \in \dom{\X'}{}\times \dom{\Y'}{}\times \dom{\Z'}{}$, such that $P_{\x'}^{\M_1}(\y'|\z') \neq P_{\x'}^{\M_2}(\y'|\z').$

\cite{huang2006identifiability} showed that that $P_{\x'}(\y', \z')$ can be written as follows
\begin{equation*}
    P_{\x'}(\y', \z') = \sum_{\mathbf{S} \setminus (\Y' \cup \Z')}Q[\mathbf{S}](\mathbf{v}),
\end{equation*}
where $\mathbf{S} := \Anc{\Y'\cup \Z'}{\G[\V \setminus \X']}$ and the marginalization is over all variables in set $\textbf{S}\setminus (\Y' \cup \Z')$.
Suppose that $\mathbf{S}_1, \mathbf{S}_2, \dots, \mathbf{S}_n$ are the c-components of $\mathbf{S}$ in a graph $\G[\mathbf{S}]$. 
It is known by \cite{huang2006identifiability} that 
$$
Q[\mathbf{S}](\mathbf{v}) =\prod_{i=1}^n Q[\mathbf{S}_i](\mathbf{v}).
$$
Since $P_{\x'}(\y', \z')$ is not gID from $(\mathbb{A}, \G)$, using Proposition 4 and Theorem 1 in \cite{kivva2022revisiting}, we conclude that there exists $i \in [1:n]$, such that for any $j\in[0:m]$, the causal effect $Q[\mathbf{S}_i]$ is not ID from $\G[\A_j]$. 

Analogously, let $\mathbf{S}':=\Anc{\Z'}{\G[\V \setminus \X']}$ and assume $\mathbf{S}'_1, \mathbf{S}'_2, \dots, \mathbf{S}'_{n'}$ are the c-components of $\mathbf{S}'$ in graph $\G[\mathbf{S}']$. 
Then, we have 
\begin{align}\label{eq: denom}
   P_{\x'}(\z') =\sum_{\mathbf{S}'\setminus \Z'} \prod_{i=1}^{n'} Q[\mathbf{S}'_i](\mathbf{v}).
\end{align}
 Consequently, we obtain the following expression
\begin{equation*}
    P_{\x'}(\y'|\z') = \frac{
    \sum_{\mathbf{S}\setminus (\Y'\cup \Z')} \prod_{i=1}^n Q[\mathbf{S}_i](\mathbf{v})
    }{\sum_{\mathbf{S}'\setminus \Z'} \prod_{i=1}^{n'} Q[\mathbf{S}'_i](\mathbf{v})
    }.
\end{equation*}

Note that $\mathbf{S}' \subseteq \mathbf{S}$ and for any $i \in [1:n]$ and $j \in [1:n']$ either $\mathbf{S}'_j$ and $\mathbf{S}_i$ are disjoint or $\mathbf{S}'_j\subseteq\mathbf{S}_i$.

Depending on the relationships between $\{Q[\mathbf{S}_i]\}_{i=1}^n$ and $\{Q[\mathbf{S}'_j]\}_{j=1}^{n'}$ and which parts are gID, in the remainder, we consider two different cases and study each one separately.

\subsection{First case}\label{sec: first case}

In this case, we assume that there exists an index $i\in[1:n]$, such that both $Q[\mathbf{S}_i]$ is not gID from $(\mathbb{A},\G)$ and  $\mathbf{S}_i\neq\mathbf{S}'_j$ for all $j\in [1:n']$. 

If we show that $P_{\x'}(\y'|\z')$ remain not c-gID even after adding additional knowledge about the distributions $\{Q[\mathbf{S}'_j]\}_{j=1}^{n'}$ to $\{Q[\mathbf{A}_k]\}_{k=0}^m$, then, we can conclude that $P_{\x'}(\y'|\z')$ is also not c-gID from $(\mathbb{A},\G)$. 
To do so, let \mbox{$\mathbb{A}':=\mathbb{A}\cup(\bigcup_{j=1}^{n'}\{\mathbf{S}'_i\})$}.


Clearly, $P_{\x'}(\z')$ is c-gID from $(\mathbb{A}', \G)$ as all the terms in \eqref{eq: denom} are given in $\mathbb{A}'$. 
On the other hand, $Q[\mathbf{S}_i]$ is not gID from $(\mathbb{A}', \G)$. 
This is due to the assumptions of this setting,  that are $Q[\mathbf{S}_i]$ is not gID from  $(\mathbb{A}, \G)$ and $\mathbf{S}_i\not\subset \mathbf{S}'_j$ for all $j\in [1:n']$. The latter assumption implies that none of the additional distributions $\{Q[\mathbf{S}'_j]\}_{j=1}^{n'}$ can be used to identify $Q[\mathbf{S}_i]$. 
Since, we have established that $Q[\mathbf{S}_i]$ and consequently $P_{\x'}(\y', \z')$ are not gID from $(\mathbb{A}', \G)$, there exists two models $\M_1,\M_2\in\mathbb{M}(\G)$, such that for any $\mathbf{v} \in \dom{\V}{}$,
\begin{align*}
    & Q^{\M_1}[\mathbf{A}_j](\mathbf{v}) = Q^{\M_2}[\mathbf{A}_j](\mathbf{v}),\quad  j\in [0:m],\\
    & Q^{\M_1}[\mathbf{S}_{j'}](\mathbf{v}) = Q^{\M_2}[\mathbf{S}_{j'}](\mathbf{v}), \quad j'\in [1:n'],
\end{align*}
and there exists $(\widehat{\mathbf{x}}',\widehat{\mathbf{y}}',\widehat{\mathbf{z}}' ) \in \dom{\X'}{}\times\dom{\Y'}{}\times \dom{\Z'}{}$, such that 
\begin{equation*}
    P_{\widehat{\x}'}^{\M_1}(\widehat{\y}', \widehat{\z}') \neq P_{\widehat{\x}'}^{\M_2}(\widehat{\y}', \widehat{\z}').
\end{equation*}
Because $P_{\x'}(\z')$ is gID from $(\mathbb{A}', \G)$ and from \eqref{eq: bayes}, we have
\begin{equation*}
    P_{\widehat{\x'}}^{\M_1}(\widehat{\y'}|\widehat{\z'}) \neq P_{\widehat{\x'}}^{\M_2}(\widehat{\y'}|\widehat{\z'}).
\end{equation*}
This implies that $P_{\x'}(\y'|\z')$ is not c-gID from $(\mathbb{A}', \G)$.

\subsection{Second case} \label{sec: second case}
Suppose that there is no $i\in[1:n]$, such that both $Q[\mathbf{S}_i]$ is not gID from $(\mathbb{A}, \G)$ and $\mathbf{S}_i \neq \mathbf{S}'_j$ for all $j \in [1:n']$. 

Without loss of generality, suppose that for some $k\leq n$, all $Q[\mathbf{S}_1], Q[\mathbf{S}_2], \dots, Q[\mathbf{S}_k]$ are not gID from $(\mathbb{A}, \G)$ and the remaining $Q[\mathbf{S}_{k+1}], \dots, Q[\mathbf{S}_n]$ are gID from $(\mathbb{A}, \G)$. 
By the assumption of this case, for each $i\in [1:k]$, there exists $j_i\in[1:n']$ such that $\mathbf{S}_i=\mathbf{S}'_{j_i}$. 
Without loss generality, suppose that $j_i=i$ for all $i\in [1:k]$, i.e., $\mathbf{S}_1 = \mathbf{S}'_1$, $\dots$, $\mathbf{S}_k = \mathbf{S}'_k$. 
Therefore, $\mathbf{S}_i \subset \mathbf{S}'=\Anc{\Z'}{\G[\V\setminus\X']}$, for all $i \in [1:k]$.

To establish the result, we further consider three different sub-cases:\\
1:  $\Y' \cap \mathbf{S}_1 \neq \emptyset$, \
2:  $\mathbf{S}_1 \subseteq \Z'$, and \
3:  $\mathbf{S}_1 \setminus (\Z'\cup \Y')\neq \emptyset$.

\begin{remark}
    Although, the above sub-cases may have non-empty intersection, it is easy to see that their union covers all possible scenarios of the second case.
\end{remark}

\subsubsection{Sub-case 1: $\Y'\cap \mathbf{S}_1 \neq \emptyset$} \label{sec: first subcase}
Let $Y$ denotes a random variable in $\Y' \cap \mathbf{S}_1$. 
Since $Y$ belongs to $\mathbf{S}_1=\mathbf{S}'_1$,  $Y$ is an ancestor of a variable in $\Z'$ in a graph $\G[\V\setminus \X']$, i.e. $Y \in \Anc{\Z'}{\G[\V \setminus \X']}=\mathbf{S}'$. 
This implies that
\begin{equation}\label{eq: sub_case1}
    \hspace{-.2cm} P_{\x'}(y|\z')\!\!=\! \frac{
        P_{\x'}(y, \z')
    }{
        P_{\x'}(\z')
    } \!=\! \frac{
        \sum_{\mathbf{S}'\setminus (\Z'\cup \{Y\})} \prod_{i=1}^{n'} Q[\mathbf{S}'_i](\mathbf{v})
    }{
        \sum_{\mathbf{S}'\setminus \Z'} \prod_{i=1}^{n'} Q[\mathbf{S}'_i](\mathbf{v})
    }
\end{equation}
We prove this sub-case by showing that $P_{\x'}(y|\z')$ is not c-gID from $(\mathbb{A}, \G)$ and subsequently $P_{\x'}(\y'|\z')$ is not c-gID from $(\mathbb{A}, \G)$. 
To this end, first, we prove \textbf{I:} $Q[\{Y\}| \mathbf{S}'_1\setminus \{Y\}]$ is not c-gID from $(\mathbb{A}, \G_{\underline{\{Y\}}})$, and then use it to show \textbf{II:} $Q[\{Y\}|\mathbf{S}'_1\setminus \{Y\}]$ is not c-gID from $(\mathbb{A}, \G)$. Finally, we show \textbf{III:} $P_{\x'}(y|\z')$ is not c-gID from $(\mathbb{A}, \G)$.

\textbf{I:} 
In graph $\G_{\underline{\{Y\}}}$ and using \eqref{eq: sub_case1}, we obtain
\begin{equation*}
\begin{split}
     Q\big[\{Y\}| \mathbf{S}'\setminus \{Y\}\big] &= \frac{
        \prod_{i=1}^{n'} Q[\mathbf{S}'_i]
    }{
        \sum_{Y}\prod_{i=1}^{n'} Q[\mathbf{S}'_i]
    }  \\
    & = \frac{
        Q[\mathbf{S}_1]
    }{
        \sum_{Y}Q[\mathbf{S}_1] 
    } = Q\big[\{Y\}|\mathbf{S}_1\setminus \{Y\}\big].
\end{split}
\end{equation*}
Recall that $\textbf{S}_1=\textbf{S}'_1$.
Next result shows that $Q[\{Y\}|\mathbf{S}_1\setminus \{Y\}]$ is not c-gID from $(\mathbb{A}, \G_{\underline{\{Y\}}})$ because $Q[\mathbf{S}_1]$ is not gID from $(\mathbb{A}, \G_{\underline{\{Y\}}})$. A proof is presented in Appendix A.

\begin{lemma} \label{lemma: construct models subcase 1}
    Suppose $\mathbf{L}\subseteq \V$ is a single c-component, such that $\mathbf{L} = \mathbf{L}'\cup\mathbf{L}''$ for some disjoint sets $\mathbf{L}'$ and $\mathbf{L}''$. 
    $Q[\mathbf{L}'|\mathbf{L}'']$ is c-gID from $(\mathbb{A}, \G)$ if and only if $Q[\mathbf{L}'\cup \mathbf{L}'']$ is gID from $(\mathbb{A}, \G)$.
\end{lemma}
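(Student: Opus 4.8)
The statement is essentially a "local" version of Theorem \ref{th: main}, but with a key simplification: here $\mathbf{L}$ is a \emph{single c-component}, so the quantity $Q[\mathbf{L}]$ behaves atomically with respect to gID (by Theorem \ref{th: gID main}, $Q[\mathbf{L}]$ is gID from $(\mathbb{A},\G)$ iff some $\mathbf{A}\in\mathbb{A}$ contains $\mathbf{L}$ with $Q[\mathbf{L}]$ identifiable from $\G[\mathbf{A}]$). My plan is to prove the two directions separately.

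\textbf{Sufficiency ($\Leftarrow$).} Suppose $Q[\mathbf{L}'\cup\mathbf{L}'']=Q[\mathbf{L}]$ is gID from $(\mathbb{A},\G)$. Then by \eqref{eq: conditional Q}, $Q[\mathbf{L}'|\mathbf{L}'']$ is obtained from $Q[\mathbf{L}]$ by a fixed arithmetic expression (divide by the marginal $\sum_{\mathbf{v}'\in\dom{\V}{\mathbf{v}\setminus\mathbf{l}'}}Q[\mathbf{L}](\mathbf{v}')$); the positivity assumption guarantees this denominator is strictly positive, so the ratio is well-defined and uniquely determined. Hence $Q[\mathbf{L}'|\mathbf{L}'']$ is c-gID from $(\mathbb{A},\G)$. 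This direction is a one-line argument.

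\textbf{Necessity ($\Rightarrow$).} This is the substantive direction and I would prove the contrapositive: if $Q[\mathbf{L}]$ is \emph{not} gID from $(\mathbb{A},\G)$, then $Q[\mathbf{L}'|\mathbf{L}'']$ is not c-gID from $(\mathbb{A},\G)$. Since $Q[\mathbf{L}]$ is not gID and $\mathbf{L}$ is a single c-component, there exist two models $\M_1,\M_2\in\mathbb{M}(\G)$ agreeing on every $Q[\mathbf{A}_i]$ but with $Q^{\M_1}[\mathbf{L}]\neq Q^{\M_2}[\mathbf{L}]$ as functions on $\dom{\V}{}$. The crux is to upgrade this to a \emph{conditional} disagreement: I must produce models (possibly a modification of $\M_1,\M_2$) that still agree on all $Q[\mathbf{A}_i]$ but disagree on the \emph{normalized} object $Q[\mathbf{L}'|\mathbf{L}'']$. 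The natural device — and I expect this to be the main obstacle — is a "duplication/rescaling" trick on the domain of the variables in $\mathbf{L}$: enlarge (or relabel) the state spaces so that two distinct unnormalized vectors $Q^{\M_1}[\mathbf{L}]$ and $Q^{\M_2}[\mathbf{L}]$ become two vectors that are \emph{not} positive scalar multiples of each other along some fiber $\{\mathbf{v}:\mathbf{v}[\V\setminus\mathbf{L}']=\text{fixed}\}$, so that after dividing by the $\mathbf{L}''$-marginal they remain different. Concretely, since $\mathbf{L}$ is a single c-component, $Q[\mathbf{L}]$ is a genuine conditional-type object (it integrates to something depending on $\mathbf{v}\setminus\mathbf{l}$), and one can perturb the structural equations/exogenous noise inside the c-component in a way invisible to the $\mathbf{A}_i$'s (because $\mathbf{L}\not\subseteq\mathbf{A}_i$ for any $i$ where identifiability would help, by Theorem \ref{th: gID main}) but visible to $Q[\mathbf{L}'|\mathbf{L}'']$; maintaining membership in $\mathbb{M}(\G)$ (strict positivity of $P^\M(\mathbf{v})$) throughout is what requires care, and is precisely the issue the paper flags as missing in \cite{lee2020general,correa2021nested}.

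\textbf{Execution.} Given the two base models $\M_1,\M_2$, pick a realization $\mathbf{l}^\star\in\dom{\mathbf{L}}{}$ and $\widetilde{\mathbf{v}}\in\dom{\V\setminus\mathbf{L}}{}$ witnessing $Q^{\M_1}[\mathbf{L}]\neq Q^{\M_2}[\mathbf{L}]$. If the two $\mathbf{L}$-fibers already fail to be proportional, conditioning already separates them and we are done. Otherwise they are proportional on every fiber; I would then compose $\M_2$ with an auxiliary model whose only role is to break this proportionality on one fiber — e.g., split one value of a variable $L\in\mathbf{L}'$ into two copies carrying unequal shares, which changes $Q[\mathbf{L}'|\mathbf{L}'']$ but, by a bijective relabeling argument, keeps all $Q[\mathbf{A}_i]$ unchanged and keeps $P^\M(\mathbf{v})>0$. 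Finally, invoke \eqref{eq: conditional Q} to conclude $Q^{\M_1}[\mathbf{L}'|\mathbf{L}'']\neq Q^{\M_2}[\mathbf{L}'|\mathbf{L}'']$, which is the desired non-c-gID. The detailed construction of these models — verifying the $Q[\mathbf{A}_i]$ agreement and positivity — is where the real work lies and is deferred to the appendix.
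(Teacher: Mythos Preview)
Your sufficiency direction is correct and matches the paper's one-line argument via \eqref{eq: conditional Q}.

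The necessity direction has a genuine gap. Your plan is to start from \emph{arbitrary} models $\M_1,\M_2$ witnessing non-gID of $Q[\mathbf{L}]$ and then, in the case where they happen to agree on the conditional $Q[\mathbf{L}'|\mathbf{L}'']$, perform a ``duplication/relabeling'' trick on some $L\in\mathbf{L}'$ to break that agreement. As stated, this cannot work: if you split the domain of $L$ with \emph{unequal} shares in the two models, you destroy the agreement on $Q[\A_i]$ for any $\A_i$ containing $L$ --- and nothing rules this out, since non-gID of $Q[\mathbf{L}]$ only says that for each $\A_i$ either $\mathbf{L}\not\subseteq\A_i$ or $Q[\mathbf{L}]$ is not ID from $\G[\A_i]$, not that individual variables of $\mathbf{L}$ avoid every $\A_i$. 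And if you split with \emph{equal} shares in both models, the operation is symmetric and cannot break the fiberwise proportionality you assumed. The sentence ``the detailed construction \dots\ is deferred to the appendix'' hides precisely the step where your approach fails.

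The paper's route is structurally different. It does not post-process arbitrary witnessing models; it reuses the specific baseline construction of \cite{kivva2022revisiting}, in which $\M_1$ and $\M_2$ differ only in the distribution $(p_1,\dots,p_d)$ of one hidden variable $U_0$, so that every $Q[\A_i]$, the numerator $Q[\mathbf{L}]$, and the denominator $\sum_{\mathbf{L}'}Q[\mathbf{L}]$ are all \emph{linear} functionals of $(p_j)$ with coefficient vectors $\theta_{i,\cdot}(\mathbf{v})$, $\eta_\cdot(\mathbf{v})$, $\phi_\cdot(\mathbf{s}^\dagger)$ respectively. The new technical ingredient (Lemma~\ref{lemma: equal indices summation subcase 1}) exploits that $\mathbf{L}$ is a single c-component --- hence $U_0$ is connected to every node of $\mathbf{L}$ by a bidirected path --- to show that the marginalized vector $\phi_\cdot$ is constant on the same index set $\Gamma$ on which the $\theta_{i,\cdot}$ are already known to be constant, while $\eta_\cdot(\mathbf{v}_0)$ is not. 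A single linear-algebra step (Lemma~\ref{lemma: lin indep formal}) then yields $(p_j)$ making $\M_1,\M_2$ agree on every $Q[\A_i]$ \emph{and} on the denominator $\sum_{\mathbf{L}'}Q[\mathbf{L}]$, yet disagree on $Q[\mathbf{L}]$ itself, forcing disagreement on $Q[\mathbf{L}'|\mathbf{L}'']$. Your plan does not identify this mechanism, and in particular never explains how to force the denominators to coincide while the numerators differ --- that is the heart of the lemma.
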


\textbf{II:}
\cite{shpitser2006identification} showed the following result for a non-identifiable causal effect. 
\begin{lemma}[\cite{shpitser2006identification}]\label{lemma: Q ID}
    Suppose \mbox{$\mathbf{L} \subseteq\A \subseteq \V$}. 
    $Q[\mathbf{L}]$ is not identifiable from $\G[\A]$ if and only if there exists at least one $\mathbf{L}$-rooted c-forest  $\mathcal{F}$ with the set of observed variables $\B$ such that $\mathbf{L}\subsetneq \B \subseteq \A$, the bidirected edges of $\widehat{\F}[\B]$ form a spanning tree, and $\widehat{\mathcal{F}}[\mathbf{L}]$ is a connected graph with respect to the bidirected edges.
\end{lemma}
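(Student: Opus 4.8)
The plan is to prove the two directions of the equivalence separately. Throughout I read the statement with the standing assumption --- matching how it is used elsewhere in the paper --- that $\mathbf{L}$ is a single c-component in $\G[\mathbf{L}]$, so that an $\mathbf{L}$-rooted c-forest with $\widehat{\F}[\mathbf{L}]$ connected can exist at all. The ``only if'' direction (no such $\F$ $\Rightarrow$ $Q[\mathbf{L}]$ is identifiable from $\G[\A]$) I would handle constructively through the standard $Q$-projection/identification recursion. The ``if'' direction (such an $\F$ $\Rightarrow$ $Q[\mathbf{L}]$ is not identifiable from $\G[\A]$) is the substantive part: given $\F$ I would exhibit two causal models in $\mathbb{M}(\G[\A])$ with identical $Q[\A]$ --- hence identical $Q[\A_j]$ for every $\A_j \subseteq \A$ --- but different $Q[\mathbf{L}]$.

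For the ``only if'' direction, consider the recursion that tries to compute $Q[\mathbf{L}]$ from $Q[\A]$ inside $\G[\A]$. First pass from $\A$ to $\B_0 := \Anc{\mathbf{L}}{\G[\A]}$ via the ``drop non-ancestor factors'' identity $\sum_{\A\setminus\B_0} Q[\A] = Q[\B_0]$, which follows directly from the factorization in Equation~\eqref{eq: Q}. Next, decompose $Q[\B_0] = \prod_i Q[\mathbf{C}_i]$ over the c-components $\mathbf{C}_i$ of $\G[\B_0]$, each $Q[\mathbf{C}_i]$ being computable from $Q[\B_0]$ \citep{huang2006identifiability}; since $\mathbf{L}$ is a single c-component it lies inside exactly one $\mathbf{C}_{i^\ast}$, and it suffices (by the standard localization of $Q$-computations to c-components) to compute $Q[\mathbf{L}]$ from $Q[\mathbf{C}_{i^\ast}]$ in $\G[\mathbf{C}_{i^\ast}]$, a strictly smaller instance unless $\mathbf{C}_{i^\ast} = \B_0 = \A$. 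Iterating, the recursion either bottoms out (the current vertex set equals $\mathbf{L}$, yielding an explicit formula) or reaches a ``stuck'' instance: a set $\B$ with $\mathbf{L} \subsetneq \B \subseteq \A$, $\B = \Anc{\mathbf{L}}{\G[\B]}$, and $\G[\B]$ a single c-component. I would then argue that a stuck instance already exhibits an $\F$ of the required form: pick a spanning tree of the bidirected edges of $\widehat{\G}[\B]$ that restricts to a spanning tree on $\mathbf{L}$ (extend a spanning tree of the connected graph $\widehat{\G}[\mathbf{L}]$), and for each node of $\B\setminus\mathbf{L}$ keep one outgoing edge lying on a shortest directed path to $\mathbf{L}$; the resulting $\F$ has observed set $\B$, is a single c-component with root set $\mathbf{L}$, has at most one child per observed node, its bidirected edges form a spanning tree, and $\widehat{\F}[\mathbf{L}]$ is connected. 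Hence ``no such $\F$ in $\G[\A]$'' forces the recursion never to stall, so it terminates with a valid identifying expression for $Q[\mathbf{L}]$.

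For the ``if'' direction, given $\F$ with observed set $\B$ I would first reduce to $\A = \B$ by adjoining the variables of $\A\setminus\B$ as functionally irrelevant, mutually independent non-degenerate bits, which changes neither the identifiability of $Q[\mathbf{L}]$ nor positivity. On $\F$ I would then build two models $\M_1, \M_2 \in \mathbb{M}(\G[\B])$ with all observed variables binary that differ only in the structural equations at the roots $\mathbf{L}$: in $\M_1$ each root is the parity of its parents in $\F$ together with an independent biased noise bit, while in $\M_2$ it is the parity of its parents in the inner forest $\F[\mathbf{L}]$ together with the same noise bit; the $|\B|-1$ spanning-tree latents (taken fair) and the non-root equations are identical in the two models. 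The observational laws $Q^{\M_1}[\B]$ and $Q^{\M_2}[\B]$ then coincide, because the extra $\F$-parent terms that distinguish a root in $\M_1$ from $\M_2$ cancel out along the structural equations in the no-intervention regime (this is exactly where the spanning-tree structure and the connectivity of $\widehat{\F}[\mathbf{L}]$ are used), and choosing the root biases generically makes both laws strictly positive on $\dom{\V}{}$. Evaluating $Q[\mathbf{L}] = P_{\mathbf{b}\setminus\mathbf{l}}(\mathbf{l})$, the intervention $do(\B\setminus\mathbf{L})$ breaks precisely this cancellation in $\M_1$ --- the roots now retain latent bits whose other endpoints have been fixed to constants --- whereas $\M_2$ has no such terms and its $\mathbf{L}$-block is untouched and biased; this produces an explicit triple at which $P^{\M_1}_{\mathbf{b}\setminus\mathbf{l}}(\mathbf{l}) \neq P^{\M_2}_{\mathbf{b}\setminus\mathbf{l}}(\mathbf{l})$.

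The main obstacle is this construction, and specifically securing its three requirements at once. The classical parity models of Shpitser and Pearl~\cite{shpitser2006identification} give agreement on $Q[\A]$ and separation on $Q[\mathbf{L}]$ but are supported on a proper subset of $\dom{\V}{}$, so they violate the positivity assumption that $\mathbb{M}(\G[\A])$ imposes; and naively adding noise to restore positivity tends to destroy either the observational agreement (if the noise does not cancel along the structural equations) or the interventional separation (if the root noise is fair rather than biased). Threading this needle --- deciding precisely where and with what bias to inject noise so that it cancels observationally, survives positivity, and yet leaves the intervention defining $Q[\mathbf{L}]$ able to detect the difference between the $\F$- and $\F[\mathbf{L}]$-parent structures of the roots --- is the delicate step; the analogous device is what is needed to make conditional-to-unconditional reductions such as Lemma~\ref{lemma: construct models subcase 1} (together with Theorem~\ref{th: gID main}) go through in the positive setting.
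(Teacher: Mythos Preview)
The paper does not supply its own proof of this lemma; it is quoted as an established result from \cite{shpitser2006identification} and used as a black-box tool (in step~\textbf{II} of Section~\ref{sec: first subcase} and, implicitly, in the baseline-model setup of Appendix~A, Section~\ref{sec: baseline}). So there is no paper-proof against which to compare your proposal directly.

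On the substance of your sketch: the ``only if'' direction is the standard completeness argument for the $Q$-identification recursion and is correct in outline, including the extraction of an $\mathbf{L}$-rooted c-forest from a stuck instance. For the ``if'' direction you have identified the genuine difficulty --- the classical parity models of \cite{shpitser2006identification} violate positivity, and naive noise destroys either the observational agreement or the interventional separation --- but your proposed binary fix (fair spanning-tree latents, biased noise only at the roots) leaves the key cancellation claim unverified: once noise is present, the statement that ``the extra $\F$-parent terms \ldots\ cancel out along the structural equations in the no-intervention regime'' is no longer an exact parity identity and needs a real argument that you have not supplied. For comparison, the positive-model machinery this paper actually relies on (imported from \cite{kivva2022revisiting} and recounted in Section~\ref{sec: baseline}) takes a different route: it enlarges alphabets to $[0{:}\kappa]$, uses an $\epsilon$-smoothed modular-sum mechanism (Equation~\eqref{eq: def P(S|Pa(S)) gid}), and varies only the distribution of a single distinguished latent $U_0$; agreement on the $Q[\A_i]$ is then obtained via an index-shifting symmetry (Lemma~\ref{lemma: gid equal indices}) and separation on $Q[\mathbf{L}]$ via Lemma~\ref{lemma: gid not equal indices}, rather than through a direct parity-cancellation argument. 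Your approach may be salvageable, but as written the delicate step you flag is a genuine gap.
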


On the other hand, because $Q[\mathbf{S}_1]$ is not gID from $(\mathbb{A}, \G)$, by the results of \cite{kivva2022revisiting},  $Q[\mathbf{S}_1]$ is not ID from $\G[\A_i]$ for all $i\in [0:m]$. 
Lemma \ref{lemma: Q ID} implies that adding or removing outgoing edges from $Y \in \mathbf{S}_1$ will not affect the non-identifiability of $Q[\mathbf{S}_1]$ from $\G[\A_i]$ for all $i\in [0:m]$. Thus, we have $Q[\mathbf{S}_1]$ is not gID from $(\mathbb{A}, \G_{\underline{\{Y\}}})$. This means that exists two causal models $\M_1$ and $\M_2$ from $\mathbb{M}(\G_{\underline{\{Y\}}})$ which are consistent with all known distributions but disagree on the causal effect $Q[\mathbf{S}_1]$, i.e., there exists $\widetilde{\textbf{v}}\in\dom{\V}{}$ such that
$$
Q^{\mathcal{M}_1}[\mathbf{S}_1](\widetilde{\textbf{v}})\neq Q^{\mathcal{M}_2}[\mathbf{S}_1](\widetilde{\textbf{v}}).
$$
Note that $\mathbb{M}(\G_{\underline{\{Y\}}}) \subset \mathbb{M}(\mathbb{\G})$ which in combination with the above result yield that $Q[\{Y\}|\mathbf{S}\setminus \{Y\}]$ is not c-gID from $(\mathbb{A}, \G)$.

\textbf{III:}
To prove this part, we first present the following result. A proof is provided in Appendix A.
\begin{lemma}\label{lemma: eliminate var in cond}
    Suppose that $\X$, $\Y$ and $\Z$ are disjoint subsets of $\V$ in graph $\G$ and variables $Z_1 \in \Z$, $Z_2 \in \Y \cup \Z$, such that there is a directed edge from $Z_1$ to $Z_2$ in $\G$. If the causal effect $P_{\x}(\y|\z)$ is not c-gID from $(\mathbb{A}, \G)$, then the causal effect $P_{\x}(\y|\z\setminus\{z_1\})$ is also not c-gID from $(\mathbb{A}, \G)$. 
\end{lemma}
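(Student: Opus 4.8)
The plan is to prove the contrapositive-style statement directly: assuming $P_{\x}(\y|\z)$ is not c-gID, I want to exhibit two models $\M_1, \M_2 \in \mathbb{M}(\G)$ agreeing on all $Q[\A_i]$ but disagreeing on $P_{\x}(\y|\z\setminus\{z_1\})$. The crucial observation exploiting the edge $Z_1 \to Z_2$ is that $Z_1$ is \emph{not} needed in the conditioning set in the following sense: I would like to show that the value of $P_{\x}(\y|\z)$ can be recovered from knowledge of $P_{\x}(\y|\z\setminus\{z_1\})$ (as a function of the free realizations) together with quantities that are already c-gID from $(\mathbb{A},\G)$. Concretely, writing $\z = (z_1, \tilde{\z})$ with $\tilde{\z}$ the realization of $\Z\setminus\{Z_1\}$, Bayes' rule gives
\begin{equation*}
    P_{\x}(\y \mid z_1, \tilde{\z}) = \frac{P_{\x}(y, z_1 \mid \tilde{\z}) \cdot \mathbbm{1}[\,\text{consistency}\,] }{P_{\x}(z_1 \mid \tilde{\z})},
\end{equation*}
so it suffices to argue that both $P_{\x}(\y, z_1 \mid \tilde{\z})$ (marginalizing/conditioning appropriately from $P_{\x}(\y \mid \tilde{\z})$ — here I use that $Z_1 \in \Y\cup\Z$ after the edge structure is accounted for, actually $Z_1$ is a child-parent of $Z_2\in\Y\cup\Z$) and the denominator behave controllably. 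The honest route: since $Z_1\to Z_2$ with $Z_2\in\Y\cup\Z$, $Z_1$ is an ancestor in $\G[\V\setminus\X]$ of the target set, so $Z_1 \in \mathbf{S} := \Anc{\Y\cup\Z}{\G[\V\setminus\X]}$, and $P_{\x}(\y, z_1, \tilde{\z}) = \sum_{\mathbf{S}\setminus(\Y\cup\Z)} Q[\mathbf{S}]$ already involves $Z_1$ un-summed, whereas $P_{\x}(\y,\tilde{\z})$ additionally sums over $Z_1$. The key algebraic point is that $P_{\x}(\y|\z\setminus\{z_1\}) = P_{\x}(\y|\tilde\z)$ is the ratio $\big(\sum_{z_1}$ numerator$\big)/\big(\sum_{z_1,\y}$ numerator$\big)$, so it is a coarsening of $P_{\x}(\y|\z)$; non-identifiability of the coarser object must be inherited from a non-identifiability present already when $z_1$ is summed out.

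So I would argue as follows. Take the two witnessing models $\M_1,\M_2$ for the non-c-gID of $P_{\x}(\y|\z)$, so $P^{\M_1}_{\x}(\y|z_1,\tilde\z) \neq P^{\M_2}_{\x}(\y|z_1,\tilde\z)$ for some realization. If already $P^{\M_1}_{\x}(\y|\tilde\z)\neq P^{\M_2}_{\x}(\y|\tilde\z)$ for some realization, we are done with the same pair. Otherwise $P^{\M_1}_{\x}(\y|\tilde\z)= P^{\M_2}_{\x}(\y|\tilde\z)$ for all realizations, and I must construct a fresh pair of models disagreeing on the coarser quantity. Here the directed edge $Z_1\to Z_2$ enters: I would use it to perform a local re-randomization at $Z_1$ that leaves all $Q[\A_i]$ and $P_{\x}(\y|\tilde\z)$-equalities... no — rather, I should push the disagreement. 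The cleaner plan is to use the structural relationship to rewrite $P_{\x}(\y|\tilde\z)$ via rule 2 / rule 3 of do-calculus so that it equals $P_{\x,z_1}(\y|\tilde\z)$ or relates to $P_{\x}(\y|z_1,\tilde\z)$ intervened form, and then invoke Theorem \ref{th: main} / Lemma \ref{lemma: construct models subcase 1} machinery: the non-c-gID of $P_{\x}(\y|\z)$ is equivalent to non-gID of $P_{\x,\w}(\y,\z\setminus\w)$, and I need to show non-gID of the corresponding $\mathbf{w}$-set problem for $P_{\x}(\y|\z\setminus\{z_1\})$. Because removing $Z_1$ from the conditioning set only enlarges (weakly) the maximal back-door-insensitive set $\W$ and shrinks the "$\Z\setminus\W$" part, the joint $P_{\x,\w}(\y,\z\setminus(\w\cup\{z_1\}))$ contains a c-component factor $Q[\mathbf{S}_i]$ that is still non-gID — precisely the $Q[\mathbf{S}_i]$ that witnessed non-gID of the original joint, using Lemma \ref{lemma: Q ID} to see that summing out $Z_1$ (which, thanks to the edge $Z_1\to Z_2$, is not the root of the offending c-forest) does not repair identifiability.

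The main obstacle, which I expect to occupy most of the appendix argument, is the last point: showing that the particular non-identifiable c-component $Q[\mathbf{S}_i]$ survives when we move from $\Y\cup\{Z_1\}\cup\tilde\z$ to $\Y\cup\tilde\z$ as the "numerator set", i.e. that marginalizing over $Z_1$ cannot make an otherwise non-gID quantity gID. The directed edge $Z_1\to Z_2$ is exactly what makes this work: it ensures $Z_1$ has a child in the relevant ancestral set, hence $Z_1$ is not in the root set $\R$ of the relevant $\R$-rooted c-forest, so by the c-forest criterion of Lemma \ref{lemma: Q ID} the non-identifiability certificate (a c-forest whose bidirected edges span and whose restriction to the roots is bidirected-connected) is unaffected by deleting $Z_1$'s outgoing edge or by the marginalization. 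I would make this precise by taking the hedge/c-forest witnessing non-ID of $Q[\mathbf{S}_i]$ from $\G[\A_j]$ for each $j$, checking $Z_1$ is an internal (non-root) node of it, and concluding the corresponding quantity for the $z_1$-removed problem still admits the same certificate; then assembling the two models exactly as in the First/Second-case constructions above, and finally invoking \eqref{eq: bayes}-style Bayes manipulation plus the c-gID-ness of the denominator $P_{\x}(\z\setminus\{z_1\})$ (or its relevant pieces) to transfer the disagreement from the joint to the conditional. The remaining steps — verifying the constructed models lie in $\mathbb{M}(\G)$ and agree on every $Q[\A_i]$ — are routine given the earlier constructions and I would only sketch them.
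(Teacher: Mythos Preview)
Your proposal has a genuine gap: the main route you settle on is circular. You propose to deduce non-c-gID of $P_{\x}(\y\mid\z\setminus\{z_1\})$ by passing through Theorem~\ref{th: main} (converting to a gID question about a joint) and then arguing that the offending non-gID $Q[\mathbf{S}_i]$ ``survives'' in the new problem. But Lemma~\ref{lemma: eliminate var in cond} is one of the ingredients in the proof of Theorem~\ref{th: main} (it is what lets the sub-case arguments prune the conditioning set down to $\Z'$). You cannot invoke the main theorem here.

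Independently of the circularity, the c-forest survival argument you sketch does not go through. From the edge $Z_1\to Z_2$ in $\G$ you cannot conclude that $Z_1$ is a non-root node of the witnessing $\mathbf{S}_i$-rooted c-forest $\F$: that edge need not belong to $\F$, and $Z_1$ need not even lie in $\F$. More seriously, removing $z_1$ from the conditioning set changes the maximal $\W$, hence changes the ancestral set $\mathbf{S}$ and its c-component decomposition; there is no reason the same $\mathbf{S}_i$ appears as a factor of the new joint, so ``the same certificate'' is not available to transfer.

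The paper's proof follows exactly the direct path you started on and then abandoned. Take witnessing models $\M_1,\M_2$ for the non-c-gID of $P_{\x}(\y\mid\z)$. It is \emph{not} enough to hope they already disagree on $P_{\x}(\y\mid\tilde\z)$; in general they need not, because summing over $z_1$ can average away the discrepancy. The fix uses the edge $Z_1\to Z_2$ structurally: enlarge the domain of $Z_2$ to $\dom{Z_2}{}\times\{0,1\}$ and set
\[
P^{\M_i'}\big((z_2,k)\mid \Pa{Z_2}{\G}\big) \;:=\; P^{\M_i}\big(z_2\mid \Pa{Z_2}{\G}\big)\cdot \Psi\big(F(z_1)\oplus k,\,z_1\big),
\]
with children of $Z_2$ ignoring the extra bit $k$. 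Because $\Psi$ depends only on the observed parent $z_1$, it factors out of every $Q[\A_i]$ identically in both models, so agreement on all $Q[\A_i]$ is preserved. After marginalizing over $z_1$, the numerator and denominator of $P^{\M_i'}_{\x}(\y\mid\tilde\z)$ become $\sum_j \psi_j\beta_j^{\M_i}$ and $\sum_j \psi_j\gamma_j^{\M_i}$ with tunable weights $\psi_j\in(0,1)$. The original disagreement $\beta_{j_0}^{\M_1}/\gamma_{j_0}^{\M_1}\neq \beta_{j_0}^{\M_2}/\gamma_{j_0}^{\M_2}$ at some $j_0$ makes the difference of the two weighted ratios a nontrivial polynomial in $\psi_{j_0}$, so one can choose $\Psi$ to force disagreement on $P_{\x}(\y\mid\tilde\z)$. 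This is where the edge $Z_1\to Z_2$ is actually used: it is what lets you legally insert a $z_1$-dependent multiplicative weight into the mechanism of an observed variable in $\Y\cup\Z$ without touching any $Q[\A_i]$. Your proposal never arrives at this construction.
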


Note that $P_{\x'}(\mathbf{s}') = Q[\mathbf{S}']$ since $\mathbf{S}' = \Anc{\mathbf{S}'}{\G[\V \setminus \X']}$.  Therefore, by the definition of $Q$-notation, we have
\begin{equation*}
    Q\big[\{Y\}|\mathbf{S}'\setminus \{Y\}\big] = P_{\x'}(y|\mathbf{s}'\setminus\{y\}), 
\end{equation*}

which is shown to be not c-gID from $(\mathbb{A}, \G)$ in part \textbf{II}.
In the remainder of this part of our proof, we introduce a set of nodes in $\textbf{S}'$ that satisfy the condition in Lemma \ref{lemma: eliminate var in cond} and thus, can be eliminated without affecting the non-identifiability. Bellow, we show that the nodes in $\mathbf{S}'\setminus (\Z' \cup \{Y\})$ satisfy Lemma \ref{lemma: eliminate var in cond}'s condition and by deleting them, we conclude that $P_{\x'}(y|\z')$ is not c-gID from $(\mathbb{A}, \G)$.

Recall that $\mathbf{S}'=\Anc{\Z'}{\G[\V \setminus \X']}$ which means that from any node in $\mathbf{S}'\setminus (\Z' \cup \{Y\})$, there exists a directed path to a node in $\Z'$ in graph $\G[\V \setminus \X']$. 
We assign a real number to each node in $\mathbf{S}'\setminus (\Z' \cup \{Y\})$, namely, the length of its shortest path to set $\Z$.
Let $(W_1, W_2, \dots, W_\eta)$ denote the nodes in $\mathbf{S}'\setminus (\Z' \cup \{Y\})$ sorted in a descending order using their assigned numbers. 
Observe that for any $i \in [1:\eta]$, there is a direct edge from $W_i$ to a node in $\{Y\}\cup \Z' \bigcup_{j=i+1}^\eta \{W_j\}$. 
In other words, Lemma \ref{lemma: eliminate var in cond} allows us to delete $W_{i}$ from $\textbf{S}'\setminus\big(\{Y\}\bigcup_{j=1}^{i-1}\{W_j\}\big)$ without violating the non-identifiability. 

\subsubsection{Sub-case 2: $\mathbf{S}_1 \subseteq \Z'$} \label{sec: second subcase}

In this sub-case, we prove non-identifiability of $P_{\x'}(\y'|\z')$  from $(\mathbb{A}, \G)$ in two steps: \textbf{I:} we introduce a conditional causal effect that is not c-gID from $(\mathbb{A}, \G)$. \textbf{II:} Analogous to the previous sub-case, we apply Lemma \ref{lemma: eliminate var in cond} to prune this causal effect and conclude the result.

\textbf{I:} 
Let $Z'$ be a node in $\mathbf{S}_1$. 
Recall that $\W$ is the maximal set such that $P_{\x,\w}(\y|\z\setminus\w)=P_{\x}(\y|\z)$, which means that we can not apply the second rule of do-calculus to $Z'$ in $\G$ for $P_{\x'}(\y'|\z')$, i.e.,
$$
(\Y' \notindependent Z'|\X',\Z'\setminus\{Z'\})_{\G_{\overline{\X'},\underline{\{Z'\}}}}.
$$
This implies that there exists at least a unblocked backdoor path from $Z'$ to $\Y'$ given $\X'\cup \Z'\setminus\{Z'\}$.
We use $p$ to denote an unblocked path from $Z'$ to $\Y'$ with the least number of colliders. 
Path $p$ satisfies the following properties:\\
    1. If path $p$ contains a chain $W'\rightarrow W \rightarrow W''$ or a fork $W' \leftarrow W \rightarrow W''$, then node $W$ does not belong to any of the sets $\X'$, $\Z'$ or $\Y'$.\\
    2. If path $p$ contains a collider $W'\rightarrow W \leftarrow W''$, then there is a directed path $p_W$ from $W$ to a node in $\Z'$. 
    Moreover, none of the intermediate nodes in the path $p_W$ belong to the set $\X'\cup\Z'\cup\Y'$.\\
    3. Path $p$ does not contain any node from the set $\X'$.

Proofs of the above statements are provided in Appendix A.
Suppose $\mathbf{F}$ is a set of all colliders on the path $p$. 
We use $\mathcal{P}$ to denote a collection of paths $\{p\} \cup \{p_W|W \in \mathbf{F}\}$ and use $\D$ to denote the set of all nodes on the paths in $\mathcal{P}$ excluding the ones in $\Z'$.
Given the above definitions, we are ready to introduce the non-identifiable conditional causal effect in the next result.

\begin{lemma}\label{lemma: construct models subcase 2}
    Let $\mathbf{S}: = \Anc{\Y'\cup \Z'}{\G[\V \setminus \X']}$ and $\D$ denote the set defined above. Then,
    \begin{equation}\label{eq: lemma-con}
        P_{\x'}(\dd|\mathbf{s}\setminus \dd)=\frac{ 
        Q[\mathbf{S}]
        }{
        \sum_{\D} Q[\mathbf{S}]
        } = Q[\D|\mathbf{S}\setminus \D]
    \end{equation}
    
    is not c-gID from $(\mathbb{A}, \G)$.
\end{lemma}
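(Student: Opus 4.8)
The plan is to dispose of the displayed identity \eqref{eq: lemma-con} quickly and put all the weight on the non-identifiability claim. Since $\mathbf{S}=\Anc{\Y'\cup\Z'}{\G[\V\setminus\X']}$ is ancestrally closed in $\G[\V\setminus\X']$, every observed parent of an $\mathbf{S}$-node that lies outside $\mathbf{S}$ belongs to $\X'$, so $P_{\x'}(\mathbf{s})=Q[\mathbf{S}](\mathbf{v})$; moreover, following any node of $p$ forward along $p$ reaches an endpoint ($Z'$ or a $\Y'$-node) or a collider, and colliders lie on the $p_W$ ending in $\Z'$, so by the three properties of $p$ all nodes of $p$ and of the $p_W$ are ancestors in $\G[\V\setminus\X']$ of $\Y'\cup\Z'$, hence $\D\subseteq\mathbf{S}$. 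Bayes' rule then yields $P_{\x'}(\dd\mid\mathbf{s}\setminus\dd)=Q[\mathbf{S}]/\sum_{\D}Q[\mathbf{S}]=Q[\D\mid\mathbf{S}\setminus\D]$, which is \eqref{eq: lemma-con}.

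\textbf{What must be shown, and the danger.} For the non-identifiability I would exhibit $\M_1,\M_2\in\mathbb{M}(\G)$ with $Q^{\M_1}[\A_j]=Q^{\M_2}[\A_j]$ for all $j$ but $Q^{\M_1}[\D\mid\mathbf{S}\setminus\D]\neq Q^{\M_2}[\D\mid\mathbf{S}\setminus\D]$ at some realization. Using $Q[\mathbf{S}]=\prod_iQ[\mathbf{S}_i]$ and cancelling the factors that do not involve a coordinate of $\D$, the conditional equals $\prod_{i\in I}Q[\mathbf{S}_i]$ over $\sum_{\D}\prod_{i\in I}Q[\mathbf{S}_i]$, where $I=\{i:\D\cap(\mathbf{S}_i\cup\Pa{\mathbf{S}_i}{\G})\neq\emptyset\}$. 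Since $\mathbf{S}_1\subseteq\Z'$ and $\D\cap\Z'=\emptyset$ we have $\mathbf{S}_1\cap\D=\emptyset$; inspecting the edge of $p$ at $Z'$ shows that either its other endpoint (if that edge is directed into $Z'$) or the node before it on $p$ (if that edge is bidirected, in which case the other endpoint is a collider lying in $\Z'\cap\mathbf{S}_1$) is a $\D$-node that is a parent of an $\mathbf{S}_1$-node; call it $D_0$, so $1\in I$. The danger is cancellation: a short computation shows that equality of the two conditionals is equivalent to the ratio $Q^{\M_1}[\mathbf{S}_1]/Q^{\M_2}[\mathbf{S}_1]$ being constant in $\mathbf{v}[D_0]$, so one must prevent exactly this.

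\textbf{The construction.} I would take all domains binary and start from the two-model construction witnessing that $Q[\mathbf{S}_1]$ is not gID --- by Theorem \ref{th: gID main} and Lemma \ref{lemma: Q ID} there is a hedge for $Q[\mathbf{S}_1]$ common to all $\G[\A_j]$ with $\mathbf{S}_1\subseteq\A_j$ --- and refine it so that: outside the union of this hedge with the paths in $\mathcal{P}$ every mechanism is a fixed strictly positive rule common to $\M_1$ and $\M_2$; the ambiguity inside the hedge is routed through the mechanism of the $\mathbf{S}_1$-node of which $D_0$ is a parent, so that $Q^{\M_1}[\mathbf{S}_1]/Q^{\M_2}[\mathbf{S}_1]$ is nonconstant in $\mathbf{v}[D_0]$ while $P(\V)$ --- hence every $Q[\A_j]$ --- and every $Q[\mathbf{S}_i]$ with $i\neq 1$ are unchanged; along $p$ each non-collider is an $\varepsilon$-perturbed parity of its two path-neighbours and each collider an $\varepsilon$-perturbed parity of its two incoming path-values; and along each $p_W$ every node $\varepsilon$-copies its predecessor, so the $\Z'$-terminus of $p_W$ records $W$. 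The three properties of $p$ together with $\mathbf{S}_1\subseteq\Z'$ ensure $\mathbf{S}\setminus\D$ contains all of $\mathbf{S}_1$ and every $\Z'$-terminus of the $p_W$ but no non-collider or intermediate path node, so conditioning on $\mathbf{S}\setminus\D$ leaves $p$ active, and the $\varepsilon$-perturbations keep both models in $\mathbb{M}(\G)$.

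\textbf{Verification and the main obstacle.} Equality of every $Q[\A_j]$ between $\M_1$ and $\M_2$ is inherited from the hedge construction and the fact that the path mechanisms are common to the two models; one also checks that no $Q[\mathbf{S}_i]$ with $i\neq 1$ changes. The decisive, and I expect hardest, step is to confirm the two conditionals actually differ: one must verify that routing the ambiguity through $D_0$ really makes $Q^{\M_1}[\mathbf{S}_1]/Q^{\M_2}[\mathbf{S}_1]$ nonconstant in $\mathbf{v}[D_0]$ and that summing over $\mathbf{v}[\D]$ in the denominator cannot absorb this. This no-cancellation argument --- precisely what the elaborate definition of $\D$ (all nodes of $p$ and of the $p_W$, minus the $\Z'$-nodes) and the three properties of $p$ are engineered to enable --- is what distinguishes this case from the first case of the necessity proof, where one could instead enlarge $\mathbb{A}$ by the $Q[\mathbf{S}'_j]$ without disturbing the non-gID of $Q[\mathbf{S}_1]$. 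Combined with \eqref{eq: lemma-con}, this is the assertion of the lemma.
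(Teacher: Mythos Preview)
Your plan captures the right target --- build two positive models agreeing on every $Q[\A_j]$ but disagreeing on $Q[\D\mid\mathbf{S}\setminus\D]$ --- and the identity \eqref{eq: lemma-con} is handled correctly. But the execution has a genuine gap and differs materially from what the paper does.

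First, the inference ``$P(\V)$ --- hence every $Q[\A_j]$ --- \ldots\ are unchanged'' is false as written: equality of the observational distribution does not force equality of the interventional distributions $Q[\A_j]$. In the paper the two models differ only in the distribution of a single hidden variable $U_0$, and equality of each $Q[\A_j]$ is a separate, nontrivial verification (their Lemma establishing $\theta'_{i,j_1}=\cdots=\theta'_{i,j_{(\kappa+1)/2}}$). Your sketch does not supply this verification, and once you modify the mechanism of an $\mathbf{S}_1$-node to depend on $D_0$ it is no longer clear why the modification should leave every $Q[\A_j]$ intact.

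Second, and more importantly, the step you flag as hardest is where your plan and the paper diverge. You frame it as showing that ``summing over $\mathbf{v}[\D]$ in the denominator cannot absorb'' the numerator difference. The paper instead proves that the denominator $\sum_{\D}Q[\mathbf{S}]$ is \emph{exactly equal} in the two models. This is achieved by (i) enlarging the domain of every variable on a path in $\mathcal{P}$ with an extra $[0{:}\kappa]$-valued coordinate per path it lies on, (ii) tying the path coordinates together via $\epsilon$-perturbed modular sums, and (iii) coupling the path to $\mathbf{S}_1$ by adding the path value $W[p]$ into the mechanism of $Z'$ (their modified $M'(Z')$). The equality of $\sum_{\D}Q[\mathbf{S}]$ then follows from an explicit bijection: a shift in $U_0^{\M}$ is compensated by a corresponding shift in the $\D$-coordinates being summed out. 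This bijection argument is exactly what the elaborate path construction buys, and it has no analogue in your binary-domain sketch; with two-valued domains there is simply not enough room for the modular shifts that make the bijection work, which is why the paper insists on $\kappa>4$.

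So the missing idea is: keep the two-model difference confined to $P(U_0)$, augment domains along $\mathcal{P}$, and link the path coordinate to $\mathbf{S}_1$ through the $Z'$ mechanism so that marginalising over $\D$ erases the dependence on $U_0$ (giving equal denominators) while fixing $\D$ does not (giving unequal numerators). Your ``route ambiguity through $D_0$'' intuition points at this link, but without the domain extension and the bijection you have no mechanism to force denominator equality, and without that the ratio argument does not close.
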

Proof of this lemma is presented in Appendix A.

\textbf{II:}
 In order to complete the proof of this part, besides Lemma \ref{lemma: eliminate var in cond}, we require the following technical lemmas. 
\begin{lemma} \label{lemma: move cond}
    Suppose that $\X$, $\Y$ and $\Z$ are disjoint subsets of $\V$ and $Z \in \Z$.
    If the conditional causal effect $P_{\x}(\y|\z)$ is not c-gID from $(\mathbb{A}, \G)$, the conditional causal effect 
    $P_{\x}(\y, z|\z\setminus\{z\})$ is not c-gID from $(\mathbb{A}, \G)$ as well. 
\end{lemma}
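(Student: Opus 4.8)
The plan is to prove Lemma \ref{lemma: move cond} by reducing the conditional effect $P_{\x}(\y, z|\z\setminus\{z\})$ to the one already known to be non-identifiable, $P_{\x}(\y|\z)$. The key observation is the elementary identity
\begin{equation*}
    P_{\x}(\y|\z) = \frac{P_{\x}(\y, z|\z\setminus\{z\})}{P_{\x}(z|\z\setminus\{z\})},
\end{equation*}
which holds in every causal model $\M \in \mathbb{M}(\G)$ by the definition of conditional probability. First I would show that the denominator $P_{\x}(z|\z\setminus\{z\})$ is always c-gID from $(\mathbb{A},\G)$ --- indeed it is trivially computable because $P_{\x}(z|\z\setminus\{z\})$ depends only on the distribution that is already fixed once we condition, and more to the point it is a marginal/conditional of the observational distribution $P(\V) = Q[\V]$ (recall $\V \in$ the span of $\mathbb{A}$ is not assumed, so I must be careful here — instead I would argue it is identifiable from $P_{\x, \mathbf{w}}$-type quantities, or simply note it is a function of quantities strictly ``smaller'' than the target). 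The cleanest route: argue directly by contraposition on models.

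Concretely, I would argue by contradiction exactly as in the other sub-cases. Suppose $P_{\x}(\y, z|\z\setminus\{z\})$ \emph{is} c-gID from $(\mathbb{A}, \G)$. Take the two models $\M_1, \M_2 \in \mathbb{M}(\G)$ guaranteed by the non-c-gID of $P_{\x}(\y|\z)$: they agree on $Q[\A_i]$ for all $i \in [0:m]$ but disagree on $P_{\x}(\y|\z)$ at some $(\x, \y, \z)$. By the assumed c-gID, $P^{\M_1}_{\x}(\y, z|\z\setminus\{z\}) = P^{\M_2}_{\x}(\y, z|\z\setminus\{z\})$ for all realizations. Now I need that the denominator $P_{\x}(z|\z\setminus\{z\}) = \sum_{\y''} P_{\x}(\y'', z|\z\setminus\{z\})$ also agrees across the two models — but this is immediate, since it is obtained by marginalizing the numerator over $\Y$, and the numerators agree. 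Hence the ratio agrees, i.e. $P^{\M_1}_{\x}(\y|\z) = P^{\M_2}_{\x}(\y|\z)$ for every realization with $P_{\x}(z|\z\setminus\{z\}) > 0$; and positivity (all models lie in $\mathbb{M}(\G)$, so all interventional densities are strictly positive) guarantees the denominator never vanishes. This contradicts the choice of $\M_1, \M_2$.

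The only subtlety — and the step I expect to be the main obstacle — is the positivity argument for the denominator: I must confirm that $P^{\M}_{\x}(z \mid \z\setminus\{z\}) > 0$ for every $\M \in \mathbb{M}(\G)$ and every realization, so that dividing is legitimate and the equality of ratios genuinely follows from equality of numerators plus equality of denominators. This follows because positivity of $P^{\M}(\mathbf{v}) > 0$ for all $\mathbf{v}$ propagates to strict positivity of all post-interventional distributions $P^{\M}_{\x}(\cdot)$ (a standard consequence of the SEM factorization, already invoked elsewhere in the paper via $\mathbb{M}(\G)$), and a strictly positive joint has strictly positive conditionals. I would state this propagation explicitly (or cite \citep{kivva2022revisiting} where it is discussed) and then the rest is the two-line contradiction above. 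Writing it up, I would mirror the structure of the preceding sub-cases so the argument reads uniformly: assume c-gID of the larger effect, inherit the agreeing models, marginalize to match denominators, divide using positivity, and derive the contradiction with non-c-gID of $P_{\x}(\y|\z)$.
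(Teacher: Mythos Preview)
Your proposal is correct and follows essentially the same argument as the paper: assume for contradiction that $P_{\x}(\y, z\mid \z\setminus\{z\})$ is c-gID, observe that marginalizing over $\Y$ gives c-gID of $P_{\x}(z\mid \z\setminus\{z\})$, and then the Bayes-rule ratio yields c-gID of $P_{\x}(\y\mid \z)$, a contradiction. The paper states this at the level of the c-gID property in three lines, while you unpack it in terms of the witnessing models $\M_1,\M_2$ and add an explicit positivity check for the denominator; the content is the same.
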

\begin{proof}
    Proof is by contradiction. Suppose that $P_{\x}(\y, z|\z\setminus\{z\})$ is c-gID from $(\mathbb{A}, \G)$. This implies that $P_{\x}( z|\z\setminus\{z\})$ is also c-gID from $(\mathbb{A}, \G)$. Applying Bayes rule yields
    \begin{equation*}
        P_{\x}(\y|\z) = \frac{P_{\x}(\y, z|\z\setminus\{z\})}{P_{\x}( z|\z\setminus\{z\})},
    \end{equation*}
    which results in c-gID of $P_{\x}(\y|\z)$ from $(\mathbb{A}, \G)$. This contradicts the non-identifiability assumption on $P_{\x}(\y|\z)$.
\end{proof}
\begin{lemma}\label{lemma: for_the_main}
Suppose that $\X$, $\Y$ and $\Z$ are disjoint subsets of $\mathbf{V}$ in graph $\G$ and variables $Y_1 \in \Y$, $Y_2 \in \Y \cup \Z$, such that there is a directed edge from $Y_1$ to $Y_2$ in $\G$. If the causal effect $P_{\x}(\y|\z)$ is not c-gID from $(\mathbb{A}, \G)$, then the causal effect $P_{\x}(\y\setminus\{y_1\}|\z)$ is also not c-gID from $(\mathbb{A}, \G)$. 
\end{lemma}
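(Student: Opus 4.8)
The plan is to argue by contradiction, in the same spirit as Lemmas~\ref{lemma: move cond} and~\ref{lemma: eliminate var in cond}: assume that $P_{\x}(\y\setminus\{y_1\}|\z)$ \emph{is} c-gID from $(\mathbb{A},\G)$ and deduce that $P_{\x}(\y|\z)$ is c-gID, contradicting the hypothesis. Writing $P_{\x}(\y|\z)=P_{\x}(\y\setminus\{y_1\}|\z)\cdot P_{\x}(y_1\,|\,\y\setminus\{y_1\},\z)$ by the chain rule reduces the task to showing that, under the standing assumption, the conditional factor $P_{\x}(y_1\,|\,\y\setminus\{y_1\},\z)$ is c-gID from $(\mathbb{A},\G)$; this is precisely the point at which the directed edge $Y_1\to Y_2$ with $Y_2\in\Y\cup\Z$ has to be used, and it is the crux of the argument.

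An alternative, and I believe more robust, route is to prove the statement directly by an explicit model construction dual to the one in the necessity part of Theorem~\ref{th: main}. Fix $\M_1,\M_2\in\mathbb{M}(\G)$ that agree on $Q[\A_i]$ for all $i$ but disagree on $P_{\x}(\y|\z)$ at some realization. Transform them into $\widetilde{\M}_1,\widetilde{\M}_2$ by enlarging the domain of $Y_2$ and replacing its mechanism with $Y_2=\big(f^{\M_j}_{Y_2}(\Pa{Y_2}{\G},\epsilon_{Y_2}),\,g(Y_1,\epsilon')\big)$, where the auxiliary mechanism $g$ outputs the value of $Y_1$ with probability $1-\delta$ and a uniformly random value with probability $\delta$, and is the \emph{same} in both models. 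Because this coordinate is appended identically and depends only on $Y_1$ and fresh noise, the new $Y_2$-factor is the old one times a factor $r_\delta(\cdot\,|\,y_1)$, from which one checks that $\widetilde{\M}_1,\widetilde{\M}_2\in\mathbb{M}(\G)$ for every $\delta\in(0,1)$ and that $Q^{\widetilde{\M}_1}[\A_i]=Q^{\widetilde{\M}_2}[\A_i]$ for all $i$. As $\delta\to 0$ the recorded coordinate of $Y_2$ deterministically encodes $Y_1$, so marginalizing $Y_1$ out of $P_{\x}(\y|\z)$ while keeping (or conditioning on) $Y_2$ reproduces a slice of $P_{\x}(\y|\z)$ and inherits its discrepancy; since $P^{\M}_{\x}(\cdot)$ is a continuous function of the model parameters under positivity, the discrepancy in $P_{\x}(\y\setminus\{y_1\}|\z)$ survives for all sufficiently small $\delta>0$.

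The main obstacle is the case $Y_2\in\Z$: there $Y_2$ is conditioned on rather than being part of the target, so after the modification one is conditioning on a noisy copy of $Y_1$, and one has to rule out the possibility that the surviving discrepancy is confined to $P_{\x}(y_1|\z)$ and is averaged away in $P_{\x}(\y\setminus\{y_1\}|\z)$. I expect to deal with this by first moving $Y_2$ into the target set via Lemma~\ref{lemma: move cond} (which preserves non-c-gID), carrying out the recording argument in that configuration where the edge $Y_1\to Y_2$ lies entirely inside the target, and then returning to the original conditioning pattern; Lemma~\ref{lemma: construct models subcase 1} may be convenient for the final step. The delicate part throughout is the bookkeeping verifying that each transformation keeps the two models in $\mathbb{M}(\G)$ and leaves every $Q[\A_i]$ unchanged, in particular when $Y_2\in\A_i$ for some~$i$.
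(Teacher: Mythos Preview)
Your direct-construction route is the right one and is essentially the same idea the paper uses: enlarge the domain of $Y_2$ so that its new coordinate carries a noisy record of $Y_1$, verify that this leaves every $Q[\A_i]$ unchanged in both models, and then read off a discrepancy in $P_{\x}(\y\setminus\{y_1\}\mid\z)$. For $Y_2\in\Y$ your limiting argument goes through exactly as you describe. The gap is in the case $Y_2\in\Z$, and your proposed workaround does not close it.

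Concretely, when $Y_2\in\Z$ the modified $Y_2=(y_2,r)$ sits in the conditioning set, and as $\delta\to 0$ one computes
\[
P^{\widetilde\M_i}_{\x}\big(\hat\y\setminus\{\hat y_1\}\,\big|\,\hat\z[Y_2\leftarrow(\hat y_2,r)]\big)\ \longrightarrow\ P^{\M_i}_{\x}\big(\hat\y\setminus\{\hat y_1\}\,\big|\,Y_1=r,\hat\z\big).
\]
Nothing in the hypothesis forces these limits to differ: one can have $P^{\M_1}_{\x}(\hat\y\setminus\{\hat y_1\}\mid y_1,\hat\z)=P^{\M_2}_{\x}(\hat\y\setminus\{\hat y_1\}\mid y_1,\hat\z)$ for \emph{every} $y_1$ while the entire discrepancy lives in $P^{\M_i}_{\x}(y_1\mid\hat\z)$, and then $P^{\M_1}_{\x}(\hat\y\mid\hat\z)\neq P^{\M_2}_{\x}(\hat\y\mid\hat\z)$ still holds. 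Your fallback of moving $Y_2$ into the target via Lemma~\ref{lemma: move cond} and then ``returning'' it afterwards is exactly the direction that lemma does \emph{not} license: passing from non-c-gID of $P_{\x}(\A,y_2\mid\C)$ to non-c-gID of $P_{\x}(\A\mid y_2,\C)$ is false in general (the paper's own introductory example exhibits identifiable $P_{\x}(\y\mid\z)$ with non-identifiable $P_{\x}(\y,\z)$), and Lemma~\ref{lemma: construct models subcase 1} only treats conditionals of the special form $Q[\mathbf L'\mid\mathbf L'']$.

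What the paper does instead is to replace your single parameter $\delta$ by one free weight $\psi_j\in(0,1)$ per value $\alpha_j$ of $Y_1$, via a function $\Psi:\{0,1\}\times\dom{Y_1}{}\to(0,1)$ with $\Psi(0,\cdot)+\Psi(1,\cdot)=1$. Both numerator and denominator of the target then become linear forms $\sum_j\psi_j\beta_j^{\M_i}$ and $\sum_j\psi_j\gamma_j^{\M_i}$, and the desired inequality is a quadratic polynomial in the $\psi_j$. One checks that if this quadratic were identically zero then $\beta_j^{\M_1}/\gamma_j^{\M_1}=\beta_j^{\M_2}/\gamma_j^{\M_2}$ and $\gamma_j^{\M_1}/\gamma_j^{\M_2}$ is constant in $j$, which together force $P^{\M_1}_{\x}(\hat\y\mid\hat\z)=P^{\M_2}_{\x}(\hat\y\mid\hat\z)$, a contradiction; hence the quadratic is nonzero on an open subset of $(0,1)^n$. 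Your $\delta$-family is the one-dimensional slice $\psi_j=(1-\delta)\mathbf 1[\alpha_j=\hat y_1]+\delta/|\dom{Y_1}{}|$, along which the quadratic may well vanish. Finally, the chain-rule contradiction you sketch first would require that $P_{\x}(y_1\mid\y\setminus\{y_1\},\z)$ be c-gID whenever $P_{\x}(\y\setminus\{y_1\}\mid\z)$ is; the edge $Y_1\to Y_2$ does not by itself yield any do-calculus rule to that effect, so that route is not viable as stated.
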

Proof of this lemma is presented in Appendix A.

Recall that the goal is to prune the conditional causal effect in \eqref{eq: lemma-con} to get $P_{\x'}(\y'|\z')$. 
We do this in two pruning steps: first using Lemma \ref{lemma: move cond} and then via Lemmas \ref{lemma: eliminate var in cond}, {\ref{lemma: for_the_main}}.
Let $\Y'' := \Y' \setminus \D$. 
Recall that $\mathbf{S}=\Anc{\Y', \Z'}{\G[\V\setminus\X']}$. 
It is easy to see that $\Y''$ is a subset of $\mathbf{S}\setminus \D$ and thus we can apply Lemma \ref{lemma: move cond} to the causal effect 
$P_{\x'}(\dd|\mathbf{s}\setminus \dd)$
and conclude that 
$P_{\x'}(\dd\cup\y'|\mathbf{s}\setminus(\dd \cup \y'))$
is not c-gID from $(\mathbb{A}, \G)$.

To use Lemmas \ref{lemma: eliminate var in cond}, \ref{lemma: for_the_main} for the second pruning steps, we use similar type of argument as in the first sub-case. 
More precisely, using the fact that there exists a direct path for each node in $\mathbf{S} \setminus (\Z' \cup \Y')$ to a node in $\Z' \cup \Y'$, we sort the nodes in
$$
\W':=\mathbf{S} \setminus (\Z' \cup \Y')
$$
in a descending order based on the length of their corresponding shortest direct path to the set $\Z' \cup \Y'$.
We denote these sorted nodes by $(W'_1, W'_2, \dots, W'_{\eta'})$. 
Note that for any $i \in [1:\eta']$, there exists a direct edge from $W'_i$ to a node in $\Y' \cup \Z' \cup \{W'_j\}_{j=i+1}^{\eta'}$. 

Since $\W'$ is a subset of $\mathbf{S}\setminus (\Z' \cup \Y')$, similar to the second sub-case, we apply Lemmas \ref{lemma: eliminate var in cond}, \ref{lemma: for_the_main} to the causal effect 
$P_{\x'}(\dd\cup\y'|\mathbf{s}\setminus(\dd \cup \y'))$
and remove variables $(W'_1, \dots, W'_{\eta'})$ one by one from the $P_{\x'}(\dd\cup\y'|\mathbf{s}\setminus(\dd \cup \y'))$. 
From definitions of $\D$ and $\Z'$, we have $\D\cap\Z'=\emptyset$, which means 
$$
\mathbf{S}\setminus(\W' \cup \Y'\cup \D) = \Z'.
$$
Therefore, after removing all nodes of $\W'$ from the set $\mathbf{S}\setminus(\D \cup \Y')$ without affecting the non-identifiability of 
$P_{\x'}(\dd\cup\y'|\mathbf{s}\setminus(\dd \cup \y'))$
, we can claim that $P_{\x'}(\y'|\z')$ is not c-gID from $(\mathbb{A}, \G)$.

\subsubsection{Sub-case 3: $\mathbf{S}_1 \setminus (\Z'\cup\Y')\neq \emptyset$} \label{sec: third subcase}

The proof of this sub-case is quite similar to the second sub-case with a few twists. 
Let $T$ be an arbitrary node in $\mathbf{S}_1 \setminus (\Z'\cup\Y')$. Since $\textbf{S}_1$ is a subset of the ancestors of $\Z'$, then there exists a directed path from $T$ to the set $\Z'$. 
Let $p_T$ denote the shortest directed path from node $T$ to a node $Z'$ in the set $\Z'$. Analogous to the second sub-case, we define $\widetilde{p}$ to be an unblocked backdoor path from $Z'$ to $\Y'$ given $\X', 
\Z'\setminus \{Z'\}$ with the least number of colliders. 
Path $\widetilde{p}$ satisfies the following properties: \\
    1. Assume that path $\widetilde{p}$ contains a chain $W'\rightarrow W \rightarrow W''$ or a fork $W' \leftarrow W \rightarrow W''$, then $W$ does not belong to any of the sets $\X'$, $\Z'$ or $\Y'$.\\
    2. Assume that path $\widetilde{p}$ contains an inverted fork $W'\rightarrow W \leftarrow W''$, then there is a directed path $p_W$ from the node $W$ to a node in the set $\Z'$. Moreover, none of the intermediate nodes on this path $p_W$ belong to set $\X'\cup\Z'\cup\Y'$.\\
    3. Path $\widetilde{p}$ does not contain any node from the set $\X'$

Proofs of the above statements are provided in Appendix A.
Let $\widetilde{\mathbf{F}}$ be the set of all colliders on the path $\widetilde{p}$. 
Define $\widetilde{\mathcal{P}}:= \{\widetilde{p}\}\cup\{p_T\}\cup \{\widetilde{p}_W|W \in \widetilde{\mathbf{F}}\}$ and $\widetilde{\D}$ to be a set containing all nodes on the paths from $\widetilde{\mathcal{P}}$ excluding the nodes in $\Z'$.

\begin{lemma} \label{lemma: construct models subcase 3}
    Let $\mathbf{S}:=\Anc{\Y', \Z'}{\G[\V \setminus \X']}$ and $\widetilde{\D}$ denote the set defined above. Then,
    \begin{equation*}
        P_{\x'}(\widetilde{\dd}|\mathbf{s}\setminus \widetilde{\dd})=\frac{
        Q[\mathbf{S}]
        }{
        \sum_{\widetilde{\D}} Q[\mathbf{S}]
        } = Q[\widetilde{\D}|\mathbf{S}\setminus\widetilde{\D}]
    \end{equation*}
    is not c-gID from $(\mathbb{A}, \G)$.
\end{lemma}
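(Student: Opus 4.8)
The plan is to follow the Appendix-A proof of Lemma~\ref{lemma: construct models subcase 2} almost verbatim, the sole genuinely new ingredient being the directed path $p_T$ linking the ``extra'' node $T\in\mathbf{S}_1\setminus(\Z'\cup\Y')$ to the node $Z'\in\Z'$. Throughout I would invoke the standing facts of the Second case: $Q[\mathbf{S}_1]$ is not gID from $(\mathbb{A},\G)$, hence by Theorem~\ref{th: gID main} it is not identifiable from $\G[\A_i]$ for every $i\in[0:m]$, and $\mathbf{S}_1=\mathbf{S}'_1\subseteq\mathbf{S}'=\Anc{\Z'}{\G[\V\setminus\X']}$, so in particular $T$ has a directed path to $\Z'$ inside $\G[\V\setminus\X']$, which is $p_T$.

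First I would establish the two displayed equalities. The point is $\widetilde{\D}\subseteq\mathbf{S}$: every node of $\widetilde{\D}$ lies on $\widetilde{p}$, on some $\widetilde{p}_W$, or on $p_T$; the paths $p_T$ and $\widetilde{p}_W$ are directed into $\Z'$ with interiors avoiding $\X'$ (path properties~2 and 3), so all their nodes are ancestors of $\Z'$ in $\G[\V\setminus\X']$, hence in $\mathbf{S}$; on $\widetilde{p}$ itself, property~2 puts every collider in $\Anc{\Z'}{\G[\V\setminus\X']}$, and walking $\widetilde{p}$ between consecutive colliders shows that each chain/fork node is an ancestor of $Z'$, of the next collider, or of the $\Y'$-endpoint, hence again in $\mathbf{S}$. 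Since $\mathbf{S}=\Anc{\Y'\cup\Z'}{\G[\V\setminus\X']}$ is ancestrally closed in $\G[\V\setminus\X']$ and $\X'\subseteq\V\setminus\mathbf{S}$, we get $P_{\x'}(\mathbf{s})=Q[\mathbf{S}]$ (the same fact already used in Sub-case~1), and because $\widetilde{\D}\subseteq\mathbf{S}$, Equation~\eqref{eq: conditional Q} gives $P_{\x'}(\widetilde{\dd}\mid\mathbf{s}\setminus\widetilde{\dd})=Q[\mathbf{S}]/\sum_{\widetilde{\D}}Q[\mathbf{S}]=Q[\widetilde{\D}\mid\mathbf{S}\setminus\widetilde{\D}]$.

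Next I would construct $\M_1,\M_2\in\mathbb{M}(\G)$ that agree on all $Q[\A_i]$ but disagree on $Q[\widetilde{\D}\mid\mathbf{S}\setminus\widetilde{\D}]$. Because $Q[\mathbf{S}_1]$ is not identifiable from any $\G[\A_i]$, Lemma~\ref{lemma: Q ID} provides an $\mathbf{S}_1$-rooted c-forest whose bidirected edges form a spanning tree on its observed set $\B$ and with $\widehat{\F}[\mathbf{S}_1]$ bidirected-connected; this yields the standard pair of models that are indistinguishable on $\B$ (hence on every $Q[\A_i]$) yet differ on $Q[\mathbf{S}_1](\mathbf{v})$ for some realization. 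I would then extend this pair to $\M_1,\M_2\in\mathbb{M}(\G)$ exactly as in the Sub-case~2 construction, perturbing only the mechanisms inside the c-forest and along the confounders of $\widetilde{p}$ and its escape paths, within strictly positive families so positivity is preserved and all $Q[\A_i]$ stay equal. The new step is the bookkeeping around $p_T$: since $T\in\mathbf{S}_1\cap\widetilde{\D}$, the discrepancy in $Q[\mathbf{S}_1]$ already surfaces in the marginal of $\widetilde{\D}$ at $T$; conditioning on $\mathbf{S}\setminus\widetilde{\D}$ (which contains $Z'$ and the $\Z'$-endpoints of all escape paths) opens every collider of $\widetilde{p}$ while leaving its non-colliders unblocked (they avoid $\X'\cup\Y'\cup\Z'$ by property~1), so $p_T$ together with the now-active $\widetilde{p}$ carries the discrepancy through $Z'$ to the $\Y'$-endpoint and prevents it from being annihilated by the marginalization over $\widetilde{\D}$. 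Combining with the previous paragraph yields a realization on which $P^{\M_1}_{\x'}(\widetilde{\dd}\mid\mathbf{s}\setminus\widetilde{\dd})\neq P^{\M_2}_{\x'}(\widetilde{\dd}\mid\mathbf{s}\setminus\widetilde{\dd})$, i.e.\ $Q[\widetilde{\D}\mid\mathbf{S}\setminus\widetilde{\D}]$ is not c-gID from $(\mathbb{A},\G)$.

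The main obstacle is this last construction: one must keep $\M_1,\M_2$ in $\mathbb{M}(\G)$, keep them indistinguishable on every $Q[\A_i]$, and keep the discrepancy alive after both marginalizing $\widetilde{\D}$ and conditioning on $\mathbf{S}\setminus\widetilde{\D}$ simultaneously. The minimality of $\widetilde{p}$ (fewest colliders) is what rules out an alternative unblocked sub-path that would let the argument be ``short-circuited'', and the three path properties are precisely what guarantees that the interiors of $p_T$, $\widetilde{p}$ and the $\widetilde{p}_W$'s lie outside $\X'\cup\Y'\cup\Z'$, so that their mechanisms may be freely modified; verifying that the perturbation witnessing non-gID of $Q[\mathbf{S}_1]$ propagates consistently down $p_T$ into the conditioning variable $Z'$ without becoming visible in any $Q[\A_i]$ is the crux, and is the single place where the Sub-case~2 argument must be adapted rather than quoted verbatim.
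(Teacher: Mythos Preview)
Your plan matches the paper's approach: the proof of Lemma~\ref{lemma: construct models subcase 3} in Appendix~A is indeed a close variant of the Lemma~\ref{lemma: construct models subcase 2} proof, with $p_T$ added to the path collection $\widetilde{\mathcal{P}}$ and $T$ declared a starting node. Your verification of $\widetilde{\D}\subseteq\mathbf{S}$ and the two displayed equalities is fine.

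Where your sketch is thinner than the paper is precisely at the point you flag as the crux. Saying you will ``follow Sub-case~2 exactly'' and ``perturb mechanisms along the confounders'' does not capture the actual adaptation. In Sub-case~2 the link between $U_0$ and the path~$p$ is made by redefining $M'(Z')$ to include $W[p]$ (because $Z'\in\mathbf{S}_1$). In Sub-case~3 that trick is unavailable since $Z'$ need not lie in $\mathbf{S}_1$; instead the paper (i)~leaves the $\mathbf{S}_1$-mechanisms untouched, (ii)~makes the $f_{p_T}$ function at the child of $T$ depend on $T^{\M}$ itself (so the baseline-model value of $T$ feeds the $p_T$ coordinate), and (iii)~at $Z'$ merges the two paths via a special $f_{p_T}(Z'[p_T]\mid\cdot)$ that depends on both the $p_T$-parent and the $\widetilde{p}$-parent of $Z'$. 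These three changes are what make the bijection argument for $\widetilde{\phi}_{l_1}=\widetilde{\phi}_{l_2}$ go through: when summing over $\widetilde{\D}$ (which contains $T$), the $U_0$-shift is absorbed by shifting $T^{\M}$, then propagated down the $p_T$ coordinates to $Z'$, and from there along $\widetilde{p}$ exactly as in Sub-case~2; when $T$ is fixed (the $\widetilde{\eta}$ side), this absorption is blocked and the discrepancy survives. Your narrative that ``the discrepancy already surfaces at $T$ and $p_T$ carries it to $Z'$'' is the right intuition, but the proof content is this explicit domain-extension and the three technical lemmas (analogues of Lemmas~\ref{lemma: theta equal indices subcase 2}--\ref{lemma: eta not equal indices subcase 2}) plus the linear-algebra step, none of which you spell out.
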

A proof for this lemma is presented in Appendix A.
The remainder of the proof of this sub-case is identical to the proof of the second sub-case. 

In both cases considered in Sections \ref{sec: first case}-\ref{sec: second case}, we proved that $P_{\x'}(\y'|\z')$ is not c-gID from $(\mathbb{A}, \G)$. 
Recall that $P_{\x}(\y|\z)=P_{\x'}(\y'|\z')$. 
This concludes the proof of the necessity part of \mbox{Theorem \ref{th: main}}.

\textbf{Summing up:}
Recall that the necessity part required us to show when $P_{\x'}(\y', \z')$ is not gID from $(\A, \G)$,  $P_{\x}(\y|\z)$ is not c-gID from $(\A, \G)$.
In the sufficiency part, had to show that $P_{\x}(\y|\z)$ is c-gID from $(\A, \G)$ whenever $P_{\x'}(\y', \z')$ is gID from $(\A, \G)$. 
These two results together conclude the proof of Theorem \ref{th: main}.

\section{Conclusion}
We considered the problem of identifying a conditional causal effect from a causal graph $\G$ and a particular set of known observational/interventional distributions in the form of $Q$-notations. 
We called this problem c-gID and showed that any c-gID problem has an equivalent g-ID problem. 
Using this equivalency, we proposed the first sound and complete algorithm for solving c-gID problem.

\bibliography{kivva_47}

\appendix
\onecolumn

\title{On Identifiability of Conditional Causal Effects\\(Supplementary Material)}

\maketitle
\section{TECHNICAL PROOFS} \label{sec:app}

\subsection{NON c-gID CAUSAL EFFECTS}

For proving Lemma \ref{lemma: construct models subcase 1}, Lemma \ref{lemma: construct models subcase 2} and Lemma \ref{lemma: construct models subcase 3}, it suffices to introduce two models  that agree on the known distributions but disagree:
\begin{itemize}
    \item on the causal effect $Q[\mathbf{L}'|\mathbf{L}'']$ (for Lemma \ref{lemma: construct models subcase 1}),
    
    \item on the causal effect $P_{\x'}(\mathbf{d}|\mathbf{s}\setminus \mathbf{d})$ (for Lemma 4),
    
    \item on the causal effect $P_{\x'}(\widetilde{\mathbf{d}}|\mathbf{s}\setminus \widetilde{\mathbf{d}})$ (for Lemma 6).
\end{itemize}
To do so, we require a result from \citep{kivva2022revisiting} and couple of definitions and notations which we present in the next section. 

\subsubsection{Baseline Models} \label{sec: baseline}
In this section, we present two models which we use as our baseline models for proving the non-identifiability parts. 

\begin{theorem}[Theorem 1 \cite{kivva2022revisiting}]\label{th: gid main}
    Suppose $\widecheck{\mathbf{S}} \subseteq \V$ is a single c-component. 
    $Q[\widecheck{\mathbf{S}}]$ is gID from $(\mathbb{A}, \G)$ if and only if there exists $\A \in \mathbb{A}$ such that $\widecheck{\mathbf{S}} \subseteq \A$ and $Q[\widecheck{\mathbf{S}}]$ is ID in $\G[\A]$.
\end{theorem}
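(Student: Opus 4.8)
Although Theorem \ref{th: gid main} (a restatement of Theorem \ref{th: gID main}) is due to \cite{kivva2022revisiting}, for completeness I sketch how I would prove it, treating the two directions separately. \textbf{Sufficiency} is routine: if some $\A\in\mathbb{A}$ satisfies $\widecheck{\mathbf{S}}\subseteq\A$ and $Q[\widecheck{\mathbf{S}}]$ is ID in $\G[\A]$, then --- since $Q[\A]$ is one of the given distributions --- it suffices to recall the classical fact (\cite{tian2004identifying}) that identifiability of $Q[\widecheck{\mathbf{S}}]$ in $\G[\A]$ yields a fixed expression of $Q[\widecheck{\mathbf{S}}]$ as a functional of $Q[\A]$ in $\G[\A]$; this functional is the same for every $\M\in\mathbb{M}(\G)$, so $Q[\widecheck{\mathbf{S}}]$ is gID from $(\mathbb{A},\G)$.

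\textbf{Necessity} is the substantive direction, and I would establish it in contrapositive form. Assume that for each $\A\in\mathbb{A}$ either $\widecheck{\mathbf{S}}\not\subseteq\A$ or $Q[\widecheck{\mathbf{S}}]$ is not ID in $\G[\A]$, and write $\mathbb{A}^{\star}:=\{\A\in\mathbb{A}:\widecheck{\mathbf{S}}\subseteq\A\}$. The goal reduces to constructing $\M_1,\M_2\in\mathbb{M}(\G)$ with $Q^{\M_1}[\A_i]=Q^{\M_2}[\A_i]$ for all $i\in[0:m]$ but $Q^{\M_1}[\widecheck{\mathbf{S}}]\neq Q^{\M_2}[\widecheck{\mathbf{S}}]$. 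First I would invoke Lemma \ref{lemma: Q ID}: for each $\A\in\mathbb{A}^{\star}$ the non-ID of $Q[\widecheck{\mathbf{S}}]$ in $\G[\A]$ is witnessed by a $\widecheck{\mathbf{S}}$-rooted c-forest that is strictly larger than $\widecheck{\mathbf{S}}$, contained in $\A$, and on which the bidirected edges form a spanning tree. The combinatorial heart of the argument is to distil these individual witnesses into a single global object --- an $\widecheck{\mathbf{S}}$-rooted c-forest together with, for every $\A\in\mathbb{A}^{\star}$, a nested strictly larger c-forest contained in $\A$ and sharing the same root set (a \emph{thicket}, in the terminology of \cite{kivva2022revisiting}) --- and then to carry out the model construction relative to this one structure.

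Given a thicket, I would imitate the classical hedge counterexample while staying inside $\mathbb{M}(\G)$: let each observed node inside the thicket compute a noisy parity-type function of the latent variables underlying the bidirected edges of the thicket, introduce a single real parameter $\lambda$ that slightly tilts those noises, and verify that (i) for every $\A_i$ the marginal $Q[\A_i]$ depends only on the portion of the structure lying inside $\A_i$, within which every relevant latent variable occurs an even number of times, so that by a parity argument the tilt cancels and $Q[\A_i]$ is independent of $\lambda$; whereas (ii) $Q[\widecheck{\mathbf{S}}]$ activates the entire root set and therefore does depend on $\lambda$. Two admissible choices of $\lambda$ then give $\M_1$ and $\M_2$, and strict positivity is preserved precisely because the mechanisms are noisy and $\lambda$ is kept small.

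I expect the combinatorial step to be the main obstacle: the hypothesis is available one $\A$ at a time, yet the construction needs a \emph{single} perturbation that is simultaneously invisible to the marginal of \emph{every} $\A_i\in\mathbb{A}$ --- including those not containing $\widecheck{\mathbf{S}}$ --- while remaining visible to $Q[\widecheck{\mathbf{S}}]$. Making this work requires showing that the individual c-forest witnesses can be realigned onto a common root set and nested consistently, so that one family of noisy mechanisms serves all of them at once; once such a thicket is in hand, the probabilistic checks in (i)--(ii) and the positivity bookkeeping are comparatively mechanical.
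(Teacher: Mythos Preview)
The paper does not supply its own proof of Theorem~\ref{th: gid main}: it is quoted verbatim from \cite{kivva2022revisiting}, and the only thing the present paper does is reproduce, in Section~\ref{sec: baseline} (``Baseline Models''), the explicit two-model construction from that reference so it can be reused in Lemmas~\ref{lemma: construct models subcase 1}, \ref{lemma: construct models subcase 2}, and \ref{lemma: construct models subcase 3}. Your high-level plan is faithful to that construction: the union of the per-$\A_i$ c-forests $\F_i$ is exactly your ``thicket'' $\G'$, the noisy mechanisms are the $\epsilon$-perturbed $\bmod\,(\kappa+1)$ sums of \eqref{eq: def P(S|Pa(S)) gid}, and the distinguishing perturbation lives in the distribution of the single bidirected-edge variable $U_0$.

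Two mechanical points where your sketch diverges from the actual argument are worth flagging. First, the perturbation is not a single scalar $\lambda$ but a full probability vector $(p_1,\dots,p_d)$ for $U_0$; the proof that suitable $p_j$'s exist is a linear-algebra step (Lemmas~\ref{lemma: gid equal indices}--\ref{lemma: lin indep formal}) showing the vector $(\eta_j(\mathbf v_0))_j$ is linearly independent of all $(\theta_{i,j}(\mathbf v))_j$ and of $(1,\dots,1)$, not a pure parity cancellation. Second, the ``even number of occurrences'' heuristic is not quite the mechanism: variables in $\T$ use $\bmod\,2$ arithmetic while variables in $\widecheck{\mathbf S}\cup\U^{\widecheck{\mathbf S}}$ use $\bmod\,(\kappa+1)$, and the invariance of $Q[\A_i]$ under the tilt is obtained by exhibiting, for each pair $\gamma_{l_1},\gamma_{l_2}\in\mathbf\Gamma$, a bijection on the hidden-variable realizations that matches the summands term-by-term (Lemma~\ref{lemma: gid equal indices}). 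These are refinements rather than a different strategy; your outline would lead you to the same construction once you try to make ``parity cancels in every $\A_i$ simultaneously'' rigorous under positivity.
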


To introduce the baseline models, we use the models from the proof of Theorem 1 in \citep{kivva2022revisiting}.
Note that in the proof of Lemma \ref{lemma: construct models subcase 2} and \ref{lemma: construct models subcase 3}, we use $\mathbf{S}_1$ and $\widecheck{\mathbf{S}}$ interchangeably, i.e., $\widecheck{\mathbf{S}} = \mathbf{S}_1$. 

Suppose that $Q[\widecheck{\mathbf{S}}]$ is not gID from $(\mathbb{A}, \G)$ and there exists $i \in [0, m]$, such that $\widecheck{\mathbf{S}} \subset \mathbf{A}_i$. Without loss of generality, let $\widecheck{\mathbf{S}} \subset \mathbf{A}_i$ for $i \in [0, \widecheck{k}]$ and $\widecheck{\mathbf{S}}\nsubseteq 
\mathbf{A}_i$ for $i \in [\widecheck{k}+1, m]$.
This allows us to define a particular graph which we use throughout our proof. 
More precisely, under these assumptions, Lemma \ref{lemma: Q ID} and the above theorem guarantee that for each $i \in [0, \widecheck{k}]$, there exists a $\widecheck{\mathbf{S}}$-rooted c-forest $\F_i$ over a subset of observed variables $\B_i$ ($\widecheck{\mathbf{S}} \subsetneq \B_i \subseteq \A_i$) such that $\F_{0}[\widecheck{\mathbf{S}}]=\F_{j}[\widecheck{\mathbf{S}}]$ for $j\in[1,\widecheck{k}]$.
 In words, induced subgraphs of $\F_{i}$s over the set $\widecheck{\mathbf{S}}$ are the same. 
 We define graph $\G'$ as the union of all the subgraphs in $\{\F_{i}\}_{i=0}^{\widecheck{k}}$ with the observed variables $\widecheck{\V}:=\bigcup_{i=0}^{\widecheck{k}}\B_{i}$ and the  unobserved variables which we denoted by $\widecheck{\U}$.

To properly define a SEM $\M$ over a causal graph $\G$, it suffices to define the domain set of each node $X$ in $\G$ with its associated conditional distribution $P(X|\Pa{X}{\G})$. 
Note that if for some variable $X$ in $\G$, its domain $\dom{X}{}$ or $P(X|\Pa{X}{\G})$ are not specified, then by default, we assume \mbox{$\dom{X}{} := \{0\}$} and $P(X=0|\Pa{X}{\G})=1$. 

\renewcommand{\V}{\widecheck{\mathbf{V}}}
\renewcommand{\U}{\widecheck{\mathbf{U}}}
\renewcommand{\T}{\widecheck{\mathbf{T}}}
\renewcommand{\u}{\widecheck{\mathbf{u}}}
\renewcommand{\k}{\widecheck{k}}

Let $U_0$ be an unobserved variable from subgraph $\F_0$ that has one child in $\widecheck{\mathbf{S}}$ and one child in $\T:=\V\setminus \widecheck{\mathbf{S}}$.
In high-level, our baseline models $\M_1$ and $\M_2$ have the same distributions over all variables in graph $\G$ except the variable $U_0$. Especially, 
\begin{align}
    & P^{M_1}(V|\Pa{V}{\G}) = P^{\M_2}(V|\Pa{V}{\G}),\quad V \in \mathbf{V}, \forall \label{eq: M_1(V) = M_2(V)}\\
    & P^{\M_1}(U) = P^{\M_2}(U) = \frac{1}{|\dom{U}{}|}, \quad \forall U \in \mathbf{U}\setminus \{U_0\}, \label{eq: M_1(U) = M_2(U)}
\end{align}
where $|\cdot|$ denotes the cardinality of a given set. For the sake of brevity, we drop the superscripts $\M_1$ and $\M_2$ for the distributions in Equations (\ref{eq: M_1(V) = M_2(V)}) and (\ref{eq: M_1(U) = M_2(U)}). 
We denote the domain of variable $U_0$ to be $\dom{U_0}{} := \{ \gamma_1, \dots, \gamma_d\}$, where $\gamma_j$s are vectors and $d$ is an integer number to be defined later.  
In model $\M_1$, we define $U_0$ to have uniform distribution over $\dom{U_0}{}$, i.e., $P^{\M_1}(U_0 = \gamma_j) = 1/d$. 
In model $\M_2$, we define $P^{\M_2}(U_0 = \gamma_j) := p_j$, where $j \in [1:d]$ and 
\begin{align*}
    & \sum_{j=1}^d p_j = 1, \\
    & p_j > 0 \quad \forall j \in [1:d].
\end{align*}

For $i \in [0:m]$, $j \in [1:d]$, $u_0 = \gamma_j$ and any $\mathbf{v} \in \dom{\mathbf{V}}{}$, we define:
\begin{equation} \label{eq: theta and eta}
\begin{gathered}
    \theta_{i, j}(\mathbf{v}) := \sum_{\mathbf{U} \setminus \{U_0\}} \prod_{X \in \A_i} P(x \mid \Pa{X}{\G}) \prod_{U\in \mathbf{U} \setminus \{U_0\}} P(u), \\
    \eta_j(\mathbf{v}) := \sum_{\mathbf{U}\setminus \{U_0\}} \prod_{X \in \widecheck{\mathbf{S}}} P(x \mid \Pa{X}{\G}) \prod_{U\in \mathbf{U}\setminus \{U_0\}} P(u).
\end{gathered}
\end{equation}

Note that in the above equations, $u_0=\gamma_j$ may appear as a parent of an observed variable. 
Using the above definitions, we can re-write the $Q$-notation in Equation (\ref{eq: Q}) as follows
\begin{align}\label{eq: M1 Q[A_i] through thetha}
    & Q^{\M_1}[\mathbf{A}_i](\mathbf{v}) 
    = \sum_{j=1}^d \frac{1}{d} \theta_{i,j}(\mathbf{v}), \\ \label{eq: M_2 Q[A_i] through thetha}
    & Q^{\M_2}[\mathbf{A}_i](\mathbf{v}) 
    = \sum_{j=1}^d p_j \theta_{i,j}(\mathbf{v}), \\
    \label{eq: M_1 Q[S_1] through eta}
    & Q^{\M_1}[\widecheck{\mathbf{S}}](\mathbf{v}) 
    = \sum_{j=1}^d \frac{1}{d} \eta_j(\mathbf{v}), \\
    \label{eq: M_2 Q[S_1] through eta}
    & Q^{\M_2}[\widecheck{\mathbf{S}}](\mathbf{v}) 
    = \sum_{j=1}^d p_j \eta_j(\mathbf{v}).
\end{align}

Denote the set of unobserved variables in $\G'[\widecheck{\mathbf{S}}]$ by $\U^{\widecheck{\mathbf{S}}}$ and its complement set in $\U \setminus (\U^{\widecheck{\mathbf{S}}} \cup \{U_0\})$ by $\U^{\T}$. 
For each $i \in [0:\k]$, let $\widecheck{T}_i$ be a node in $\B_i\setminus \widecheck{\mathbf{S}}$ such that $\Ch{\widecheck{T}_i}{\F_i}\neq \emptyset$. Node $\widecheck{T}_i$ exists because $\F_i$ is a $\widecheck{\mathbf{S}}$-rooted c-forest. 
Figure \ref{fig:van} illustrates an example of the above definitions. 
\begin{figure}
    \centering
    \includegraphics[scale=0.2]{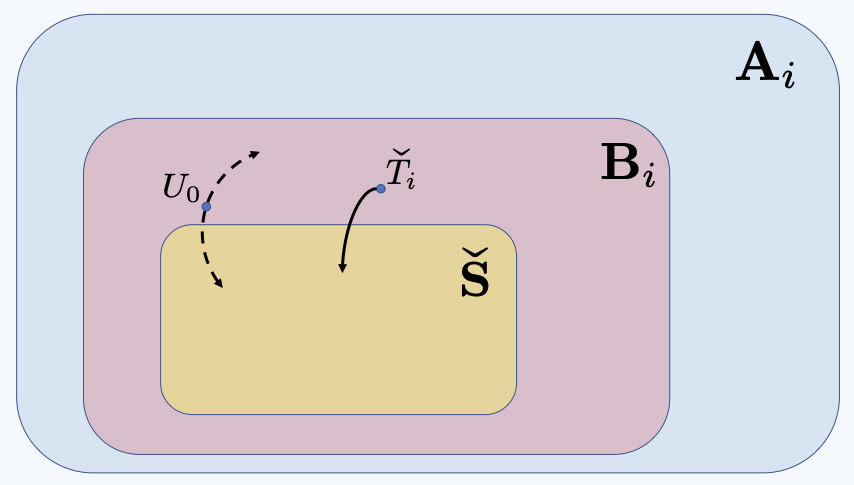}
    \caption{An illustration of the definitions $\B_i,U_0,\A_i,\widecheck{\mathbf{S}}$, and $\widecheck{T}_i$.}
    \label{fig:van}
\end{figure}

We define the domains of $X\in\V\cup \U$ as follows.
Note that $\V = \widecheck{\mathbf{S}} \cup \T$ and $\U = \U^{\widecheck{\mathbf{S}}}\cup \U^{\T}\cup \{ U_0\}$.
\begin{align*}
    &\dom{X}{} := [0:\kappa], \quad \forall X \in \widecheck{\mathbf{S}},\\
    &\dom{X}{} := [0:\kappa], \quad \forall X \in \U^{\widecheck{\mathbf{S}}},\\
    &\dom{X}{} := \{0, 1\}^{\alpha(T)}, \quad \forall X \in \T,\\
    &\dom{X}{} := \{0, 1\}^{\alpha(U)}, \quad \forall X \in \mathbf{U}^{\T},\\
    &\dom{X}{} := [0:\kappa]\times\{0, 1\}^{\alpha(U_0)-1}, \quad X=U_0,
\end{align*}
where $\kappa$ is an odd number greater than 4. 
Function $\alpha(X)$ is only defined for $X \in \T\cup\U^{\T}\cup \{U_0\}$ and it denotes the number of subgraphs in $\{\F_j\}_{j=0}^{\k}$ that contains $X$.
From the above definition, it is clear that $d$, the domain size of $U_0$ is equal to $(\kappa+1)2^{\alpha(U_0)-1}$. 

Suppose that $X\in \T\cup\U^{\T}\cup\{U_0\}$ and $X$ belongs to $\F_{i_1}, \F_{i_2}, \cdots, \F_{i_{\alpha(X)}}$, where $i_1 < i_2 < \dots < i_{\alpha(X)}$. 
We use $(X[i_1], X[i_2], \dots, X[i_{\alpha{X}}])$ to represent $X$. Note that depending on where $X$ belongs to, its vector size is different.
If $X\in\U^{\T}\cup\{U_0\}$, both its distribution and its domain are specified above. 
If $X\in\T$, we define the entries of its corresponding vector as
\begin{equation*}
    X[i_j] \equiv \left(\sum_{Y \in \Pa{X}{\F_{i_j}}} Y[i_j]\right) \pmod{2},
\end{equation*}
where $j\in[1:\alpha(X)]$. This specifies the distribution of $P(X|\Pa{X}{\G})$ for $X\in\T$.
What is left to specify is the domains and the distributions of variables in $\widecheck{\mathbf{S}}$. 
     
Recall that $U_0$ has one child in $\widecheck{\mathbf{S}}$ and one child in $\T$. We denote the child in $\widecheck{\mathbf{S}}$ by $S_0$.
For each $S \in \widecheck{\mathbf{S}}\setminus \{S_0\}$ and any realization of $\Pa{S}{\G'}$, 
we define $\mathbb{I}(S)$ to be one if there exists $i\in [0:k]$ such that
\begin{enumerate}
    \item $\widecheck{T}_i \in \Pa{S}{\G'}$ and $\widecheck{T}_i[i]=0$, or
    \item there exists $X \in \Pa{S}{\G'} \setminus (\U^{\widecheck{\mathbf{S}}}\cup \{\widecheck{T}_i\})$ such that $\F_i$ contains $X$ and $X[i]=1$,
\end{enumerate}
and zero, otherwise.
It is noteworthy that according to the definition of $\widecheck{T}_i$, it belongs to $\B_i\setminus\widecheck{\mathbf{S}}\subseteq\T$ which means $\widecheck{T}_i[i]$ is well-defined according to the above definition. 
Note that the above definition holds for all $S \in \widecheck{\mathbf{S}}\setminus \{S_0\}$. When $S=S_0$, we define $\mathbb{I}(S_0)$
to be one if there exists $i\in [0:k]$, such that
\begin{enumerate}
    \item $\widecheck{T}_i \in \Pa{S}{\G'}$ and $\widecheck{T}_i[i]=0$, or
    \item $i\neq0$, $\mathcal{F}_i$ contains $U_0$, and $U_0[i]=1$, or
    \item there exists $X \in \Pa{S}{\G'} \setminus (\U^{\widecheck{\mathbf{S}}}\cup \{\widecheck{T}_i,U_0\})$ such that $\F_i$ contains $X$ and $X[i]=1$.
\end{enumerate}

For each $S\in \widecheck{\mathbf{S}}$, we define $\dom{S}{}:=\{0,...,\kappa\}$ and for $s \in \dom{S}{}$,
\begin{equation} \label{eq: def P(S|Pa(S)) gid}
P(S=s \mid \Pa{S}{\G}):=\begin{cases} 
    \frac{1}{\kappa+1} & \text{ if } \mathbb{I}(S)=1\\
    1-\kappa\epsilon &  \text{ if } \mathbb{I}(S)=0  \text{ and } s \equiv M(S) \pmod{\kappa+1}, \\
    \epsilon &  \text{ if }  \mathbb{I}(S)=0  \text{ and } s\not\equiv M(S) \pmod{\kappa+1},
\end{cases}
\end{equation}
where $0<\epsilon<\frac{1}{\kappa}$ and 
\begin{equation}
M(S):=
\begin{cases} \label{eq: M(S)}
    \sum_{x\in \Pa{S}{\G'[\widecheck{\mathbf{S}}]}}x & \text{, if } S\in \widecheck{\mathbf{S}}\setminus \{S_0\}, \\
    u_0[0]+\sum_{x\in \Pa{S}{\G'[\widecheck{\mathbf{S}}]}}x & \text{, if $S=S_0$ }.
\end{cases}
\end{equation}
Note that $M(S)$ is an integer number because $\Pa{S}{\G'[\widecheck{\mathbf{S}}]}\subseteq \U^{\widecheck{\mathbf{S}}}$ and thus all terms in the summations in \eqref{eq: M(S)} belong to $[0:\kappa]$. With this, we finish defining the models and now we are ready to present some of their properties.


Let $\mathbf{\Gamma}$ denote a subset of $\dom{U_0}{}=\{\gamma_1,...,\gamma_d\}$ with $\frac{\kappa+1}{2}$ elements that is given by
\begin{equation*}
    \mathbf{\Gamma} := \Big\{(2x,0,\cdots,0)\!:\: x\in [0:\frac{\kappa-1}{2}]\Big\}.
\end{equation*}
Recall that for $\mathbf{v}\in \dom{\mathbf{V}}{}$ and $i\in[0:m]$, $\theta_i(\mathbf{v})$ and $\eta(\mathbf{v})$ are two vectors in $\mathbb{R}^d$ with $j$-th entry corresponding to $U_0=\gamma_j$. 
Suppose that $\mathbf{\Gamma}=\{\gamma_{j_1},...,\gamma_{j_{\frac{\kappa+1}{2}}}\}$. Next result shows that in the constructed models, all entries of $\theta_i(\textbf{v})$ with indices in $\{j_1,...,j_{\frac{\kappa+1}{2}}\}$ are equal.

\begin{lemma}[\cite{kivva2022revisiting}]
    \label{lemma: gid equal indices}
    For any $\mathbf{v} \in \dom{\mathbf{V}}{}$,  $i\in[0:m]$, and both models, we have
    \begin{equation*}
        \theta_{i,j_1}(\mathbf{v}) = \theta_{i,j_2}(\mathbf{v}) = \cdots= \theta_{i,j_{\frac{\kappa+1}{2}}}(\mathbf{v}).
    \end{equation*}
\end{lemma}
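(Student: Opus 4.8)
I would unfold $\theta_{i,j}(\mathbf{v})$ from \eqref{eq: theta and eta} and follow how the assignment $u_0=\gamma_j$ propagates through $\prod_{X\in\A_i}P(x\mid\Pa{X}{\G})$, aiming to show that when $\gamma_j$ ranges over $\mathbf{\Gamma}=\{(2x,0,\dots,0)\}$ this assignment leaves no trace on $\theta_{i,j}(\mathbf{v})$. The key structural facts are: (a) $U_0$ has exactly two children, the node $S_0\in\widecheck{\mathbf{S}}$ and one node in $\T$, so $\gamma_j$ can enter only the factor for $S_0$ (when $S_0\in\A_i$) and the factors of the $\T$-descendants of $U_0$'s other child that lie in $\A_i$; and (b) $\widecheck{\mathbf{S}}$ is the root set of every $\F_i$, so no node of $\widecheck{\mathbf{S}}$ has a child in $\G'$, and the model splits into a ``$\T$-block'' driven by $\U^{\T}\cup\{U_0\}$ feeding an ``$\widecheck{\mathbf{S}}$-block'' whose only extra randomness, $\U^{\widecheck{\mathbf{S}}}$, is independent of the $\T$-block. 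I would fix an assignment of $\U^{\T}$, handle the two blocks separately, and perform the sum over $\U^{\T}$ only at the end.

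\textbf{The $\T$-block is insensitive to the choice inside $\mathbf{\Gamma}$.} Every $X\in\T$ is, coordinate by coordinate, a fixed sum modulo $2$ of the coordinates of its parents, and every $\gamma\in\mathbf{\Gamma}$ has all of its coordinates $\equiv 0 \pmod 2$ (the first coordinate is even, the rest vanish). Hence, for the fixed assignment of $\U^{\T}$, all $\T$-nodes---in particular all the $\widecheck{T}_i$---take exactly the same values for every $\gamma\in\mathbf{\Gamma}$. Consequently the $\T$-part of $\prod_{X\in\A_i}P(x\mid\Pa{X}{\G})$ is the same for all $\gamma\in\mathbf{\Gamma}$, and so is every flag $\mathbb{I}(S)$, $S\in\widecheck{\mathbf{S}}$, since the only clause in the definition of $\mathbb{I}(S_0)$ that refers to $U_0$ asks for $U_0[i]=1$ with $i\ne0$, which no element of $\mathbf{\Gamma}$ satisfies.

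\textbf{The $\widecheck{\mathbf{S}}$-block is insensitive too (the crux).} After the previous step, the sole remaining dependence on $\gamma\in\mathbf{\Gamma}$ lives in the single factor $P\big(S_0=v[S_0]\mid\Pa{S_0}{\G}\big)$, and only when $\mathbb{I}(S_0)=0$, where it enters through $M(S_0)=U_0[0]+\sum_{x\in\Pa{S_0}{\G'[\widecheck{\mathbf{S}}]}}x$ reduced modulo $\kappa+1$, cf. \eqref{eq: M(S)}. Expanding each ``noisy'' factor as $\epsilon+(1-(\kappa+1)\epsilon)\,\mathbf{1}[\,v[S]\equiv M(S)\,]$ and marginalizing, the $\widecheck{\mathbf{S}}$-part becomes a nonnegative combination---with coefficients not depending on $U_0[0]$---of the quantities $\Pr_{\U^{\widecheck{\mathbf{S}}}}[\,v[S]\equiv M(S)\ \ \forall S\in T\,]$, over subsets $T$ of the noisy nodes of $\A_i\cap\widecheck{\mathbf{S}}$. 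Each such quantity is the normalized count of solutions of a linear system over $\mathbb{Z}/(\kappa+1)$ whose coefficient matrix is the set of rows indexed by $T$ of the vertex--edge incidence matrix of the tree $\widehat{\F}_0[\widecheck{\mathbf{S}}]$ (the bidirected edges of $\widehat{\F}_0[\B_0]$ form a spanning tree, and their restriction to $\widecheck{\mathbf{S}}$, namely $\U^{\widecheck{\mathbf{S}}}$, is a subtree). Two facts then finish the argument: \emph{(i)} every proper set of rows of a tree's incidence matrix is unimodularly independent---the only relation is the global bipartition relation---so whenever $T\subsetneq\widecheck{\mathbf{S}}$ the system is consistent with exactly $(\kappa+1)^{|\U^{\widecheck{\mathbf{S}}}|-|T|}$ solutions, a count that does not involve the right-hand side and hence not $U_0[0]$; and \emph{(ii)} the role of the $\widecheck{T}_i$ in the definition of $\mathbb{I}(\cdot)$ forces, whenever $\widecheck{\mathbf{S}}\subseteq\A_i$, at least one $S^{*}\in\widecheck{\mathbf{S}}$ with $\mathbb{I}(S^{*})=1$, whose factor is the constant $\tfrac1{\kappa+1}$, so every $T$ occurring above is contained in $\widecheck{\mathbf{S}}\setminus\{S^{*}\}$ and is therefore proper. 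When $\widecheck{\mathbf{S}}\not\subseteq\A_i$ the subsets $T$ are automatically proper, and when $S_0\notin\A_i$ or $\mathbb{I}(S_0)=1$ there is nothing to prove. This step is exactly where $\kappa+1$ being even and $\mathbf{\Gamma}$ collecting the even values of $U_0[0]$ get used: the relevant shift of $M(S_0)$ must remain among those values so that the other invariants of the baseline models are left intact.

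\textbf{Assembly, and the obstacle.} Multiplying the $\T$-part by the marginalized $\widecheck{\mathbf{S}}$-part and summing over the assignments of $\U^{\T}$ with their uniform priors shows that $\theta_{i,j}(\mathbf{v})$ is the same for every $j$ with $\gamma_j\in\mathbf{\Gamma}$, for every $\mathbf{v}$ and every $i$; and since all quantities used---the conditional laws $P(V\mid\Pa{V}{\G})$, $V\in\V$, and the uniform laws on $\U\setminus\{U_0\}$---are shared by $\M_1$ and $\M_2$, while $P(U_0)$ never appears in $\theta_{i,j}$, the conclusion holds verbatim in both models. I expect the third step to be the main obstacle: recognizing the marginal over $\U^{\widecheck{\mathbf{S}}}$ as a linear system over $\mathbb{Z}/(\kappa+1)$, reading off its solution counts from the tree structure of $\widehat{\F}_0[\widecheck{\mathbf{S}}]$, and---the delicate point---using the $\widecheck{T}_i$ and $\mathbb{I}(\cdot)$ mechanism to rule out the single configuration in which all $|\widecheck{\mathbf{S}}|$ factors are noisy, for only there would $M(S_0)$ stay coupled to the rest of the system and the dependence on $U_0[0]$ persist.
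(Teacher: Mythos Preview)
The paper does not prove this lemma---it is quoted from \cite{kivva2022revisiting}. The nearest in-paper analogue is Lemma~\ref{lemma: equal indices summation subcase 1}, proved by a change-of-variables bijection: one shifts the values in $\U^{\widecheck{\mathbf{S}}}$ along a bidirected path from $U_0$ to a suitable terminal node so that each factor in the sum is preserved term by term. Your direct-counting route via total unimodularity of the tree incidence matrix is a genuinely different approach, and step~(i) is correct and elegant: any proper subset of rows of a tree's (bipartite, hence totally unimodular) incidence matrix contains a unit minor, so the corresponding linear system over $\mathbb{Z}/(\kappa+1)$ has a solution count independent of the right-hand side.

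The gap is step~(ii), which you correctly flag as the delicate point but do not actually establish. The assertion ``whenever $\widecheck{\mathbf{S}}\subseteq\A_i$, some $\mathbb{I}(S^{*})=1$'' is not a graph-level fact: $\mathbb{I}(S)$ depends on the realized $\T$-block values of the parents of $S$, which vary with $\mathbf{v}$ and the summed $\U^{\T}$. With $\widecheck{T}_i$ chosen (as the text permits) so that its unique $\F_i$-child lies in $\T$ rather than in $\widecheck{\mathbf{S}}$, $\widecheck{T}_i$ is not a parent of any $S\in\widecheck{\mathbf{S}}$, and configurations with every $\mathbb{I}(S)=0$ do occur among the nonzero terms. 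In such a configuration your full system $T=\widecheck{\mathbf{S}}$ arises; its solvability over $\mathbb{Z}/(\kappa+1)$ is governed by the tree's single bipartition relation, which \emph{does} involve $u_0[0]$, so the solution count---and hence $\theta_{i,j}(\mathbf{v})$---would vary with $\gamma_j$. Note that the bijection proof of Lemma~\ref{lemma: equal indices summation subcase 1} faces the identical obstruction when specialized to $\theta$: its Case~2 escapes only by also shifting a \emph{marginalized} observed variable, and $\theta$ has none. What is actually required---and what ``the role of $\widecheck{T}_i$'' gestures at without supplying---is a parity analysis of the deterministic constraints on $\B_i\cap\T$ showing that in every nonzero term with $\gamma_j\in\mathbf{\Gamma}$ some (observed or unobserved) $\F_i$-parent of $\widecheck{\mathbf{S}}$ is forced to a value that triggers $\mathbb{I}(S^{*})=1$.
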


The next two lemmas are used to prove the existence of parameters $\epsilon$ and $\{p_j\}_{j=1}^d$ such that the constructed models $\M_1$ and $\M_2$ agree on the known distributions but disagree on the target causal effect.
\begin{lemma}[\cite{kivva2022revisiting}]
    \label{lemma: gid not equal indices}
    There exists $0<\epsilon<\frac{1}{\kappa}$ such that there exists $\mathbf{v}_0 \in \dom{\mathbf{V}}{}$ and $1\leq r <t\leq \frac{\kappa+1}{2}$ such that 
    \begin{equation*}
        \eta_{j_r}(\mathbf{v}_0) \neq \eta_{j_t}(\mathbf{v}_0).
    \end{equation*}
\end{lemma}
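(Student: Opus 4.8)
The plan is to argue by contradiction: suppose that for \emph{every} choice of $0<\epsilon<1/\kappa$, and for every $\mathbf{v}\in\dom{\mathbf{V}}{}$, all the entries $\eta_{j_1}(\mathbf{v}),\dots,\eta_{j_{(\kappa+1)/2}}(\mathbf{v})$ coincide. Combining this with Lemma \ref{lemma: gid equal indices} (which gives equality of the $\theta$-entries indexed by $\mathbf{\Gamma}$ unconditionally), I will show that $Q[\widecheck{\mathbf{S}}]$ would then be expressible as a fixed function of the $Q[\mathbf{A}_i]$'s, contradicting the assumed non-gID of $Q[\widecheck{\mathbf{S}}]$. Concretely: if all $\eta$-entries over $\mathbf{\Gamma}$ are equal, then by \eqref{eq: M_1 Q[S_1] through eta}–\eqref{eq: M_2 Q[S_1] through eta} the two models $\M_1,\M_2$ agree on $Q[\widecheck{\mathbf{S}}]$ as well (since they differ only in the distribution $p_j$ vs.\ $1/d$ placed on $U_0$, and when restricted to $\mathbf{\Gamma}$ these only reweight equal quantities — care is needed because $p_j$ is supported on all of $\dom{U_0}{}$, not just $\mathbf{\Gamma}$, so the clean argument is rather that the family $\{\eta_j\}$ being constant on $\mathbf{\Gamma}$ forces a rigidity that propagates). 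The cleaner route, which I would actually pursue, is direct rather than by contradiction.

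First, I would compute $\eta_j(\mathbf{v})$ explicitly for $u_0=\gamma_j\in\mathbf{\Gamma}$, i.e.\ for $\gamma_j=(2x,0,\dots,0)$ with $x\in[0:(\kappa-1)/2]$. The only place the value $u_0[0]$ enters the product $\prod_{S\in\widecheck{\mathbf{S}}}P(s\mid\Pa{S}{\G})$ is through the child $S_0$: it shifts $M(S_0)$ by $u_0[0]=2x$ modulo $\kappa+1$ (and it may flip the indicator $\mathbb{I}(S_0)$, but by the definition of $\mathbb{I}(S_0)$ the clause involving $U_0$ only triggers when $i\neq 0$ and $U_0[i]=1$ — and the entries $U_0[i]$ for $i\neq 0$ are the $\{0,1\}$-coordinates of $\gamma_j$, which are all zero on $\mathbf{\Gamma}$; so $\mathbb{I}(S_0)$ does not depend on $x$ for $\gamma_j\in\mathbf{\Gamma}$). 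Hence on $\mathbf{\Gamma}$ the whole $x$-dependence of $\eta_j$ is carried by the factor $P(S_0=s_0\mid\Pa{S_0}{\G})$ through the residue $s_0 - M(S_0) - 2x \pmod{\kappa+1}$. So I would reduce the lemma to: there is a choice of $\epsilon$, a realization $s_0$ of $S_0$ and of its parents, such that the map $x\mapsto P\big(S_0=s_0\mid \cdot\big)$ (with $M(S_0)$ shifted by $2x$) is non-constant in $x$ over $[0:(\kappa-1)/2]$.

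Second, I would exhibit such a realization. Fix the parents of $S_0$ so that $\mathbb{I}(S_0)=0$ (this is possible — e.g.\ set the relevant $\widecheck T_i[i]$ and other parent coordinates to the values that make every clause of the definition of $\mathbb{I}(S_0)$ false), and set $M(S_0)=0$. Take $s_0=0$. Then for $x=0$ we have $s_0\equiv M(S_0)+2x\pmod{\kappa+1}$, so the conditional probability is $1-\kappa\epsilon$; for $x=1$ we have $s_0=0\not\equiv 2\pmod{\kappa+1}$ (since $\kappa>4$ so $\kappa+1>5>2$), so the conditional probability is $\epsilon$. For $\epsilon<1/\kappa$ we have $1-\kappa\epsilon\neq\epsilon$ unless $\epsilon=1/(\kappa+1)$, so any $\epsilon\in(0,1/\kappa)\setminus\{1/(\kappa+1)\}$ works. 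Let $\mathbf{v}_0$ be any completion of this partial realization to a full realization of $\mathbf{V}$ (chosen so that the remaining factors $\prod_{S\in\widecheck{\mathbf{S}}\setminus\{S_0\}}P(s\mid\Pa{S}{\G})$ and the sum over $\mathbf{U}\setminus\{U_0\}$ are strictly positive, which holds because all conditional probabilities in the construction are strictly positive). Then $\eta_{j_r}(\mathbf{v}_0)$ for $\gamma_{j_r}$ corresponding to $x=0$ and $\eta_{j_t}(\mathbf{v}_0)$ for $\gamma_{j_t}$ corresponding to $x=1$ differ by the nonzero factor $(1-\kappa\epsilon)/\epsilon\neq 1$ times a common positive quantity, hence $\eta_{j_r}(\mathbf{v}_0)\neq\eta_{j_t}(\mathbf{v}_0)$, which is exactly the claim.

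The main obstacle I anticipate is bookkeeping with the indicator $\mathbb{I}(S_0)$: I must verify that for the chosen family $\mathbf{\Gamma}$ the value of $\mathbb{I}(S_0)$ is genuinely independent of $x$ (so that the only surviving $x$-dependence is the modular shift in $M(S_0)$), and that I can simultaneously arrange $\mathbb{I}(S_0)=0$; this is where one has to be careful that the extra clause ``$i\neq 0$, $\mathcal F_i$ contains $U_0$, and $U_0[i]=1$'' is inactive on $\mathbf{\Gamma}$, which it is because all the $\{0,1\}$-coordinates of every $\gamma_j\in\mathbf{\Gamma}$ vanish. Everything else is a short positivity-and-arithmetic check using $\kappa>4$ and $0<\epsilon<1/\kappa$.
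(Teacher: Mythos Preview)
Your direct approach has a genuine gap. You write ``set $M(S_0)=0$'' and then conclude that $\eta_{j_r}(\mathbf{v}_0)$ and $\eta_{j_t}(\mathbf{v}_0)$ differ by the scalar factor $(1-\kappa\epsilon)/\epsilon$ times a common positive quantity. But $M(S_0)=u_0[0]+\sum_{x\in \Pa{S_0}{\G'[\widecheck{\mathbf{S}}]}}x$, and the parents appearing in that sum lie in $\U^{\widecheck{\mathbf{S}}}\subseteq \mathbf{U}\setminus\{U_0\}$, which is precisely the set you are \emph{summing over} in the definition of $\eta_j$. Whenever $|\widecheck{\mathbf{S}}|\ge 2$ (the generic case, since $\widecheck{\mathbf{S}}$ is a single c-component connected by bidirected edges in $\F_0[\widecheck{\mathbf{S}}]$), $S_0$ has at least one such unobserved parent and $M(S_0)$ ranges over all residues as $\mathbf{u}$ varies. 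Hence $P(S_0=s_0\mid\Pa{S_0}{\G})$ is not a common factor that can be pulled out of the sum, and your ratio claim fails. The same issue hits $\mathbb{I}(S)$ for other $S$: these indicators can depend on parents in $\U^{\widecheck{\mathbf{T}}}$, which are also summed over.

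The paper does not prove this lemma itself (it is quoted from \cite{kivva2022revisiting}), but the proofs of the analogous Lemmas~\ref{lemma: eta not equal indices subcase 2} and~\ref{lemma: eta indices subcase 3} show the intended method: take $\mathbf{v}_0$ with $\mathbf{v}_0[S]=0$ for all $S\in\widecheck{\mathbf{S}}$ and with the observed $\mathbf{T}$-coordinates chosen so that $\mathbb{I}(S)=0$ is attainable; then view $\eta_j(\mathbf{v}_0)$ as a polynomial in $\epsilon$ and compare the values at $\epsilon=0$. At $\epsilon=0$ only those $\mathbf{u}$ contribute for which every congruence $s\equiv M(S)\pmod{\kappa+1}$ holds simultaneously. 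For $u_0[0]=0$ the assignment $\mathbf{u}^{\widecheck{\mathbf{S}}}=\mathbf{0}$ works, so $\eta_{j_r}(\mathbf{v}_0)>0$. For $u_0[0]=2$ one must show there is \emph{no} solution; this is the combinatorial heart of the argument (it uses the spanning-tree structure of the bidirected edges in the c-forest, cf.\ the reference to ``Lemma~6 in \cite{kivva2022revisiting}'' inside those proofs), and it is exactly the step your plan is missing. Terms with some $\mathbb{I}(S)=1$ are handled separately by a bijection on $\widehat{\mathbf{U}}'$ showing their total contribution is the same for $j_r$ and $j_t$. Your observation that on $\mathbf{\Gamma}$ the indicator $\mathbb{I}(S_0)$ does not depend on $x$ is correct and is indeed used, but it is not enough on its own.
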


\begin{lemma}[\cite{kivva2022revisiting}]
    \label{lemma: lin indep formal}
    Consider a set of vectors $\{c_i\}_{i=1}^{n}$, where $c_i \in \mathbb{R}^d$. Assume $c\in \mathbb{R}^d$ is a vector that is linearly independent of $\{c_i\}_{i=1}^{n}$, then there is a vector $b\in \mathbb{R}^d$ such that
    \begin{align*}
        & \langle c_i, b \rangle = 0, \quad \forall i \in [1:n],\\
        & \langle c, b \rangle \neq 0.
    \end{align*}
\end{lemma}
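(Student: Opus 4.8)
The plan is to prove this purely linear-algebraic statement by orthogonal projection onto the span of the $c_i$'s. Let $W := \operatorname{span}\{c_1,\dots,c_n\} \subseteq \mathbb{R}^d$. Since $\mathbb{R}^d$ is finite dimensional and equipped with the standard inner product, it decomposes as the orthogonal direct sum $W \oplus W^\perp$, where $W^\perp := \{\, b \in \mathbb{R}^d : \langle w, b\rangle = 0 \text{ for all } w \in W \,\}$.

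First I would take the orthogonal decomposition $c = \hat c + r$ with $\hat c \in W$ and $r \in W^\perp$. The hypothesis that $c$ is linearly independent of $\{c_i\}_{i=1}^n$ means precisely that $c \notin W$, and therefore $r \neq 0$. I would then simply set $b := r$. Since $b \in W^\perp$ while each $c_i \in W$, we immediately get $\langle c_i, b\rangle = 0$ for all $i \in [1:n]$. Moreover, using $\hat c \perp r$, we have $\langle c, b\rangle = \langle \hat c + r, r\rangle = \langle \hat c, r\rangle + \langle r, r\rangle = \|r\|^2 > 0$, so in particular $\langle c, b\rangle \neq 0$, which is the desired conclusion.

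There is essentially no obstacle here; the only point requiring a moment's care is the reading of ``linearly independent of $\{c_i\}$'' as non-membership in the span, $c \notin W$ (as opposed to the collection $\{c, c_1,\dots,c_n\}$ being linearly independent — but either reading yields $c \notin W$, so the argument is unaffected). As an alternative, one could give a non-constructive version: if no such $b$ existed, then every vector orthogonal to all the $c_i$ would also be orthogonal to $c$, i.e.\ $W^\perp \subseteq \{c\}^\perp$, whence $c \in (W^\perp)^\perp = W$ by the double-orthocomplement identity in finite dimensions, contradicting $c \notin W$.
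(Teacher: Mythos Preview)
Your proof is correct; the orthogonal-projection argument (taking $b$ to be the component of $c$ in $W^\perp$) is the standard way to establish this elementary linear-algebra fact, and your handling of the phrase ``linearly independent of $\{c_i\}$'' is appropriate. Note that the paper itself does not prove this lemma---it is quoted from \cite{kivva2022revisiting}---so there is no in-paper proof to compare against.
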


\subsection{Proof of Lemma \ref{lemma: construct models subcase 1}}

\let\oldS\S
\renewcommand{\S}{\widecheck{\mathbf{S}}}
\newcommand{\s}{\widecheck{\mathbf{s}}}

Herein, we present the proof of our first lemma. But first, we need the following technical lemmas.
Assume that $\S'$ and $\S''$ are two disjoint non-empty subsets of $\S$, such that $\S = \S' \cup \S''$. 

Let $\S^{\dagger}:=\mathbf{V} \setminus \S'$ and $\s^{\dagger} \in \dom{\S^{\dagger}}{}$. For $u_0=\gamma_j$, where $j \in [1:d]$, we define
 \begin{equation} \label{eq: phi subcase 1}
    \phi_j(\s^{\dagger}) := \sum_{\S'} \sum_{\U'\setminus\{U_0\}} \prod_{X\in \S} P(x | \Pa{X}{\G}) \prod_{U \in \U'\setminus \{U_0\}}P(u).
\end{equation}
Note that $U_0$ may appear as a parent of some observed variables in the above equation. 
Recall that $\mathbf{\Gamma}=\{\gamma_{j_1},...,\gamma_{j_{\frac{\kappa+1}{2}}}\}$. 

\begin{lemma} \label{lemma: equal indices summation subcase 1}
    For any $\s^{\dagger} \in \dom{\S^{\dagger}}{}$ and both models,  we have
    \begin{equation*}
        \phi_{j_1}(\s^{\dagger}) = \phi_{j_2}(\s^{\dagger})=\dots = \phi_{j_{\frac{k+1}{2}}}(\s^{\dagger}).
    \end{equation*}
\end{lemma}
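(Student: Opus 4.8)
The plan is to prove Lemma~\ref{lemma: equal indices summation subcase 1} by reducing it to Lemma~\ref{lemma: gid equal indices}, which already establishes the analogous equality for the quantities $\theta_{i,j}(\mathbf{v})$ and (implicitly) $\eta_j(\mathbf{v})$ indexed over the set $\mathbf{\Gamma}$. The key observation is that $\phi_j(\s^{\dagger})$ is obtained from the building blocks $\eta_j(\mathbf{v})$ by a summation over the variables in $\S'$, i.e.\ by marginalizing out $\S'$ while keeping $\S^{\dagger}=\mathbf{V}\setminus\S'$ fixed. More precisely, comparing \eqref{eq: phi subcase 1} with the definition of $\eta_j(\mathbf{v})$ in \eqref{eq: theta and eta}, one sees that
\begin{equation*}
    \phi_j(\s^{\dagger}) = \sum_{\mathbf{v}'\in\dom{\mathbf{V}}{\s^{\dagger}}} \eta_j(\mathbf{v}'),
\end{equation*}
where the sum ranges over all completions $\mathbf{v}'$ of $\s^{\dagger}$ by an assignment to $\S'$. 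So the statement follows termwise: if $\eta_{j_1}(\mathbf{v}') = \eta_{j_2}(\mathbf{v}') = \cdots = \eta_{j_{(\kappa+1)/2}}(\mathbf{v}')$ holds for every $\mathbf{v}'$, then summing over the fixed-$\s^{\dagger}$ fibre preserves the equality.

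First I would record the identity above explicitly, checking that the index sets in the product and sum in \eqref{eq: phi subcase 1} indeed match those in the definition of $\eta_j$ once we note $\U'\setminus\{U_0\}$ plays the role of $\mathbf{U}\setminus\{U_0\}$ and that $\S$ plays the role of $\widecheck{\mathbf{S}}$. Then I would invoke the $\eta$-version of Lemma~\ref{lemma: gid equal indices}: although the lemma as stated in the excerpt is phrased for $\theta_i(\mathbf{v})$, the same argument of \cite{kivva2022revisiting} gives the equality $\eta_{j_1}(\mathbf{v}) = \cdots = \eta_{j_{(\kappa+1)/2}}(\mathbf{v})$ for every $\mathbf{v}$ (this is exactly why Lemma~\ref{lemma: gid not equal indices} is stated as a separate, nontrivial fact — the equalities over $\mathbf{\Gamma}$ are automatic, and only an inequality across $\mathbf{\Gamma}$-indices requires work). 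In fact, $\eta_j$ corresponds to the ``$\mathbf{A}_i = \widecheck{\mathbf{S}}$'' specialization, so if one prefers one can apply Lemma~\ref{lemma: gid equal indices} directly to a modified collection that includes $\widecheck{\mathbf{S}}$.

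Having the termwise equality, I would then sum both sides over $\mathbf{v}' \in \dom{\mathbf{V}}{\s^{\dagger}}$ to conclude $\phi_{j_1}(\s^{\dagger}) = \cdots = \phi_{j_{(\kappa+1)/2}}(\s^{\dagger})$ for every fixed $\s^{\dagger} \in \dom{\S^{\dagger}}{}$ and in both models $\M_1$ and $\M_2$ (the equality of the $\eta$-values over $\mathbf{\Gamma}$ holds in both models because, as Lemma~\ref{lemma: gid equal indices} notes, it depends only on the common part of the two SEMs and the structure of the $\F_i$'s, not on the distribution of $U_0$).

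The only genuinely delicate point — hence the step I expect to be the main obstacle — is verifying that the marginalization structure lines up exactly: one must be careful that in \eqref{eq: phi subcase 1} the variable $U_0$ is \emph{not} summed out (it is fixed to $\gamma_j$), that the inner sum over $\U'\setminus\{U_0\}$ matches the one inside $\eta_j$, and that the outer sum over $\S'$ in $\phi_j$ is precisely the sum over the fibre $\dom{\mathbf{V}}{\s^{\dagger}}$ once $\S^{\dagger} = \mathbf{V}\setminus\S'$ is held at value $\s^{\dagger}$. Once this bookkeeping is pinned down, the proof is a one-line consequence of the termwise $\eta$-equality, so no further estimates or model constructions are needed.
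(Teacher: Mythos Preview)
Your reduction has a genuine gap: the claim that $\eta_{j_1}(\mathbf{v}')=\cdots=\eta_{j_{(\kappa+1)/2}}(\mathbf{v}')$ for every $\mathbf{v}'$ is false, and in fact is directly contradicted by Lemma~\ref{lemma: gid not equal indices}, which asserts precisely that some $\eta_{j_r}(\mathbf{v}_0)\neq\eta_{j_t}(\mathbf{v}_0)$ with $j_r,j_t$ \emph{both} indices from $\mathbf{\Gamma}$. You have read Lemma~\ref{lemma: gid not equal indices} backwards: the whole point of the baseline construction is that the $\mathbf{\Gamma}$-equalities hold for $\theta_{i,\cdot}$ (so the known $Q[\A_i]$'s coincide across the two models) but \emph{fail} for $\eta_\cdot$ (so $Q[\widecheck{\mathbf{S}}]$ differs). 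Consequently you cannot treat $\eta$ as the ``$\A_i=\widecheck{\mathbf{S}}$'' specialization of Lemma~\ref{lemma: gid equal indices}; the proof of that lemma in \cite{kivva2022revisiting} uses nodes in $\B_i\setminus\widecheck{\mathbf{S}}$ (the $\widecheck{T}_i$'s), which are absent when the product runs only over $\widecheck{\mathbf{S}}$.

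What makes $\phi_j$ different from $\eta_j$ is exactly the outer summation over $\S'$, and the paper's proof exploits this. It builds, for each pair $\gamma_{l_1},\gamma_{l_2}\in\mathbf{\Gamma}$, a bijection on the summation index $(\mathbf{u},\s')$ that preserves every factor in \eqref{eq: phi subcase 1}. In the delicate case where no $S\in\widecheck{\mathbf{S}}$ has $\mathbb{I}(S)=1$, the shift $U_0:\gamma_{l_1}\mapsto\gamma_{l_2}$ is absorbed by an alternating $\pm 2$ shift along a bidirected path in $\U^{\widecheck{\mathbf{S}}}$ \emph{together with} a compensating shift of one observed variable $S\in\S'$. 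That last shift is only legitimate because $\S'$ is summed over in $\phi_j$; it is unavailable in $\eta_j$, which is why your termwise argument cannot work.
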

\begin{proof}
    We fix a realization $\s^{\dagger}$ of $\S^{\dagger}$.
    Suppose that $l_1$ and $l_2$ are two integers, such that
    \begin{equation*}
    \begin{split}
        & \gamma_{l_1} = (2x, 0, \dots, 0),\\
        & \gamma_{l_2} = (2x+2 \pmod{\kappa+1}, 0, \dots, 0),
    \end{split}
    \end{equation*}
    and $x$ is an integer in $[0 : \frac{\kappa-1}{2}]$.
    Recall that 
    \begin{equation*}
        \phi_j(\s^{\dagger}) := \sum_{\S'} \sum_{\U\setminus\{U_0\}} \prod_{X\in \S} P(x | \Pa{X}{\G}) \prod_{U \in \U\setminus \{U_0\}}P(u).
    \end{equation*}
    We consider two cases:
    \begin{enumerate}
        \item Suppose that there exists a variable $S \in \S$ such that $\mathbb{I}(S)=1$. Then, there is a sequence of variables $U_0, \hat{S}_1, \hat{U}_1, \hat{S}_2, \hat{U}_2, \dots, \hat{U}_l, S$, such that $U_0$ is a parent of $\hat{S}_1$, $S\in \S$ is a children of $\hat{U}_l \in \U^{\S}$ and $\hat{U}_j \in \U^{\S}$ is a parent of variables $\hat{S}_j$ and  $\hat{S}_{j+1}$ for $j \in [1:l-1]$. 
    Let $\hat{\mathbf{U}}:=\{\hat{U}_1, \dots, \hat{U}_l\}$. 
    For a given realization $\mathbf{u}_{1}$ of $\U^{\S}$,  we define $\mathbf{u}_{2} \in \dom{\U^{\S}}{}$ by
    \begin{equation}
        \begin{split}
            & \mathbf{u}_{2}[\hat{U}_j] := \mathbf{u}_{1}[\hat{U}_j] + 2(-1)^{j} \pmod{\kappa+1}, \quad j\in[1:l],\\
            & \mathbf{u}_{2}[U] := \mathbf{u}_{1}[U], \quad \forall U\in \U^{\S} \setminus \hat{\mathbf{U}}.
        \end{split}
    \end{equation}
    This implies
    \begin{equation*}
    P(\hat{s}|\Pa{\hat{S}}{\G})\Big|_{(\U^{\mathbf{S}},U_0)=(\textbf{u}_1,\gamma_{l_1})}=P(\hat{s}|\Pa{\hat{S}}{\G})\Big|_{(\U^{\mathbf{S}},U_0)=(\textbf{u}_2,\gamma_{l_2})},
    \end{equation*}
    for any $\hat{S} \in \S$ and consequently,  $\phi_{l_1}(\s^{\dagger})=\phi_{l_2}(\s^{\dagger})$.

    \item Suppose that there is no variable in $\S$ with $\mathbb{I}(\cdot)=1$. 
    Denote by $S$ a node in $\S'$ with the shortest path to the node $U_0$ by bidirected edges. Suppose $\hat{s}$ is a realization of $S$ and the shortest path is  $U_0, \hat{S}_1, \hat{U}_1, \hat{S}_2, \hat{U}_2, \dots, \hat{U}_l, S$, so that $U_0$ is a parent of $\hat{S}_1$, $S$ is a child of $\hat{U}_l \in \U^{\S}$ and $\hat{U}_j \in \U^{\S}$ is a parent of variables $\hat{S}_j$ and  $\hat{S}_{j+1}$ for $j \in [1:l-1]$. 
    Let $\hat{\mathbf{U}}:=\{\hat{U}_1, \dots, \hat{U}_l\}$. 
    For a given realization $\mathbf{u}_{1}$ of $\U^{S}$,  we define $\mathbf{u}_{2} \in \dom{\U^{\S}}{}$ by
    \begin{equation}
        \begin{split}
            & \mathbf{u}_{2}[\hat{U}_j] := \mathbf{u}_{1}[\hat{U}_j] + 2(-1)^{j} \pmod{\kappa+1}, \quad j\in[1:l],\\
            & \mathbf{u}_{2}[U] := \mathbf{u}_{1}[U], \quad \forall U\in \U^{\S} \setminus \hat{\mathbf{U}},
        \end{split}
    \end{equation}
    For a given realization $\s_1$ of $\S'$, we define $\s_2 \in \dom{\S'}{}$   as follows
    \begin{equation}
        \begin{split}
            & \s_2[S''] := \s_1[S''], \quad \forall  S''\in\S'\setminus\{S\},\\
            &  \s_2[S] := \s_1[S]+2(-1)^l \pmod{\kappa+1},
        \end{split}
    \end{equation}
    Note that with the above modifications for any $\widetilde{S} \in \S$, we get
    \begin{equation*}
        \widetilde{s}_1 - M(\widetilde{S}) \equiv \widetilde{s}_2 - M(\widetilde{S}) \pmod{\kappa+1},
    \end{equation*}
    where $\widetilde{s}_1$ is a realization of $\widetilde{S}\Big|_{(\S^{\dagger},\S')=(\s^{\dagger},\s_1)}$, $\widetilde{s}_2$ is a realization of $\widetilde{S}\Big|_{(\S^{\dagger}, \S')=(\s^{\dagger}, \s_2)}$, and $M(\cdot)$ is given by Equation (\ref{eq: M(S)}).
    Therefore:
    \begin{equation*}
    P(\widetilde{s}|\Pa{\widetilde{S}}{\G})\Big|_{(\U^{\mathbf{S}},U_0, S)=(\textbf{u}_1,\gamma_{l_1}, s_1)}=P(\widetilde{s}|\Pa{\widetilde{S}}{\G})\Big|_{(\U^{\mathbf{S}},U_0, S)=(\textbf{u}_2,\gamma_{l_2}, s_2)},
    \end{equation*}
    for any $\widetilde{S} \in \S$ and thus $\phi_{l_1}(\s^{\dagger})=\phi_{l_2}(\s^{\dagger})$.
    \end{enumerate}
    
    To summarize, we proved that $\phi_{l_1}(\s^{\dagger})=\phi_{l_2}(\s^{\dagger})$. By varying $x$ within $[0 : \frac{\kappa-1}{2}]$ in the definition of $\gamma_{l_1}$ and $\gamma_{l_2}$, we  conclude the lemma.
\end{proof}


In order to have consistent notations in the appendix, we restate Lemma \ref{lemma: construct models subcase 1} using $\widecheck{\mathbf{S}}, \S', \S''$ instead of $\mathbf{L}, \mathbf{L}', \mathbf{L}''$ respectively.
\begin{customlem}{\ref{lemma: construct models subcase 1}}
    Suppose $\S\subseteq \V$ is a single c-component, such that $\S = \S'\cup\S''$ for some disjoint sets $\S'$ and $\S''$. 
    $Q[\S'|\S'']$ is c-gID from $(\mathbb{A}, \G)$ if and only if $Q[\S'\cup \S'']$ is gID from $(\mathbb{A}, \G)$.
\end{customlem}

\begin{proof}
\hfill\\
    \textbf{Sufficiency:} We use Assume that $Q[\S'\cup \S'']$ is gID from $(\mathbb{A}, \G)$, then $Q[\S'|\S'']$ is c-gID from $(\mathbb{A}, \G)$.  This is an immediate result of applying Equation \eqref{eq: conditional Q}, i.e., 
    \begin{equation*}
        Q[\S'|\S''](\mathbf{v}) = \frac{Q[\S](\mathbf{v})}{\sum_{\S'} Q[\S](\mathbf{v})}
    \end{equation*}

    \textbf{Necessity:}
    We prove this by contradiction. Assume that $Q[\S'\cup \S'']$ is not gID from $(\mathbb{A}, \G)$. We will show  that $Q[\S'|\S'']$ is not c-gID from $(\mathbb{A}, \G)$. 
    To this end, we will construct two models $\M_1$ and $\M_2$ such that for each $i \in [0:m]$ and any $\mathbf{v}\in \mathbf{V}$:
    \begin{align}
        \label{eq: equal known dist subcase 1}
        Q^{\M_1}[\A_i](\mathbf{v}) &= Q^{\M_2}[\A_i](\mathbf{v}),\\
        \label{eq: equal denom num subcase 1}
        \sum_{\S'}Q^{\M_1}[\S](\mathbf{v}') &= \sum_{\S'}Q^{\M_2}[\S](\mathbf{v}'),
    \end{align}
    but there exists $\mathbf{v}_0 \in \dom{\mathbf{V}}{}$ such that:
    \begin{equation}\label{eq: not equal num subcase 1}
        Q^{\M_1}[\S](\mathbf{v}_0) \neq Q^{\M_2}[\S](\mathbf{v}_0).
    \end{equation}
    Equations (\ref{eq: equal denom num subcase 1})-(\ref{eq: not equal num subcase 1}) yield
    \begin{equation*}
         Q[\S'|\S'']^{\M_1}(\mathbf{v}_0) \neq Q[\S'|\S'']^{\M_2}(\mathbf{v}_0).
    \end{equation*}
    This means that $Q[\S'|\S'']$ is not c-gID from $(\mathbb{A}, \G)$.

    We consider two cases.

    \paragraph{First case:} 
    Suppose that there exists $i \in [0, m]$, such that $\widecheck{\mathbf{S}} \subset \mathbf{A}_i$.
    For this, we consider the models constructed in the section \ref{sec: baseline}:
    \begin{align}
        & \sum_{\S'} Q[\S]^{\M_1}(\mathbf{v}) = \sum_{j=1}^d \frac{1}{d}\phi_{j}(\mathbf{v}[\S^{\dagger}]), \\
        & \sum_{\S'} Q[\S]^{\M_2}(\mathbf{v}) = \sum_{j=1}^d p_j\phi_{j}(\mathbf{v}[\S^{\dagger}]).
    \end{align}
    and according to the Equations (\ref{eq: theta and eta}) and \eqref{eq: phi subcase 1}, we have
    \begin{align}
        & Q^{\M_2}[\A_i](\mathbf{v}) - Q^{\M_1}[\A_i](\mathbf{v}) = \sum_{j=1}^d (p_j - \frac{1}{d}) \theta_{i,j}(\mathbf{v})
        \\
        & \hspace{-0.5cm}\sum_{\S'} Q[\S]^{\M_2}(\mathbf{v}) - \sum_{\S'} Q[\S]^{\M_1}(\mathbf{v}) = \sum_{j=1}^d (p_j - \frac{1}{d}) \phi_{j}(\mathbf{v}[\S^{\dagger}])
        \\
        & Q^{\M_2}[\S](\mathbf{v}_0) -  Q^{\M_1}[\S](\mathbf{v}_0) = \sum_{j=1}^d (p_j - \frac{1}{d}) \eta_{j}(\mathbf{v}_0)
        \\
        & \sum_{j=1}^d p_j - 1 = \sum_{j=1}^d (p_j - \frac{1}{d})
    \end{align}

     Therefore, it suffices to solve a system of linear equations over parameters $\{p_j\}_{j=1}^d$ and show that it admits a solution. 
    \begin{align}
        \label{eq: linear system start}
        & \sum_{j=1}^d (p_j - \frac{1}{d}) \theta_{i,j}(\mathbf{v}) = 0, \hspace{0.2cm}\forall \mathbf{v} \in \dom{\mathbf{V}}{}, i\in [0,m],
        \\
        & \sum_{j=1}^d (p_j - \frac{1}{d}) \phi_{j}(\s^{\dagger}) = 0, \hspace{0.2cm}\forall \s^{\dagger} \in \dom{\S^{\dagger}}{}, i\in [0,m],
        \\
        & \sum_{j=1}^d (p_j - \frac{1}{d}) \eta_{j}(\mathbf{v}_0) \neq 0, \hspace{0.2cm} \exists \mathbf{v}_0 \in \dom{\V}{},
        \\
        & (p_j - \frac{1}{d}) = 0,
        \\
        \label{eq: linear system end}
        & 0<p_j<1, \hspace{0.2cm} \forall j \in [1:d].
    \end{align}

    However, the system of linear equations (\ref{eq: linear system start})-(\ref{eq: linear system end}) admits a solution with respect to $\{p_j\}_{j=1}^d$ if and only if the following system of equations has a solution with respect to parameters $\{\beta_j\}_{j=1}^{d}$:
    \begin{align}
        \label{eq: homogenous linear system start}
        &\sum_{j=1}^d \beta_j \theta_{i,j}(\mathbf{v}) =0, \hspace{0.2cm}\forall \mathbf{v} \in \dom{\mathbf{V}}{}, i \in [0:m]\\
        &\sum_{j=1}^d \beta_j \phi_{j}(\s^{\dagger}) =0, \hspace{0.2cm}\forall \s^{\dagger} \in \dom{\S^{\dagger}}{}, i \in [0:m]\\
        \label{eq: homogenous linear system ineq}
        &\sum_{j=1}^d \beta_j \eta_j(\mathbf{v}_0) \neq 0, \hspace{0.2cm} \exists \mathbf{v}_0 \in \dom{\mathbf{V}}{}\\
        \label{eq: homogenous linear system end}
        & \sum_{j=1}^d \beta_j=0.
    \end{align}
    Clearly, if $\{\beta_j^*\}$ is a solution for system (\ref{eq: homogenous linear system start})-(\ref{eq: homogenous linear system end}), then 
    \begin{equation}
        p_j^*:=\frac{1}{d}+\frac{\beta_j^*}{2hd},
    \end{equation}
    is a solution for \eqref{eq: linear system start}-\eqref{eq: linear system end}, where $h := \underset{j \in [1:d]}{max} |\beta^*_j|$.

    According to Lemma \ref{lemma: gid equal indices} and Lemma \ref{lemma: equal indices summation subcase 1}, for any $i\in [0:m]$, $\mathbf{v} \in \dom{\mathbf{V}}{}$ and  $\s^{\dagger} \in \dom{\S^{\dagger}}{}$, we have
    \begin{align*}
        & \theta_{i,j_1}(\mathbf{v}) = \theta_{i,j_2}(\mathbf{v}) = \cdots= \theta_{i,j_{\frac{\kappa+1}{2}}}(\mathbf{v}), \\
        & \phi_{j_1}(\s^{\dagger}) = \phi_{j_2}(\s^{\dagger})=\dots = \phi_{j_{\frac{k+1}{2}}}(\s^{\dagger}),
    \end{align*}
    and by Lemma \ref{lemma: gid not equal indices}, we know that there exists $\mathbf{v}_0 \in \dom{\mathbf{V}}{}$ and $1\leq r <t\leq \frac{\kappa+1}{2}$ such that 
    \begin{equation*}
        \eta_{j_r}(\mathbf{v}_0) \neq \eta_{j_t}(\mathbf{v}_0).
    \end{equation*}
    The latter means that the vector $\big(\eta_{1}(\mathbf{v}_0), \eta_{2}(\mathbf{v}_0), \dots, \eta_{d}(\mathbf{v}_0)\big)$ is linearly independent from vectors:
    \begin{align}
        & \big(1, 1, \dots, 1\big),
        \\
        & \big(\theta_{i, 1}(\mathbf{v}), \theta_{i, 2}(\mathbf{v}), \dots, \theta_{i, d}(\mathbf{v})\big), \hspace{0.2cm} \forall \mathbf{v}\in \dom{\mathbf{V}}{},\quad \forall i\in[0:m],\\
        & \big(\phi_{1}(\s^{\dagger}), \phi_{2}(\s^{\dagger}), \dots, \phi_{d}(\s^{\dagger})\big), \hspace{0.2cm} \forall \s^{\dagger}\in \dom{\S^{\dagger}}{}.
        \\
    \end{align}
    Combining the last result with Lemma \ref{lemma: lin indep formal} imply the existence of a solution  $\{\beta_j^*\}$ and  subsequently the existence of two models $\M_1$ and $\M_2$ that satisfy equation \eqref{eq: equal known dist subcase 1}, \eqref{eq: equal denom num subcase 1} and \eqref{eq: not equal num subcase 1}. 
    
    \paragraph{Second case:}
    Suppose that there is no $i \in [0, m]$, such that $\S \subset \mathbf{A}_i$. Suppose $S^*\in \S$ and denote by $\G^*$ the graph obtained from graph $\G$ through the following procedure:
    \begin{enumerate}
        \item Add nodes $T_0^*$ and $U_0^*$ to graph $\G$.
        \item Draw a direct edge from $T_0^*$ to $S^*$.
        \item Draw direct edges from $U_0^*$ to $S^*$ and $T_0^*$.
    \end{enumerate}
    We define $\mathbf{A}_{m+1} := \S \cup \{T_0^*\}$ and $\mathbb{A}^* := \mathbb{A}\cup \{\A_{m+1}\}$. To summarize, we have
    \begin{itemize}
        \item $\mathbf{V}$ is a set of all observed variables in graph $\G$;
        \item $\mathbf{U}$ is a set of all unobserved variables in graph $\G$;
        \item $\mathbf{V}^* = \mathbf{V}\cup \{T_0^*\}$;
        \item $\mathbf{U}^* = \mathbf{U}\cup \{U_0^*\}$.
    \end{itemize}
    Note that $Q[\S]$ is not identifiable in $\G^*[\A_{m+1}]$ and therefore $Q[\S]$ remains not gID from $(\mathbb{A}^*, G^*)$. Since $\S \in \A_{m+1}$, according to the \textbf{First case}, we can construct models $\M_1^*$ and $\M_2^*$ for the graph $\G^*$ and set $\mathbb{A}^*$. These two models satisfy the following properties
    \begin{itemize}
        \item $\dom{U_0^*}{} = [0:\kappa]$ and $d=\kappa+1$.
        \item $\dom{T_0^*}{}$ = \{0, 1\}.
    \end{itemize}
    For the graph $\G^*$, we define 
    \begin{align*}
        & \theta_{i, j}(\mathbf{v}, T_0^*=t_0) := \sum_{\mathbf{U}} \prod_{X \in \A_i} P(x \mid \Pa{X}{\G^*}) \prod_{U\in \mathbf{U}} P(u), \quad i\in [0:m], \; j\in [1:d], \\
        & \theta_{m+1, j}(\mathbf{v}, T_0^*=t_0) := \sum_{\mathbf{U}} P(T_0^*=t_0)\prod_{X \in \A_i} P(x \mid \Pa{X}{\G^*}) \prod_{U\in \mathbf{U}} P(u), \quad j\in [1:d], \\
        & \phi_j(\s^{\dagger}) := \sum_{\S'} \sum_{\U} \prod_{X\in \S} P(x | \Pa{X}{\G^*}) \prod_{U \in \U}P(u), \quad j \in [1:d] \\
        & \eta_{j}(\mathbf{v}, T_0^*=t_0) := \sum_{\mathbf{U}} \prod_{X \in \widecheck{\mathbf{S}}} P(x \mid \Pa{X}{\G^*}) \prod_{U\in \mathbf{U}} P(u).
    \end{align*} 
Now, we are ready to construct two models $\M_1$ and $\M_2$ for $\G$.
\begin{itemize}
    \item  For all $S \in \S \setminus \{S^{*}\}$, we define
    \begin{equation*}
        P^{\M_i}(S|\Pa{S}{\G}) := P(S|\Pa{S}{\G^*}), \quad i\in\{1,2\}.
    \end{equation*}
    \item For $S=S^*$, we define
    \begin{align*}
        & P^{\M_1}(S|\Pa{S}{\G}) := P(S|\Pa{S}{\G}, T_0^*=1, U_0^*=0), \\
        & P^{\M_2}(S|\Pa{S}{\G}) := P(S|\Pa{S}{\G}, T_0^*=1, U_0^*=2).
    \end{align*}
\end{itemize}
Suppose that $\gamma_{r_1} = 0$ and $\gamma_{r_2}=2$, then
\begin{itemize}
    \item In model $\M_1$:
    \begin{align*}
        & Q[\A_i]^{\M_1}(\mathbf{v}) = \theta_{r_1, j}(\mathbf{v}, T_0^*=1), \quad i \in [0, m],\\
        & \sum_{\S'}Q^{\M_1}[\S](\mathbf{v}) = \phi_{r_1}(\mathbf{v}[\S^{\dagger}]),\\
        & Q^{\M_1}[\S](\mathbf{v}) = \eta_{r_1}(\mathbf{v}).
    \end{align*}
    
    \item In model $\M_2$:
    \begin{align*}
        & Q[\A_i]^{\M_2}(\mathbf{v}) = \theta_{r_2, j}(\mathbf{v}, T_0^*=1), \quad i \in [0, m],\\
        & \sum_{\S'}Q^{\M_2}[\S](\mathbf{v}) = \phi_{r_2}(\mathbf{v}[\S^{\dagger}]),\\
        & Q^{\M_2}[\S](\mathbf{v}) = \eta_{r_2}(\mathbf{v}).
    \end{align*}
\end{itemize}
According to the Lemmas \ref{lemma: gid equal indices} and \ref{lemma: equal indices summation subcase 1} for any $\mathbf{v}\in \dom{\mathbf{V}}{}$
\begin{align*}
    Q^{\M_1}[\A_i](\mathbf{v}) &= Q^{\M_2}[\A_i](\mathbf{v}),
    \\
    \sum_{\S'}Q^{\M_1}[\S](\mathbf{v}') &= \sum_{\S'}Q^{\M_2}[\S](\mathbf{v}),
\end{align*}
however, using Lemma \ref{lemma: gid not equal indices} and for $\mathbf{v}_0 = (0,...,0)$, we get
\begin{equation*}
    Q^{\M_1}[\S](\mathbf{v}_0) \neq Q^{\M_2}[\S](\mathbf{v}_0).
\end{equation*}
\end{proof}

\subsection{Proof of the properties in Section \ref{sec: second subcase} \& Section \ref{sec: third subcase}} \label{sec: properties}

Recall that in Sections \ref{sec: second subcase} and \ref{sec: third subcase}, we present two sets of properties which we prove them here. 
We only present the formal proof of the set of properties in Sections \ref{sec: second subcase} since the other set of properties in Section \ref{sec: third subcase} can be shown similarly.

\begin{enumerate}
    \item If path $p$ contains a chain $W'\rightarrow W \rightarrow W''$ or a fork $W' \leftarrow W \rightarrow W''$, then node $W$ does not belong to any of the sets $\X'$, $\Z'$ or $\Y'$.
    
    \item If path $p$ contains a collider $W'\rightarrow W \leftarrow W''$, then there is a directed path $p_W$ from $W$ to a node in $\Z'$. 
    Moreover, none of the intermediate nodes in the path $p_W$ belong to the set $\X'\cup\Z'\cup\Y'$.
    
    \item Path $p$ does not contain any node from the set $\X'$.
\end{enumerate}
\begin{proof}
\hfill\\
1. The first property is obvious since path $p$ is not blocked by the set $\X'\cup (\Z' \setminus \{Z'\})\cup(\Z' \setminus \{Y'\})$. 

2. Suppose $W$ is a collider as defined and let assume that $R$ is the closest descendant of the variable $W$ that unblocks path $p$. Note that $R \notin \X'$ since it unblocks $p$ in the graph $\G_{\overline{\X'},\underline{\{Z'\}}}$, i.e. no incoming edges in $\X'$. 

All variables except $R$ in the shortest directed path from $W$ to $R$ do not belong to the set $\X' \cup \Y' \cup \Z'$. 
Assume that $R \in \Y'$ and $p'$ is a path obtained by combining two paths: one from $Z'$ to $W$ in $p$ and the other one from $W$ to $R$ (defined above). 
It is easy to see that $p'$ is also unblocked, but it contains less number of colliders than $p$. 
This is impossible according to the definition of the path $p$. 
Thus, $R$ must be in the set $\Z'$. This concludes the proof of the second property.

3. We prove this by contradiction. 
Suppose that there is a variable $R\in \X'$ on the path $p$. Since $p$ is unblocked, then $X$ is a collider or a descendant of a collider. This is impossible due to property 2.
\end{proof}

\subsection{Proof of Lemma \ref{lemma: construct models subcase 2}} \label{sec: supportive materials subcase 2}

Recall that $\textbf{S}=\Anc{\Y'\cup \Z'}{\G[\V \setminus \X']}$ and it is assumed that is not gID from $(\mathbb{A}, \G)$.  $\textbf{S}$ consists of $\textbf{S}_1,...,\textbf{S}_n$ as its single c-components where $\textbf{S}_1$ is not gID.
Let $\S=\mathbf{S}_1$. 
Clearly, we can add $\{\mathbf{S}_i\}_{i=2}^{n}$ to the known distributions and $\S$ remains not gID, i.e., $\S$ is not gID from $(\mathbb{A}', \G)$, where $\mathbb{A}' := \mathbb{A}\cup\{\mathbf{S}_i\}_{i=2}^{n}$. 
For simplicity, we denote $\mathbb{A}' = \{\mathbf{A}'\}_{i=0}^{m'}$.
Hence, using the method in Section \ref{sec: baseline}, we can construct two models $\M_1$ and $\M_2$ that are the same over the known distributions and different over $Q[\textbf{S}_1]$. These models  disagree on the distribution $Q[\mathbf{S}]$ as well, because $Q[\mathbf{S}] = \prod_{i=1}^{n}Q[\mathbf{S}_i]$. Below, we use these two models to introduce two new models to prove Lemma \ref{lemma: construct models subcase 2}.

\renewcommand{\R}{\mathbf{R}}
\subsubsection{New models for Lemma \ref{lemma: construct models subcase 2}} \label{sec: appendix new models subcase 2}

Recall that $\mathcal{P}$ is a collection of paths $\{p\}\cup\{ p_W|W \in \mathbf{F}\}$, where $\mathbf{F}$ is a set of all colliders on the path $p$. Moreover, $\D$ is a set of all observed nodes on the paths in $\mathcal{P}$ excluding the ones in $\Z'$. Figure \ref{fig:va2} demonstrates some variables used in this proof and their relations for clarity.

\begin{figure}
    \centering
    \includegraphics[scale=0.2]{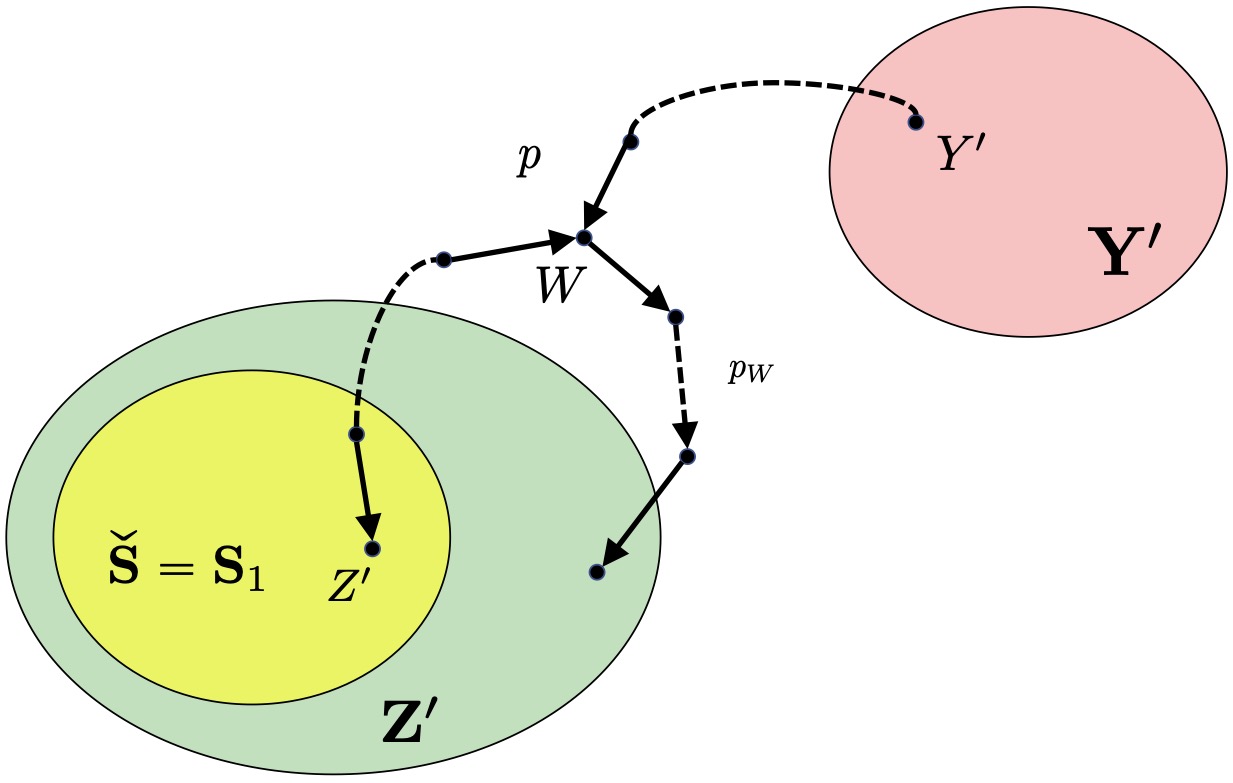}
    \caption{An illustration of the path $p$, collider $W$ and its corresponding path $p_W$.}
    \label{fig:va2}
\end{figure}

Herein, we define new models $\M_1'$ and $\M_2'$ based on the models $\M_1$ and $\M_2$. Let $\D_{\mathcal{P}}$ be the set of all variables (observed and unobserved) on the paths in $\mathcal{P}$. We say that a variable $D$ is a \textbf{starting node} of path $\hat{p}\in\mathcal{P}$ if
\begin{itemize}
    \item $D=Z'$ and $\hat{p}=p$ or
    \item $D \in \mathbf{F}$, i.e., it is a collider on path $p$ and $\hat{p}=p_D$.
\end{itemize}
Note that $D$ can be a starting node of only  one path. According to the definition of a starting node, if $D$ is a starting node for some path then either $D$ is a collider on the path $p$ or $D$ is $Z'$.  

For $R \in \mathbf{V}\cup \mathbf{U}$, let $\alpha_p(R)$ be the number of paths in $\mathcal{P}$ that contains $R$. Furthermore, we use $\dom{R}{}^{'}$ and $\dom{R}{}$ to denote its domain in  $\M'_1$ or $\M'_2$ (variables in different models have the same domains) and in $\M_1$ or $\M_2$, respectively.  
We define $\dom{R}{}'$ as follows:
\begin{itemize}
    \item If $R$ is a starting node for a path in $\mathcal{P}$:
    \begin{align*}
        & \dom{R}{}^{'} := \dom{R}{}^{}\times[0:\kappa]^{\alpha_{p}(R)-1}.
    \end{align*}
    \item If $R$ is not a starting node for any of the paths in $\mathcal{P}$, then:
    \begin{align*}
        & \dom{R}{}^{'} := \dom{R}{}^{}\times[0:\kappa]^{\alpha_{p}(R)}.
    \end{align*}
\end{itemize}
Consequently, if $R$ does not belong to any of the paths in $\mathcal{P}$, then $\dom{R}{}^{'} = \dom{R}{}^{}.$

Consider $R \in \mathbf{V}\cup\mathbf{U}$.
According to the domains definitions above, $R$ is a vector that is a concatenation of the vector coming from $\dom{R}{}^{}$ in model $\M_1$ (or $\M_2$) and some additional coordinates.
These additional coordinates are defined based on $\alpha_p(R)$. 
More precisely, if $R$ is not a starting node of a path $\hat{p}\in\mathcal{P}$, then there is a coordinate assigned to this path, denoted by $R[\hat{p}]$, otherwise, if $R$ is a starting node of  $\hat{p}\in\mathcal{P}$, then there is no coordinate assigned this path.


Let $\mathbf{O}\subseteq\mathbf{V}\cup \mathbf{U}$. 
For any realization $\mathbf{o}\in\dom{\mathbf{O}}{}^{'}$ of $\mathbf{O}$, we denote by $\mathbf{o}^\M\in\dom{\mathbf{O}}{}$, a realization of $\mathbf{O}$  that is consistent with $\mathbf{o}$. With a slight abuse of notation, we use $\textbf{O}$ and $\textbf{O}^\M$ to denote realizations of $\textbf{O}$ in models $\M'_i$ and $\M_i$, respectively. $\textbf{O}^\M$ means realizations in $\dom{\mathbf{O}}{}$ from model $\M_i$ that are consistent with realizations in $\dom{\mathbf{O}}{}^{'}$ from model $\M'_i$.

Recall that $\D_{\mathcal{P}}$ is a set of all variables on the paths in $\mathcal{P}$. Let $D\in \D_{\mathcal{P}}$. We denote by $\mathcal{P}_D$, the set of all paths $\hat{p}$, such that $\hat{p}\in \mathcal{P}$, $D$ belongs to path $\hat{p}$, and $D$ is not a starting node of path $\hat{p}$.
We are now ready to define the probabilities of $P^{\M'_i}(D|\Pa{D}{\G})$ for any $D \in \mathbf{V}\cup \mathbf{U}$ and $i \in \{1,2\}$.
\begin{itemize}
    \item If $D$ does not belong to the set $\D_{\mathcal{P}}$, we define
    \begin{align*}
        P^{\M'_i}(D|\Pa{D}{\G}) := P^{\M_i}(D^\M|\Pa{D}{\G}).
    \end{align*}
    \item If $D$ belongs to the set $\D_{\mathcal{P}}\setminus\{Z'\}$, we define
    \begin{equation*}
        P^{\M'_i}(D|\Pa{D}{\G}) := P^{\M_i}(D^{\M}|\Pa{D}{\G})\prod_{\hat{p}\in \mathcal{P}_D}f_{\hat{p}}(D[\hat{p}]|\Pa{D}{\hat{p}}),
    \end{equation*}
    where $\Pa{D}{\hat{p}}$ denotes the parents of $D$ on path $\hat{p}$ and $f_{\hat{p}}(D|\Pa{D}{\hat{p}})$ is given below.

    \paragraph{Definition of function} $f_{\hat{p}}(D[\hat{p}]|\Pa{D}{\hat{p}})$:
    \begin{itemize}
        \item When there exists a variable $W\in \mathbf{F}$, such that $\hat{p}=p_W$ and $D$ is a child of $W$ on path $p_W$ (i.e., $W\in\Pa{D}{p_W}$), then we define
        \begin{equation*}
        f_{p_W}(D[p_W]|\Pa{D}{p_W}) :=
        \begin{cases} 
            1-\kappa\epsilon &  \text{ if }  D[p_W]\equiv W[p] \pmod{\kappa+1},\\
            \epsilon &  \text{ if }  D[p_W]\not\equiv W[p] \pmod{\kappa+1}.
        \end{cases}
        \end{equation*}
        
        \item When $\Pa{D}{\hat{p}}=\emptyset$, 
        \begin{equation*}
            f_{\hat{p}}(D[\hat{p}]) := \frac{1}{\kappa+1}.
        \end{equation*}
        
        \item Otherwise,
        \begin{equation} \label{eq: f for not starting node}
        f_{\hat{p}}(D[\hat{p}]|\Pa{D}{\hat{p}}) :=
        \begin{cases} 
            1-\kappa\epsilon &  \text{ if }  D[\hat{p}]\equiv \sum_{D' \in \Pa{D}{\hat{p}}}D'[\hat{p}] \pmod{\kappa+1}\\
            \epsilon &  \text{ if }  D[\hat{p}]\not\equiv \sum_{D' \in \Pa{D}{\hat{p}}}D'[\hat{p}] \pmod{\kappa+1},
        \end{cases}
        \end{equation}
        Note that $P^{\M'_i}(D|\Pa{D}{\G})$ is a probability distribution since for different paths $\hat{p}_1$ and $\hat{p}_2$, $D[\hat{p}_1]$ and $D[\hat{p}_2]$ are different and also 
        \begin{equation*}
            \sum_{D[\hat{p}]\in \dom{D[\hat{p}]}{}} f_{\hat{p}}(D[\hat{p}]|\Pa{D}{\hat{p}}) = 1
        \end{equation*}
    \end{itemize}
    
        

    \item If $D=Z'$ and $W$ is a parent of $Z'$ in path $p$. Note that such $W$ exists because $p$ is an unblocked backdoor path in graph $\G_{\overline{\X'},\underline{\{Z'\}}}$. Recall that $Z'$ is a variable from the set $\S$. In this case, we define
    \begin{equation}
        P^{\M'_i}(Z'|\Pa{Z'}{\G}) := P'(Z'^{\M}|\Pa{Z'}{\G})\prod_{\hat{p}\in \mathcal{P}_{Z'}}f_{\hat{p}}(Z'[\hat{p}]|\Pa{Z'}{\hat{p}}),
    \end{equation}
    where $P'(\cdot|\cdot)$ is given by
    \begin{equation*}
    P'(Z'^{\M}=z' \mid \Pa{Z'}{\G}):=
    \begin{cases} 
        \frac{1}{\kappa+1} & \text{ if } \mathbb{I}(Z')=1,\\
        1-\kappa\epsilon &  \text{ if } \mathbb{I}(Z')=0  \text{ and } z' \equiv M'(Z') \pmod{\kappa+1}, \\
        \epsilon &  \text{ if }  \mathbb{I}(Z')=0  \text{ and } z'\not\equiv M'(Z') \pmod{\kappa+1},
    \end{cases}
    \end{equation*}
    and $M'(\cdot)$ is defined similar to \eqref{eq: M(S)} and is given by
    \begin{equation}
    M'(Z'):=
    \begin{cases} \label{eq: M'(S)}
        W[p] + \sum_{x\in \Pa{Z'}{\G'[\widecheck{\mathbf{S}}]}}x^{\M} & \text{, if } Z'\in \widecheck{\mathbf{S}}\setminus \{S_0\}, \\
        W[p] + u_0^{\M}[0]+\sum_{x\in \Pa{Z'}{\G'[\widecheck{\mathbf{S}}]}}x^{\M} & \text{, if $Z'=S_0$ }.
    \end{cases}
    \end{equation}
\end{itemize}

Note that for any $W \in (\mathbf{V} \cup \mathbf{U})\setminus \{U_0\}$, we have
$$
P^{\M'_1}(W|\Pa{W}{\G}) = P^{\M'_2}(W|\Pa{W}{\G}).
$$ 
Therefore, we will use $P^{\M'}(W|\Pa{W}{\G})$ instead of $P^{\M'_1}(W|\Pa{W}{\G})$ or $P^{\M'_2}(W|\Pa{W}{\G})$ for  $W \in (\mathbf{V} \cup \mathbf{U})\setminus \{U_0\}$. 

We also have
\begin{equation}
\label{eq: prob U_0 subcase 2}
\begin{gathered}
    P^{\M'_1}(U_0) = \frac{1}{d} \prod_{\hat{p}\in \mathcal{P}_{U_0}}f_{\hat{p}}(U_0[\hat{p}]), \\
    P^{\M'_2}(U_0) = P^{\M_2}(U_0^{\M}) \prod_{\hat{p}\in \mathcal{P}_{U_0}}f_{\hat{p}}(U_0[\hat{p}]).
\end{gathered}
\end{equation}

Recall that $\mathbf{S}=\Anc{\Y', Z'}{\G[\mathbf{V}\setminus\X']}$. Let $\D':=\mathbf{S}\setminus\D$ and $\D^{\dagger} := \mathbf{V} \setminus \mathbf{D}$.
For $i \in [0:m']$, $j\in [1:d]$, $\mathbf{v} \in \dom{\mathbf{V}}{}^{'}$ and $\mathbf{d}^{\dagger} \in \dom{\D^{\dagger}}{}'$, we define $\theta_{i, j}^{'}(\mathbf{v})$, $\phi_{j}^{'}(\mathbf{d}^{\dagger})$ and $\eta_{j}^{'}(\mathbf{v})$ as follows:
\begin{align}
    \label{eq: def theta subcase 2}
    & \theta_{i, j}^{'}(\mathbf{v}) := \sum_{U_0[\mathcal{P}]}\prod_{\hat{p}\in \mathcal{P}_{U_0}}f_{\hat{p}}(U_0[\hat{p}])\sum_{\mathbf{U} \setminus \{U_0\}} \prod_{X \in \A_i'} P^{\M'}(x \mid \Pa{X}{\G}) \prod_{U\in \mathbf{U} \setminus \{U_0\}} P^{\M'}(u^{\M}),\\
    \label{eq: def phi subcase 2}
    & \phi_{j}^{'}(\mathbf{d}^{\dagger}) := \sum_{U_0[\mathcal{P}]}\prod_{\hat{p}\in \mathcal{P}_{U_0}}f_{\hat{p}}(U_0[\hat{p}])\sum_{\D}\sum_{\mathbf{U} \setminus \{U_0\}} \prod_{X \in \mathbf{S}} P^{\M'}(x \mid \Pa{X}{\G}) \prod_{U\in \mathbf{U} \setminus \{U_0\}} P^{\M'}(u^{\M}),\\
    \label{eq: def eta subcase 2}
    & \eta_{j}^{'}(\mathbf{v}) := \sum_{U_0[\mathcal{P}]}\prod_{\hat{p}\in \mathcal{P}_{U_0}}f_{\hat{p}}(U_0[\hat{p}])\sum_{\mathbf{U} \setminus \{U_0\}} \prod_{X \in \mathbf{S}} P^{\M'}(x \mid \Pa{X}{\G}) \prod_{U\in \mathbf{U} \setminus \{U_0\}} P^{\M'}(u^{\M}),
\end{align}
where $\sum_{U_0[\mathcal{P}]}$ is a summation over all realizations of the random variables $\{U_0[\hat{p}]|\ \hat{p}\in \mathcal{P}_{U_0}\}$. 

Next, we prove three lemmas similar to Lemmas \ref{lemma: gid equal indices}, \ref{lemma: gid not equal indices},  and \ref{lemma: equal indices summation subcase 1}  for the new models $\M'_1$ and $\M'_2$.

\begin{lemma}
    \label{lemma: theta equal indices subcase 2}
    For any $\mathbf{v} \in \dom{\mathbf{V}}{}^{'}$ and $i\in[0:m']$, we have
    \begin{equation*}
        \theta_{i,j_1}'(\mathbf{v}) = \theta_{i,j_2}'(\mathbf{v}) = \cdots= \theta_{i,j_{\frac{\kappa+1}{2}}}'(\mathbf{v}).
    \end{equation*}
\end{lemma}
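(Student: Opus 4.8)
The plan is to reduce the statement to the corresponding fact for the original Kivva models, namely Lemma~\ref{lemma: gid equal indices} together with the shifting argument in the proof of Lemma~\ref{lemma: equal indices summation subcase 1}, by showing that the auxiliary path coordinates introduced in the construction of $\M'_1,\M'_2$ are inert with respect to the value of $U_0^{\M}$. Fix $\mathbf{v}\in\dom{\mathbf{V}}{}^{'}$ and $i\in[0:m']$. As in those proofs it suffices to establish $\theta'_{i,l_1}(\mathbf{v})=\theta'_{i,l_2}(\mathbf{v})$ for two consecutive indices with $\gamma_{l_1}=(2x,0,\dots,0)$ and $\gamma_{l_2}=(2x+2\bmod(\kappa+1),0,\dots,0)$, $x\in[0:\frac{\kappa-1}{2}]$, and then vary $x$.

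The first step is to split every factor in \eqref{eq: def theta subcase 2} into a ``core part'' and a ``path part''. By construction $P^{\M'}(x\mid\Pa{X}{\G})=P^{\M}(x^{\M}\mid\Pa{X}{\G})\prod_{\hat p\in\mathcal{P}_X}f_{\hat p}(x[\hat p]\mid\Pa{X}{\hat p})$ for every $X\neq Z'$, and for $X=Z'$ the only difference is that $M$ is replaced by $M'$; similarly each $P^{\M'}(u^{\M})$ and the extra weight $\prod_{\hat p\in\mathcal{P}_{U_0}}f_{\hat p}(U_0[\hat p])$ split in the same way. The point is that neither the functions $f_{\hat p}$, nor the summation $\sum_{U_0[\mathcal{P}]}$, nor the constant $W[p]$ that occurs in $M'(Z')$ depends on the value of $U_0^{\M}$: they involve only the auxiliary path coordinates of the variables and the constants $\epsilon,\kappa$. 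Hence, after reorganising the sum in \eqref{eq: def theta subcase 2} as an outer sum over all path coordinates and an inner sum over the $\M$-parts of $\mathbf{U}\setminus\{U_0\}$, the ``path part'' is a nonnegative weight that is literally identical for $l_1$ and $l_2$, and it remains to compare, for each fixed realisation of the path coordinates, the inner sum, which is exactly a quantity of the form $\theta_{i,j}(\cdot)$ built from the original models (the constant $W[p]$ merely shifts the fixed target value of $Z'$ modulo $\kappa+1$, which is immaterial).

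The second step is the shifting argument itself. Since all coordinates of $\gamma_{l_1}$ and $\gamma_{l_2}$ other than the first one are zero, condition~(2) in the definition of $\mathbb{I}(S_0)$ is never triggered, so the value of every $\mathbb{I}(\cdot)$ is unchanged when passing from $\gamma_{l_1}$ to $\gamma_{l_2}$, and the only conditional in the core part that sees $u_0^{\M}[0]$ is the one for $S_0$ (through $M(S_0)$, or through $M'(Z')$ when $Z'=S_0$), which changes by $2\bmod(\kappa+1)$. If $S_0\notin\A_i'$, or if $\mathbb{I}(S_0)=1$ for the given realisation, there is nothing to prove; otherwise, using that $\widecheck{\mathbf{S}}$ is a single c-component and that each unobserved node of $\G'[\widecheck{\mathbf{S}}]$ has exactly two children in $\widecheck{\mathbf{S}}$, I would take a shortest bidirected path in $\G'[\widecheck{\mathbf{S}}]$ emanating from $S_0$ and alternately add $\mp 2\bmod(\kappa+1)$ to the $\U^{\widecheck{\mathbf{S}}}$-variables along it, exactly as in cases~1--2 of the proof of Lemma~\ref{lemma: equal indices summation subcase 1}; such shifts do not affect any $\mathbb{I}(\cdot)$ because those are defined so as to ignore parents lying in $\U^{\widecheck{\mathbf{S}}}$. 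This produces a measure-preserving bijection on the domain of the inner sum taking the $\gamma_{l_1}$-term to the $\gamma_{l_2}$-term, whence $\theta'_{i,l_1}(\mathbf{v})=\theta'_{i,l_2}(\mathbf{v})$; summing back over the path coordinates and over $x$ finishes the argument.

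The main obstacle I expect is the bookkeeping in the second step: verifying that the bidirected-path shift telescopes so that the net change of every $M(\cdot)$ is zero except for the intended $+2$ at $S_0$, that it terminates consistently (either at a node with $\mathbb{I}(\cdot)=1$ or because the bidirected path cannot be extended without repeating a node), and tracking the two possibilities $Z'=S_0$ and $Z'\neq S_0$ for where $u_0^{\M}[0]$ enters $M'$. This is essentially the same verification already performed for the original models, so beyond the factorisation that isolates the path coordinates no genuinely new idea is needed.
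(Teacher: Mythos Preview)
Your first step is exactly what the paper does: after unpacking the definitions and separating the ``core'' factors from the path factors one obtains
\[
\theta'_{i,j}(\mathbf{v})\;=\;\sum_{\mathbf{U}[\mathcal{P}]}\Bigl(\text{path weight independent of }j\Bigr)\cdot\theta_{i,j}(\mathbf{v}'),
\]
where $\mathbf{v}'$ agrees with $\mathbf{v}^{\M}$ except that $\mathbf{v}'[Z']=\mathbf{v}^{\M}[Z']-W[p]$ (this is precisely your remark that ``the constant $W[p]$ merely shifts the fixed target value of $Z'$'', coming from $M'(Z'+W[p])=M(Z')$). The paper then simply applies Lemma~\ref{lemma: gid equal indices} to the inner $\theta_{i,j}(\mathbf{v}')$ and is finished.

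Your second step is therefore unnecessary: once the inner sum is literally a baseline $\theta_{i,j}$ evaluated at some $\mathbf{v}'$, Lemma~\ref{lemma: gid equal indices} already gives the equality over $j\in\{j_1,\dots,j_{(\kappa+1)/2}\}$. You do not need to re-derive that lemma via the bidirected-path shift borrowed from Lemma~\ref{lemma: equal indices summation subcase 1}, and in fact that transplant is not immediate. In $\theta_{i,j}$ the $\widecheck{\mathbf{S}}$-coordinates are \emph{fixed} by $\mathbf{v}$, so case~2 of that argument (which absorbs the residual $\pm2$ by shifting a summed $\widecheck{\mathbf{S}}'$-variable) is unavailable; your proposed alternative termination, ``because the bidirected path cannot be extended without repeating a node'', does not by itself cancel the leftover $\pm2$ contributed to $M(\cdot)$ by the last shifted $\hat U$-variable. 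The actual proof of Lemma~\ref{lemma: gid equal indices} in \cite{kivva2022revisiting} deals with this using the c-forest structure (the $\widecheck{T}_i$'s and the $\mathbb{I}$ indicators), but for the present lemma there is no reason to reopen it: cite it and stop after your first step.
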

\begin{proof}
    By substituting $P^{\M'}$ from the above into Equation \eqref{eq: def theta subcase 2} and rearranging the terms, we obtain
    \begin{equation*}
    \begin{gathered}
        \theta_{i, j}^{'}(\mathbf{v}) = 
        \sum_{\mathbf{U}[\mathcal{P}]} 
        \prod_{\hat{U}[\hat{p}]\in \mathbf{U}[\mathcal{P}]} f_{\hat{p}}(\hat{U})
        \prod_{\hat{X}[\hat{p}] \in \mathbf{A}'_i[\mathcal{P}]} f_{\hat{p}}(\hat{X}[\hat{p}]|\Pa{\hat{X}}{\hat{p}})\\
        \times\Big( \sum_{\textbf{\underline{U}}^{\M}}
         P'(Z'|\Pa{Z'}{\G})
         \prod_{X \in \A'_i\setminus\{Z'\}} P^{\M}(x^{\M} \mid \Pa{X}{\G}) \prod_{U\in \mathbf{U} \setminus \{U_0\}}\!P^{\M}(u^{\M})\Big),
    \end{gathered}
    \end{equation*}
    where $\mathbf{U}[\mathcal{P}]:=\bigcup_{U\in\textbf{U}}\{U[\hat{p}]|\ \hat{p}\in \mathcal{P}_{U}\}$, $\underline{\textbf{U}}:=\mathbf{U} \setminus \{U_0\}$, and by definition $\textbf{\underline{U}}^{\M}$ is all realizations of elements in set $\textbf{\underline{U}}$ in $\dom{\textbf{\underline{U}}}{}$ that are consistent with realizations in $\dom{\textbf{\underline{U}}}{}^{'}$.
    Suppose variable $W$ belongs to the path $p$ and $Z'$ is a child of $W$ in that path. By the construction of $P(Z'|\Pa{Z'}{\G})$, we have
    \begin{equation}\label{eq:pp_to_pp}
        P'(Z'+W[p]|\Pa{Z'}{\G}) = P^{\M}(Z'|\Pa{Z'}{\G}).
    \end{equation}
    This is because $M'(Z'+W[p])=M(Z')$.
    Let $\mathbf{v}'\in \dom{\mathbf{V}}{}$ be a realization that is consistent with $\mathbf{v}^{\M}[\mathbf{V}\setminus\{Z'\}]$ and 
    $$
    \mathbf{v}'[Z'] = \mathbf{v}^{\M}[Z']-\mathbf{v}[W[p]].
    $$
    In this case, using \eqref{eq:pp_to_pp}, we have
        \begin{equation*}
    \begin{gathered}
        \theta_{i, j}^{'}(\mathbf{v}) = 
        \sum_{\mathbf{U}[\mathcal{P}]} 
        \prod_{\hat{U}[\hat{p}]\in \mathbf{U}[\mathcal{P}]} f_{\hat{p}}(\hat{U})
        \prod_{\hat{X}[\hat{p}] \in \mathbf{A}'_i[\mathcal{P}]} f_{\hat{p}}(\hat{X}[\hat{p}]|\Pa{\hat{X}}{\hat{p}})\\
        \times\Big( \sum_{\textbf{\underline{U}}^{\M}}
        P^{\M}({\mathbf{v}'}^{\M}[Z']|\Pa{Z'}{\G})
         \prod_{X \in \A'_i\setminus\{Z'\}} P^{\M}(x \mid \Pa{X}{\G}) \prod_{U\in \mathbf{U} \setminus \{U_0\}}\!P^{\M}(u^{\M})\Big).
    \end{gathered}
    \end{equation*}
    Note that the terms inside the big parenthesis is equal to $\theta_{i, j}(\mathbf{v}')$ given in \eqref{eq: theta and eta}, i.e., 
    \begin{equation*}
         \theta_{i, j}^{'}(\mathbf{v}) = \sum_{\mathbf{U}[\mathcal{P}]}
        \prod_{\hat{U}[\hat{p}]\in \mathbf{U}[\mathcal{P}]} f_{\hat{p}}(\hat{U})
        \prod_{\hat{X}[\hat{p}] \in \mathbf{A}'_i[\mathcal{P}]} f_{\hat{p}}(\hat{X}[\hat{p}]|\Pa{\hat{X}}{\hat{p}})
        \theta_{i, j}(\mathbf{v}').
    \end{equation*}

    In the last equation, all terms on the right hand side except $\theta_{i, j}(\mathbf{v}')$ are independent of the realization of $\{U_0\}^{\M}$, i.e., independent of index $j$.
    For $j\in\{j_1,...,j_{\frac{\kappa+1}{2}}\}$ and using the result of Lemma \ref{lemma: gid equal indices} that says $\theta_{i, j_1}(\mathbf{v}')=...=\theta_{i, j_{\frac{\kappa+1}{2}}}(\mathbf{v}')$, we can conclude the result. 
\end{proof}

\begin{lemma} \label{lemma: phi equal indices subcase 2}
     For any $\mathbf{d}^{\dagger} \in \dom{\mathbf{D}^{\dagger}}{}'$, we have
    \begin{equation*}
        \phi_{j_1}(\mathbf{d}^{\dagger}) = \phi_{j_2}(\mathbf{d}^{\dagger})=\dots = \phi_{j_{\frac{k+1}{2}}}(\mathbf{d}^{\dagger}).
    \end{equation*}
\end{lemma}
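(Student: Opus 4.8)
The plan is to follow the template of the proof of Lemma~\ref{lemma: theta equal indices subcase 2} (which itself follows Lemma~\ref{lemma: equal indices summation subcase 1}): expand the model-$\M'$ probabilities in the right-hand side of \eqref{eq: def phi subcase 2} using the construction of Section~\ref{sec: appendix new models subcase 2}, separate the baseline factors $P^{\M}(\cdot\mid\cdot)$ (and the shifted factor $P'(Z'^{\M}\mid\cdot)$ attached to the node $Z'$) from the path factors $f_{\hat p}(\cdot\mid\cdot)$, carry out the summations over the path coordinates, and recognize the leftover as a $j$-independent multiple of a baseline quantity of the form \eqref{eq: phi subcase 1}, to which Lemma~\ref{lemma: equal indices summation subcase 1} applies. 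Here $\phi'_j(\mathbf{d}^{\dagger})$ denotes the quantity defined in \eqref{eq: def phi subcase 2}.

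In more detail, I would first use the identity \eqref{eq:pp_to_pp}, $P'(Z'^{\M}+W[p]\mid\cdot)=P^{\M}(Z'^{\M}\mid\cdot)$, with $W$ the parent of $Z'$ on the path $p$, to turn the model-$\M'$ factor attached to $Z'$ into a model-$\M$ factor evaluated at a value shifted by $W[p]$. Then, exactly as in Lemma~\ref{lemma: theta equal indices subcase 2}, I would collect every path factor into a single sum over the path coordinates $\mathbf{U}[\mathcal{P}]$ (including $\{U_0[\hat p]\}$) multiplied by the products $\prod_{\hat X[\hat p]} f_{\hat p}(\hat X[\hat p]\mid\Pa{\hat X}{\hat p})$, noting that none of these quantities involves the baseline realization of $U_0$, i.e.\ they are the same for every index $j$. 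The extra ingredient absent from Lemma~\ref{lemma: theta equal indices subcase 2} is the summation over $\D$ in \eqref{eq: def phi subcase 2}: since every node of $\D$ lies outside the c-component $\mathbf{S}_1$ (because $\mathbf{S}_1\subseteq\Z'$ in sub-case~2 while $\D\cap\Z'=\emptyset$), summing over $\D$ affects the baseline part in the same way that $\sum_{\S'}$ acts in \eqref{eq: phi subcase 1}, and it affects the path part through the telescoping of the $f_{\hat p}$'s along the paths of $\mathcal{P}$ (using $\sum_{D[\hat p]} f_{\hat p}(D[\hat p]\mid\Pa{D}{\hat p})=1$ for every summed, non-pinned coordinate, the pinned ones being those of starting nodes and of the $\Z'$-endpoints of the directed paths $p_W$, which are fixed by $\mathbf{d}^{\dagger}$). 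After these summations one is left with $\phi'_j(\mathbf{d}^{\dagger}) = \Lambda(\mathbf{d}^{\dagger})\,\phi_j(\mathbf{d}'^{\dagger})$, where $\Lambda(\mathbf{d}^{\dagger})$ collects the ($j$-independent) path-factor constants and $\phi_j(\mathbf{d}'^{\dagger})$ is a baseline quantity of the form \eqref{eq: phi subcase 1} for the single c-component $\mathbf{S}_1$, evaluated at a suitable projection $\mathbf{d}'^{\dagger}$ of $\mathbf{d}^{\dagger}$. Lemma~\ref{lemma: equal indices summation subcase 1} then gives $\phi_{j_1}(\mathbf{d}'^{\dagger})=\cdots=\phi_{j_{\frac{\kappa+1}{2}}}(\mathbf{d}'^{\dagger})$, and multiplying by the common constant $\Lambda(\mathbf{d}^{\dagger})$ yields $\phi'_{j_1}(\mathbf{d}^{\dagger})=\cdots=\phi'_{j_{\frac{\kappa+1}{2}}}(\mathbf{d}^{\dagger})$, which is the claim.

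The step I expect to be the main obstacle is precisely the bookkeeping in the previous paragraph: one must check that summing the coordinate $W[p]$ — which, unlike in the proof of Lemma~\ref{lemma: theta equal indices subcase 2}, is summed rather than fixed here because $W\in\D$ — together with the $M'(Z')$-shift of \eqref{eq: M'(S)} and the remaining uniform/XOR-type factors $f_{\hat p}$, really does collapse the leftover baseline expression to an honest $\phi_j$ and not to an $\eta_j$-type quantity, which would \emph{not} be $j$-independent; and to track exactly which path coordinates are summed versus pinned by $\mathbf{d}^{\dagger}$. This is the same kind of manipulation carried out for Lemmas~\ref{lemma: equal indices summation subcase 1} and~\ref{lemma: theta equal indices subcase 2}, only with the additional marginalization over $\D$ to push through; no new conceptual ingredient is required.
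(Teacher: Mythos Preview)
Your plan has a genuine gap precisely at the step you flag as ``the main obstacle''. The reduction you aim for, $\phi'_j(\mathbf{d}^{\dagger})=\Lambda(\mathbf{d}^{\dagger})\cdot\phi_j(\mathbf{d}'^{\dagger})$ with $\phi_j$ of the form \eqref{eq: phi subcase 1}, cannot come out as stated. In sub-case~2 we have $\mathbf{S}_1\subseteq\Z'$ and $\D\cap\Z'=\emptyset$, hence $\D\cap\mathbf{S}_1=\emptyset$: summing over the baseline coordinates of $\D$ touches none of the $\mathbf{S}_1$-factors, so it is \emph{not} ``the same way that $\sum_{\S'}$ acts in \eqref{eq: phi subcase 1}''. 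With every node of $\mathbf{S}_1=\S$ pinned by $\mathbf{d}^{\dagger}$, the baseline residual you are left with is an $\eta_j$-type quantity (all of $\S$ fixed), not a $\phi_j$-type one, and Lemma~\ref{lemma: equal indices summation subcase 1} simply does not apply. The only link between the summed variables and the $j$-dependent factor $P^{\M}(S_0\mid\ldots,U_0=\gamma_j)$ is the coupling $M'(Z')=W[p]+M(Z')$ in \eqref{eq: M'(S)}, which ties the \emph{path} coordinate $W[p]$ to the baseline parents of $Z'$ in $\G'[\S]$; this coupling is exactly what prevents a clean separation into a $j$-independent path factor times a baseline factor handled by Lemma~\ref{lemma: equal indices summation subcase 1}.

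The paper does not attempt such a factorization. Instead it proves $\phi'_{l_1}(\mathbf{d}^{\dagger})=\phi'_{l_2}(\mathbf{d}^{\dagger})$ by a single bijection on the summed realizations $(\mathbf{o}_1,\gamma_{l_1})\mapsto(\mathbf{o}_2,\gamma_{l_2})$ that shifts two things simultaneously: (i) the hidden variables $\hat U'_1,\ldots,\hat U'_{l'}$ along the bidirected path from $U_0$ to $Z'$ in $\G'[\S]$, exactly as in Lemma~\ref{lemma: equal indices summation subcase 1}; and (ii) the path coordinates $D'_1[p],\ldots,D'_{k'_1+1}[p]$ along $p$ by $\pm c$ with the sign flipping at each collider. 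Shift (i) would alter $M'(Z')$ through its $\U^{\S}$-parents; shift (ii) alters $M'(Z')$ through $W[p]=D'_1[p]$ by the opposite amount, so $P'(Z'\mid\cdot)$ is preserved. The alternating signs in (ii) keep every $f_p(\cdot\mid\cdot)$ invariant and, crucially, leave the collider coordinates $D'_c[p]$ unshifted, so the $f_{p_W}$-factors (whose $\Z'$-endpoints are pinned by $\mathbf{d}^{\dagger}$) are untouched. This coordinated path shift is the new ingredient missing from your outline; it is not a rerun of the manipulations in Lemmas~\ref{lemma: equal indices summation subcase 1} and~\ref{lemma: theta equal indices subcase 2}.
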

\begin{proof}
Similar to the previous lemma, by substituting $P^{\M'}$ from their definitions into Equation \eqref{eq: def phi subcase 2} and rearranging the terms, we obtain
    \begin{equation}\label{eq: phi simplification subcase 2}
    \begin{gathered}
        \phi_{j}^{'}(\mathbf{d}^{\dagger}) := \sum_{\mathbf{U}[\mathcal{P}]}
        \sum_{\textbf{\underline{U}}^{\M}}
        \sum_{\D}
        \prod_{\hat{p}\in \mathcal{P}_{U_0}}f_{\hat{p}}(U_0[\hat{p}])
        \prod_{\hat{X}[\hat{p}] \in \mathbf{S}[\mathcal{P}]} f_{\hat{p}}(\hat{X}[\hat{p}]|\Pa{\hat{X}}{\hat{p}}) \\
        \times\Big(
         P'(Z'|\Pa{Z'}{\G})
         \prod_{X \in \mathbf{S}\setminus\{Z'\}} P^{\M}(x^{\M} \mid \Pa{X}{\G}) \prod_{U\in \mathbf{U} \setminus \{U_0\}}\!P^{\M}(u^{\M})\Big),
    \end{gathered}
    \end{equation}
    where $\mathbf{U}[\mathcal{P}]:=\bigcup_{U\in\textbf{U}}\{U[\hat{p}]|\ \hat{p}\in \mathcal{P}_{U}\}$, $\underline{\textbf{U}}:=\mathbf{U} \setminus \{U_0\}$. Suppose that $l_1$ and $l_2$ are two integers such that
    \begin{equation*}
    \begin{split}
        & \gamma_{l_1} := (2x, 0, \dots, 0)),\\
        & \gamma_{l_2} := (2x+2 \pmod{\kappa+1}, 0, \dots, 0),
    \end{split}
    \end{equation*}
    and $x$ is an integer in $[0 : \frac{\kappa-1}{2}]$.
    We will prove that $\phi_{ l_1}(\mathbf{d}^{\dagger})=\phi_{l_2}(\mathbf{d}^{\dagger})$.

    Suppose that path $p$ is the sequence of variables: $Z'$, $D'_1$, $D'_2$ \dots $D'_{k'_1}$, $D'_{k'_1+1}:=Y'$. 
    Note that there is a direct edge between any consecutive nodes in this path and furthermore, the direct edge between $Z'$ and $D'_1$ is pointing toward $Z'$, i.e., $Z'\leftarrow D_1'$. 
    
    On the other hand, since $Z'$ and $U_0$ are both in $\S$ ($\S=\mathbf{S}_1$ by construction), then there exists a shortest path  $U_0, \hat{S}_1', \hat{U}_1', \hat{S}_2', \hat{U}_2', \dots, \hat{U}_{l'}', Z'$, such that $U_0$ is a parent of $\hat{S}_1'\in \S$, $Z'$ is a child of $\hat{U}_l'\in \U^{\S}$, and $\hat{U}_j'\in \U^{\S}$ is a parent of variables $\hat{S}_j'\in \S$ and  $\hat{S}_{j+1}'\in \S$ for $j \in [1:l'-1]$. 
    Let $\hat{\mathbf{U}}':=\{\hat{U}_1', \dots, \hat{U}_l'\}$, i.e., unobserved nodes in this shortest path except $U_0$. 
    For a given realization $\mathbf{o}_{1}$ of $\mathbf{U}\cup\mathbf{D}$,  we define $\mathbf{o}_{2} \in \dom{\mathbf{U}\cup\mathbf{D}}{}'$ as follows
    \begin{equation}
        \begin{split}
            & \mathbf{o}_{2}^{\M}[\hat{U}'_j] := \mathbf{o}_{1}^{\M}[\hat{U}'_j] + 2(-1)^{j} \pmod{\kappa+1}, \quad j\in[1:l'],\\
        \end{split}
    \end{equation}
    For $D_1'$, we have
    \begin{equation}\label{eq:56}
        \begin{split}
            & \mathbf{o}_2[D_1'[p]] = \mathbf{o}_1[D_1'[p]]-2(-1)^{l'} \pmod{\kappa+1}.
        \end{split}
    \end{equation}
    Note that with these modifications, for any $\widetilde{S} \in \S\setminus\{Z'\}$, we have
    \begin{equation*}
        \widetilde{s}_2 - M(\widetilde{S}) \equiv \widetilde{s}_1 - M(\widetilde{S}) \pmod{\kappa+1},
    \end{equation*}
     where $\widetilde{s}_1$ is a realization of $\widetilde{S}\Big|_{(\mathbf{U}\cup \D, \D^{\dagger}, U_0^\M)=(\mathbf{o}_1, \dd^{\dagger}, \gamma_{l_1})}$, $\widetilde{s}_2$ is a realization of $\widetilde{S}\Big|_{(\mathbf{U}\cup\D, \D^{\dagger}, U_0^\M)=(\mathbf{o_2}, \dd^{\dagger}, \gamma_{l_2})}$, and $M(\cdot)$ is given by Equation (\ref{eq: M(S)}). Additionally, 
    \begin{equation*}
        \mathbf{o}_2^{\M}[Z'] - M'(Z') \equiv \mathbf{o}_1^{\M}[Z'] - M'(Z') \pmod{\kappa+1},
    \end{equation*}
    where $M'(\cdot)$ is defined in Equation (\ref{eq: M'(S)}).
    This implies that for any $\widetilde{S}\in \mathbf{S}$, we have
    \begin{equation*}
    P^{\M}(\widetilde{s}|\Pa{\widetilde{S}}{\G})\Big|_{(\mathbf{U}\cup\mathbf{D}, \D^{\dagger}, U_0^{\M})=(\textbf{o}_1, \dd^{\dagger}, \gamma_{l_1})}=P^{\M}(\widetilde{s}|\Pa{\widetilde{S}}{\G})\Big|_{(\mathbf{U}\cup\mathbf{D}, \D^{\dagger}, U_0^{\M})=(\textbf{o}_2, \dd^{\dagger}, \gamma_{l_2})}.
    \end{equation*}
    Let $c := -2(-1)^{l'}$, then Equation \eqref{eq:56} becomes
    \begin{equation}\label{eq:after_56_1}
        \mathbf{o}_2^{\M}[D_1'[p]] = \mathbf{o}_1^{\M}[D_1'[p]]+c \pmod{\kappa+1}.
    \end{equation}
    Suppose that $D_j'$ is not a collider on the path $p$ and  $j\in [2:k_1'+1]$. We define $\mu(D_j')$ to be the number of colliders on a part of the path $p$ from $D_1'$ to $D_{j-1}'$.
    Thus, for those $j\in[2:k_1'+1]$ that $D_j'$ is not a collider, we define
    \begin{equation}\label{eq:after_56_2}
        \mathbf{o}_2^{\M}[D_j'[p]] := \mathbf{o}_1^{\M}[D_j'[p]] + c(-1)^{\mu(D_j')}.
    \end{equation}
    Note that the modifications in \eqref{eq:after_56_2} might only affect the function $f_{p}(\cdot|\cdot)$.  
    Next, we show that after these modifications, function $f_{p}(\cdot|\cdot)$ remains unchanged. 
    To do so, for $j \in [1:k'_1+1]$, we consider four different cases:
    \begin{enumerate}
        \item If $D_j'$ has no parents, then it is obvious that
         \begin{equation*}
            f_{p}(D_j'[p])\Big|_{(\mathbf{U}\cup\mathbf{D}, U_0^{\M})=(\textbf{o}_1, \gamma_{l_1})} = f_{p}(D_j'[p])\Big|_{(\mathbf{U}\cup\mathbf{D}, U_0^{\M})=(\textbf{o}_2, \gamma_{l_2})}.
        \end{equation*}
        \item $D_j'$ is a collider, then $\mu(D_{j+1}') = \mu(D_{j-1}')+1$ and
        \begin{align*}
            & \mathbf{o}_1[D_{j+1}'[p]] + \mathbf{o}_1[D_{j-1}'[p]] = \mathbf{o}_2[D_{j+1}'[p]] + \mathbf{o}_2[D_{j-1}'[p]],
        \end{align*}
        and hence, according to  Equation \eqref{eq: f for not starting node}, we have
        \begin{equation*}
          f_{p}(D_j'[p]|\Pa{D_j'}{p})\Big|_{(\mathbf{U}\cup\mathbf{D}, U_0^{\M})=(\textbf{o}_1, \gamma_{l_1})} = f_{p}(D_j'[p]|\Pa{D_j'}{p})\Big|_{(\mathbf{U}\cup\mathbf{D}, U_0^{\M})=(\textbf{o}_2, \gamma_{l_2})}.
        \end{equation*}
        
        \item $D'_j$ is a child of $D'_{j+1}$, then $\mu(D'_{j}) = \mu(D'_{j+1})$ and
        \begin{align*}
            & \mathbf{o}_1[D_{j}'[p]] - \mathbf{o}_1[D_{j+1}'[p]] = \mathbf{o}_2[D_{j}'[p]] - \mathbf{o}_2[D_{j+1}'[p]].
        \end{align*}
        According to  Equation \eqref{eq: f for not starting node}, we imply that
        \begin{equation*}
           f_{p}(D_j'[p]|\Pa{D_j'}{p})\Big|_{(\mathbf{U}\cup\mathbf{D}, U_0^{\M})=(\textbf{o}_1, \gamma_{l_1})} = f_{p}(D_j'[p]|\Pa{D_j'}{p})\Big|_{(\mathbf{U}\cup\mathbf{D}, U_0^{\M})=(\textbf{o}_2, \gamma_{l_2})}.
        \end{equation*}
        
        \item $D'_j$ is a child of $D'_{j-1}$, then $\mu(D_{j}') = \mu(D_{j-1}')$ and
        \begin{align*}
            & \mathbf{o}_1[D_{j}'[p]] - \mathbf{o}_1[D_{j-1}'[p]] = \mathbf{o}_2[D_{j}'[p]] - \mathbf{o}_2[D_{j-1}'[p]].
        \end{align*}
        Similarly, according to Equation \eqref{eq: f for not starting node}, we get
        \begin{equation*}
           f_{p}(D_j'[p]|\Pa{D_j'}{p})\Big|_{(\mathbf{U}\cup\mathbf{S}, U_0)=(\textbf{o}_1, \gamma_{l_1})} = f_{p}(D_j'[p]|\Pa{D_j'}{p})\Big|_{(\mathbf{U}\cup\mathbf{S}, U_0)=(\textbf{o}_2, \gamma_{l_2})}.
        \end{equation*}
    \end{enumerate}

    This concludes that for any $j\in[1:k_1'+1]$,
    \begin{equation*}
        f_{p}(D_j'[p]|\Pa{D'_j}{p})\Big|_{(\mathbf{U}\cup\mathbf{D}, U_0^{\M})=(\textbf{o}_1, \gamma_{l_1})} = f_{p}(D_j'[p]|\Pa{D_j'}{p})\Big|_{(\mathbf{U}\cup\mathbf{D}, U_0^{\M})=(\textbf{o}_2, \gamma_{l_2})}.
    \end{equation*}

    Note that the aforementioned transformation of $\mathbf{o}_1$ affects only those realizations of variables that are used for the marginalization in the Equation \eqref{eq: phi simplification subcase 2}. Putting the above results together implies that the terms in \eqref{eq: phi simplification subcase 2} remain unchanged, i.e., 
    \begin{align*}
        &\prod_{\hat{p}\in \mathcal{P}_{U_0}}f_{\hat{p}}(U_0[\hat{p}])
        \prod_{\hat{X}[\hat{p}] \in \mathbf{S}[\mathcal{P}]} f_{\hat{p}}(\hat{X}[\hat{p}]|\Pa{\hat{X}}{\hat{p}}) \\
        &\times\Big(
         P'(Z'|\Pa{Z'}{\G})
         \prod_{X \in \mathbf{S}\setminus\{Z'\}} P^{\M}(x^{\M} \mid \Pa{X}{\G}) \prod_{U\in \mathbf{U} \setminus \{U_0\}}\!P^{\M}(u^{\M})\Big)\Big|_{(\mathbf{U}\cup\mathbf{D}, \D^{\dagger}, U_0^{\M})=(\textbf{o}_1, \dd^{\dagger}, \gamma_{l_1})}  \\
        & = \prod_{\hat{p}\in \mathcal{P}_{U_0}}f_{\hat{p}}(U_0[\hat{p}])
        \prod_{\hat{X}[\hat{p}] \in \mathbf{S}[\mathcal{P}]} f_{\hat{p}}(\hat{X}[\hat{p}]|\Pa{\hat{X}}{\hat{p}}) \\
        &\times\Big(
         P'(Z'|\Pa{Z'}{\G})
         \prod_{X \in \mathbf{S}\setminus\{Z'\}} P^{\M}(x^{\M} \mid \Pa{X}{\G}) \prod_{U\in \mathbf{U} \setminus \{U_0\}}\!P^{\M}(u^{\M})\Big)\Big|_{(\mathbf{U}\cup\mathbf{D}, \D^{\dagger}, U_0^{\M})=(\textbf{o}_2, \dd^{\dagger}, \gamma_{l_2})}
    \end{align*}
    
    This implies that $\phi_{l_1}(\mathbf{d}^{\dagger})=\phi_{l_2}(\mathbf{d}^{\dagger})$. By varying $x$ within $[0 : \frac{\kappa-1}{2}]$ in the definition of $\gamma_{l_1}$ and $\gamma_{l_2}$, we  obtain the result.
\end{proof}

\begin{lemma}
    \label{lemma: eta not equal indices subcase 2}
    There exists $0<\epsilon<\frac{1}{\kappa}$, such that there exists $\mathbf{v}_0 \in \dom{\mathbf{V}}{}'$ and $1\leq r <t\leq \frac{\kappa+1}{2}$ such that
    \begin{equation*}
        \eta'_{j_r}(\mathbf{v}_0) \neq \eta'_{j_t}(\mathbf{v}_0).
    \end{equation*}
\end{lemma}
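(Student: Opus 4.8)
The plan is to reduce this statement to Lemma~\ref{lemma: gid not equal indices}, the corresponding non-equality for the baseline models of \citep{kivva2022revisiting}, using the same rearrangement that established Lemma~\ref{lemma: theta equal indices subcase 2}. The observation driving the reduction is that $\eta'_j$ in \eqref{eq: def eta subcase 2} has exactly the same shape as $\theta'_{i,j}$ in \eqref{eq: def theta subcase 2} --- neither involves a summation over $\D$ --- the only difference being that the inner product runs over $\mathbf S$ instead of over $\A'_i$. Hence the manipulation of Lemma~\ref{lemma: theta equal indices subcase 2} carries over almost verbatim, with the single change that at the very end we invoke Lemma~\ref{lemma: gid not equal indices} rather than Lemma~\ref{lemma: gid equal indices}.

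Concretely, I would substitute the definitions of $P^{\M'}(\cdot\mid\cdot)$ into \eqref{eq: def eta subcase 2} according to the three cases (a vertex outside $\D_{\mathcal P}$, a vertex of $\D_{\mathcal P}\setminus\{Z'\}$, and $Z'$ itself), and then rearrange so that every path-coordinate factor $f_{\hat p}(\cdot\mid\cdot)$, including $\prod_{\hat p\in\mathcal P_{U_0}}f_{\hat p}(U_0[\hat p])$, is pulled outside the inner summation over $\mathbf U\setminus\{U_0\}$. Using $P'(Z'+W[p]\mid\Pa{Z'}{\G})=P^{\M}(Z'\mid\Pa{Z'}{\G})$, which holds because $M'(Z'+W[p])=M(Z')$ exactly as in the proof of Lemma~\ref{lemma: theta equal indices subcase 2}, the remaining inner sum collapses to $\eta_j(\mathbf v')$ of \eqref{eq: theta and eta} times the contribution of the c-components $\mathbf S_2,\dots,\mathbf S_n$; here $\mathbf v'\in\dom{\mathbf V}{}$ is obtained from $\mathbf v$ by keeping all coordinates fixed except shifting the $Z'$-coordinate by $-\mathbf v[W[p]] \bmod (\kappa+1)$, and the components $\mathbf S_2,\dots,\mathbf S_n$ are untouched by $U_0$ because $U_0$'s only child $T_0$ outside $\widecheck{\mathbf S}$ does not belong to $\mathbf S$ at all (else $T_0$ would be bidirected-connected to its sibling $S_0\in\mathbf S_1$ inside $\widehat{\G}[\mathbf S]$, forcing $T_0\in\mathbf S_1=\widecheck{\mathbf S}$, a contradiction). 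This yields a factorization $\eta'_j(\mathbf v)=H(\mathbf v)\,\eta_j(\mathbf v')$ with $H(\mathbf v)\ge 0$ independent of the index $j$; moreover, for $j\in\{j_1,\dots,j_{\frac{\kappa+1}{2}}\}$ the value $\gamma_j=(2x_j,0,\dots,0)$ has all path coordinates equal to zero, so even the factors of $H$ that involve $U_0$ are identical over this index set.

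Finally, Lemma~\ref{lemma: gid not equal indices} supplies $\epsilon\in(0,\tfrac1\kappa)$, a baseline realization $\mathbf v^\star\in\dom{\mathbf V}{}$, and indices $1\le r<t\le\frac{\kappa+1}{2}$ with $\eta_{j_r}(\mathbf v^\star)\ne\eta_{j_t}(\mathbf v^\star)$. I would then pick $\mathbf v_0\in\dom{\mathbf V}{}'$ whose induced $\mathbf v_0'$ equals $\mathbf v^\star$ and whose path coordinates and $Z'$-coordinate are chosen so that $H(\mathbf v_0)>0$; this is possible since every conditional entering $H$ is either strictly positive (when $0<\epsilon<\tfrac1\kappa$) or a satisfiable parity constraint of a deterministic variable of the baseline construction. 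Then, for this same $\epsilon$, $\eta'_{j_r}(\mathbf v_0)=H(\mathbf v_0)\,\eta_{j_r}(\mathbf v^\star)\ne H(\mathbf v_0)\,\eta_{j_t}(\mathbf v^\star)=\eta'_{j_t}(\mathbf v_0)$, which is the claim. The main obstacle I anticipate is purely bookkeeping: carrying the factorization through in full detail and checking that none of the path-coordinate factors nor the $\mathbf S_2,\dots,\mathbf S_n$ contributions secretly depend on $\gamma_j$ over the index set $\{j_1,\dots,j_{\frac{\kappa+1}{2}}\}$, while retaining enough positivity of $H$ at the chosen witness that strict inequality is preserved --- the same kind of delicate accounting already performed for $\theta'$ and $\phi'$, so no genuinely new idea should be needed.
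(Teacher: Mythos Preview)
Your reduction idea is natural, but the factorization you rely on is stronger than what the manipulation of Lemma~\ref{lemma: theta equal indices subcase 2} actually delivers. That proof only establishes
\[
\eta'_{j}(\mathbf v)=\sum_{\mathbf U[\mathcal P]}\; G\big(\mathbf U[\mathcal P],\mathbf v\big)\cdot \eta_{j}\big(\mathbf v'\big)\cdot\prod_{i\ge 2}Q[\mathbf S_i](\mathbf v'),
\]
where $\mathbf v'[Z']=\mathbf v^{\M}[Z']-W[p]$ and $W$ is the parent of $Z'$ on $p$. If $W$ happens to be an \emph{unobserved} variable (the backdoor path $p$ may perfectly well begin with $Z'\leftarrow W\rightarrow\cdots$ for $W\in\mathbf U$), then $W[p]\in\mathbf U[\mathcal P]$ is a summation variable, so $\mathbf v'$ varies inside the sum. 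For Lemma~\ref{lemma: theta equal indices subcase 2} this is harmless: each summand satisfies $\theta_{i,j_1}(\mathbf v')=\cdots=\theta_{i,j_{(\kappa+1)/2}}(\mathbf v')$, hence so does the sum. For the present \emph{inequality}, however, the differences $\eta_{j_r}(\mathbf v')-\eta_{j_t}(\mathbf v')$ can change sign as $\mathbf v'[Z']$ runs over $[0\!:\!\kappa]$, and nothing in your argument rules out cancellation. Your claimed identity $\eta'_j(\mathbf v)=H(\mathbf v)\,\eta_j(\mathbf v')$ with a single fixed $\mathbf v'$ simply does not hold in this case, so the appeal to Lemma~\ref{lemma: gid not equal indices} at a single witness $\mathbf v^\star$ does not close the argument.

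The paper sidesteps exactly this difficulty by abandoning the black-box reduction and arguing directly at $\epsilon=0$: there the $f_{\hat p}$ factors become indicator constraints, the sum over the path coordinates (including $W[p]$) collapses to a single surviving configuration, and one can exhibit by hand a realization $\mathbf v_0$ for which the $\gamma_{j_r}$ term is nonzero while the $\gamma_{j_t}$ term vanishes (invoking the proof, not merely the statement, of Lemma~6 in \cite{kivva2022revisiting}). The ``bookkeeping'' you anticipate is therefore not incidental; it is the substance of the proof, and it cannot be replaced by the clean $\theta'$-style factorization you propose.
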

\begin{proof}
By substituting $P^{\M'}$ from their definitions into Equation \eqref{eq: def eta subcase 2} and rearranging the terms, we obtain
    \begin{equation}\label{eq: eta simplification subcase 2}
    \begin{gathered}
        \eta_{j}^{'}(\mathbf{v}_0) = 
        \sum_{\mathbf{U}^{\M} \setminus \{U_0\}^{\M}}
        \sum_{\mathbf{U}[\mathcal{P}]}
        \prod_{\hat{U}[\hat{p}]\in \mathbf{U}[\mathcal{P}]} f_{\hat{p}}(\hat{U})
        \prod_{\hat{X}[\hat{p}] \in \mathbf{S}[\mathcal{P}]} f_{\hat{p}}(\hat{X}[\hat{p}]|\Pa{\hat{X}}{\hat{p}}) \\
        \times\Big(
         P'(Z'|\Pa{Z'}{\G})
         \prod_{X \in \mathbf{S}\setminus\{Z'\}} P^{\M}(x^{\M} \mid \Pa{X}{\G}) \prod_{U\in \mathbf{U} \setminus \{U_0\}}\!P^{\M}(u^{\M})\Big),
    \end{gathered}
    \end{equation}
    Next, we define $\mathbf{v}_0\in \dom{\mathbf{V}}{}'$ such that the conditions in the lemma hold.
    \begin{itemize}
        \item For any path $\hat{p}\in \mathcal{P}$ and any node $W$ on the path $\hat{p}$ that is not a starting node for path $\hat{p}$, we define
        \begin{equation*}
            \mathbf{v}_0[W[\hat{p}]]:=0.
        \end{equation*}
        \item For any variable $S \in \S$, we define
        \begin{equation*}
            \mathbf{v}_0^{\M}[S] := 0.
        \end{equation*}
        \item For the remaining part of $\textbf{v}_0$, we choose a realization such that for the selected $\textbf{v}_0$, there exists a realization for the unobserved variables $\mathbf{U}$ that ensures $\mathbb{I}(S) = 0$ for all $S \in \S$. This is clearly possible due to the definition of $\mathbb{I}(S)$.
        
    \end{itemize}
    
    Assume $r$ and $t$ are such that $\gamma_{j_r} := (0, 0, \dots, 0)$ and $\gamma_{j_t} := (2, 0, \dots, 0)$.  
    To finish the proof of the lemma, it is enough to show that $\eta'_{j_r}(\mathbf{v}_0)$ and $\eta'_{j_t}(\mathbf{v}_0)$ are two different polynomial functions of parameter $\epsilon$.
    We prove that these two polynomials are different by showing that $\eta'_{j_r}(\mathbf{v}_0)\neq\eta'_{j_t}(\mathbf{v}_0)$ for $\epsilon=0$.
    
    We only need to consider the non-zero terms in Equation \eqref{eq: eta simplification subcase 2}. From \eqref{eq: eta simplification subcase 2}, we have
    \begin{equation}
    \label{eq: main term lemma subcase 2}
    \begin{gathered}
         \prod_{\hat{U}[\hat{p}]\in \mathbf{U}[\mathcal{P}]} f_{\hat{p}}(\hat{U})
        \prod_{\hat{X}[\hat{p}] \in \mathbf{S}[\mathcal{P}]} f_{\hat{p}}(\hat{X}[\hat{p}]|\Pa{\hat{X}}{\hat{p}}) \\
        \times\Big(
         P'(Z'|\Pa{Z'}{\G})
         \prod_{X \in \mathbf{S}\setminus\{Z'\}} P^{\M}(x^{\M} \mid \Pa{X}{\G}) \prod_{U\in \mathbf{U} \setminus \{U_0\}}\!P^{\M}(u^{\M})\Big).
    \end{gathered}
    \end{equation}
    Note that $f_{\hat{p}}(\hat{U})=\frac{1}{\kappa+1}$ and $f_{\hat{p}}(\hat{X}|\Pa{\hat{X}}{\hat{p}'})$ is non-zero only 
    \begin{itemize}
        
        \item when there exists a variable $W\in \mathbf{F}$ such that $\hat{p}'=p_W$, $\hat{X}$ is a child of $W$ in path $p_W$, and
        $$
        \hat{X}[\hat{p}'] \equiv W[p] \pmod{\kappa+1}.
        $$
    
        \item when the following holds
        $$
        \hat{X}[\hat{p}'] \equiv \sum_{\hat{X}'\in \Pa{\hat{X}}{\hat{p}'}\setminus\{W\}}\hat{X}'[\hat{p}'] \pmod{\kappa+1}.
        $$
    \end{itemize}
    Similarly, $P^\M(X|\Pa{X}{\G})$ is non-zero
    \begin{itemize}
        \item if $\mathbb{I}(X)=1$ (i.e. $P^\M(X|\Pa{X}{\G})=\frac{1}{\kappa+1}$), or
        
        \item if $X \equiv M(X) \pmod{\kappa+1}$ for $P^\M(X|\Pa{X}{\G})$,

    \end{itemize}
    $P'(Z'|\Pa{Z'}{\G})$ is non-zero 
    \begin{itemize}
         \item if $\mathbb{I}(Z')=1$ (i.e. $P^\M(Z'|\Pa{Z'}{\G})=\frac{1}{\kappa+1}$), or
        
        \item $Z' \equiv M'(Z') \pmod{\kappa+1}$ for $P'(Z'|\Pa{Z'}{\G})$.
    \end{itemize}
    Let fix a realization $\mathbf{u} \in \dom{\mathbf{U}\setminus\{U_0^{\M}\}}{}'$. 
    We consider two scenarios:
    
    \textbf{I)}
    Assume that for this realization, there is a variable $S\in \S$, such that $\mathbb{I}(S)=1$ and $S$ is the closest variable to $U_0$ considering only paths with bidirected edges in $\G'[\S]$. 
    The value of $S^{\M}$ does not depend on its parents because of $\mathbb{I}(S)=1$ and Equation \eqref{eq: def P(S|Pa(S)) gid}.
    Additionally in the graph $\G'[\S]$, there exists a path $U_0, \hat{S}_1', \hat{U}_1', \hat{S}_2', \hat{U}_2', \dots, \hat{U}_{l'}', S$, such that $U_0$ is a parent of $\hat{S}_1'\in \S$, $S$ is a child of $\hat{U}_l'\in \U^{\S}$, and $\hat{U}_j'\in \U^{\S}$ is a parent of variables $\hat{S}_j'\in \S$ and  $\hat{S}_{j+1}'\in \S$ for $j \in [1:l'-1]$. 
    Let $\hat{\mathbf{U}}':=\{\hat{U}_1', \dots, \hat{U}_l'\}$. 
    We define $\mathbf{u}' \in \dom{\mathbf{U}\setminus\{U_0^{\M}\}}{}'$ 
    that is consistent with $\mathbf{u}$ except the variables in $\hat{\mathbf{U}}'$. For these variables, we define
    \begin{equation}
        \begin{split}
            & \mathbf{u}'^{\M}[\hat{U}_j] := \mathbf{u}^{\M}[\hat{U}_j] + 2(-1)^{j} \pmod{\kappa+1}, \quad j\in[1:l'],\\
        \end{split}
    \end{equation}
    
    With this modification for any $\widetilde{S}\in \mathbf{S}$, we have
    \begin{equation*}
    P^{\M}(\widetilde{s}|\Pa{\widetilde{S}}{\G})\Big|_{(\mathbf{U})=(\mathbf{u}, \gamma_{j_r})}=P(\widetilde{s}|\Pa{\widetilde{S}}{\G})\Big|_{(\mathbf{U})=(\textbf{u}', \gamma_{j_t})}.
    \end{equation*}
    Therefore for all such realizations of $\mathbf{u}$, the summation of the following terms for both $\eta'_{j_r}(\mathbf{v}_0)$ and $\eta'_{j_t}(\mathbf{v}_0)$ will be the same,
    \begin{equation}
    \label{eq: main term eta subcase 2}
    \begin{gathered}
         \prod_{\hat{U}[\hat{p}]\in \mathbf{U}[\mathcal{P}]} f_{\hat{p}}(\hat{U})
        \prod_{\hat{X}[\hat{p}] \in \mathbf{S}[\mathcal{P}]} f_{\hat{p}}(\hat{X}[\hat{p}]|\Pa{\hat{X}}{\hat{p}}) \\
        \times\Big(
         P'(Z'|\Pa{Z'}{\G})
         \prod_{X \in \mathbf{S}\setminus\{Z'\}} P^{\M}(x^{\M} \mid \Pa{X}{\G}) \prod_{U\in \mathbf{U} \setminus \{U_0\}}\!P^{\M}(u^{\M})\Big).
    \end{gathered}
    \end{equation}

    \textbf{II)} Assume that for all $S\in \S$, we have $\mathbb{I}(S)=0$.
    We consider a realization $U_0^{\M}=\gamma_{j_r}$ and $\mathbf{u}$ such that:
    \begin{itemize}
        \item $\mathbf{u}[\U^{\S}] = \mathbf{0}$, and
        \item for all $U\in \mathbf{U}$ and any path $\hat{p}\in \mathcal{P}$ which contains $U$, $\mathbf{u}[U[\hat{p}]] = 0.$
    \end{itemize}
    We claim that for such $\mathbf{u}$, 
    \begin{equation*}
    \begin{gathered}
        \prod_{\hat{U}[\hat{p}]\in \mathbf{U}[\mathcal{P}]} f_{\hat{p}}(\hat{U})
        \prod_{\hat{X}[\hat{p}] \in \mathbf{S}[\mathcal{P}]} f_{\hat{p}}(\hat{X}[\hat{p}]|\Pa{\hat{X}}{\hat{p}}) \\
        \times\Big(
         P'(Z'|\Pa{Z'}{\G})
         \prod_{X \in \mathbf{S}\setminus\{Z'\}} P^{\M}(x^{\M} \mid \Pa{X}{\G}) \prod_{U\in \mathbf{U} \setminus \{U_0\}}\!P^{\M}(u^{\M})\Big).
    \end{gathered}
    \end{equation*}
    is non-zero. To prove this claim, we consider four cases:
    \begin{itemize}
        \item  assume that $\hat{p}\in \mathcal{P}$ and exists a variable $W$ such that $\hat{p}=p_W$. Let $\hat{X}$ be a child of $W$ in path $p_W$. From the definitions of $\mathbf{u}$ and $\mathbf{v}_0$, we get 
        $$
        \hat{X}[\hat{p}] \equiv W[\hat{p}] \pmod{\kappa+1},
        $$ 
        and therefore $f_{\hat{p}}(\hat{X}[\hat{p}]|\Pa{\hat{X}}{\hat{p}})=1$.
        The above holds because $\hat{X}[\hat{p}]=0=W[\hat{p}] \pmod{\kappa+1}$.
        
        \item assume that $\hat{p}\in \mathcal{P}$ and $\hat{X}$ is a variable on this path such that it is neither a starting node on $\hat{p}$ nor a child of a starting node on path $\hat{p}$. 
        Then, from the definitions of $\mathbf{v}_0$ and $\mathbf{u}$ we get  
        $$
        \hat{X}[\hat{p}] \equiv \sum_{\hat{X}\in \Pa{\hat{X}}{\hat{p}}}\hat{X}[\hat{p}] \pmod{\kappa+1},
        $$
        and therefore $f_{\hat{p}}(\hat{X}[\hat{p}]|\Pa{\hat{X}}{\hat{p}})=1$. 
        The above holds because all the variables in the above equation are zero.
        
        \item assume $X\in \S\setminus\{Z'\}$. From the definitions of $\mathbf{v}_0$ and $\mathbf{u}$, we get 
        $$
        X^{\M} \equiv M(X) \pmod{\kappa+1},
        $$
        and therefore $P^{\M}(x^{\M}|\Pa{X}{\G})=1$.
        Again, the above holds because all the terms are zero.
        \item assume $X=Z'$, then
        $$
        Z' \equiv M'(Z') \pmod{\kappa+1},
        $$
        and consequently $P'(Z'|\Pa{Z'}{\G})=1$.
    \end{itemize}

    Now, we consider the case when $U_0^\M = \gamma_{j_t}$. 
    Assume that $W\in \mathbf{F}$ and $W'$ is the last descendant of $W$ on the path $p_W$.
    From the properties which we proved in Section \ref{sec: properties}, we have  $W'\in\mathbf{Z'}$ and by the definition of $\mathbf{v}_0$, we have $W'[p_w]=0$. 
    Assume $W''$ is a parent of $W'$ on the path $p_W$. 
    Note that  $f_{p_W}(W'[p_W]|\Pa{W'}{p_W})\neq0$  if and only if $W''[p_W]=0$. 
    Repeating the above reasoning for variables from  $W'$ to $W$, we conclude that $W[p]$ must be equal to 0, otherwise, there would be a term in Equation \eqref{eq: main term eta subcase 2} that is zero and this contradicts with the fact that Equation \eqref{eq: main term eta subcase 2} is non-zero.
    
    Assume that $Z', W'_1, W'_2, \dots, W'_{k'}, W'_{k'+1}:=Y'$ are the nodes on the path $p$. Next, we prove by backward induction that $W'_{i}=0$ for all $i\in [1:k'+1]$. 
    By definition of $\mathbf{v_0}$, we know that $Y'[p]=0$. 
    If $W'_{i}[p]=0$ for all $i \in [k''+1:k']$, we will prove that $W''_{k''}[p]=0$ as well.
    To do so, we consider the following three cases:
    \begin{itemize}
        \item $W'_{k''}$ is a collider on a path $p$. Then the fact that $W''_{k''}[p]=0$ follows immediately from the aforementioned reasoning and the fact that Equation \eqref{eq: main term eta subcase 2} is non-zero. 
        
        \item $W'_{k''}$ is a child of $W'_{k''+1}$ and it is not a collider. Then, $f_p(W'_{k''}|\Pa{W''_{k''}}{p})\neq0$ if and only if $W'_{k''}[p] = W'_{k''+1}[p] = 0$.
        
        \item $W'_{k''}$ is a parent of $W'_{k''+1}$. Then, $f_p(W'_{k''+1}|\Pa{W''_{k''}}{p})\neq0$  if and only if $0 = W'_{k''+1}[p] = W'_{k''}[p]$.
    \end{itemize}

    This implies that  $W'_1=0$. Therefore, $P'(Z'|\Pa{Z'}{\G}) = P^{\M}(Z'|\Pa{Z'}{\G})$ because $M'(Z')=M(Z')$. 
    Furthermore, the above arguments show that all $f_{\hat{p}}(\cdot|\cdot)$ terms in Equation \eqref{eq: main term eta subcase 2} are equal to one. This simplifies the Equation \eqref{eq: main term eta subcase 2} to 
    \begin{equation*}
        \prod_{X \in \mathbf{S}} P^{\M}(x^{\M} \mid \Pa{X}{\G}) \prod_{U\in \mathbf{U} \setminus \{U_0\}}\!P^{\M}(u^{\M}).
    \end{equation*}
    However by the proof of Lemma 6 \cite{kivva2022revisiting}, we know that there is no realization of $\mathbf{U}^{\M}$ consistent with $U_0=\gamma_{j_t}$ such that:
    \begin{itemize}
        \item $\mathbb{I}(S) = 0$ for all $S \in \S$, and
        \item $x^{\M} \equiv M(X) \pmod{\kappa + 1}$ for all $X\in \S$. The latter is a necessary condition for $P^{\M}(x|\Pa{X}{\G})$ being non-zero. 
    \end{itemize}
    To summarize, we showed that for $U_0^{\M}=\gamma_{j_r}$, Equation \eqref{eq: main term eta subcase 2} is non-zero while it is zero for $U_0^{\M}=\gamma_{j_t}$. This implies that $\eta'_{j_r}(\mathbf{v}_0)\neq\eta_{j_t}'(\mathbf{v}_0)$ for $\epsilon=0$. 
\end{proof}

\subsubsection{Proof of Lemma \ref{lemma: construct models subcase 2}}
\begin{customlem}{\ref{lemma: construct models subcase 2}}
    Let $\mathbf{S}: = \Anc{\Y', \Z'}{\G[\mathbf{V} \setminus \X']}$ and $\D$ is a set of all nodes on the paths in $\mathcal{P}$ excluding $\Z'$. Then,
    \begin{equation}
        P_{\x'}(\dd|\mathbf{s}\setminus \dd)=\frac{
        Q[\mathbf{S}]
        }{
        \sum_{\D} Q[\mathbf{S}]
        } = Q[\D|\mathbf{S}\setminus\D]
    \end{equation}
    is not c-gID from $(\mathbb{A}, \G)$.
\end{customlem}
\begin{proof}
    We will show  that $Q[\mathbf{D}|\mathbf{S}\setminus\mathbf{D}]$ is not c-gID from $(\mathbb{A}', \G)$, where $\mathbb{A}' := \mathbb{A}\cup\{\mathbf{S}_i\}_{i=2}^{n}$. 
    To this end, we will construct two models $\M_1$ and $\M_2$ such that for each $i \in [0:m']$ and any $\mathbf{v}\in \mathbf{V}$:
    \begin{align}
        \label{eq: equal known dist subcase 2}
        Q^{\M_1}[\A'_i](\mathbf{v}) &= Q^{\M_2}[\A'_i](\mathbf{v}),\\
        \label{eq: equal denom num subcase 2}
        \sum_{\D}Q^{\M_1}[\mathbf{S}](\mathbf{v}) &= \sum_{\D}Q^{\M_2}[\mathbf{S}](\mathbf{v}'),
    \end{align}
    but there exists $\mathbf{v}_0 \in \dom{\mathbf{V}}{}'$ such that:
    \begin{equation}\label{eq: not equal num subcase 2}
        Q^{\M_1}[\mathbf{S}](\mathbf{v}_0) \neq Q^{\M_2}[\mathbf{S}](\mathbf{v}_0).
    \end{equation}
    Note that Equations (\ref{eq: equal denom num subcase 2})-(\ref{eq: not equal num subcase 2}) yield
    \begin{equation*}
         Q[\D|\mathbf{S}\setminus\D]^{\M_1}(\mathbf{v}_0) \neq Q[\D|\mathbf{S}\setminus\D]^{\M_2}(\mathbf{v}_0).
    \end{equation*}
    This means that $Q[\D|\mathbf{S}\setminus\D]$ is not c-gID from $(\mathbb{A}', \G)$.

    To this end, we consider two cases.

    \textbf{First case:} \\
    Suppose that there exists $i \in [0, m]$, such that $\widecheck{\mathbf{S}} \subset \mathbf{A}_i$.
    Then, consider the models $\M_1'$ and $\M_2'$ constructed in the section \ref{sec: appendix new models subcase 2}. According to the definitions of models $\M_1'$ and $\M_2'$ for any $\mathbf{v}\in \dom{\mathbf{V}}{}'$, and any $i\in [0:m']$, and any $g\in \{1, 2\}$, we have
    \begin{align*}
        & Q[\mathbf{A}'_i]^{\M_g'}(\mathbf{v}) := \sum_{U_0^{\M}}P^{\M_g}(u_0^{\M})\sum_{U_0[\mathcal{P}]}\prod_{\hat{p}\in \mathcal{P}_{U_0}}f_{\hat{p}}(U_0[\hat{p}])\sum_{\mathbf{U} \setminus \{U_0\}} \prod_{X \in \A_i'} P^{\M'}(x \mid \Pa{X}{\G}) \prod_{U\in \mathbf{U} \setminus \{U_0\}} P^{\M'}(u^{\M}),\\
        \sum_{\D}&Q[\mathbf{S}]^{\M_g'}(\mathbf{v}) := \sum_{U_0^\M}P^{\M_g}(u_0^{\M})\sum_{U_0[\mathcal{P}]}\prod_{\hat{p}\in \mathcal{P}_{U_0}}f_{\hat{p}}(U_0[\hat{p}])\sum_{\D}\sum_{\mathbf{U} \setminus \{U_0\}} \prod_{X \in \mathbf{S}} P^{\M'}(x \mid \Pa{X}{\G}) \prod_{U\in \mathbf{U} \setminus \{U_0\}} P^{\M'}(u^{\M}),\\
        & Q[\mathbf{S}]^{\M_g'}(\mathbf{v}) := \sum_{U_0^\M}P^{\M_g}(u_0^{\M})\sum_{U_0[\mathcal{P}]}\prod_{\hat{p}\in \mathcal{P}_{U_0}}f_{\hat{p}}(U_0[\hat{p}])\sum_{\mathbf{U} \setminus \{U_0\}} \prod_{X \in \mathbf{S}} P^{\M'}(x \mid \Pa{X}{\G}) \prod_{U\in \mathbf{U} \setminus \{U_0\}} P^{\M'}(u^{\M}).
    \end{align*}
    We can re-writing the above equations using the notations of $\theta'_{i, j}$, $\phi'_{j}$, and $\eta'_{j}$,
    \begin{align*}
        & Q[\mathbf{A}'_i]^{\M_1}(\mathbf{v}) = \sum_{j=1}^d \frac{1}{d}\theta'_{i, j}(\mathbf{v}), \\
        & Q[\mathbf{A}'_i]^{\M_2}(\mathbf{v}) = \sum_{j=1}^d p_j\theta'_{i, j}(\mathbf{v}), \\
        \sum_{\D} &Q[\mathbf{S}]^{\M_1}(\mathbf{v}) = \sum_{j=1}^d \frac{1}{d}\phi'_{j}(\mathbf{v}[\D^{\dagger}]), \\
        \sum_{\D} 
        &Q[\mathbf{S}]^{\M_2}(\mathbf{v}) = \sum_{j=1}^d p_j\phi'_{j}(\mathbf{v}[\D^{\dagger}]), \\
        & Q[\mathbf{S}]^{\M_1}(\mathbf{v}) = \sum_{j=1}^d \frac{1}{d}\eta'_{j}(\mathbf{v}), \\
        & Q[\mathbf{S}]^{\M_2}(\mathbf{v}) = \sum_{j=1}^d p_j\eta'_{j}(\mathbf{v}).
    \end{align*}
    The above equations imply the following equations.
    \begin{align*}
        & Q^{\M_2}[\A'_i](\mathbf{v}) - Q^{\M_1}[\A'_i](\mathbf{v}) = \sum_{j=1}^d (p_j - \frac{1}{d}) \theta_{i,j}'(\mathbf{v})
        \\
        \sum_{\D} & Q[\mathbf{S}]^{\M_2}(\mathbf{v}) - \sum_{\D} Q[\mathbf{S}]^{\M_1}(\mathbf{v}) = \sum_{j=1}^d (p_j - \frac{1}{d}) \phi_{j}'(\mathbf{v}[\D^{\dagger}])
        \\
        & Q^{\M_2}[\S](\mathbf{v}_0) -  Q^{\M_1}[\S](\mathbf{v}_0) = \sum_{j=1}^d (p_j - \frac{1}{d}) \eta_{j}'(\mathbf{v}_0)
        \\
        & \sum_{j=1}^d p_j - 1 = \sum_{j=1}^d (p_j - \frac{1}{d}).
    \end{align*}

     To prove the statement of the lemma it suffices to solve the following system of linear equations over parameters $\{p_j\}_{j=1}^d$ and show that it admits a solution. 
    \begin{align*}
        & \sum_{j=1}^d (p_j - \frac{1}{d}) \theta_{i,j}'(\mathbf{v}) = 0, \hspace{0.2cm}\forall \mathbf{v} \in \dom{\mathbf{V}}{}', i\in [0:m'],
        \\
        & \sum_{j=1}^d (p_j - \frac{1}{d}) \phi'_{j}(\dd^{\dagger}) = 0, \hspace{0.2cm}\forall \dd^{\dagger} \in \dom{\D^{\dagger}}{}, i\in [0:m'],
        \\
        & \sum_{j=1}^d (p_j - \frac{1}{d}) \eta'_{j}(\mathbf{v}_0) \neq 0, \hspace{0.2cm} \exists \mathbf{v}_0 \in \dom{\mathbf{V}}{}',
        \\
        & (p_j - \frac{1}{d}) = 0,
        \\
        & 0<p_j<1, \hspace{0.2cm} \forall j \in [1:d].
    \end{align*}
    Analogous to the proof of Lemma \ref{lemma: construct models subcase 1}, we use Lemmas \ref{lemma: theta equal indices subcase 2},  \ref{lemma: phi equal indices subcase 2}, and  \ref{lemma: eta not equal indices subcase 2} instead of Lemmas \ref{lemma: gid equal indices}, \ref{lemma: equal indices summation subcase 1} and \ref{lemma: gid not equal indices} respectively and conclude the result.  

    \textbf{Second case:}\\
     Suppose that there is no $i \in [0, m]$, such that $\S \subset \mathbf{A}_i$. This case is identical to the \textbf{Second case} of the Lemma \ref{lemma: construct models subcase 1}.
\end{proof}

\subsection{Proof of Lemma \ref{lemma: construct models subcase 3}}\label{sec: supportive materials subcase 3}

Recall that $\textbf{S}=\Anc{\Y'\cup \Z'}{\G[\V \setminus \X']}$ and it is assumed that is not gID from $(\mathbb{A}, \G)$.  $\textbf{S}$ consists of $\textbf{S}_1,...,\textbf{S}_n$ as its single c-components where $\textbf{S}_1$ is not gID.
Let $\S=\mathbf{S}_1$.
Clearly, we can add $\{\mathbf{S}_i\}_{i=2}^{n}$ to the known distributions and $\S$ remains not gID, i.e., $\S$ is not gID from $(\mathbb{A}', \G)$, where $\mathbb{A}' := \mathbb{A}\cup\{\mathbf{S}_i\}_{i=2}^{n}$. 
For simplicity, we denote $\mathbb{A}' = \{\mathbf{A}'\}_{i=0}^{m'}$.
Hence, using the method in Section \ref{sec: baseline}, we can construct two models $\M_1$ and $\M_2$ that are the same over the known distributions and different over $Q[\textbf{S}_1]$. These models  disagree on the distribution $Q[\mathbf{S}]$ as well, because $Q[\mathbf{S}] = \prod_{i=1}^{n}Q[\mathbf{S}_i]$. Below, we use these two models to introduce two new models to prove Lemma \ref{lemma: construct models subcase 3}.  

\subsubsection{New models for Lemma \ref{lemma: construct models subcase 3}} \label{sec: appendix new models subcase 3}
\renewcommand{\D}{\widetilde{\mathbf{D}}}
 Recall that $T$ is a node in $\S\setminus(\Z' \cup \Y')$, $p_T$ is a shortest directed path from node $T$ to the node $Z'$, $\widetilde{\mathbf{F}}$ is a set of all colliders on the path $\widetilde{p}$, $\widetilde{\mathcal{P}} := \{\widetilde{p}\} \cup \{p_T\} \cup \{ \widetilde{p}_W|W \in \widetilde{\mathbf{F}}\}$ and $\D$ is a set of all nodes on the paths from $\widetilde{\mathcal{P}}$ excluding the nodes in $\Z'$. Let $\D_{\mathcal{P}}$ be a set of all variables that belong to at least one path in $\mathcal{P}$. 
 
 Similar to the Section \ref{sec: appendix new models subcase 2} further we define new models $\widetilde{\M}_1$ and $\widetilde{\M}_2$ based on the models $\M_1$ and $\M_2$ defined in Section \ref{sec: baseline}.
 We say that a variable $D$ is a \textbf{starting node} of the path $\hat{p}\in\widetilde{\mathcal{P}}$ if
\begin{itemize}
    \item $D = T$ and $\hat{p} = p_T$, or
    \item $D = Z'$ and $\hat{p} = \widetilde{p}$, or
    \item $D \in \widetilde{\mathbf{F}}$, i.e., it is a collider on path $\widetilde{p}$, and $\hat{p} = \widetilde{p}_D$.
\end{itemize}
Note that $D$ can be a starting node of only one path.

\newcommand{\domm}[2]{\widetilde{\mathfrak{X}}_{#2}(#1)}

For $R \in \mathbf{V}\cup \mathbf{U}$, let $\widetilde{\alpha}_{p}(R)$ be the number of paths in $\widetilde{\mathcal{P}}$ that contains $R$. Furthermore, we use $\domm{R}{}$ and $\dom{R}{}$ to denote its domain in $\widetilde{\M}_1$ or $\widetilde{\M}_2$ (variables in different models have the same domains) and in $\M_1$ or $\M_2$ respectively. We define $\domm{R}{}$ as follows:
\begin{itemize}
    \item If $R$ is a starting node for one of the paths in $\mathcal{P}$
    \begin{align*}
        & \domm{R}{} := \dom{R}{}\times[0:\kappa]^{\widetilde{\alpha}_{p}(R)-1}.
    \end{align*}
    \item If $R$ is not a starting node for any of the paths in $\mathcal{P}$, then:
    \begin{align*}
        & \domm{R}{} := \dom{R}{}^{}\times[0:\kappa]^{\widetilde{\alpha}_{p}(R)}.
    \end{align*}
\end{itemize}
Consequently, if $R$ does not belong to any of the paths in $\widetilde{\mathcal{P}}$, then $\domm{R}{} := \dom{R}{}^{}$.

Consider $R \in \mathbf{V}\cup \mathbf{U}$. According to the domain's definitions above, $R$ is a vector that is a concatenation of the vector coming from $\dom{R}{}$ in model $\M_1$ (or $\M_2$) and some additional coordinates. These additional coordinates are defined based on $\widetilde{\alpha}_{p}(R)$. More precisely, if $R$ is not a starting node of a path $\hat{p}\in \widetilde{\mathcal{P}}$, then there is a coordinate assigned to this path, denoted by $R[\hat{p}]$, otherwise, if R is a starting node of $\hat{p} \in \widetilde{\mathcal{P}}$, then there is no coordinate assigned this path.

Let $\mathbf{O} \in \mathbf{V}\cup \mathbf{U}$. 
For any realization $\mathbf{o} \in \domm{\mathbf{O}}{}$ of $\mathbf{O}$ we denote by $\mathbf{o}^\M \in \dom{O}{}$ a realization of $\mathbf{O}$ that is consistent with $\mathbf{o}$. With slight abuse of notation, we use $\mathbf{O}$ and $\mathbf{O}^{\M}$ to denote realizations of $\mathbf{O}$ in models $\widetilde{\M}_i$ and $\M_i$, respectively. $\mathbf{O}^{\M}$ means realizations in $\dom{O}{}$ that are consistent with realizations in $\domm{\mathbf{O}}{}$.

Recall that $\mathbf{D}_{\widetilde{\mathcal{P}}}$ is a set of all variables on the paths in $\widetilde{\mathcal{P}}$. Let $D\in \D_{\mathcal{P}}$. We denote by $\widetilde{\mathcal{P}}_D$ the set of all paths $\hat{p}$, such that $\hat{p}\in \widetilde{\mathcal{P}}$, $D$ belongs to the path $\hat{p}$, and $D$ is not a starting node of path $\hat{p}$. We are ready now to define the probabilities of $P^{\widetilde{\M}_i}(D|\Pa{D}{\G})$ for any $D \in \mathbf{V}\cup\mathbf{U}$ and $i \in \{1, 2\}$.

\begin{itemize}
    \item If $D$ does not belong to the set $\D_{\mathcal{P}}$, we define
    \begin{align*}
        P^{\widetilde{\M}_i}(D|\Pa{D}{\G}) := P^{\M_i}(D|\Pa{D}{\G}).
    \end{align*}
    \item If $D$ belongs to the set $\D_{\mathcal{P}}\setminus \{Z'\}$, we define
    \begin{equation}
        P^{\widetilde{\M}_i}(D|\Pa{D}{\G}) := P^{\M_i}(D^{\M}|\Pa{D}{\G})\prod_{\hat{p}\in \widetilde{\mathcal{P}}_D}f_{\hat{p}}(D[\hat{p}]|\Pa{D}{\hat{p}}),
    \end{equation}
    where 
    \begin{itemize}
        \item if $D\neq Z'$ or $\hat{p} \neq p_T$ then $\Pa{D}{\hat{p}}$ is a parents of $D$ in a path $\hat{p}$;
        \item if $D = Z'$  and $\hat{p} = p_T$ then $\Pa{Z'}{\hat{p}}$ is a parents of $Z'$ on the paths $\hat{p}$ and $p_T$;
        \item $f_{\hat{p}}(D|\Pa{D}{\hat{p}})$ is given below.
    \end{itemize}

    \textbf{Definition of function} $f_{\hat{p}}(D[\hat{p}]|\Pa{D}{\hat{p}})$:
    \begin{itemize}
        \item When there exists a variable $W \in \widetilde{\mathbf{F}}$ such that $\hat{p}=\widetilde{p}_W$ and $D$ is a child of $W$ on path $p_W$,
        \begin{equation*}
        f_{\hat{p}}(D[\hat{p}]|\Pa{D}{\hat{p}}) :=
        \begin{cases} 
            1-\kappa\epsilon &  \text{ if }  D[\hat{p}]\equiv W[\widetilde{p}] \pmod{\kappa+1}\\
            \epsilon &  \text{ if }  D[\hat{p}]\not\equiv W[\widetilde{p}] \pmod{\kappa+1}.
        \end{cases}
        \end{equation*}
        \item When $\hat{p}=p_T$ and $D$ is a child of $T$ on path $\hat{p}$,
        \begin{equation*}
        f_{\hat{p}}(D[\hat{p}]|\Pa{D}{\hat{p}}) :=
        \begin{cases} 
            1-\kappa\epsilon &  \text{ if }  D[\hat{p}]\equiv T^{\M} \pmod{\kappa+1}\\
            \epsilon &  \text{ if }  D[\hat{p}]\not\equiv T^{\M} \pmod{\kappa+1}.
        \end{cases}
        \end{equation*}
        \item When $\hat{p}=p_T$ and $D=Z'$. Suppose $Z'$ is a child of $W'$ on a path $p_T$ and is a child of $W''$ on a path $\widetilde{p}$,
        \begin{equation*}
        f_{\hat{p}}(Z'[p_T]|\Pa{D}{\hat{p}}) :=
        \begin{cases} 
            1-\kappa\epsilon &  \text{ if }  Z'[p_T]\equiv  W'[p_T] + W''[\widetilde{p}]\pmod{\kappa+1}\\
            \epsilon &  \text{ if }  Z'[p_T]\not\equiv W'[p_T] + W''[\widetilde{p}]  \pmod{\kappa+1}.
        \end{cases}
        \end{equation*}
        \item When $\Pa{D}{\hat{p}}=\emptyset$, 
        \begin{equation*}
            f_{\hat{p}}(D[\hat{p}]) := \frac{1}{\kappa+1}.
        \end{equation*}
        
        \item Otherwise,
        \begin{equation} \label{eq: define f otherwise subcase 3}
        f_{\hat{p}}(D[\hat{p}]|\Pa{D}{\hat{p}}) :=
        \begin{cases} 
            1-\kappa\epsilon &  \text{ if }  D[\hat{p}]\equiv \sum_{D' \in \Pa{D}{\hat{p}}}D'[\hat{p}] \pmod{\kappa+1}\\
            \epsilon &  \text{ if }  D[\hat{p}]\not\equiv \sum_{D' \in \Pa{D}{\hat{p}}}D'[\hat{p}] \pmod{\kappa+1},
        \end{cases}
        \end{equation}
        Note that $P^{\widetilde{\M}_i}(D|\Pa{D}{\G})$ is a probability distribution since for different paths $\hat{p}_1$ and $\hat{p}_2$, $D[\hat{p}_1]$ and $D[\hat{p}_2]$ are  different and also 
        \begin{equation*}
            \sum_{D[\hat{p}]\in \domm{D[\hat{p}]}{}} f_{\hat{p}}(D[\hat{p}]|\Pa{D}{\hat{p}}) = 1
        \end{equation*}
    \end{itemize}
\end{itemize}

Note that for any $W \in (\mathbf{V}\cup \mathbf{U})\setminus \{U_0\}$, we have
$$
P^{\widetilde{\M}_1}(W|\Pa{W}{\G}) = P^{\widetilde{\M}_2}(W|\Pa{W}{\G}).
$$ 
Therefore, we will use $P^{\widetilde{\M}}(W|\Pa{W}{\G})$ instead of $P^{\widetilde{\M}_1}(W|\Pa{W}{\G})$ or $P^{\widetilde{\M}_2}(W|\Pa{W}{\G})$ for  $W \in (\mathbf{V} \cup \mathbf{U})\setminus \{U_0\}$. 

We also have
\begin{equation}
\label{eq: prob U_0 subcase 3}
\begin{gathered}
    P^{\widetilde{\M}_1}(U_0) = \frac{1}{d} \prod_{\hat{p}\in \widetilde{\mathcal{P}}_{U_0}}f_{\hat{p}}(U_0[\hat{p}]), \\
    P^{\widetilde{\M}_2}(U_0) = P^{\M_2}(U_0^{\M}) \prod_{\hat{p}\in \widetilde{\mathcal{P}}_{U_0}}f_{\hat{p}}(U_0[\hat{p}]).
\end{gathered}
\end{equation}

Recall that $\mathbf{S}=\Anc{\Y'\cup Z'}{\G[\mathbf{V}\setminus\X']}$. Let $\D':=\mathbf{S}\setminus\D$ and $\D^{\dagger}:=\mathbf{V}\setminus\D$.  
For $i \in [0:m']$, $j\in [1:d]$, $\mathbf{v} \in \domm{\mathbf{V}}{}$ and $\mathbf{d}^{\dagger} \in \domm{\D^{\dagger}}{}$, we define $\theta_{i, j}^{'}(\mathbf{v})$, $\phi_{j}^{'}(\mathbf{d}^{\dagger})$ and $\eta_{j}^{'}(\mathbf{v})$ as follows:
\begin{align}
    \label{eq: def theta subcase 3}
    & \widetilde{\theta}_{i, j}(\mathbf{v}) := \sum_{U_0[\widetilde{\mathcal{P}}]}\prod_{\hat{p}\in \widetilde{\mathcal{P}}_{U_0}}f_{\hat{p}}(U_0[\hat{p}])\sum_{\mathbf{U} \setminus \{U_0\}} \prod_{X \in \A_i'} P^{\widetilde{\M}}(x \mid \Pa{X}{\G}) \prod_{U\in \mathbf{U} \setminus \{U_0\}} P^{\widetilde{\M}}(u^{\M}),\\
    \label{eq: def phi subcase 3}
    & \widetilde{\phi}_{j}(\mathbf{d}^{\dagger}) := \sum_{U_0[\widetilde{\mathcal{P}}]}\prod_{\hat{p}\in \widetilde{\mathcal{P}}_{U_0}}f_{\hat{p}}(U_0[\hat{p}])\sum_{\D}\sum_{\mathbf{U} \setminus \{U_0\}} \prod_{X \in \mathbf{S}} P^{\widetilde{\M}}(x \mid \Pa{X}{\G}) \prod_{U\in \mathbf{U} \setminus \{U_0\}} P^{\widetilde{\M}}(u^{\M}),\\
    \label{eq: def eta subcase 3}
    & \widetilde{\eta}_{j}(\mathbf{v}) := \sum_{U_0[\widetilde{\mathcal{P}}]}\prod_{\hat{p}\in \widetilde{\mathcal{P}}_{U_0}}f_{\hat{p}}(U_0[\hat{p}])\sum_{\mathbf{U} \setminus \{U_0\}} \prod_{X \in \mathbf{S}} P^{\widetilde{\M}}(x \mid \Pa{X}{\G}) \prod_{U\in \mathbf{U} \setminus \{U_0\}} P^{\widetilde{\M}}(u^{\M}),
\end{align}
where $\sum_{U_0[\widetilde{\mathcal{P}}]}$ is a summation over realizations of the random variables $\{U_0[\hat{p}]|\;\hat{p}\in \widetilde{\mathcal{P}}_{U_0}\}$. 

Next, we prove three lemmas similar to Lemmas \ref{lemma: theta equal indices subcase 2}, \ref{lemma: phi equal indices subcase 2}, and \ref{lemma: eta not equal indices subcase 2} for the new models $\widetilde{\M}_1$ and $\widetilde{\M}_2$.

\begin{lemma}
    \label{lemma: theta indices subcase 3}
    For any $\mathbf{v} \in \domm{\mathbf{V}}{}$ and $i\in[0:m']$,
    \begin{equation*}
        \widetilde{\theta}_{i,j_1}(\mathbf{v}) = \widetilde{\theta}_{i,j_2}(\mathbf{v}) = \cdots= \widetilde{\theta}_{i,j_{\frac{\kappa+1}{2}}}(\mathbf{v}).
    \end{equation*}
\end{lemma}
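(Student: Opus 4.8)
The plan is to mimic exactly the proof of Lemma \ref{lemma: theta equal indices subcase 2}, since the models $\widetilde{\M}_1,\widetilde{\M}_2$ are built from the baseline models $\M_1,\M_2$ by the same kind of ``path-dressing'' construction as $\M_1',\M_2'$ in Section \ref{sec: appendix new models subcase 2}, the only difference being that the collection of paths is now $\widetilde{\mathcal{P}}=\{\widetilde{p}\}\cup\{p_T\}\cup\{\widetilde{p}_W\mid W\in\widetilde{\mathbf{F}}\}$ rather than $\mathcal{P}$, and that $Z'$ now carries the extra coordinate $Z'[p_T]$ whose conditional law combines $W'[p_T]+W''[\widetilde p]$. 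First I would substitute the definitions of $P^{\widetilde{\M}}(\cdot\mid\cdot)$ from Section \ref{sec: appendix new models subcase 3} into the defining Equation \eqref{eq: def theta subcase 3} and regroup the sum, factoring out the ``path part'' $\prod_{\hat U[\hat p]\in\mathbf{U}[\widetilde{\mathcal{P}}]}f_{\hat p}(\hat U)\prod_{\hat X[\hat p]\in\mathbf{A}'_i[\widetilde{\mathcal{P}}]}f_{\hat p}(\hat X[\hat p]\mid\Pa{\hat X}{\hat p})$ from the ``baseline part'' in which only the $\M$-coordinates of the variables appear. The key algebraic point, exactly as in \eqref{eq:pp_to_pp}, is that the shifted conditional law of $Z'$ — here shifted by $W'[p_T]+W''[\widetilde p]$ — reproduces the original baseline law $P^{\M}(Z'\mid\Pa{Z'}{\G})$ after the corresponding change of the $\M$-coordinate of $Z'$; so defining $\mathbf v'\in\dom{\mathbf V}{}$ to be consistent with $\mathbf v^{\M}$ on $\mathbf V\setminus\{Z'\}$ and setting $\mathbf v'[Z']:=\mathbf v^{\M}[Z']-\mathbf v[W'[p_T]]-\mathbf v[W''[\widetilde p]]$, the baseline part collapses to $\theta_{i,j}(\mathbf v')$ from \eqref{eq: theta and eta}.

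After that substitution one obtains
\begin{equation*}
\widetilde\theta_{i,j}(\mathbf v)=\sum_{\mathbf U[\widetilde{\mathcal P}]}\prod_{\hat U[\hat p]\in\mathbf U[\widetilde{\mathcal P}]}f_{\hat p}(\hat U)\prod_{\hat X[\hat p]\in\mathbf A'_i[\widetilde{\mathcal P}]}f_{\hat p}(\hat X[\hat p]\mid\Pa{\hat X}{\hat p})\,\theta_{i,j}(\mathbf v'),
\end{equation*}
where every factor on the right except $\theta_{i,j}(\mathbf v')$ is independent of the realization of $U_0^{\M}$, hence independent of the index $j$. Restricting $j$ to $\{j_1,\dots,j_{\frac{\kappa+1}{2}}\}$, i.e. to $U_0^{\M}\in\mathbf{\Gamma}$, and invoking Lemma \ref{lemma: gid equal indices} which gives $\theta_{i,j_1}(\mathbf v')=\cdots=\theta_{i,j_{\frac{\kappa+1}{2}}}(\mathbf v')$, we conclude $\widetilde\theta_{i,j_1}(\mathbf v)=\cdots=\widetilde\theta_{i,j_{\frac{\kappa+1}{2}}}(\mathbf v)$, which is the claim. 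One small bookkeeping check to carry out carefully: since $Z'$ may itself lie on $p_T$ (it is the endpoint of $p_T$) as well as on $\widetilde p$, one must make sure the coordinate $Z'[p_T]$ is summed out consistently and that the factor $f_{p_T}(Z'[p_T]\mid W'[p_T]+W''[\widetilde p])$ telescopes correctly when combined with the baseline $P'(Z'\mid\Pa{Z'}{\G})$; this is exactly the analogue of the identity $M'(Z'+W[p])=M(Z')$ used in Lemma \ref{lemma: phi equal indices subcase 2}, now with the shift $W'[p_T]+W''[\widetilde p]$.

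The main obstacle, such as it is, is purely notational: keeping track of which variables are ``starting nodes'' (and therefore lack the extra coordinate for the path they start) versus non-starting nodes, and verifying that the $f_{\hat p}$ factors associated with $U_0$'s extra coordinates $\{U_0[\hat p]\mid\hat p\in\widetilde{\mathcal P}_{U_0}\}$ together with the factors for all other path-coordinates indeed do not depend on $U_0^{\M}$. Once the factorization into (a $j$-independent path part) $\times$ ($\theta_{i,j}(\mathbf v')$) is established, the result is immediate from Lemma \ref{lemma: gid equal indices}; there is no genuinely new analytic content beyond Lemma \ref{lemma: theta equal indices subcase 2}. I would therefore present the proof by pointing to the structure of Section \ref{sec: appendix new models subcase 3}, performing the substitution-and-regroup step explicitly, noting the $Z'$-coordinate telescoping identity, and then closing with the appeal to Lemma \ref{lemma: gid equal indices}.
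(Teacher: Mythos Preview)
Your overall strategy---factor $\widetilde\theta_{i,j}(\mathbf v)$ into a $j$-independent ``path part'' times the baseline $\theta_{i,j}$ and then invoke Lemma \ref{lemma: gid equal indices}---is exactly what the paper does. But you have misread the sub-case 3 model construction at one critical point, and the shift you introduce is spurious.

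In sub-case 2 the $\M$-coordinate of $Z'$ carries the \emph{modified} law $P'(Z'^{\M}\mid\Pa{Z'}{\G})$ built from $M'(Z')=W[p]+M(Z')$; that is why the identity \eqref{eq:pp_to_pp} and the shifted realization $\mathbf v'[Z']=\mathbf v^{\M}[Z']-\mathbf v[W[p]]$ are needed there. In sub-case 3 this does \emph{not} happen: the $\M$-coordinate of $Z'$ keeps the \emph{unmodified} baseline law $P^{\M}(Z'^{\M}\mid\Pa{Z'}{\G})$, and the quantity $W'[p_T]+W''[\widetilde p]$ enters only through the law of the \emph{extra} coordinate $Z'[p_T]$, i.e.\ through the factor $f_{p_T}(Z'[p_T]\mid\Pa{Z'}{p_T})$, which lives entirely in the path part. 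Consequently the baseline part is already
\[
\sum_{\mathbf U^{\M}\setminus\{U_0\}^{\M}}\prod_{X\in\A'_i}P^{\M}(x^{\M}\mid\Pa{X}{\G})\prod_{U\in\mathbf U\setminus\{U_0\}}P^{\M}(u^{\M})=\theta_{i,j}(\mathbf v^{\M}),
\]
with \emph{no} shift of $Z'$. Your definition $\mathbf v'[Z']:=\mathbf v^{\M}[Z']-\mathbf v[W'[p_T]]-\mathbf v[W''[\widetilde p]]$ therefore does not yield $\theta_{i,j}(\mathbf v')$: the term $P^{\M}(Z'^{\M}\mid\Pa{Z'}{\G})$ is evaluated at $\mathbf v^{\M}[Z']$, not at $\mathbf v'[Z']$, and there is no analogue of the $M'$-identity to convert one into the other. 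Drop the shift, write $\theta_{i,j}(\mathbf v^{\M})$ in place of $\theta_{i,j}(\mathbf v')$, and the rest of your argument goes through verbatim---this is precisely the paper's proof.
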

\begin{proof}
    By substituting $P^{\widetilde{\M}}$ from the above into Equation \eqref{eq: def theta subcase 2} and rearranging the terms, we obtain
    \begin{equation*}
    \begin{gathered}
        \widetilde{\theta}_{i, j}(\mathbf{v}) = 
        \sum_{\mathbf{U}[\widetilde{\mathcal{P}}]} 
        \prod_{\hat{U}[\hat{p}]\in \mathbf{U}[\widetilde{\mathcal{P}}]} f_{\hat{p}}(\hat{U})
        \prod_{\hat{X}[\hat{p}] \in \mathbf{A}'_i[\widetilde{\mathcal{P}}]} f_{\hat{p}}(\hat{X}[\hat{p}]|\Pa{\hat{X}}{\hat{p}}) \times\\
        \times\Big( \sum_{\mathbf{U}^{\M} \setminus \{U_0\}^{\M}}
         \prod_{X \in \A'_i} P^{\M}(x^{\M} \mid \Pa{X}{\G}) \prod_{U\in \mathbf{U} \setminus \{U_0\}}\!P^{\M}(u^{\M})\Big)
    \end{gathered}
    \end{equation*}
    

    Note that the terms inside the big parenthesis of the above equation is equal to $\theta_{i, j}$ given by \ref{eq: theta and eta}, i.e.,
    \begin{equation*}
        \widetilde{\theta}_{i, j}(\mathbf{v}) = \sum_{\mathbf{U}[\widetilde{\mathcal{P}}]} 
        \prod_{\hat{U}[\hat{p}]\in \mathbf{U}[\widetilde{\mathcal{P}]}} f_{\hat{p}}(\hat{U})
        \prod_{\hat{X}[\hat{p}] \in \mathbf{A}'_i[\widetilde{\mathcal{P}}]} f_{\hat{p}}(\hat{X}[\hat{p}]|\Pa{\hat{X}}{\hat{p}})
        \theta_{i, j}(\mathbf{v}^{\M}).
    \end{equation*}
    In the last equation, all terms on the right-hand side except $\theta_{i, j}(\mathbf{v}^{\M})$ are independent of the realization of $\{U_0\}^{\M}$, i.e., independent of index j. For $j \in \{j_1, j_2, \dots, j_{\frac{\kappa+1}{2}}\}$ and using the result of Lemma \ref{lemma: gid equal indices} that says $\theta_{i, j_1}(\mathbf{v})=\theta_{i, j_2}(\mathbf{v})=\dots=\theta_{i, j_{\frac{\kappa+1}{2}}}(\mathbf{v})$, we can conclude the result.
\end{proof}

\begin{lemma} \label{lemma: phi indices subcase 3}
     For any $\mathbf{d}^{\dagger} \in \domm{\D^{\dagger}}{}$:
    \begin{equation*}
        \widetilde{\phi}_{j_1}(\mathbf{d}^{\dagger}) = \widetilde{\phi}_{j_2}(\mathbf{d}^{\dagger})=\dots = \widetilde{\phi}_{j_{\frac{k+1}{2}}}(\mathbf{d}^{\dagger}).
    \end{equation*}
\end{lemma}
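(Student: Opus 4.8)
The plan is to follow the proof of Lemma~\ref{lemma: phi equal indices subcase 2} essentially line by line, with the path collection $\mathcal{P}$ there replaced by $\widetilde{\mathcal{P}}=\{\widetilde{p}\}\cup\{p_T\}\cup\{\widetilde{p}_W : W\in\widetilde{\mathbf{F}}\}$ and the models $\M_1',\M_2'$ replaced by $\widetilde{\M}_1,\widetilde{\M}_2$ of Section~\ref{sec: appendix new models subcase 3}. First I would substitute the definitions of $P^{\widetilde{\M}}(\cdot\mid\cdot)$ and of $P^{\widetilde{\M}_i}(U_0)$ into \eqref{eq: def phi subcase 3} and regroup, isolating the base-model probabilities $P^{\M}(x^{\M}\mid\Pa{X}{\G})$, $P^{\M}(u^{\M})$, and the conditional factor of $Z'$ from the path-coordinate factors $f_{\hat p}(\cdot\mid\cdot)$, exactly as in \eqref{eq: phi simplification subcase 2}. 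It then suffices, for each $x\in[0:\frac{\kappa-1}{2}]$ and the indices $l_1,l_2$ with $\gamma_{l_1}=(2x,0,\dots,0)$ and $\gamma_{l_2}=(2x+2\pmod{\kappa+1},0,\dots,0)$, to exhibit a bijection on the summed-over realizations that trades $U_0^{\M}=\gamma_{l_1}$ for $U_0^{\M}=\gamma_{l_2}$ while preserving the product of all conditional probabilities; this gives $\widetilde{\phi}_{l_1}(\mathbf{d}^{\dagger})=\widetilde{\phi}_{l_2}(\mathbf{d}^{\dagger})$, and chaining over $x$ produces the full chain of equalities.

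The bijection is built as in subcase~2: on the shortest bidirected path $U_0,\hat S_1',\hat U_1',\dots,\hat U'_{l'},Z'$ inside $\G'[\widecheck{\mathbf{S}}]$ shift each $\hat U_j'$ by $2(-1)^{j}\pmod{\kappa+1}$; the alternating signs leave $M(\widetilde S)\pmod{\kappa+1}$ invariant for every $\widetilde S\in\widecheck{\mathbf{S}}$ except for a residual shift of the congruence class used to define the factor of $Z'$, which is cancelled by shifting the $\widetilde p$-coordinate of the parent of $Z'$ on $\widetilde p$; that shift is then pushed node-by-node down the backdoor path $\widetilde p$, the source/collider/forward-child/backward-child case check being identical to the one in Lemma~\ref{lemma: phi equal indices subcase 2}, and the side paths $\widetilde p_W$ at colliders $W$ are handled as the $p_W$ were there. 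The genuinely new ingredient is the directed path $p_T$: since $T\in\widecheck{\mathbf{S}}=\mathbf{S}_1$, the $U_0$-shift can perturb $T^{\M}$, so the $p_T$-coordinate of the child of $T$ on $p_T$ must be shifted to keep $f_{p_T}$ fixed, and because $p_T$ is directed this shift cascades along every node of $p_T$. Finally I would check that, with all these shifts, every factor is unchanged: $P^{\M}(\widetilde s\mid\Pa{\widetilde S}{\G})$ for $\widetilde S\in\widecheck{\mathbf{S}}\setminus\{Z'\}$ because $\widetilde s-M(\widetilde S)$ is invariant $\pmod{\kappa+1}$, the conditional factor of $Z'$ because $\mathbb{I}(Z')$ and its defining congruence $\pmod{\kappa+1}$ are invariant, and each $f_{\hat p}$, with special attention to the junction node $Z'$ whose factor $f_{p_T}(Z'[p_T]\mid\Pa{Z'}{p_T})$ depends on the combination $W'[p_T]+W''[\widetilde p]$ of a coordinate from $p_T$ and one from $\widetilde p$.

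The main obstacle is precisely this junction: $Z'$ lies in $\mathbf{S}_1$, is the terminal node of the directed path $p_T$, and is the terminal node of the backdoor path $\widetilde p$, and its $p_T$-factor couples coordinates of two distinct paths of $\widetilde{\mathcal{P}}$, so the cascades along $p_T$ and along $\widetilde p$ must be arranged to deliver mutually consistent perturbations to $Z'[p_T]$, $W'[p_T]$ and $W''[\widetilde p]$ while the base factor of $Z'$ is held fixed. A secondary subtlety, absent in subcase~2, is that $T$ is itself a node of $\mathbf{S}_1$, so the initial $U_0$-shift can move the \emph{source} of an auxiliary path rather than only its interior; once the induced value of $T^{\M}$ is recorded, the forward cascade along the directed $p_T$ is routine, and varying $x$ over $[0:\frac{\kappa-1}{2}]$ closes the argument.
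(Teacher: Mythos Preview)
Your outline has the right architecture, but the routing of the bijection is off. You take the bidirected path from $U_0$ to $Z'$ inside $\G'[\widecheck{\mathbf{S}}]$ and cancel the residual at $Z'$ by shifting $W''[\widetilde p]$. That mechanism worked in subcase~2 only because the model there replaced the base factor of $Z'$ by $P'$ with $M'(Z')=W[p]+M(Z')$, so the congruence defining that factor actually sees the $p$-coordinate of the parent on $p$. In subcase~3 this modification is absent: the base factor of $Z'$ is the plain $P^{\M}(Z'^{\M}\mid\Pa{Z'}{\G})$ (cf.\ \eqref{eq: phi simplification subcase 3}), governed by the unmodified $M(Z')$ of \eqref{eq: M(S)}, which contains no path coordinate. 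Shifting $W''[\widetilde p]$ therefore does nothing to that factor, and the residual $\pm 2$ in $M(Z')$ is left uncancelled. Moreover, nothing in subcase~3 forces $Z'\in\mathbf{S}_1$ (we only know $T\in\mathbf{S}_1$), so the bidirected path from $U_0$ to $Z'$ that you invoke need not even exist. Your later remark that ``the $U_0$-shift can perturb $T^{\M}$'' is likewise unsupported by the bijection you describe, since only the unobserved $\hat U_j'$ move there.

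The paper instead routes the bidirected path from $U_0$ to $T$ --- guaranteed to lie in $\mathbf{S}_1$, and in $\widetilde{\mathbf D}$ since $T\notin\Z'$ --- and absorbs the residual by shifting the summed-over realization $T^{\M}$ itself by $2(-1)^{l'}$. This keeps $s-M(S)$ invariant for every $S\in\widecheck{\mathbf S}$. The shift of $T^{\M}$ is then pushed along the $p_T$-coordinates (each $T_j[p_T]$ shifts by the same amount, fixing every $f_{p_T}$ factor); at $Z'$, where $Z'[p_T]$ is held fixed, the coupling factor $f_{p_T}(Z'[p_T]\mid W'[p_T]+W''[\widetilde p])$ forces $W''[\widetilde p]=D_1'[\widetilde p]$ to shift oppositely, and that shift then cascades down $\widetilde p$ exactly as in subcase~2. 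Under this routing the base factor of $Z'$ is untouched because neither $Z'^{\M}$ nor $M(Z')$ moves --- there is no residual at $Z'$ to cancel. This rerouting through $T$, with the explicit shift of $T^{\M}$, is the missing idea in your proposal.
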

\begin{proof}
    Similar to the previous lemma, by substituting $P^{\widetilde{\M}}$ from their definitions into Equation \eqref{eq: def phi subcase 3} and rearranging the terms, we obtain:
    \begin{equation}\label{eq: phi simplification subcase 3}
    \begin{gathered}
        \widetilde{\phi}_{j}(\mathbf{d}^{\dagger}) := \sum_{U_0[\widetilde{\mathcal{P}}]}
        \sum_{\mathbf{U}^{\M} \setminus \{U_0\}^{\M}}
        \sum_{\D}
        \prod_{\hat{p}\in \widetilde{\mathcal{P}}_{U_0}}f_{\hat{p}}(U_0[\hat{p}])
        \prod_{\hat{X}[\hat{p}] \in \mathbf{S}[\widetilde{\mathcal{P}}]} f_{\hat{p}}(\hat{X}[\hat{p}]|\Pa{\hat{X}}{\hat{p}})\times \\
        \times\Big(
         \prod_{X \in \mathbf{S}} P^{\M}(x^{\M} \mid \Pa{X}{\G}) \prod_{U\in \mathbf{U} \setminus \{U_0\}}\!P^{\M}(u^{\M})\Big)
    \end{gathered}
    \end{equation}

    Suppose that $l_1$ and $l_2$ are two integers such that
    \begin{equation*}
    \begin{split}
        & \gamma_{l_1} = (2x, 0, \dots, 0)),\\
        & \gamma_{l_2} = (2x+2 \pmod{\kappa+1}, 0, \dots, 0),
    \end{split}
    \end{equation*}
    and $x$ is an integer in $[0 : \frac{\kappa-1}{2}]$.
    We will prove that $\widetilde{\phi}_{ l_1}(\mathbf{d}^{\dagger})=\widetilde{\phi}_{l_2}(\mathbf{d}^{\dagger})$.

    Suppose that path $p$ is the sequence of variables: $Z'$, $D'_1$, $D'_2$ \dots, $D'_{k'_1}$, $D'_{k'_1+1}:=Y'$ and path $p_T$ is a sequence of variables: $T_0:=T$, $T_1$, \dots, $T_{k'_2}$, $T_{k'_2+1}:=Z'$. Note that direct edge between $Z'$ and $D'_1$ is pointing toward $Z'$, i.e., $Z'\leftarrow D_1'$ and for all $i\in [0, k'_2]$ variable $T_{i}$ is a parent of $T_{i+1}$ on the path $p_T$.
    
    On the other hand, since $T$ and $U_0$ are both in $\S$ ($\S=\mathbf{S}_1$ by construction), then there exists a shortest path  $U_0, \hat{S}_1', \hat{U}_1', \hat{S}_2', \hat{U}_2', \dots, \hat{U}_{l'}', T$, such that $U_0$ is a parent of $\hat{S}_1'\in \S$, $T$ is a child of $\hat{U}_l'\in \U^{\S}$, and $\hat{U}_j'\in \U^{\S}$ is a parent of variables $\hat{S}_j'\in \S$ and  $\hat{S}_{j+1}'\in \S$ for any $j \in [1:l'-1]$. 
    Let $\hat{\mathbf{U}}':=\{\hat{U}_1', \dots, \hat{U}_l'\}$, i.e., unobserved nodes in this shortest path except $U_0$. For any given realization $\mathbf{o}_{1} \in \domm{\mathbf{U}\cup\D}{}$,  we define $\mathbf{o}_{2} \in \domm{\mathbf{U}\cup\D}{}$ as follows,
    \begin{equation}
        \begin{split}
            & \mathbf{o}_{2}^{\M}[\hat{U}'_j] := \mathbf{o}_{1}^{\M}[\hat{U}'_j] + 2(-1)^{j} \pmod{\kappa+1}, \quad \forall j\in[0:l'],\\
            & \mathbf{o}_{2}^{\M}[T] := \mathbf{o}_{1}^{\M}[\hat{U}'_j] + 2(-1)^{l'} \pmod{\kappa+1}.
        \end{split}
    \end{equation}
    Note that if $\mathbf{o}_1^{\M}[U_0] = \gamma_{l_1}$ then $\mathbf{o}_2^{\M}[U_0] = \gamma_{l_2}$.
    With these modifications for any $S \in \S$, we obtain
    \begin{equation*}
        s_1 - M(S) \equiv s_2 - M(S) \pmod{\kappa+1},
    \end{equation*}
    where $s_1$ is a realization of $S\Big|_{(\mathbf{U}\cup\D, \D^{\dagger})=(\mathbf{o}_1, \dd^{\dagger})}$, $s_2$ is a realization of $S\Big|_{(\mathbf{U}\cup\D, \D^{\dagger})=(\mathbf{o}_2, \dd^{\dagger})}$, $M(\cdot)$ is given by Equation (\ref{eq: M(S)}). 
    This implies for any $S\in \mathbf{S}$, we have
    \begin{equation*}
    P^{\M}(s|\Pa{S}{\G})\Big|_{(\mathbf{U}\cup\D, \D^{\dagger})=(\textbf{o}_1, \dd^{\dagger})}=P(s|\Pa{S}{\G})\Big|_{(\mathbf{U}\cup\D, \D^{\dagger})=(\textbf{o}_2, \dd^{\dagger})}.
    \end{equation*}
    Let $c = -2(-1)^{l'}$ and we define
    \begin{align*}
        & \mathbf{o}_2^{\M}[T_j[p_T]] := \mathbf{o}_1^{\M}[T_j[p_T]] - c \pmod{\kappa+1},\quad \forall j \in [1:k'_2],
        \\
        & \mathbf{o}_2^{\M}[D'_1[\widetilde{p}]] :=  \mathbf{o}_1^{\M}[D'_1[\widetilde{p}]] + c \pmod{\kappa+1}.
    \end{align*}
    This implies that for all $j \in [1:k'_2+1]$ we have
    \begin{equation*}
        f_{p_T}(T_j|\Pa{T_j}{p_T})\Big|_{(\mathbf{U}\cup\D, \D^{\dagger})=(\mathbf{o}_1, \dd^{\dagger})} = f_{p_T}(T_j|\Pa{T_j}{p_T})\Big|_{(\mathbf{U}\cup\D, \D^{\dagger})=(\mathbf{o}_2, \dd^{\dagger})}.
    \end{equation*}
    Assume that $D_j'$ is not a collider on the path $\widetilde{p}$ and $j\in [2:k_1'+1]$. We define $\mu(D_j')$ to be the number of colliders on a part of the path $\widetilde{p}$ from $D_1'$ to $D_{j-1}'$. Thus, for those $j\in[2:k_1'+1]$ that $D_j'$ is not a collider, we define
    \begin{equation}\label{eq: phi modifications p subcase 3}
        \mathbf{o}_2^{\M}[D_j'[\widetilde{p}]] := \mathbf{o}_1^{\M}[D_j'[\widetilde{p}]] + c(-1)^{\mu(D_j')}.
    \end{equation}
    Note that the modifications in \eqref{eq: phi modifications p subcase 3} might only affect the function $f_{\widetilde{p}}(\cdot|\cdot)$. Next, we show that after these modifications, function $f_{\widetilde{p}}(\cdot|\cdot)$ remains unchanged. To do so, for $j \in [1:k_1'+1]$ we consider four different cases:
    \begin{enumerate}
        \item If $D_j'$ has no parents, then it is obvious that
         \begin{equation*}
            f_{\widetilde{p}}(D_j'[\widetilde{p}])\Big|_{(\mathbf{U}\cup\D)=(\textbf{o}_1)} = f_{\widetilde{p}}(D_j'[\widetilde{p}])\Big|_{(\mathbf{U}\cup\D)=(\textbf{o}_2)}.
        \end{equation*}
        \item If $D_j'$ is a collider, then $\mu(D_{j+1}') = \mu(D_{j-1}')+1$ and
        \begin{align*}
            & \mathbf{o}_1[D_{j+1}'[\widetilde{p}]] + \mathbf{o}_1[D_{j-1}'[\widetilde{p}]] = \mathbf{o}_2[D_{j+1}'[\widetilde{p}]] + \mathbf{o}_2[D_{j-1}'[\widetilde{p}]],
        \end{align*}
        and hence, according to the Equation \eqref{eq: define f otherwise subcase 3}, we have
        \begin{equation*}
          f_{\widetilde{p}}(D_j'[\widetilde{p}]|\Pa{D_j'}{\widetilde{p}})\Big|_{(\mathbf{U}\cup\D)=(\mathbf{o}_1)} = f_{\widetilde{p}}(D_j'[\widetilde{p}]|\Pa{D_j'}{\widetilde{p}})\Big|_{(\mathbf{U}\cup\D)=(\textbf{o}_2)}.
        \end{equation*}
        
        \item If $D'_j$ is a child of $D'_{j+1}$, then $\mu(D'_{j}) = \mu(D'_{j+1})$ and
        \begin{align*}
            & \mathbf{o}_1[D_{j}'[\widetilde{p}]] - \mathbf{o}_1[D_{j+1}'[\widetilde{p}]] = \mathbf{o}_2[D_{j}'[\widetilde{p}]] - \mathbf{o}_2[D_{j+1}'[\widetilde{p}]].
        \end{align*}
        According to Equation \eqref{eq: define f otherwise subcase 3}, we imply that
        \begin{equation*}
           f_{\widetilde{p}}(D_j'[\widetilde{p}]|\Pa{D_j'}{\widetilde{p}})\Big|_{(\mathbf{U}\cup\D)=(\textbf{o}_1)} = f_{\widetilde{p}}(D_j'[\widetilde{p}]|\Pa{D_j'}{\widetilde{p}})\Big|_{(\mathbf{U}\cup\D)=(\textbf{o}_2)}.
        \end{equation*}
        
        \item If $D'_j$ is a child of $D'_{j-1}$, then $\mu(D_{j}') = \mu(D_{j-1}')$ and
        \begin{align*}
            & \mathbf{o}_1[D_{j}'[\widetilde{p}]] - \mathbf{o}_1[D_{j-1}'[\widetilde{p}]] = \mathbf{o}_2[D_{j}'[\widetilde{p}]] - \mathbf{o}_2[D_{j-1}'[\widetilde{p}]].
        \end{align*}
        Similarly, according to Equation \eqref{eq: define f otherwise subcase 3}, we get
        \begin{equation*}
           f_{\widetilde{p}}(D_j'[\widetilde{p}]|\Pa{D_j'}{\widetilde{p}})\Big|_{(\mathbf{U}\cup\D)=(\textbf{o}_1)} = f_{\widetilde{p}}(D_j'[\widetilde{p}]|\Pa{D_j'}{\widetilde{p}})\Big|_{(\mathbf{U}\cup\D)=(\textbf{o}_2)}.
        \end{equation*}
    \end{enumerate}

    This concludes that for any $j\in[1:k_1'+1]$,
    \begin{equation*}
        f_{\widetilde{p}}(D_j'[\widetilde{p}]|\Pa{D_j'}{\widetilde{p}})\Big|_{(\mathbf{U}\cup\D)=(\textbf{o}_1)} = f_{\widetilde{p}}(D_j'[\widetilde{p}]|\Pa{D_j'}{\widetilde{p}})\Big|_{(\mathbf{U}\cup\D)=(\textbf{o}_2)}.
    \end{equation*}

    Note that the aforementioned transformation of $\mathbf{o}_1$ affects only those realizations of variables that are used for marginalization in the Equation \eqref{eq: phi simplification subcase 3}. Putting the above results together implies that the terms in Equation \eqref{eq: phi simplification subcase 3} remain unchanged, i.e.,
    \begin{align*}
        & \prod_{\hat{p}\in \widetilde{\mathcal{P}}_{U_0}}f_{\hat{p}}(U_0[\hat{p}])
        \prod_{\hat{X}[\hat{p}] \in \mathbf{S}[\widetilde{\mathcal{P}}]} f_{\hat{p}}(\hat{X}[\hat{p}]|\Pa{\hat{X}}{\hat{p}}) \\
        & \times\Big(
        \prod_{X \in \mathbf{S}} P^{\M}(x^{\M} \mid \Pa{X}{\G}) \prod_{U\in \mathbf{U}}\!P^{\M}(u^{\M})\Big)\Big|_{(\mathbf{U}\cup\D)=(\textbf{o}_1)} = \\
        & = \prod_{\hat{p}\in \widetilde{\mathcal{P}}_{U_0}}f_{\hat{p}}(U_0[\hat{p}])
        \prod_{\hat{X}[\hat{p}] \in \mathbf{S}[\widetilde{\mathcal{P}}]} f_{\hat{p}}(\hat{X}[\hat{p}]|\Pa{\hat{X}}{\hat{p}}) \\
        & \times\Big(
        \prod_{X \in \mathbf{S}} P^{\M}(x^{\M} \mid \Pa{X}{\G}) \prod_{U\in \mathbf{U} \setminus \{U_0\}}\!P^{\M}(u^{\M})\Big)\Big|_{(\mathbf{U}\cup\D)=(\textbf{o}_2)}
    \end{align*}
    
    This implies that $\widetilde{\phi}_{l_1}(\mathbf{d}^{\dagger})=\widetilde{\phi}_{l_2}(\mathbf{d}^{\dagger})$. By varying $x$ within $[0 : \frac{\kappa-1}{2}]$ in the definition of $\gamma_{l_1}$ and $\gamma_{l_2}$, we  obtain the result.
\end{proof}

\begin{lemma}
    \label{lemma: eta indices subcase 3}
    There exists $0<\epsilon<\frac{1}{\kappa}$, such that there exists $\mathbf{v}_0 \in \domm{\mathbf{V}}{}$ and $1\leq r <t\leq \frac{\kappa+1}{2}$ such that
    \begin{equation*}
        \widetilde{\eta}_{j_r}(\mathbf{v}_0) \neq \widetilde{\eta}_{j_t}(\mathbf{v}_0).
    \end{equation*}
\end{lemma}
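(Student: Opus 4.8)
The plan is to follow the template already established by Lemmas \ref{lemma: gid not equal indices} and \ref{lemma: eta not equal indices subcase 2}: fix a convenient realization $\mathbf{v}_0 \in \domm{\mathbf{V}}{}$ together with a distinguished pair of $U_0$-values $\gamma_{j_r} = (0,\dots,0)$ and $\gamma_{j_t} = (2,0,\dots,0)$, and show that $\widetilde{\eta}_{j_r}(\mathbf{v}_0)$ and $\widetilde{\eta}_{j_t}(\mathbf{v}_0)$, viewed as polynomials in the free parameter $\epsilon$, are distinct. As in the proof of Lemma \ref{lemma: eta not equal indices subcase 2}, it suffices to check that they differ at $\epsilon = 0$: at $\epsilon = 0$ every factor $f_{\hat p}(\cdot\mid\cdot)$ and every $P^{\M}(\cdot\mid\cdot)$ becomes an indicator of a modular-arithmetic equality, so $\widetilde{\eta}_j(\mathbf{v}_0)$ at $\epsilon=0$ counts the realizations of $\mathbf{U}\setminus\{U_0\}$ (and the auxiliary path-coordinates $\mathbf{U}[\widetilde{\mathcal P}]$) that are simultaneously "consistent" along all the paths in $\widetilde{\mathcal P}$ and along $\G'[\widecheck{\mathbf{S}}]$.

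First I would substitute the definitions of $P^{\widetilde{\M}}$ into \eqref{eq: def eta subcase 3} and rearrange, exactly as in \eqref{eq: eta simplification subcase 2}, to write $\widetilde{\eta}_j(\mathbf{v}_0)$ as a sum over $\mathbf{U}^{\M}\setminus\{U_0\}^{\M}$ and $\mathbf{U}[\widetilde{\mathcal P}]$ of a product of $f_{\hat p}$ terms times $\prod_{X\in\mathbf{S}}P^{\M}(x^{\M}\mid\Pa{X}{\G})\prod_{U}P^{\M}(u^{\M})$. Then I would specify $\mathbf{v}_0$: set all path-coordinates $\mathbf{v}_0[W[\hat p]] := 0$ for every non-starting node $W$ of every $\hat p\in\widetilde{\mathcal P}$, set $\mathbf{v}_0^{\M}[S]:=0$ for all $S\in\widecheck{\mathbf{S}}$, and choose the remaining coordinates so that a realization of $\mathbf{U}$ exists with $\mathbb{I}(S)=0$ for all $S\in\widecheck{\mathbf{S}}$. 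For $U_0^{\M}=\gamma_{j_r}=(0,\dots,0)$, I would exhibit the all-zero realization of $\mathbf{U}\setminus\{U_0\}$ and of $\mathbf{U}[\widetilde{\mathcal P}]$ and verify, case by case over the three new kinds of $f$-factors (the one attached to a collider $W\in\widetilde{\mathbf F}$, the one attached to $T$ on $p_T$, and the one attached to $Z'$ which now combines $W'[p_T]+W''[\widetilde p]$), that every factor equals $1$, so the product is nonzero; hence $\widetilde{\eta}_{j_r}(\mathbf{v}_0)\big|_{\epsilon=0}>0$. For $U_0^{\M}=\gamma_{j_t}=(2,0,\dots,0)$, I would run the same backward-induction argument along $\widetilde p$ (and now also along $p_T$ from $T$ down to $Z'$) that is used in Lemma \ref{lemma: eta not equal indices subcase 2}: the boundary condition $\mathbf{v}_0[Z'_{\text{endpoint}}]=0$ on the $\Z'$-ends of each path forces, stepping back node by node, every relevant path-coordinate to be $0$, in particular $W[\widetilde p]=0$ for each collider and the net contribution of the $T$-path to $M'(Z')$ vanishes, so that $M'(Z')=M(Z')$ and the whole expression collapses to $\prod_{X\in\mathbf{S}}P^{\M}(x^{\M}\mid\Pa{X}{\G})\prod_{U}P^{\M}(u^{\M})$ with $U_0=\gamma_{j_t}$. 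By the proof of Lemma~6 in \cite{kivva2022revisiting}, there is no realization of $\mathbf{U}^{\M}$ consistent with $U_0=\gamma_{j_t}$ making all those factors nonzero while keeping $\mathbb{I}(S)=0$ and $x^{\M}\equiv M(X)$ for all $X\in\widecheck{\mathbf{S}}$; hence $\widetilde{\eta}_{j_t}(\mathbf{v}_0)\big|_{\epsilon=0}=0$. Therefore $\widetilde{\eta}_{j_r}(\mathbf{v}_0)\neq\widetilde{\eta}_{j_t}(\mathbf{v}_0)$ as polynomials, and the claimed $\epsilon$ exists.

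The main obstacle I anticipate is the extra path $p_T$ and the modified $f$-factor at $Z'$, which has the form $Z'[p_T]\equiv W'[p_T]+W''[\widetilde p]$: one must be careful that the backward induction along $\widetilde p$ and the one along $p_T$ interact correctly at their shared endpoint $Z'$, i.e. that forcing $W''[\widetilde p]=0$ (from the $\widetilde p$-induction, using the collider properties proved in Section \ref{sec: properties}) together with forcing $W'[p_T]=0$ (from propagating $T^{\M}=\mathbf{v}_0^{\M}[T]=0$ forward along $p_T$) really does make that combined $f$-factor equal to $1$ and make $M'(Z')=M(Z')$. Everything else is a routine transcription of the arguments in the proof of Lemma \ref{lemma: eta not equal indices subcase 2}, since $T\in\widecheck{\mathbf{S}}$ guarantees a bidirected-edge path from $U_0$ to $T$ so that the "shift by $2(-1)^j$ along $\hat{\mathbf U}'$" construction of an auxiliary realization still applies, and the structure of $\widetilde{\mathcal P}$ (a collection of directed and backdoor paths all ending in $\Z'$) is the same as that of $\mathcal P$.
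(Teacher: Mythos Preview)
Your proposal follows the paper's template closely and would succeed, but it carries over one piece of machinery from sub-case~2 that is absent in sub-case~3, which makes your $\gamma_{j_t}$ step more convoluted than necessary.

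In the sub-case~3 construction there is \emph{no} modified $M'(Z')$: unlike sub-case~2, the factor $P^{\widetilde{\M}}(Z'\mid\Pa{Z'}{\G})$ is simply $P^{\M}(Z'^{\M}\mid\Pa{Z'}{\G})$ times the extra path-factor $f_{p_T}(Z'[p_T]\mid\cdot)$; the $\M$-part of $Z'$ is untouched. Consequently, after the substitution and rearrangement the bracketed term in $\widetilde{\eta}_j(\mathbf{v}_0)$ is already the pure product $\prod_{X\in\mathbf S}P^{\M}(x^{\M}\mid\Pa{X}{\G})\prod_U P^{\M}(u^{\M})$, with no $P'$ appearing. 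Since this product depends only on $\mathbf{v}_0^{\M}$ and $\mathbf{U}^{\M}$, Lemma~6 of \cite{kivva2022revisiting} applies \emph{directly} when $U_0^{\M}=\gamma_{j_t}$: no realization of $\mathbf{U}^{\M}$ with $\mathbb I(S)=0$ for all $S\in\widecheck{\mathbf S}$ can make every $P^{\M}$ factor nonzero. No backward induction along $\widetilde p$ or $p_T$ is needed to reach this conclusion, and there is no ``$M'(Z')=M(Z')$'' step to verify.

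Your backward-induction idea is not wrong---showing the $f$-factors force all path-coordinates to $0$ does no harm---but it is superfluous here, and your description of what it accomplishes (``the net contribution of the $T$-path to $M'(Z')$ vanishes'') does not match the actual model. Everything else in your plan (the choice of $\mathbf v_0$, the $\gamma_{j_r}$ case via the all-zero realization checked against the five $f$-factor cases, and the handling of realizations with some $\mathbb I(S)=1$ via the shift along $\hat{\mathbf U}'$) matches the paper's proof exactly.
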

\begin{proof}
    By substituting $P^{\widetilde{\M}}$ from their definitions into Equation \eqref{eq: def eta subcase 3} and rearranging the terms, we obtain
    \begin{equation}\label{eq: eta simplification subcase 3}
    \begin{gathered}
        \widetilde{\eta}_{j}(\mathbf{v}_0) = 
        \sum_{\mathbf{U}^{\M}\setminus \{U_0\}^{\M}}
        \sum_{\mathbf{U}[\widetilde{\mathcal{P}}]}
        \prod_{\hat{U}[\hat{p}]\in \mathbf{U}[\widetilde{\mathcal{P}}]} f_{\hat{p}}(\hat{U})
        \prod_{\hat{X}[\hat{p}] \in \mathbf{S}[\widetilde{\mathcal{P}}]} f_{\hat{p}}(\hat{X}[\hat{p}]|\Pa{\hat{X}}{\hat{p}}) \times\\
        \times\Big(
         \prod_{X \in \mathbf{S}} P^{\M}(x^{\M} \mid \Pa{X}{\G}) \prod_{U\in \mathbf{U} \setminus \{U_0\}}\!P^{\M}(u^{\M})\Big),
    \end{gathered}
    \end{equation}

    Next, we define $\mathbf{v}_0\in \domm{\mathbf{V}}{}$ such that the conditions in the lemma hold.
    
    \begin{itemize}
        \item For any path $\hat{p}\in \widetilde{\mathcal{P}}$ and any node $W$ on the path $\hat{p}$ that is not a starting node for path $\hat{p}$ we define:
        \begin{equation*}
            \mathbf{v}_0[W[\hat{p}]]:=0;
        \end{equation*}
        \item For any variable $S \in \S$, we define
        \begin{equation*}
            \mathbf{v}_0^{\M}[S] := 0;
        \end{equation*}
        \item For the remaining part of $\mathbf{v}_0$, we choose a realization such that for the selected $\mathbf{v}_0$, there exists a realization for the unobserved variables $\mathbf{U}$ that ensures $\mathbb{I}(I)(S)=0$ for all $S\in \S$. This is clearly possible due to the definition of $\mathbb{I}(S)$.
    \end{itemize}
    
    Assume $r$ and $t$ are such that $\gamma_{j_r} = (0, 0, \dots, 0)$ and $\gamma_{j_t} = (2, 0, \dots, 0)$.  To finish the proof of the lemma, it is enough to show that $\widetilde{\eta}_{j_r}(\mathbf{v}_0)$ and $\widetilde{\eta}_{j_t}(\mathbf{v}_0)$ are two different polynomial functions of parameter $\epsilon$. We prove that those two polynomials are different by showing that $\widetilde{\eta}_{j_r}(\mathbf{v}_0)\neq\widetilde{\eta}_{j_t}(\mathbf{v}_0)$ for $\epsilon=0$. 
    
    We only need to consider the non-zero terms in Equation \eqref{eq: eta simplification subcase 3}. From \eqref{eq: eta simplification subcase 3}, we have
    \begin{equation}
    \label{eq: main term lemma subcase 3}
    \begin{gathered}
         \prod_{\hat{U}[\hat{p}]\in \mathbf{U}[\widetilde{\mathcal{P}}]} f_{\hat{p}}(\hat{U})
        \prod_{\hat{X}[\hat{p}] \in \mathbf{S}[\widetilde{\mathcal{P}}]} f_{\hat{p}}(\hat{X}[\hat{p}]|\Pa{\hat{X}}{\hat{p}})\\
        \times\Big(
         \prod_{X \in \mathbf{S}} P^{\M}(x^{\M} \mid \Pa{X}{\G}) \prod_{U\in \mathbf{U} \setminus \{U_0\}}\!P^{\M}(u^{\M})\Big).
    \end{gathered}
    \end{equation}
    Note that $f_{\hat{p}}(\hat{U})=\frac{1}{\kappa+1}$ and $f_{\hat{p}}(\hat{X}|\Pa{\hat{X}}{\hat{p}})$ is non-zero only:
    \begin{itemize}
        \item when $\hat{p} = p_T$, $\hat{X}$ is a child of $T$ on the path $p_T$, and
        \begin{equation*}
            \hat{X}[\hat{p}] \equiv T^{\M} \pmod{\kappa + 1}.
        \end{equation*}

        \item when $\hat{p} = p_T$, $\hat{X}=Z'$, and
        \begin{equation*}
            Z'[\hat{p}] \equiv W'[\hat{p}] + W''[\widetilde{p}] \pmod{\kappa + 1},
        \end{equation*}
        where $W'$ is a parent of $Z'$ on the path $p_T$ and $W''$ is a parent of $Z'$ on the path $\widetilde{p}$.
        
        \item when there exists a variable $W\in \mathbf{F}$ such that $\hat{p}'=\widetilde{p}_W$, $\hat{X}$ is a child of $W$ in path $\widetilde{p}_W$, and
        $$
        \hat{X}[\hat{p}] \equiv W[\widetilde{p}] \pmod{\kappa+1}.
        $$
        
        \item when the following holds
        $$
        \hat{X}[\hat{p}] \equiv \sum_{\hat{X}'\in \Pa{\hat{X}}{\hat{p}}\setminus\{W\}}\hat{X}'[\hat{p}] \pmod{\kappa+1}.
        $$
    \end{itemize}
    Similarly, $P^\M(X|\Pa{X}{\G})$ is non-zero
    \begin{itemize}
        \item if $\mathbb{I}(X)=1$ (i.e. $P^\M(X^{\M}|\Pa{X}{\G})=\frac{1}{\kappa+1}$), or
        
        \item if $X^{\M} \equiv M(X) \pmod{\kappa+1}$ for $P^\M(X^{\M}|\Pa{X}{\G})$.
    \end{itemize}

    Let fix a realization $\mathbf{u} \in \domm{\mathbf{U}\setminus\{U_0^{M}\}}{}$. We consider two scenarios:
    
    \textbf{I)} Assume that for this realization, there is a variable $S\in \S$, such that $\mathbb{I}(S)=1$ and $S$ is the closest variable to $U_0$ considering only paths with bidirected edges in $\G'[\S]$. The value of $S^{\M}$ does not depend on its parents because of $\mathbb{I}(S)=1$ and Equation \eqref{eq: def P(S|Pa(S)) gid}.
    Additionally in the graph $\G'[\S]$ there exists a path $U_0, \hat{S}_1', \hat{U}_1', \hat{S}_2', \hat{U}_2', \dots, \hat{U}_{l'}', S$, such that $U_0$ is a parent of $\hat{S}_1'\in \S$, $S$ is a child of $\hat{U}_l'\in \U^{\S}$, and $\hat{U}_j'\in \U^{\S}$ is a parent of variables $\hat{S}_j'\in \S$ and  $\hat{S}_{j+1}'\in \S$ for $j \in [1:l'-1]$. 
    Let $\hat{\mathbf{U}}':=\{\hat{U}_1', \dots, \hat{U}_l'\}$. 
    We define $\mathbf{u}' \in \domm{\mathbf{U}\setminus\{U_0^{\M}\}}{}$ 
    that is consistent with $\mathbf{u}$ except the variables in $\U$. For these variables, we define
    \begin{equation}
        \begin{split}
            & \mathbf{u}'^{\M}[\hat{U}_j] := \mathbf{u}^{\M}[\hat{U}_j] + 2(-1)^{j} \pmod{\kappa+1}, \quad j\in[1:l'],\\
        \end{split}
    \end{equation}
    
    With this modification for any $\widetilde{S}\in \mathbf{S}$, we have
    \begin{equation*}
    P^{\M}(\widetilde{s}|\Pa{\widetilde{S}}{\G})\Big|_{(\mathbf{U})=(\mathbf{u}, \gamma_{l_1})}=P(\widetilde{s}|\Pa{\widetilde{S}}{\G})\Big|_{(\mathbf{U})=(\textbf{u}', \gamma_{l_2})}.
    \end{equation*}
    Therefore for all such realizations of $\mathbf{u}$ the summation of the following terms for both $\widetilde{\eta}_{j_r}(\mathbf{v}_0)$ and $\widetilde{\eta}_{j_t}(\mathbf{v}_0)$ will be the same,
    \begin{equation}
    \label{eq: main term eta subcase 3}
    \begin{gathered}
         \prod_{\hat{U}[\hat{p}]\in \mathbf{U}[\widetilde{\mathcal{P}}]} f_{\hat{p}}(\hat{U})
        \prod_{\hat{X}[\hat{p}] \in \mathbf{S}[\widetilde{\mathcal{P}}]} f_{\hat{p}}(\hat{X}[\hat{p}]|\Pa{\hat{X}}{\hat{p}}) \times\\
        \times\Big(
         \prod_{X \in \mathbf{S}} P^{\M}(x^{\M} \mid \Pa{X}{\G}) \prod_{U\in \mathbf{U} \setminus \{U_0\}}\!P^{\M}(u^{\M})\Big).
    \end{gathered}
    \end{equation}

    \textbf{II)} Assume that for all $S\in \S$, we have $\mathbb{I}(S)=0$. We consider a realization $U_0^{\M}=\gamma_{j_r}$ and $\mathbf{u}$ such that:
    \begin{itemize}
        \item $\mathbf{u}[\U^{\S}] = \mathbf{0}$.
        \item for all $U\in \mathbf{U}$ and any path $\hat{p}\in \mathcal{P}$ which contains $U$, $\mathbf{u}[U[\hat{p}]] = 0$.
    \end{itemize}
    We claim that for such $\mathbf{u}$,
    \begin{equation*}
    \begin{gathered}
        \prod_{\hat{U}[\hat{p}]\in \mathbf{U}[\widetilde{\mathcal{P}}]} f_{\hat{p}}(\hat{U})
        \prod_{\hat{X}[\hat{p}] \in \mathbf{S}[\widetilde{\mathcal{P}}]} f_{\hat{p}}(\hat{X}[\hat{p}]|\Pa{\hat{X}}{\hat{p}}) \times\\
        \times\Big(
         \prod_{X \in \mathbf{S}} P^{\M}(x^{\M} \mid \Pa{X}{\G}) \prod_{U\in \mathbf{U} \setminus \{U_0\}}\!P^{\M}(u^{\M})\Big).
    \end{gathered}
    \end{equation*}
    is non-zero. To prove this claim we consider 5 cases:
    \begin{itemize}
        \item assume that $\hat{p}=p_T$ and $\hat{X}=Z'$.  Denote by $W'$ parent of $Z'$ on the path $p_T$ and by $W''$ parent of $Z'$ on the path $\widetilde{p}$. From the definition of $\mathbf{u}$ and $\mathbf{v_0}$, we get
        \begin{equation*}
            \hat{X}[\hat{p}] \equiv W'[\hat{p}] + W''[\widetilde{p}],
        \end{equation*}
        and therefore $f_{\hat{p}(\hat{X}[\hat{p}]|\Pa{\hat{X}}{\hat{p}})} = 1$.
        The latter is true because $\hat{X}[\hat{p}]\equiv W'[\hat{p}]\equiv W''[\widetilde{p}]\equiv 0 \pmod{\kappa+1}$.
        
        \item assume that $\hat{p} = p_T$ and $\hat{X}$ is a child of T. From the definition of $\mathbf{u}$ and $\mathbf{v}_0$ we get
        \begin{equation*}
            \hat{X} \equiv T^{\M} \pmod{\kappa+1},
        \end{equation*}
        and therefore $f_{\hat{p}(\hat{X}[\hat{p}]|\Pa{\hat{X}}{\hat{p}})} = 1$. The above holds because all the variables in the above equation are zero.
        
        \item  assume that $\hat{p}\in \widetilde{\mathcal{P}}$ and exists a variable $W$ such that $\hat{p}=\widetilde{p}_W$. Let $\hat{X}$ is a child of $W$ in a path $\widetilde{p}_W$. From the definitions of $\mathbf{u}$ and $\mathbf{v}_0$, we get
        $$
        \hat{X}[\hat{p}] \equiv W[\hat{p}] \pmod{\kappa+1},
        $$ 
        and therefore $f_{\hat{p}}(\hat{X}[\hat{p}]|\Pa{\hat{X}}{\hat{p}})=1$. Again, the above holds because all the terms are zero.
        
        \item assume that $\hat{p}\in \widetilde{\mathcal{P}}$ and $\hat{X}$ is a variable on this path such that it is neither a starting node of the path $\hat{p}$ nor a child of a starting node on the path $\hat{p}$. Then, from the definitions of $\mathbf{v}_0$ and $\mathbf{u}$, we get
        $$
        \hat{X}[\hat{p}] \equiv \sum_{\hat{X}\in \Pa{\hat{X}}{\hat{p}}\setminus\{W\}}\hat{X}[\hat{p}] \pmod{\kappa+1},
        $$
        and therefore $f_{\hat{p}}(\hat{X}[\hat{p}]|\Pa{\hat{X}}{\hat{p}})=1$. Again, the above holds because all the terms are zero.
        
        \item assume $X\in \S$. Then, from the definitions of $\mathbf{v}_0$ and $\mathbf{u}$, we get
        $$
        X^{\M} \equiv M(X) \pmod{\kappa+1},
        $$
        and consequently $P^{\M}(x^{\M}|\Pa{X}{\G})=1$.
    \end{itemize}

    Now we consider the case when $U_0^\M = \gamma_{j_t}$. 
    Note that the following term depends only the realization of $\mathbf{U}^{\M}$ and $\mathbf{v}_0^{\M}$.
    \begin{equation*}
        \prod_{X \in \mathbf{S}} P^{\M}(x^{\M} \mid \Pa{X}{\G}) \prod_{U\in \mathbf{U} \setminus \{U_0\}}\!P^{\M}(u^{\M}).
    \end{equation*}
    However by the proof of Lemma 6 \cite{kivva2022revisiting} we know that there is no realization of $\mathbf{U}^{\M}$ such that:
    \begin{itemize}
        \item $\mathbb{I}(S)=0$ for all $S\in \S$, and
        \item $U_0^{\M}=\gamma_{j_t}$, and
        \item $x^{\M} \equiv M(X) (\kappa + 1)$ for all $X\in \S$. The latter is a necessary condition for $P^{\M}(x|\Pa{X}{\G})$ being non-zero. 
    \end{itemize}

    To summarize, we showed that for $U_0^{\M}=\gamma_{j_r}$, Equation \eqref{eq: main term eta subcase 3} is non-zero while it is zero for $U_0^{\M}=\gamma_{j_t}$. This implies that $\widetilde{\eta}_{j_r}(\mathbf{v}_0)\neq\widetilde{\eta}_{j_t}(\mathbf{v}_0)$ for $\epsilon=0$.
\end{proof}

\subsubsection{Proof of Lemma \ref{lemma: construct models subcase 3}}
\begin{customlem}{\ref{lemma: construct models subcase 3}}
    Let $\mathbf{S}: = \Anc{\Y', \Z'}{\G[\mathbf{V} \setminus \X']}$ and $\D$ is a set of all nodes on the paths in $\mathcal{P}$ excluding $\Z'$. Then,
    \begin{equation}
        P_{\x'}(\widetilde{\dd}|\mathbf{s}\setminus \widetilde{\dd})=\frac{
        Q[\mathbf{S}]
        }{
        \sum_{\D} Q[\mathbf{S}]
        } = Q[\D|\mathbf{S}\setminus\D]
    \end{equation}
    is not c-gID from $(\mathbb{A}, \G)$.
\end{customlem}
\begin{proof}
    We will show  that $Q[\D|\mathbf{S}\setminus\D]$ is not c-gID from $(\mathbb{A}', \G)$, where $\mathbb{A}' := \mathbb{A}\cup\{\mathbf{S}_i\}_{i=2}^{n}$. 
    To this end, we will specify two models $\M_1$ and $\M_2$ such that for each $i \in [0:m']$ and any $\mathbf{v}\in \domm{\mathbf{V}}{}$:
    \begin{align}
        \label{eq: equal known dist subcase 3}
        Q^{\M_1}[\A'_i](\mathbf{v}) &= Q^{\M_2}[\A'_i](\mathbf{v}),\\
        \label{eq: equal denom num subcase 3}
        \sum_{\D}Q^{\M_1}[\mathbf{S}](\mathbf{v}') &= \sum_{\D}Q^{\M_2}[\mathbf{S}](\mathbf{v}'),
    \end{align}
    but there exists $\mathbf{v}_0 \in \domm{\mathbf{V}}{}$ such that:
    \begin{equation}\label{eq: not equal num subcase 3}
        Q^{\M_1}[\mathbf{S}](\mathbf{v}_0) \neq Q^{\M_2}[\mathbf{S}](\mathbf{v}_0).
    \end{equation}
    Note that using Equations (\ref{eq: equal denom num subcase 3})-(\ref{eq: not equal num subcase 2}) yield
    \begin{equation*}
         Q[\D|\mathbf{S}\setminus\D]^{\M_1}(\mathbf{v}_0) \neq Q[\D|\mathbf{S}\setminus\D]^{\M_2}(\mathbf{v}_0).
    \end{equation*}
    This means that $Q[\D|\mathbf{S}\setminus\D]$ is not c-gID from $(\mathbb{A}', \G)$.

    Two this end, we consider two cases.

    \textbf{First case:} \\
    Suppose that there exists $i \in [0, m]$, such that $\widecheck{\mathbf{S}} \subset \mathbf{A}_i$.
    Further we consider models $\widetilde{\M}_1$ and $\widetilde{\M}_2$ constructed in Section \ref{sec: appendix new models subcase 3}. According to the definitions of models $\widetilde{\M}_1$ and $\widetilde{\M}_2$ for any $\mathbf{v}\in \domm{\mathbf{V}}{}$, and any $i\in [0:m']$, and any $g\in\{1, 2\}$:
    \begin{align*}
        & Q[\mathbf{A}'_i]^{\widetilde{\M}_g}(\mathbf{v}) := \sum_{U_0^\M}P^{\M_g}(u_0^{\M})\sum_{U_0[\widetilde{\mathcal{P}}]}\prod_{\hat{p}\in \widetilde{\mathcal{P}}_{U_0}}f_{\hat{p}}(U_0[\hat{p}])\sum_{\mathbf{U} \setminus \{U_0\}} \prod_{X \in \A_i'} P^{\widetilde{\M}}(x \mid \Pa{X}{\G}) \prod_{U\in \mathbf{U} \setminus \{U_0\}} P^{\widetilde{\M}}(u),\\
        \sum_{\D}&Q^{\widetilde{\M}_g}[S](\mathbf{v}) := \sum_{U_0^\M}P^{\M_g}(u_0^{\M})\sum_{U_0[\widetilde{\mathcal{P}}]}\prod_{\hat{p}\in \widetilde{\mathcal{P}}_{U_0}}f_{\hat{p}}(U_0[\hat{p}])\sum_{\D}\sum_{\mathbf{U} \setminus \{U_0\}} \prod_{X \in \mathbf{S}} P^{\widetilde{\M}}(x \mid \Pa{X}{\G}) \prod_{U\in \mathbf{U} \setminus \{U_0\}} P^{\widetilde{\M}}(u),\\
        & Q^{\widetilde{\M}_g}[\mathbf{S}](\mathbf{v}) := \sum_{U_0^\M}P^{\M_g}(u_0^{\M})\sum_{U_0[\widetilde{\mathcal{P}}]}\prod_{\hat{p}\in \widetilde{\mathcal{P}}_{U_0}}f_{\hat{p}}(U_0[\hat{p}])\sum_{\mathbf{U} \setminus \{U_0\}} \prod_{X \in \mathbf{S}} P^{\widetilde{\M}}(x \mid \Pa{X}{\G}) \prod_{U\in \mathbf{U} \setminus \{U_0\}} P^{\widetilde{\M}}(u).
    \end{align*}
    We can re-writing the above equations using the notations of $\widetilde{\theta}_{i, j}$, $\widetilde{\phi}_{j}$, and $\widetilde{\eta}_{j}$,
    \begin{align*}
        & Q[\mathbf{A}'_i]^{\widetilde{\M}_1}(\mathbf{v}) = \sum_{j=1}^d \frac{1}{d}\widetilde{\theta}_{i, j}(\mathbf{v}), \\
        & Q[\mathbf{A}'_i]^{\widetilde{\M}_2}(\mathbf{v}) = \sum_{j=1}^d p_j\widetilde{\theta}_{i, j}(\mathbf{v}), \\
        \sum_{\D} &Q[\mathbf{S}]^{\widetilde{\M}_1}(\mathbf{v}) = \sum_{j=1}^d \frac{1}{d}\widetilde{\phi}_{j}(\mathbf{v}[\D^{\dagger}]), \\
        \sum_{\D} 
        &Q[\mathbf{S}]^{\widetilde{\M}_2}(\mathbf{v}) = \sum_{j=1}^d p_j\widetilde{\phi}_{j}(\mathbf{v}[\D^{\dagger}]), \\
        & Q[\mathbf{S}]^{\widetilde{\M}_1}(\mathbf{v}) = \sum_{j=1}^d \frac{1}{d}\widetilde{\eta}_{j}(\mathbf{v}), \\
        & Q[\mathbf{S}]^{\widetilde{\M}_2}(\mathbf{v}) = \sum_{j=1}^d p_j\widetilde{\eta}_{j}(\mathbf{v}).
    \end{align*}
    The above equations imply the following equations.
    \begin{align*}
        & Q^{\widetilde{M}_2}[\A'_i](\mathbf{v}) - Q^{\widetilde{\M}_1}[\A'_i](\mathbf{v}) = \sum_{j=1}^d (p_j - \frac{1}{d}) \widetilde{\theta}_{i,j}(\mathbf{v})
        \\
        \sum_{\D} & Q[\mathbf{S}]^{\widetilde{\M}_2}(\mathbf{v}) - \sum_{\D} Q[\mathbf{S}]^{\widetilde{\M}_1}(\mathbf{v}) = \sum_{j=1}^d (p_j - \frac{1}{d}) \widetilde{\phi}_{j}(\mathbf{v}[\D^{\dagger}])
        \\
        & Q^{\widetilde{\M}_2}[\mathbf{S}](\mathbf{v}_0) -  Q^{\widetilde{\M}_1}[\mathbf{S}](\mathbf{v}_0) = \sum_{j=1}^d (p_j - \frac{1}{d}) \widetilde{\eta}_{j}(\mathbf{v}_0)
        \\
        & \sum_{j=1}^d p_j - 1 = \sum_{j=1}^d (p_j - \frac{1}{d}).
    \end{align*}

     To prove the statement of the lemma it suffices to solve a following system of linear equations over parameters $\{p_j\}_{j=1}^d$ and show that it admits a solution. 
    \begin{align*}
        & \sum_{j=1}^d (p_j - \frac{1}{d}) \widetilde{\theta}_{i,j}(\mathbf{v}) = 0, \hspace{0.2cm}\forall \mathbf{v} \in \domm{\mathbf{V}}{}, i\in [0:m'],
        \\
        & \sum_{j=1}^d (p_j - \frac{1}{d}) \widetilde{\phi}_{j}(\dd^{\dagger}) = 0, \hspace{0.2cm}\forall \dd^{\dagger} \in \domm{\D^{\dagger}}{}, i\in [0:m'],
        \\
        & \sum_{j=1}^d (p_j - \frac{1}{d}) \widetilde{\eta}_{j}(\mathbf{v}_0) \neq 0, \hspace{0.2cm} \exists \mathbf{v}_0 \in \domm{\mathbf{V}}{},
        \\
        & (p_j - \frac{1}{d}) = 0,
        \\
        & 0<p_j<1, \hspace{0.2cm} \forall j \in [1:d].
    \end{align*}
     Analogous to the proof of Lemma \ref{lemma: construct models subcase 1}, we use Lemmas \ref{lemma: theta indices subcase 3},  \ref{lemma: phi indices subcase 3}, and  \ref{lemma: eta indices subcase 3} instead of Lemmas \ref{lemma: gid equal indices}, \ref{lemma: equal indices summation subcase 1} and \ref{lemma: gid not equal indices} respectively and conclude the result.  

    \textbf{Second case:}\\
    Suppose that there is no $i \in [0, m]$, such that $\S \subset \mathbf{A}_i$. This case we solve exactly the same as the \textbf{Second case} of the Lemma \ref{lemma: construct models subcase 1}.
\end{proof}

\subsection{Proof of Lemma \ref{lemma: eliminate var in cond}}

\renewcommand{\V}{\mathbf{V}}
\renewcommand{\U}{\mathbf{U}}
\begin{customlem}{\ref{lemma: eliminate var in cond}}
     Suppose that $\X$, $\Y$ and $\Z$ are disjoint subsets of $\mathbf{V}$ in graph $\G$ and variables $Z_1 \in \Z$, $Z_2 \in \Y \cup \Z$, such that there is a directed edge from $Z_1$ to $Z_2$ in $\G$. If the causal effect $P_{\x}(\y|\z)$ is not c-gID from $(\mathbb{A}, \G)$, then the causal effect $P_{\x}(\y|\z\setminus\{z_1\})$ is also not c-gID from $(\mathbb{A}, \G)$. 
\end{customlem}
\begin{proof}
By the basic probabilistic manipulations, we get
\begin{equation*}
\begin{split}
    & P_{\x}(\y|\z) = \frac{P_x(\y, \z)}{P_{x}(\z)},\\
    & P_{\x}(\y|\z\setminus\{z_1\}) = \frac{P_x(\y, \z\setminus\{z_1\})}{P_{x}(\z\setminus\{z_1\})}.
\end{split}
\end{equation*}
Using Markov factorization property in graph $\G$, we have
\begin{equation*}
\begin{split}
    & P_{\x}(\y, \z) = 
    \sum_{\V\setminus(\X\cup \Y \cup \Z)} \sum_{\U} \prod_{W \in \V\setminus \X}P(w \mid \Pa{W}{\G}) \prod_{U \in \U} P(u),
    \\
    & P_{\x}(\z) = 
    \sum_{\V\setminus(\X \cup \Z)} \sum_{\U} \prod_{W \in \V\setminus \X}P(w \mid \Pa{W}{\G}) \prod_{U \in \U} P(u).
\end{split}
\end{equation*}
And similarly, we have
\begin{equation}
\begin{split}
    \label{eq: P_x(y, z/z1)}
    & P_{\x}(\y, \z\setminus\{Z_1\}) = 
    \sum_{Z_1}\sum_{\V\setminus(\X\cup \Y \cup \Z)} \sum_{\U} \prod_{W \in \V\setminus \X}P(w \mid \Pa{W}{\G}) \prod_{U \in \U} P(u),
    \\
    & P_{\x}(\z\setminus\{Z_1\}) = 
    \sum_{Z_1}\sum_{\V\setminus(\X \cup \Z)} \sum_{\U} \prod_{W \in \V\setminus \X}P(w \mid \Pa{W}{\G}) \prod_{U \in \U} P(u).
\end{split}
\end{equation}
Since $P_{\x}(\y|\z)$ is not c-gID from $(\mathbb{A}, \G)$, there exists $\M_1$ and $\M_2$ such that
\begin{equation*}
    Q^{\M_1}[\A_i](\mathbf{v}) = Q^{\M_2}[\A_i](\mathbf{v}),\; \forall \mathbf{v}\in \dom{\V}{},\; \forall i \in [0: m],
\end{equation*}
\begin{equation*}
    P_{\x}^{\M_1}(\y|\z)\neq P_{\x}^{\M_2}(\y|\z),\; \exists \x \in \dom{\X}{}, \; \exists \y \in \dom{\Y}{}.
\end{equation*}
Using $\M_1$ and $\M_2$, we construct two models $\M_1'$ and $\M_2'$. To do so, we first take any surjective function $F\!:\: \dom{Z_1}{}\rightarrow \{0, 1\}$ and define a function $\Psi \!:\: \{0, 1\}\times \dom{Z_1}{} \rightarrow (0,1)$ that satisfies $\Psi(0, z_1)+\Psi(1, z_1)=1$ for any $z_1 \in \dom{Z_1}{}$.

For any node $S$ that either belongs to the set of unobserved variables  or belongs to $\V \setminus (\{Z_2\}\cup \Ch{Z_2}{\G})$, we define
\begin{equation*}
    P^{\M_i'}(s|\Pa{S}{\G}) := P^{\M_i}(s|\Pa{S}{\G}), \quad i \in \{1, 2\}.
\end{equation*}
The domain of $Z_2$ in $\M_i'$ is defined as $\dom{Z_2}{}^{\M}\times \{0, 1\}$, where $\dom{Z_2}{}^{\M}$ is the domain of $Z_2$ in $\M$ (either $\M_1$ or $\M_2$). 
For $z_2 \in \dom{Z_2}{}^\M$, $i\in \{1,2\}$, and $k\in \{0, 1\}$, we define
\begin{equation*}
    P^{\M_i'}((z_2, k) 
    \mid \Pa{Z_2}{\G}\setminus\{Z_1\}, z_1) :=
    P^{\M_i}(z_2 \mid \Pa{Z_2}{\G})\times \Psi(F(z_1)\oplus k, z_1).
\end{equation*}
Due to the property of function $\Psi$,  the above definitions are valid probabilities, i.e., for any realizations $(\Pa{Z_2}{\G}, z_1)$, the following holds
\begin{equation*}
    \sum_{k\in \{0,1\}} \sum_{z_2\in \dom{Z_2}{}^{\M}} P^{\M_i'}((z_2, k)|pa(Z_2), z_1) = 1.
\end{equation*}
For each $S \in \Ch{Z_2}{\G}$, we define:
\begin{equation*}
    P^{\M_i'}(s \mid \Pa{S}{\G}\setminus \{Z_2\}, (z_2, k)) :=
    P^{\M_i}(s \mid \Pa{S}{\G}\setminus \{Z_2\}, z_2), \quad i \in \{1, 2\}, k \in \{0, 1\}.
\end{equation*}

Next, we show that $Q^{\M_1'}[\A_i](\mathbf{v}) = Q^{\M_2'}[\A_i](\mathbf{v})$ for each $\mathbf{v}\in \dom{\V}{}$ and $i \in [0:m]$.
Suppose $\mathbf{v}$ is a realization of $\V$ in $\M'_1$ with realizations $z_1$ and $(z_2, k)$ for $Z_1$ and $Z_2$, respectively. 
Consider two cases: 
\begin{itemize}
    \item  $Z_2 \notin \A_i$: In this case, we have
    \begin{align*}
        Q^{\M_1'}[\A_i](\mathbf{v}) 
        &= \sum_{\U}\prod_{A\in \A_i} P^{\M_1'}(a \mid \Pa{A}{\G})\prod_{U\in \U}P^{\M_1'}(u) \\
        & = \sum_{\U} \prod_{A\in \A_i}P^{\M_1}(a \mid \Pa{A}{\G})\prod_{U\in \U} P^{\M_1}(u) 
        = Q^{\M_1}[\A_i](\mathbf{v}) 
        = Q^{\M_2}[\A_i](\mathbf{v}) \\
        & = \sum_{\U} \prod_{A\in \A_i} P^{\M_2}(a \mid \Pa{A}{\G}) \prod_{U\in \U} P^{\M_2}(u) \\
        &= \sum_{\U} \prod_{A\in \A_i} P^{\M_2'}(a \mid \Pa{A}{\G}) \prod_{U\in \U} P^{\M_2'}(u) \\ 
        &= Q^{\M_2'}[\A_i](\mathbf{v}).
    \end{align*}
    
    \item  $Z_2 \in \mathbf{A}_i$: In this case, we have
    \begin{align*}
        Q^{\M_1'}[\A_i](\mathbf{v}) 
        &= \sum_{\U} \prod_{A\in \A_i} P^{\M_1'} (a \mid \Pa{A}{\G})\prod_{U\in \U} P^{\M_1'}(u) \\
        & = \Psi\left(F(z_1)\oplus k, z_1\right) \sum_{\U} \prod_{A\in \A_i} P^{\M_1}(a \mid  \Pa{A}{\G}) \prod_{U\in \U}P^{\M_1}(u) \\
        &= \Psi(F(z_1)\oplus k, z_1) Q^{\M_1}[\A_i](\mathbf{v}) = \Psi(F(z_1)\oplus k, z_1) Q^{\M_2}[\A_i](\mathbf{v}) \\
        &= \Psi(F(z_1)\oplus k, z_1) \sum_{\U} \prod_{A\in \A_i} P^{M_2}(a \mid \Pa{A}{\G}) \prod_{U\in \U} P^{M_2}(u) \\
        &= \sum_{\U}\prod_{A\in \A_i} P^{\M_2'}(a \mid \Pa{A}{\G})) \prod_{U\in \U}P^{\M_2'}(u) \\
        &= Q^{\M_2'}[\A_i](\mathbf{v}).
    \end{align*}
\end{itemize}

Therefore, $Q^{\M_1'}[\A_i](\mathbf{v}) = Q^{\M_2'}[\A_i](\mathbf{v})$ for each $\mathbf{v}\in \dom{\V}{}$ and $i \in [0:m]$.

On the other hand, we know that there exists $\hat{\x} \in \dom{\X}{}^{\M}$, $\hat{\y} \in \dom{\Y}{}^{\M}$ and $\hat{\z} \in \dom{\Z}{}^{\M}$ such that $P^{\M_1}_{\hat{\x}}(\hat{\y}|\hat{\z})\neq  P^{\M_2}_{\hat{\x}}(\hat{\y}|\hat{\z})$.

According to Equations \eqref{eq: P_x(y, z/z1)}, we have
\begin{align*}
    & P_{\x}^{\M_i'}(\y, \z\setminus\{Z_1\}) = 
    \sum_{z_1 \in \dom{Z_1}{}}\sum_{\V\setminus(\X\cup \Y \cup \Z)} \sum_{\U} P^{\M_i'}((z_2, k)|\Pa{Z_2}{\G}) \prod_{W \in \V\setminus (\X\cup \{Z_2\})}P^{\M_i'}(w \mid \Pa{W}{\G}) \prod_{U \in \U} P(u)\\
    & = \sum_{z_1 \in \dom{Z_1}{}}\sum_{\V\setminus(\X\cup \Y \cup \Z)} \sum_{\U} \Psi(F(z_1)\oplus k, z_1)P^{\M_i}(z_2|\Pa{Z_2}{\G}) \prod_{W \in \V\setminus (\X\cup \{Z_2\})}P^{\M_i}(w \mid \Pa{W}{\G}) \prod_{U \in \U} P(u)\\
    & =  \sum_{z_1 \in \dom{Z_1}{}}\Psi(F(z_1)\oplus k, z_1)
    \sum_{\V\setminus(\X\cup \Y \cup \Z)} \sum_{\U} P^{\M_i}(z_2|\Pa{Z_2}{\G}) \prod_{W \in \V\setminus (\X\cup \{Z_2\})}P^{\M_i}(w \mid \Pa{W}{\G}) \prod_{U \in \U} P(u)\\
    & = \sum_{z_1 \in \dom{Z_1}{}}\Psi(F(z_1)\oplus k, z_1) P^{\M_i}_\x(\y, \z).
\end{align*}
Let us denote $\dom{Z_1}{} = \{\alpha_1, \alpha_2, \dots, \alpha_n\}$. For $z_1=\alpha_j$ and $j \in [1:n]$, we also denote
\begin{align*}
    & \psi_{j} := \Psi(F(\alpha_j)\oplus 0, \alpha_j),\\
    & \beta_j^{\M_i}:=P^{\M_i}_{\hat{\x}}(\hat{\y}, \hat{\z}[\Z\setminus\{Z_1\}], \alpha_j) .
\end{align*}
This leads to
\begin{align*}
    & P_{\x}^{\M_i}(\y, \z\setminus\{Z_1\}) = \sum_{j=1}^n\psi_j\beta_j^{\M_i},
\end{align*}
for realizations $\y$ consistent with $\hat{\y}$, realization $\x$ consistent with $\hat{\x}$, $\z$ consistent with $\hat{\z}$, and $Z_2 = (z_2, k)$ consistent with $\hat{\y}\cup\hat{\z}$ and $k=0$. Recall that $\psi_j$ is a real number from the interval $(0, 1)$. 
Note that $\psi_j$ is independent from any other $\psi_l$ for $l\neq j$.

Next, we consider two cases:
\begin{itemize}
    \item Assume that $Z_2\in \Z$. In this case, we have
    \begin{align*}
        & P_{\x}^{\M_i'}(\z\setminus\{Z_1\}) = 
        \sum_{z_1 \in \dom{Z_1}{}}\sum_{\V\setminus(\X \cup \Z)} \sum_{\U} P^{\M_i'}((z_2, k)|\Pa{Z_2}{\G}) \prod_{W \in \V\setminus (\X\cup \{Z_2\})}P^{\M_i'}(w \mid \Pa{W}{\G}) \prod_{U \in \U} P(u)\\
        & = \sum_{z_1 \in \dom{Z_1}{}}\sum_{\V\setminus(\X \cup \Z)} \sum_{\U} \Psi(F(z_1)\oplus k, z_1)P^{\M_i}(z_2|\Pa{Z_2}{\G}) \prod_{W \in \V\setminus (\X\cup \{Z_2\})}P^{\M_i}(w \mid \Pa{W}{\G}) \prod_{U \in \U} P(u)\\
        & =  \sum_{z_1 \in \dom{Z_1}{}}\Psi(F(z_1)\oplus k, z_1)
        \sum_{\V\setminus(\X\cup \Z)} \sum_{\U} P^{\M_1}(z_2|\Pa{Z_2}{\G}) \prod_{W \in \V\setminus (\X\cup \{Z_2\})}P^{\M_i}(w \mid \Pa{W}{\G}) \prod_{U \in \U} P(u)\\
        & = \sum_{z_1 \in \dom{Z_1}{}}\Psi(F(z_1)\oplus k, z_1) P^{\M_i}(\z).
    \end{align*}
    For $j\in [1:n]$ and $z_1 = \alpha_j$, we denote
    \begin{align*}
         & \gamma_j^{\M_i}:=P^{\M_i}_{\hat{\x}}(\hat{\z}[\Z\setminus\{Z_1\}], \alpha_j),
    \end{align*}
    which leads to
    \begin{align*}
        & P_{\x}^{\M_i}(\z\setminus\{Z_1\}) = \sum_{j=1}^n\psi_j\gamma_j^{\M_i},
    \end{align*}
    for realizations $\y$ consistent with $\hat{\y}$, realization $\x$ consistent with $\hat{\x}$, $\z$ consistent with $\hat{\z}$, and $Z_2 = (z_2, k)$ consistent with $\hat{\y}\cup\hat{\z}$ and $k=0$.
    Thus, for such realizations, we have
    $$
    P_{\x}^{\M_i}(\hat{\y}|\hat{\z}\setminus\{Z_1\}) = \frac{\sum_{j=1}^n\psi_j\beta_j^{\M_i}}{\sum_{j=1}^n\psi_j\gamma_j^{\M_i}}.
    $$
    By the assumption of the lemma, there exists $j\in [1:n]$ such that
    \begin{equation*}
        \frac{\beta_j^{\M_1}}{\gamma_{j}^{\M_1}} \neq \frac{\beta_j^{\M_2}}{\gamma_{j}^{\M_2}},
    \end{equation*}
    or equivalently,
    \begin{equation*}
        \beta_j^{\M_1} \gamma_{j}^{\M_2} \neq \beta_j^{\M_2} \gamma_{j}^{\M_1}.
    \end{equation*}
    Without loss of generality, we assume that the aforementioned inequality holds for $j=1$.
    Next, we prove that there exists a parameters $\{\psi_{j}\}_{j=1}^{n}$ such that
    \begin{equation*}
        \frac{\sum_{j=1}^n\psi_j\beta_j^{\M_1}}{\sum_{j=1}^n\psi_j\gamma_j^{\M_1}} \neq \frac{\sum_{j=1}^n\psi_j\beta_j^{\M_2}}{\sum_{j=1}^n\psi_j\gamma_j^{\M_2}},
    \end{equation*}
    or equivalently,
    \begin{equation*}
        \sum_{j=1}^n\psi_j\beta_j^{\M_1} \sum_{j=1}^n\psi_j\gamma_j^{\M_2} - \sum_{j=1}^n\psi_j\beta_j^{\M_2} \sum_{j=1}^n\psi_j\gamma_j^{\M_1} \neq 0.
    \end{equation*}
    Note that the left hand side is a quadratic equation with respect to parameter $\psi_1$, e.g.,
    \begin{equation*}
        (\beta_1^{\M_1} \gamma_{1}^{\M_2} - \beta_1^{\M_2} \gamma_{1}^{\M_1})\psi_1^2
    \end{equation*}
    Since $\beta_1^{\M_1} \gamma_{1}^{\M_2} - \beta_1^{\M_2} \gamma_{1}^{\M_1}\neq0$, then we can find  $\{\psi_{j}\}_{j=1}^{n}$, such that
    \begin{equation*}
        \sum_{j=1}^n\psi_j\beta_j^{\M_1} \sum_{j=1}^n\psi_j\gamma_j^{\M_2} - \sum_{j=1}^n\psi_j\beta_j^{\M_2} \sum_{j=1}^n\psi_j\gamma_j^{\M_1} \neq 0.
    \end{equation*}
    This is possible because $\psi_i\in(0,1)$. This concludes the proof of the lemma for this case.

    \item Assume that $Z_2\in \Y$. Suppose that $P_{\x}^{\M_1}(\y, z_1|\z\setminus\{Z_1\}) = P_{\x}^{\M_2}(\y, z_1|\z\setminus\{Z_1\})$ for all $\x\in \dom{\X}{}$, $\y\in \dom{\Y}{}$ and $\z\in \dom{\Z}{}$. Then,
    \begin{equation*}
        P^{\M_1}_{\x}(\y|\z) = \frac{P^{\M_1}(\y, z_1|\z\setminus\{Z_1\})}{P^{\M_1}_{\x}(z_1|\z\setminus\{Z_1\})} = \frac{P^{\M_1}(\y, z_1|\z\setminus\{Z_1\})}{P^{\M_2}_{\x}(z_1|\z\setminus\{Z_1\})} = P^{\M_2}_{\x}(\y|\z).
    \end{equation*}
    This is impossible as $P^{\M_1}_{\hat{\x}}(\hat{\y}|\hat{\z})\neq P^{\M_2}_{\hat{\x}}(\hat{\y}|\hat{\z})$. 
    Thus, there exist $\hat{x}' \in \dom{\X}{} $, $\hat{y}' \in \dom{\Y}{}$, and $\hat{z}' \in \dom{\Z}{}$, such that
    \begin{equation*}
        P_{\hat{\x}'}^{\M_1}(\hat{\y}', \hat{z}_1'|\hat{\z}'\setminus\{Z_1\}) \neq P_{\hat{\x}'}^{\M_2}(\hat{\y}', \hat{z}'_1|\hat{\z}'\setminus\{Z_1\}).
    \end{equation*}
    On the other hand, we have 
    \begin{align*}
        & P_{\x}^{\M_i'}(\y, \z) = 
       \sum_{\V\setminus(\X\cup \Y \cup \Z)} \sum_{\U} P^{\M_i'}((z_2, k)|\Pa{Z_2}{\G}) \prod_{W \in \V\setminus (\X\cup \{Z_2\})}P^{\M_i'}(w \mid \Pa{W}{\G}) \prod_{U \in \U} P(u)\\
        & = \sum_{\V\setminus(\X\cup \Y \cup \Z)} \sum_{\U} \Psi(F(z_1)\oplus k, z_1)P^{\M_i}(z_2|\Pa{Z_2}{\G}) \prod_{W \in \V\setminus (\X\cup \{Z_2\})}P^{\M_i}(w \mid \Pa{W}{\G}) \prod_{U \in \U} P(u)\\
        & = \Psi(F(z_1)\oplus k, z_1)
        \sum_{\V\setminus(\X\cup \Y \cup \Z)} \sum_{\U} P^{\M_i}(z_2|\Pa{Z_2}{\G}) \prod_{W \in \V\setminus (\X\cup \{Z_2\})}P^{\M_i}(w \mid \Pa{W}{\G}) \prod_{U \in \U} P(u)\\
        & =\Psi(F(z_1)\oplus k, z_1) P^{\M_i}_\x(\y, \z).
    \end{align*}
    For $j \in [1:n]$, we define
    \begin{equation*}
        {\beta_j'}^{\M_i}:=P^{\M_i}_{\hat{\x}}(\hat{\y}', \hat{\z}'
        [\Z\setminus\{Z_1\}], \alpha_j).
    \end{equation*}
    Suppose $m \in [1:n]$, where $z_1=\alpha_m$ and it is consistent with $\hat{\z}'$. We assign $k=0$ and denote
    \begin{equation*}
        {\beta_m'}^{\M_i}:=P_{\hat{x}'}^{\M_i}(\hat{\y}',\hat{\z}').
    \end{equation*}
    This results in
    \begin{equation*}
        P_{\mathbf{\hat{x}}'}^{\M_i'}(\hat{\y}', \hat{\z}') = {\beta_m'}^{\M_i}\psi_m.
    \end{equation*}
    We also have
    \begin{align*}
        & P_{\x}^{\M_i'}(\z\setminus\{Z_1\}) = 
        \sum_{z_1 \in \dom{Z_1}{}}\sum_{\V\setminus(\X \cup \Z)} \sum_{\U} P^{\M_i'}((z_2, k)|\Pa{Z_2}{\G}) \prod_{W \in \V\setminus (\X\cup \{Z_2\})}P^{\M_i'}(w \mid \Pa{W}{\G}) \prod_{U \in \U} P(u)\\
        & = \sum_{z_1 \in \dom{Z_1}{}}\sum_{\V\setminus(\X \cup \Z)} \sum_{\U} \Psi(F(z_1)\oplus k, z_1)P^{\M_i}(z_2|\Pa{Z_2}{\G}) \prod_{W \in \V\setminus (\X\cup \{Z_2\})}P^{\M_i}(w \mid \Pa{W}{\G}) \prod_{U \in \U} P(u)\\
        & =  \sum_{z_1 \in \dom{Z_1}{}}
        \sum_{\V\setminus(\X\cup \Z)} \sum_{\U} \sum_{k \in \{0, 1\}}\Psi(F(z_1)\oplus k, z_1) P^{\M_i}(z_2|\Pa{Z_2}{\G}) \prod_{W \in \V\setminus (\X\cup \{Z_2\})}P^{\M_i}(w \mid \Pa{W}{\G}) \prod_{U \in \U} P(u)\\
        & = \sum_{z_1 \in \dom{Z_1}{}} P^{\M_i}(\z).
    \end{align*}
    For $j\in [1:n]$ and $z_1 = \alpha_j$, we denote
    \begin{align*}
         & {\gamma'_j}^{\M_i}:=P^{\M_i}_{\hat{\x}}(\hat{\z}'[\Z\setminus\{Z_1\}]),
    \end{align*}
    and from the above equation, we get
    \begin{align*}
        & P_{\x}^{\M_i}(\z\setminus\{Z_1\}) = \sum_{j=1}^n\gamma_j^{\M_i},
    \end{align*}
    for realizations $\y$ consistent with $\hat{\y}'$, realization $\x$ consistent with $\hat{\x}'$, $\z$ consistent with $\hat{\z}'$, and $Z_2 = (z_2, k)$ consistent with $\hat{\y}'\cup\hat{\z}'$ and $k=0$. 
    We have 
    $$
    P_{\x}^{\M_i}(\hat{\y}|\hat{\z}\setminus\{Z_1\}) = \frac{\sum_{j=1}^n\psi_j{\beta_j'}^{\M_i}}{\sum_{j=1}^n\gamma_j^{\M_i}}.
    $$

    By the assumption of the lemma, we have
    \begin{equation*}
        \frac{{\beta_m'}^{\M_1}}{\sum_{j=1}^n\gamma_j^{\M_1}} \neq \frac{{\beta_m'}^{\M_2}}{\sum_{j=1}^n\gamma_j^{\M_2}}.
    \end{equation*}
    Next, we prove that there exists a set of parameters $\{\psi_{j}\}_{j=1}^{n}$, such that
    \begin{equation*}
        \frac{\sum_{j=1}^n\psi_j{\beta_j'}^{\M_1}}{\sum_{j=1}^n\gamma_j^{\M_1}} \neq \frac{\sum_{j=1}^n\psi_j{\beta_j'}^{\M_2}}{\sum_{j=1}^n\gamma_j^{\M_2}}
    \end{equation*}
    or equivalently,
    \begin{equation*}
        \frac{\sum_{j=1}^n\psi_j{\beta_j'}^{\M_1}}{\sum_{j=1}^n\gamma_j^{\M_1}} - \frac{\sum_{j=1}^n\psi_j{\beta_j'}^{\M_2}}{\sum_{j=1}^n\gamma_j^{\M_2}} \neq 0.
    \end{equation*}
    Note that left hand side of the above equation is  linear with respect to parameter $\psi_m$ with the following coefficient,
    \begin{equation*}
        \frac{{\beta_m'}^{\M_1}}{\sum_{j=1}^n\gamma_j^{\M_1}} - \frac{{\beta_m'}^{\M_2}}{\sum_{j=1}^n\gamma_j^{\M_2}}\neq 0.
    \end{equation*}
    This ensures that we can can find a realization of $\{\psi_{j}\}_{j=1}^{n}$, such that
    \begin{equation*}
        \frac{\sum_{j=1}^n\psi_j{\beta_j'}^{\M_1}}{\sum_{j=1}^n\gamma_j^{\M_1}} - \frac{\sum_{j=1}^n\psi_j{\beta_j'}^{\M_2}}{\sum_{j=1}^n\gamma_j^{\M_2}} \neq 0.
    \end{equation*}
    This concludes the proof of the lemma for second case.
\end{itemize}

\end{proof}


\subsection{Proof of Lemma \ref{lemma: for_the_main}}

\begin{customlem}{\ref{lemma: for_the_main}}
Suppose that $\X$, $\Y$ and $\Z$ are disjoint subsets of $\mathbf{V}$ in graph $\G$ and variables $Z_1 \in \Y$, $Z_2 \in \Y \cup \Z$, such that there is a directed edge from $Z_1$ to $Z_2$ in $\G$. If the causal effect $P_{\x}(\y|\z)$ is not c-gID from $(\mathbb{A}, \G)$, then the causal effect $P_{\x}(\y\setminus\{z_1\}|\z)$ is also not c-gID from $(\mathbb{A}, \G)$. 
\end{customlem}
\begin{proof}
By the basic probabilistic manipulations, we get
\begin{equation*}
\begin{split}
    & P_{\x}(\y|\z) = \frac{P_x(\y, \z)}{P_{x}(\z)},\\
    & P_{\x}(\y\setminus\{z_1\}|\z) = \frac{P_x(\y\setminus\{z_1\}, \z)}{P_{x}(\z)}.
\end{split}
\end{equation*}
Using Markov factorization property in graph $\G$, $P_{\mathbf{x}}(\mathbf{y})$ will be
\begin{equation*}
\begin{split}
    & P_{\x}(\y, \z) = 
    \sum_{\V\setminus(\X\cup \Y \cup \Z)} \sum_{\U} \prod_{W \in \V\setminus \X}P(w \mid \Pa{W}{\G}) \prod_{U \in \U} P(u),
    \\
    & P_{\x}(\z) = 
    \sum_{\V\setminus(\X \cup \Z)} \sum_{\U} \prod_{W \in \V\setminus \X}P(w \mid \Pa{W}{\G}) \prod_{U \in \U} P(u).
\end{split}
\end{equation*}
And similarly, we have
\begin{equation}
\begin{split}
    \label{eq: P_x(y/z1, z)}
    & P_{\x}(\y\setminus\{Z_1\}, \z) = 
    \sum_{Z_1}\sum_{\V\setminus(\X\cup \Y \cup \Z)} \sum_{\U} \prod_{W \in \V\setminus \X}P(w \mid \Pa{W}{\G}) \prod_{U \in \U} P(u),
    \\
    & P_{\x}(\z) = 
    \sum_{Z_1}\sum_{\V\setminus(\X \cup \Z)} \sum_{\U} \prod_{W \in \V\setminus \X}P(w \mid \Pa{W}{\G}) \prod_{U \in \U} P(u).
\end{split}
\end{equation}
Since $P_{\x}(\y|\z)$ is not gID from $(\mathbb{A}, \G)$, there exists $\M_1$ and $\M_2$ such that
\begin{equation*}
    Q^{\M_1}[\A_i](\mathbf{v}) = Q^{\M_2}[\A_i](\mathbf{v}),\; \forall \mathbf{v}\in \dom{\V}{},\; \forall i \in [0: m],
\end{equation*}
\begin{equation*}
    P_{\x}^{\M_1}(\y|\z)\neq P_{\x}^{\M_2}(\y|\z),\; \exists \x \in \dom{\X}{}, \; \exists \y \in \dom{\Y}{}.
\end{equation*}
Using $\M_1$ and $\M_2$, we construct two models $\M_1'$ and $\M_2'$. 
Define a surjective function $F\!:\: \dom{Z_1}{}\rightarrow \{0, 1\}$ and a function $\Psi \!:\: \{0, 1\}\times \dom{Z_1}{} \rightarrow (0,1)$ such that $\Psi(0, z_1)+\Psi(1, z_1)=1$ for each $z_1 \in \dom{Z_1}{}$.

For any node $S$ which is either unobserved or in $\V \setminus (\{Z_2\}\cup \Ch{Z_2}{\G})$, we define
\begin{equation*}
    P^{\M_i'}(s|\Pa{S}{\G}) = P^{\M_i}(s|\Pa{S}{\G}),
\end{equation*}
where $i \in \{1, 2\}$.
The domain of $Z_2$ in $\M_i'$ is defined as $\dom{Z_2}{}^{\M}\times \{0, 1\}$, where $\dom{Z_2}{}^{\M}$ is the domain of $Z_2$ in $\M$ (either $\M_1$ or $\M_2$). 
For $z_2 \in \dom{Z_2}{}^\M$, $i\in \{0,1\}$, and $k\in \{0, 1\}$, we define
\begin{equation*}
    P^{\M_i'}((z_2, k) 
    \mid \Pa{Z_2}{\G}\setminus\{Z_1\}, z_1) =
    P^{\M_i}(z_2 \mid \Pa{Z_2}{\G}) \Psi(F(z_1)\oplus k, z_1).
\end{equation*}
Moreover, for a fixed realization $(\Pa{Z_2}{\G}, z_1)$, we have
\begin{equation*}
    \sum_{k\in \{0,1\}} \sum_{z_2\in \dom{Z_2}{}^{\M}} P^{\M_i'}((z_2, k)|pa(Z_2), z_1) = 1.
\end{equation*}

For each $S \in \Ch{Z_2}{\G}$, we define:
\begin{equation*}
    P^{\M_i'}(s \mid \Pa{S}{\G}\setminus \{Z_2\}, (z_2, k)) =
    P^{\M_i}(s \mid \Pa{S}{\G}\setminus \{Z_2\}, z_2).
\end{equation*}

Next, we show that $Q^{\M_1'}[\A_i](\mathbf{v}) = Q^{\M_2'}[\A_i](\mathbf{v})$ for each $\mathbf{v}\in \dom{\V}{}$ and $i \in [0:m]$.
Suppose $\mathbf{v}$ is a realization of $\V$ in $\M'_1$ with realizations $z_1$ and $(z_2, k)$ for $Z_1$ and $Z_2$, respectively. 
Consider two cases: 
\begin{itemize}
    \item  $Z_2 \notin \A_i$: In this case, we have
    \begin{align*}
        Q^{\M_1'}[\A_i](\mathbf{v}) 
        &= \sum_{\U}\prod_{A\in \A_i} P^{\M_1'}(a \mid \Pa{A}{\G})\prod_{U\in \U}P^{\M_1'}(u) \\
        & = \sum_{\U} \prod_{A\in \A_i}P^{\M_1}(a \mid \Pa{A}{\G})\prod_{U\in \U} P^{\M_1}(u) 
        = Q^{\M_1}[\A_i](\mathbf{v}) 
        = Q^{\M_2}[\A_i](\mathbf{v}) \\
        & = \sum_{\U} \prod_{A\in \A_i} P^{\M_2}(a \mid \Pa{A}{\G}) \prod_{U\in \U} P^{\M_2}(u) \\
        &= \sum_{\U} \prod_{A\in \A_i} P^{\M_2'}(a \mid \Pa{A}{\G}) \prod_{U\in \U} P^{\M_2'}(u) \\ 
        &= Q^{\M_2'}[\A_i](\mathbf{v}).
    \end{align*}
    
    \item  $Z_2 \in \mathbf{A}_i$: In this case, we have
    \begin{align*}
        Q^{\M_1'}[\A_i](\mathbf{v}) 
        &= \sum_{\U} \prod_{A\in \A_i} P^{\M_1'} (a \mid \Pa{A}{\G})\prod_{U\in \U} P^{\M_1'}(u) \\
        & = \Psi\left(F(z_1)\oplus k, z_1\right) \sum_{\U} \prod_{A\in \A_i} P^{\M_1}(a \mid  \Pa{A}{\G}) \prod_{U\in \U}P^{\M_1}(u) \\
        &= \Psi(F(z_1)\oplus k, z_1) Q^{\M_1}[\A_i](\mathbf{v}) = \Psi(F(z_1)\oplus k, z_1) Q^{\M_2}[\A_i](\mathbf{v}) \\
        &= \Psi(F(z_1)\oplus k, z_1) \sum_{\U} \prod_{A\in \A_i} P^{M_2}(a \mid \Pa{A}{\G}) \prod_{U\in \U} P^{M_2}(u) \\
        &= \sum_{\U}\prod_{A\in \A_i} P^{\M_2'}(a \mid \Pa{A}{\G})) \prod_{U\in \U}P^{\M_2'}(u) \\
        &= Q^{\M_2'}[\A_i](\mathbf{v}).
    \end{align*}
\end{itemize}

Therefore, $Q^{\M_1'}[\A_i](\mathbf{v}) = Q^{\M_2'}[\A_i](\mathbf{v})$ for each $\mathbf{v}\in \dom{\V}{}$ and $i \in [0:m]$.

On the other hand, we know that there exists $\hat{\x} \in \dom{\X}{}^{\M}$, $\hat{\y} \in \dom{\Y}{}^{\M}$ and $\hat{\z} \in \dom{\Z}{}^{\M}$ such that $P^{\M_1}_{\hat{\x}}(\hat{\y}|\hat{\z})\neq  P^{\M_2}_{\hat{\x}}(\hat{\y}|\hat{\z})$. 

According to Equations \eqref{eq: P_x(y/z1, z)}, we have
\begin{align*}
    & P_{\x}^{\M_i'}(\y\setminus\{Z_1\}, \z) = 
    \sum_{z_1 \in \dom{Z_1}{}}\sum_{\V\setminus(\X\cup \Y \cup \Z)} \sum_{\U} P^{\M_i'}((z_2, k)|\Pa{Z_2}{\G}) \prod_{W \in \V\setminus (\X\cup \{Z_2\})}P^{\M_i'}(w \mid \Pa{W}{\G}) \prod_{U \in \U} P(u)\\
    & = \sum_{z_1 \in \dom{Z_1}{}}\sum_{\V\setminus(\X\cup \Y \cup \Z)} \sum_{\U} \Psi(F(z_1)\oplus k, z_1)P^{\M_i}(z_2|\Pa{Z_2}{\G}) \prod_{W \in \V\setminus (\X\cup \{Z_2\})}P^{\M_i}(w \mid \Pa{W}{\G}) \prod_{U \in \U} P(u)\\
    & =  \sum_{z_1 \in \dom{Z_1}{}}\Psi(F(z_1)\oplus k, z_1)
    \sum_{\V\setminus(\X\cup \Y \cup \Z)} \sum_{\U} P^{\M_i}(z_2|\Pa{Z_2}{\G}) \prod_{W \in \V\setminus (\X\cup \{Z_2\})}P^{\M_i}(w \mid \Pa{W}{\G}) \prod_{U \in \U} P(u)\\
    & = \sum_{z_1 \in \dom{Z_1}{}}\Psi(F(z_1)\oplus k, z_1) P^{\M_i}_\x(\y, \z).
\end{align*}
Let us denote $\dom{Z_1}{} = \{\alpha_1, \alpha_2, \dots, \alpha_n\}$. For $z_1=\alpha_j$ and $j \in [1:n]$, we also denote
\begin{align*}
    & \psi_{j} = \psi(F(\alpha_j)\oplus 0, \alpha_j),\\
    & P^{\M_i}_{\hat{\x}}(\hat{\y}[\Y\setminus\{Z_1\}], \hat{\z}, \alpha_j) = \beta_j^{\M_i}.
\end{align*}
For $Z_2=(\hat{z}[Z_2], 0)$ we have:
\begin{align*}
    & P_{\hat{\x}}^{\M_i}(\hat{\y}\setminus\{Z_1\}, \hat{\z}) = \sum_{j=1}^n\psi_j\beta_j^{\M_i}.
\end{align*}
Recall that $\psi_j$ is a real number from the interval $(0, 1)$. 
Note that $\psi_j$ is independent from any other $\psi_l$ for $l\neq j$.

Next, we consider two cases:
\begin{itemize}
    \item Assume that $Z_2\in \Z$. In this case, we have
    \begin{align*}
        & P_{\x}^{\M_i'}(\z) = 
        \sum_{z_1 \in \dom{Z_1}{}}\sum_{\V\setminus(\X \cup \Z)} \sum_{\U} P^{\M_i'}((z_2, k)|\Pa{Z_2}{\G}) \prod_{W \in \V\setminus (\X\cup \{Z_2\})}P^{\M_i'}(w \mid \Pa{W}{\G}) \prod_{U \in \U} P(u)\\
        & = \sum_{z_1 \in \dom{Z_1}{}}\sum_{\V\setminus(\X \cup \Z)} \sum_{\U} \Psi(F(z_1)\oplus k, z_1)P^{\M_i}(z_2|\Pa{Z_2}{\G}) \prod_{W \in \V\setminus (\X\cup \{Z_2\})}P^{\M_i}(w \mid \Pa{W}{\G}) \prod_{U \in \U} P(u)\\
        & =  \sum_{z_1 \in \dom{Z_1}{}}\Psi(F(z_1)\oplus k, z_1)
        \sum_{\V\setminus(\X\cup \Z)} \sum_{\U} P^{\M_1}(z_2|\Pa{Z_2}{\G}) \prod_{W \in \V\setminus (\X\cup \{Z_2\})}P^{\M_i}(w \mid \Pa{W}{\G}) \prod_{U \in \U} P(u)\\
        & = \sum_{z_1 \in \dom{Z_1}{}}\Psi(F(z_1)\oplus k, z_1) P^{\M_i}(\z).
    \end{align*}
    We denote
    \begin{align*}
         & P^{\M_i}_{\hat{\x}}(\hat{\z}) = \gamma^{\M_i},
    \end{align*}
    which leads to
    \begin{align*}
        & P_{\x}^{\M_i}(\z) = \sum_{j=1}^n\psi_j\gamma^{\M_i},
    \end{align*}
    for $Z_2=(\hat{\z}[Z_2], 0)$.
    Thus, 
    $$
    P_{\x}^{\M_i}(\hat{\y}\setminus\{Z_1\}|\hat{\z}) = \frac{\sum_{j=1}^n\psi_j\beta_j^{\M_i}}{\sum_{j=1}^n\psi_j\gamma^{\M_i}}.
    $$
    By the assumption of the lemma, there exists $j\in [1:n]$ such that
    \begin{equation*}
        \frac{\beta_j^{\M_1}}{\gamma^{\M_1}} \neq \frac{\beta_j^{\M_2}}{\gamma^{\M_2}},
    \end{equation*}
    or equivalently,
    \begin{equation*}
        \beta_j^{\M_1} \gamma^{\M_2} \neq \beta_j^{\M_2} \gamma^{\M_1}.
    \end{equation*}
    Without loss of generality, we assume that the aforementioned inequality holds for $j=1$.
    Next, we prove that there exists a parameters $\{\psi_{j}\}_{j=1}^{n}$ such that
    \begin{equation*}
        \frac{\sum_{j=1}^n\psi_j\beta_j^{\M_1}}{\sum_{j=1}^n\psi_j\gamma^{\M_1}} \neq \frac{\sum_{j=1}^n\psi_j\beta_j^{\M_2}}{\sum_{j=1}^n\psi_j\gamma^{\M_2}},
    \end{equation*}
    or equivalently,
    \begin{equation*}
        \sum_{j=1}^n\psi_j\beta_j^{\M_1} \sum_{j=1}^n\psi_j\gamma^{\M_2} - \sum_{j=1}^n\psi_j\beta_j^{\M_2} \sum_{j=1}^n\psi_j\gamma^{\M_1} \neq 0.
    \end{equation*}
    Note that the left hand side is a quadratic equation with parameter $\psi_1$ that contains the following term
    \begin{equation*}
        (\beta_1^{\M_1} \gamma^{\M_2} - \beta_1^{\M_2} \gamma^{\M_1})\psi_1^2
    \end{equation*}
    Since $\beta_1^{\M_1} \gamma^{\M_2} - \beta_1^{\M_2} \gamma^{\M_1}\neq0$, then we can can find realization of $\{\psi_{j}\}_{j=1}^{n}$, such that
    \begin{equation*}
        \sum_{j=1}^n\psi_j\beta_j^{\M_1} \sum_{j=1}^n\psi_j\gamma^{\M_2} - \sum_{j=1}^n\psi_j\beta_j^{\M_2} \sum_{j=1}^n\psi_j\gamma^{\M_1} \neq 0.
    \end{equation*}
    which concludes the proof of the lemma for this case.

    \item Assume that $Z_2\in \Y$. In this case we have:
    \begin{align*}
        & P_{\x}^{\M_i'}(\z) = 
       \sum_{\V\setminus(\X \cup \Z)} \sum_{\U} P^{\M_i'}((z_2, k)|\Pa{Z_2}{\G}) \prod_{W \in \V\setminus (\X\cup \{Z_2\})}P^{\M_i'}(w \mid \Pa{W}{\G}) \prod_{U \in \U} P(u)\\
        & = \sum_{\V\setminus(\X \cup \Z)} \sum_{\U} \Psi(F(z_1)\oplus k, z_1)P^{\M_i}(z_2|\Pa{Z_2}{\G}) \prod_{W \in \V\setminus (\X\cup \{Z_2\})}P^{\M_i}(w \mid \Pa{W}{\G}) \prod_{U \in \U} P(u)\\
        & =  
        \sum_{\V\setminus(\X\cup \Z)} \sum_{\U} \sum_{k \in \{0, 1\}}\Psi(F(z_1)\oplus k, z_1) P^{\M_i}(z_2|\Pa{Z_2}{\G}) \prod_{W \in \V\setminus (\X\cup \{Z_2\})}P^{\M_i}(w \mid \Pa{W}{\G}) \prod_{U \in \U} P(u)\\
        & = P^{\M_i}(\z).
    \end{align*}
    We denote
    \begin{align*}
         & P^{\M_i}_{\hat{\x}}(\hat{\z}) = {\gamma}^{\M_i}.
    \end{align*}
    Thus, 
    $$
    P_{\x}^{\M_i}(\hat{\y}\setminus\{Z_1\}|\hat{\z}) = \frac{\sum_{j=1}^n\psi_j{\beta_j}^{\M_i}}{\gamma^{\M_i}}.
    $$

    By the assumption of the lemma exist $m \in [1:n]$ such that
    \begin{equation*}
        \frac{{\beta_m}^{\M_1}}{\gamma_j^{\M_1}} \neq \frac{{\beta_m}^{\M_2}}{\gamma^{\M_2}}.
    \end{equation*}
    Next, we prove that there exists a set of parameters $\{\psi_{j}\}_{j=1}^{n}$, such that
    \begin{equation*}
        \frac{\sum_{j=1}^n\psi_j{\beta_j}^{\M_1}}{\gamma^{\M_1}} \neq \frac{\sum_{j=1}^n\psi_j{\beta_j}^{\M_2}}{\gamma^{\M_2}}
    \end{equation*}
    or equivalently,
    \begin{equation*}
        \frac{\sum_{j=1}^n\psi_j{\beta_j'}^{\M_1}}{\gamma^{\M_1}} - \frac{\sum_{j=1}^n\psi_j{\beta_j'}^{\M_2}}{\gamma^{\M_2}} \neq 0.
    \end{equation*}
    Note that left hand side of the above equation is  linear with respect to parameter $\psi_m$ with the following coefficient,
    \begin{equation*}
        \frac{{\beta_m'}^{\M_1}}{\gamma^{\M_1}} - \frac{{\beta_m'}^{\M_2}}{\gamma^{\M_2}}\neq 0.
    \end{equation*}
    This ensures that we can can find a realization of $\{\psi_{j}\}_{j=1}^{n}$, such that
    \begin{equation*}
        \frac{\sum_{j=1}^n\psi_j{\beta_j'}^{\M_1}}{\sum_{j=1}^n\gamma^{\M_1}} - \frac{\sum_{j=1}^n\psi_j{\beta_j'}^{\M_2}}{\sum_{j=1}^n\gamma^{\M_2}} \neq 0.
    \end{equation*}
    This concludes the proof of the lemma for the second case.
\end{itemize}
    
\end{proof}

\section{On the positivity assumption in the literature} \label{sec: app B}
 As it was pointed out in \cite{kivva2022revisiting},  positivity assumption is crucial for proving the completeness part. More precisely, the completeness of an algorithm means that if the algorithm does not compute a given conditional causal effect, then it cannot be computed uniquely by any other algorithms. To prove the completeness, two models $\mathcal{M}_1$ and $\M_2$ are constructed such that they are both positive and induce the same set of distributions as the ones given in the problem statement, i.e., $Q[\A_0]$, $Q[\A_1]$, \dots, $Q[\A_m]$
, but they result in different values for the conditional causal effect of interest, i.e., $P^{\M_1}_{\mathbf{x}}(\mathbf{y}\mid\mathbf{z})\neq P^{\M_2}_{\mathbf{x}}(\mathbf{y}\mid\mathbf{z})$. Hence, $P_{\mathbf{x}}(\mathbf{y}\mid \mathbf{z})$
 cannot be uniquely computed . 
 
 In Lee et al. [2019, 2020] and Correal et al. [2021] for the completeness part authors constructed such models $\M_1$ and $\M_2$, but the models violate the positivity assumption. That is, it is possible to have examples in which a given causal effect is identifiable under the positivity assumption while it is possible to construct two non-positive models that show the causal effect is not identifiable (Kivva et al. [2022]). Violation of positivity assumption renders some distributions ill-defined (conditioning on zero-probability events). That is why computing a causal effect in the classical setting with do-calculus implicitly contains steps in which we can cancel out a distribution (e.g., $Q$) that appears on both side of an equality, i.e., $P_1\cdot Q = P_2\cdot Q \Rightarrow P_1=P_2$. Clearly, this is only possible when $Q>0$. If positivity is violated, then such steps in computing a causal effect cannot be used.

\subsection{General Transportability} 
The work of \cite{lee2020general} proves the completeness part of the c-gID problem by constructing two models that agree on the observable distributions and disagree on the target causal effect. Those models does not satisfy the positivity assumption by the construction. A similar flaw existed in the proof of \cite{lee2019general}, which was  specified in details later by \cite{kivva2022revisiting}.
Given that \cite{lee2020general} does not discuss whether their models can be transformed into positive ones.

For further details, we refer to the technical report of \cite{lee2020general}, which contains the proofs.

\textit{Parametrizations for an s-Thicket:} According to the appendix of \cite{lee2020general}, the models in Lemma 3 which is one of the main Lemmas for proving the completeness result are based on the ones in \cite{lee2019general}. These models violate the positivity assumption according to \cite{kivva2022revisiting} and should be substituted with a fixed ones.

\textit{Parametrization for an Extended s-Thicket:} According to Eq. (5) and (6) in \cite{lee2020general}, it is easy to observe that several observed variables are deterministic functions of other observed variables. This implies that there exists a realization of observed variables such that the conditional probability of one observed variable given the rest is zero. This is against the positivity assumption.

\textit{Parametrization for an Extended s-Thicket with a Path-Witnessing Subgraph:} In Eq. (7) of \cite{lee2020general} if $v_{\mathcal{P}}$ is an observed variable with only observed parents on a backdoor path $\mathcal{P}$, again,  $v_{\mathcal{P}}$ will be a deterministic function of only observed variables. This again does not satisfy the positivity. In general, such $v_{\mathcal{P}}$ would always exist. 

Please note that the errata for \cite{lee2019general} can potentially fix the issue for s-Thicket, but not for extended s-Thicket or Extended s-Thicket with a Path-Witnessing Subgraph (the last two cases).

\begin{figure}[t]
\centering
    \begin{tikzpicture}[
        roundnode/.style={circle, draw=black!60,, fill=white, thick, inner sep=1pt, minimum size=0.65cm},
        dashednode/.style = {circle, draw=black!60, dashed, fill=white, thick, inner sep=1pt, minimum size=0.65cm},
        ]
            \node[roundnode]        (X0)        at (1.2, -2)                 {$X_0$};
            \node[roundnode]        (X1)        at (-2.6, -2)                    {$X_1$};
            \node[dashednode]       (U0)        at (2, -.0)              {$U_0$};
            \node[dashednode]       (U1)        at (-2.4, -.0)              {$U_1$};
            \node[roundnode]        (T)         at (-.4, -.0)                   {$T$};
            \node[roundnode]        (Z)         at (-0.7, -2)                {$Z$};
            \node[roundnode]        (Y)         at (3.2, -2)                      {$Y$};
            
            \draw[latex-] (X1) -- (T);
            \draw[latex-] (X1) -- (Z);
            \draw[-latex] (Z) -- (X0);
            \draw[latex-, dashed] (X0) -- (U0);
            \draw[-latex, dashed] (U0) -- (Y);
            \draw[latex-, dashed] (T) -- (U1);
            \draw[-latex, dashed] (U1) -- (X1);
        \end{tikzpicture}
    \caption{DAG $\G$ with $\X = \{X_0, X_1\}$ and $\Y = \{Y\}.$} 
    \label{fig: counter-exmpl}
\end{figure} 
\subsection{Counterfactual identification}
Here, we refer to the technical report of \cite{correa2021nested} and construct a simple example that demonstrates our main concerns about the proof of the completness part of the c-gID problem.

Recall that a causal effect $P_{\mathbf{T}}(\mathbf{Y}|\mathbf{X})$ can be written as a counterfactual $P(\mathbf{Y}_*, \mathbf{X}_*)$, where $\mathbf{Y}_* \cup \mathbf{X}_* = \{W_{[\mathbf{T}]}|W \in \mathbf{V}(\mathbf{Y}_* \cup \mathbf{X}_*)\}$ and $[\mathbf{T}]$ denotes an intervention under which the counterfactual value is observed. 
Now, consider the graph $\G$ in Fig. \ref{fig: counter-exmpl}.
Suppose that the known distribution is $P(\V)$ and the target conditional causal effect is 
$$
P_{T}(Y|X_0, X_1) = P(\X_*, \Y_*),
$$ 
where $\X_* = \{X_{0[T]}, X_{1[T]}\}$, $\Y_* = \{Y_{[T]}\}$, $\X = \{X_0, X_1\}$ and $\Y = \{Y\}$. 
Note that for both $X_0$ and $X_1$, there exists an active backdoor path to $Y$, thus, we cannot use the second rule of do-calculus to simplify $P_{T}(Y|X_0, X_1)$.
Please note that in this graph, $X_0, X_1, Z, Y$ belong to the same ancestral component (Def. 7 in [2]) induced by $\mathbf{X}_* \cup \mathbf{Y}_*$ given $\mathbf{X}_*$.  
This is because $Z \in An(X_{0[T]})_{\G_{\underline{X_0}}} \cap  An(Z_{[T]})_{\G}\cap An(X_{1[T]})_{\G_{\underline{X_1}}}$ and there is a bidirected arrow between $X_0$ and $Y$. 
This ancestral component contains $Y$ and based on the definition of $\mathbf{D}_*$ (after Eq. (69) in [2]), we have $\mathbf{D}_* = \{X_{0[T]}, X_{1[T]}, Z_{[T]}, Y_{[T]}\}$.
Furthermore, according to Equation (70) in \cite{correa2021nested} is
$$
\rho(\x, \y) := \sum_{\mathbf{d}_*\setminus (\y_*\cup\x_*)}P(\bigwedge_{D_t\in \mathbf{D}_*}D_{\textbf{pa}_d} = d)
$$
 and in our example, it is equivalent to 
 $$
 P_{T}(Y = \y[Y], X_0 = \x[X_0], X_1 = \x[X_1]).
 $$
In part of the proof, they encounter a setting in which $\rho(\x, \y)$ and $\rho(\x)$ are not g-ID and they need to show that 
$$\rho(\y | \x)=\rho(\x, \y)/\rho(\x)$$
is not c-gID.
To do so, they consider two models $\M^{(1)}$ and $\M^{(2)}$ that shows $\rho(\x, \y)$ is not g-ID and transform them into two new models to prove the non-c-gID of $\rho(\y | \x)$.
According to \cite{correa2021nested},  realizations $\x', \y'$ are such that for models $\M^{(1)}$ and $\M^{(2)}$: 
$$
\rho^{(1)}(\y',\! \x')\!\neq\! \rho^{(2)}(\y', \x').
$$
Models $\M^{(1)'}$ and $\M^{(2)'}$ obtained from models $\M^{(1)}$ and $\M^{(2)}$ as follows:
\begin{enumerate}
    \item Append an extra bit $U_p$ to the node $U_0$. 
    
    \item $X_p$ and $Y_p$ binary unobserved variables defined for variables $X_0$ and $Y$, respectively.
    \item Rename $X_0$ and $Y$ as $\widetilde{X}_0$ and $\widetilde{Y}$ and make them unobserved, then $X_0$ and $Y$ are defined in models $\M^{(1)'}$ and $\M^{(2)'}$ as $X_0:=\x'[X_0]$ if $X_p=1$ and $\widetilde{X}_0$, otherwise. Similarly, they defined $Y$ using $Y_p$ and $\widetilde{Y}$.
\end{enumerate}
According to the definitions of $\rho(\x,\y)$, $\widetilde{Y}$, and $\widetilde{X}$, and using the law of total probability, we have
\begin{equation*}
    \rho'(\x', \y')\! =\! \sum_{X_p, Y_p, \widetilde{X}, \widetilde{Y}}\rho'(\x', \y', \widetilde{X}_0, \widetilde{Y}| X_p, Y_p)P(X_p, Y_p)
\end{equation*}
and therefore 
\begin{equation}
\begin{split}
    \label{eq:1}
     \rho'(\x', \y') = &P(X_p=0, Y_p=0)\rho(\x', \y') + \\
    & P(X_p=1, Y_p = 0)\rho(\x'[X_1], \y') + \\
    & P(X_p=0, Y_p=1)\rho(\x') + \\
    & P(X_p=1, Y_p=1)\rho(\x'[X_1]).
\end{split}
\end{equation}
Clearly, $\rho(\x'[X_1])\neq1$ otherwise, the positivity assumption does not hold. On the other hand, based on Eq. (78)-(81) \cite{correa2021nested},  $\rho'(\x', \y')$ is computed by 
\begin{equation}\label{eq:2}
\begin{split}
    & P(X_p=0, Y_p=0)\rho(\x', \y') +\\
    & P(X_p=1, Y_p=0)\rho( \y') +\\
    & P(X_p=0, Y_p=1)\rho(\x') +\\
    &P(X_p=1, Y_p=1).
\end{split}
\end{equation}
In general, \eqref{eq:1} and \eqref{eq:2} are not equal unless for example, $\X=\{X\}$ and $\Y=\{Y\}$. 

Moreover the rest of the proof in \cite{correa2021nested}, i.e., Eq. (83)-(92) heavily relies on Eq. \eqref{eq:2}, therefore without corresponding fix the whole proof for the completeness part in c-gID problem falls apart.

\end{document}